\documentclass[10pt]{article}
\usepackage{times}
\usepackage[
    letterpaper,
    left=1in,
    right=1in,
    top=1in,
    bottom=1in
]{geometry} 
\usepackage{hyperref}
\usepackage{amsmath,amsfonts,amssymb,bbm,bm}
\usepackage{dsfont} %
\usepackage{xcolor}
\usepackage{amsthm}
\usepackage{thmtools}
\usepackage{mathtools}

\definecolor{niceRed}{RGB}{190,38,38}
\definecolor{niceYellow}{HTML}{f5b400}
\definecolor{blueGrotto}{HTML}{059DC0}
\definecolor{royalBlue}{HTML}{057DCD}
\definecolor{navyBlue}{HTML}{0B579C}
\definecolor{limeGreen}{HTML}{81B622}
\definecolor{nicePurple}{HTML}{9c27b0}
\definecolor{lightRoyalBlue}{HTML}{def2ff}
\definecolor{gold}{HTML}{ffa300}
\definecolor{frameblue}{RGB}{0,123,255}     %
\definecolor{framebg}{RGB}{241,244,247}     %
\definecolor{shadecolor}{gray}{0.90}
\definecolor{plum}{HTML}{9c27b0}

\usepackage{enumitem}
\usepackage[breakable,skins]{tcolorbox}
\usepackage{amsthm}
\usepackage{thmtools}
\definecolor{shadecolor}{gray}{0.95}

\declaretheoremstyle[
headfont=\normalfont\bfseries,
notefont=\mdseries, notebraces={(}{)},
bodyfont=\normalfont,
postheadspace=0.5em,
spaceabove=0.5em,
spacebelow=0.5em,
]{shaded}

\declaretheoremstyle[
  bodyfont=\normalfont
]{nonitalic}
\newtcolorbox{myframe}{
    colback=framebg,       %
    colframe=framebg,      %
    boxrule=0pt,           %
    frame hidden,          %
    arc=5pt,               %
    top=4pt,               %
    bottom=4pt,            %
    left=4pt,              %
    right=4pt,             %
    before skip=8pt,       %
    after skip=8pt         %
}

\declaretheorem[within=section]{style}
\declaretheorem[style=nonitalic,sibling=style]{definition}
\declaretheorem[sibling=style]{main theorem}
\declaretheorem[style=shaded,sibling=style]{theorem}
\declaretheorem[style=nonitalic,sibling=style]{assumption}

\declaretheorem[style=shaded,sibling=style]{corollary}
\declaretheorem[style=nonitalic,sibling=style]{lemma}

\declaretheorem[sibling=style]{proposition}

\declaretheorem[style=nonitalic,sibling=style,]{remark}
\declaretheorem[style=nonitalic,sibling=style, numbered=no]{proof sketch}

\makeatletter
\@ifpackageloaded{lineno}%
  {\def\thmboxbreak{}}%
  {\def\thmboxbreak{breakable}}%
\makeatother
\tcolorboxenvironment{theorem}{
  colback=shadecolor, colframe=shadecolor, \thmboxbreak, enhanced,
  boxrule=0pt, left=0pt, right=0pt, top=0pt, bottom=0pt,
  before skip=6pt, after skip=6pt
}
\tcolorboxenvironment{corollary}{
  colback=shadecolor, colframe=shadecolor, \thmboxbreak, enhanced,
  boxrule=0pt, left=0pt, right=0pt, top=0pt, bottom=0pt,
  before skip=6pt, after skip=6pt
}

\usepackage[capitalize,nameinlink]{cleveref}
\crefname{assumption}{assumption}{assumptions}
\Crefname{assumption}{Assumption}{Assumptions}
\crefname{lemma}{lemma}{lemmas}
\Crefname{lemma}{Lemma}{Lemmas}
\crefname{theorem}{theorem}{theorems}
\Crefname{theorem}{Theorem}{Theorems}
\crefname{main theorem}{main theorem}{main theorems}
\Crefname{main theorem}{Main theorem}{Main theorems}
\crefname{definition}{definition}{definitions}
\Crefname{definition}{Definition}{Definitions}
\crefname{corollary}{corollary}{corollaries}
\Crefname{corollary}{Corollary}{Corollaries}
\crefname{proposition}{proposition}{propositions}
\Crefname{proposition}{Proposition}{Propositions}
\crefname{fact}{fact}{facts}
\Crefname{fact}{Fact}{Facts}
\crefname{identity}{identity}{identities}
\Crefname{identity}{Identity}{Identities}
\crefname{function}{function}{functions}
\Crefname{function}{Function}{Functions}

\renewcommand{\Pr}{\mathbb{P}}
\newcommand{\E}{\mathbb{E}}

\newcommand{\calA}{\mathcal{A}}
\newcommand{\calB}{\mathcal{B}}

\newcommand{\calE}{\mathcal{E}}
\newcommand{\calF}{\mathcal{F}}
\newcommand{\calG}{\mathcal{G}}

\newcommand{\calI}{\mathcal{I}}

\newcommand{\calL}{\mathcal{L}}

\newcommand{\calT}{\mathcal{T}}

\newcommand{\calX}{\mathcal{X}}

\newcommand{\calZ}{\mathcal{Z}}

\def\<{\langle}
\def\>{\rangle}

\newcommand{\bx}{\mathbf{x}} %
\newcommand{\by}{\mathbf{y}} %

\newcommand{\mathe}{\mathrm{e}}

\newcommand{\1}{\mathds{1}} %

\usepackage[numbers,sort&compress]{natbib}
\usepackage[utf8]{inputenc} 
\usepackage[T1]{fontenc}    
\usepackage{hyperref}       
\usepackage{url}            
\usepackage{booktabs}       
\usepackage{nicefrac}       
\usepackage{microtype}      
\usepackage{xcolor}         
\usepackage{graphicx}
\usepackage{multicol}
\usepackage{multirow}
\usepackage[english]{babel}
\usepackage{epstopdf}
\usepackage{caption}
\usepackage{makecell}   
\usepackage{colortbl}   
\usepackage{hhline}
\usepackage{longtable}
\usepackage{array}
\hypersetup{bookmarksdepth=2} 
\hypersetup{colorlinks=true,linkcolor=royalBlue,citecolor=navyBlue,urlcolor=navyBlue}
\usepackage[nohints]{minitoc}
\newcommand{\algofam}[1]{\noindent\textbf{#1}}
\newcommand{\algoname}[1]{\textcolor{black!70}{\textbf{#1}}}
\usepackage{titlesec}
\titleformat{\section}{\large\bfseries}{\thesection}{1em}{}
\titleformat{\subsection}{\normalsize\bfseries}{\thesubsection}{0.5em}{}
\titleformat{\subsubsection}{\normalsize\bfseries}{\thesubsubsection}{0.2em}{}
\titlespacing*{\section}      {0pt}{3.0ex plus 0.25ex minus 0.15ex}{1.25ex plus 0.12ex}
\titlespacing*{\subsection}   {0pt}{1.5ex plus 0.2ex minus 0.12ex}{0.92ex plus 0.1ex}
\titlespacing*{\subsubsection}{0pt}{0.55ex plus 0.15ex minus 0.1ex}{0.22ex plus 0.08ex}
\titlespacing*{\paragraph}    {0pt}{0.45ex plus 0.12ex minus 0.08ex}{0.45em}

\usepackage{algorithm}
\usepackage{algpseudocode}
\usepackage{enumitem}
\setlist{itemsep=2pt, topsep=4pt, parsep=0pt, partopsep=0pt, leftmargin=*}
\usepackage[nohints]{minitoc}
\doparttoc[n]

\newcommand{\blfootnote}[1]{%
  \begingroup
  \renewcommand{\thefootnote}{}%
  \footnote{#1}%
  \addtocounter{footnote}{-1}%
  \endgroup
}

\begin{document}
\sloppy 
\faketableofcontents

\begingroup
\setlength{\parskip}{0pt}
\begin{center}
    \rule{\textwidth}{3pt}\par
    \vspace{0.3cm}                                  
    {\LARGE \bfseries Unified High-Probability Analysis of Stochastic Variance-Reduced Estimation\par}
    \vspace{0.2cm}                                  
    \rule{\textwidth}{1.2pt}\par
    \vspace{1cm}                                  

    {\normalsize
      \textbf{Zhankun~Luo}\(^{\ddagger*}\) \quad
      \textbf{Antesh~Upadhyay}\(^{\ddagger*}\) \quad
      \textbf{M.~Berk~Sahin}\(^{*}\) \quad
      \textbf{Sang~Bin~Moon}\(^{*}\) \\[0.4em]
      \textbf{Anuran~Makur}\(^{\star*}\) \quad
      \textbf{Abolfazl~Hashemi}\(^{*}\) \\[0.6em]
      \(^{*}\)School of Electrical and Computer Engineering and\\
      \(^{\star}\)Department of Computer Science\\
      Purdue University\\
      West Lafayette, IN 47907\\
      \texttt{\{luo333, aantesh, sahinm, moon182, amakur, abolfazl\}@purdue.edu}
    }
    \vspace{0.3cm} 
\end{center}
\endgroup
\blfootnote{\(^\ddagger\)The authors marked with \(^\ddagger\) contributed equally.}

\doparttoc[n] 
\faketableofcontents 

\vspace{-2em}
\begin{abstract}
Stochastic estimators are fundamental to large-scale optimization, where population quantities must be inferred from noisy oracle observations. Although influential methods such as momentum, SPIDER, STORM, and PAGE have been highly successful, their analyses are largely estimator-specific and expectation-based, obscuring the structural tradeoffs that determine reliability. In this paper, we develop a unified framework for stochastic variance-reduced estimation based on a recursion with three components: memory retention, reset probability, and a correction term for iterate movement. This framework recovers several classical estimators, motivates new second-order variants, and yields a bias-variance decomposition of estimation error. Our main result is a unified high-probability bound proved using a new dimension-free vector-valued Freedman inequality, valid for smooth normed spaces involving random sums of vector martingales. The result applies in both Euclidean and non-Euclidean settings, including the analysis of mirror-descent-based methods in Banach spaces. As applications, we obtain high-probability oracle complexities for unconstrained optimization with mirror descent, establishing the logarithmic dependence on the confidence level. We also derive the first \(\tilde{\mathcal{O}}(\varepsilon^{-3})\) oracle-complexity bounds for stochastic optimization with expectation constraints, improving upon the existing \(\tilde{\mathcal{O}}(\varepsilon^{-4})\) complexity by leveraging variance-reduced estimation for the first time in this setting.

\end{abstract}

\section{Introduction}\label{sec:Intro}
\vspace{-2mm}
Stochastic optimization algorithms optimize a function over iterations by using noisy oracle information rather than exact population quantities. In the most familiar setting~\cite{robbins1951stochastic,polyak1967general,nemirovsky1983wiley,ghadimi2013stochastic}, the quantity of interest is the population gradient, and standard first-order algorithms must repeatedly construct accurate gradient estimates from stochastic samples to make progress. From this perspective, the effectiveness of a first-order method depends not only on how it updates the iterate, but also on how well it maintains an estimator that is accurate, stable, and inexpensive to compute as the iterate evolves. Many widely used methods can be viewed in this way. Classical momentum stabilizes SGD by averaging past gradients~\cite{liu2020improved}, whereas variance-reduced methods, such as SVRG, STORM, and PAGE, reuse information across iterates through stochastic differences and occasional reset steps~\cite{johnson2013accelerating,fang2018spider,cutkosky2019momentum,li2021page,tran2022better}.

Although these methods are often presented as distinct algorithmic ideas, they address a common underlying problem: how to recursively estimate an evolving population quantity from noisy samples while controlling both bias and variance. Despite the empirical successes of these methods, however, the theoretical picture remains fragmented. Existing analyses are typically developed separately for individual estimators, and their proofs are often tightly tied to the algebra of a particular recursion. As a result, it is difficult to isolate which design choices are actually responsible for performance: the reuse of past information, the frequency of resets, or the way past information is adjusted as the iterate moves. In addition, most existing guarantees are expectation-based. While such guarantees characterize average behavior, they can conceal rare but important estimation failures, which are especially relevant in large-scale learning and in constrained optimization.

In this paper, we develop a \emph{unified framework for high-probability analysis of stochastic variance-reduced estimation in optimization}. We consider a general population quantity \(g(w_t)\) observed through an unbiased stochastic oracle \(G(w_t,\xi_t)\), where the iterate \(w_t\) changes over time \(t\), and \(\xi_t\) denotes the oracle randomness. We introduce an estimator for \(g(w_t)\) governed by \textit{three simple design choices:} 1) how much past information must be retained, 2) how often resets using fresh estimates occur, and 3) how past information computed at \(w_{t-1}\) must be corrected so that it remains informative at the current iterate \(w_t\). This viewpoint places several seemingly different recursive estimators within a single template and makes it possible to compare them through a common error decomposition. Beyond gradient estimation, the framework also applies to other optimization-relevant quantities.

\noindent
\hspace{-1.8em}
\begin{minipage}[c]{0.42\linewidth}
\abovedisplayskip=0pt
\belowdisplayskip=0pt
\begin{equation}
\hspace{-9mm}
\min_{w \in \mathcal{W}} f(w) \coloneqq \mathbb{E}_{\xi}[F(w,\xi)]\vphantom{\bigg|}
\tag{\(\dagger\)}\label{eq:application1}
\end{equation}
\end{minipage}%
\hfill
\begin{minipage}[c]{0.55\linewidth}
\abovedisplayskip=0pt
\belowdisplayskip=0pt
\begin{equation}
\hspace{-3mm}
\min_{w \in \mathcal{W}} \mathbb{E}_{\xi}[F(w,\xi)]\;
\text{ s.t. }\;
h(w)\coloneqq \mathbb{E}_{\xi}[H(w,\xi)] \le 0.\vphantom{\bigg|}
\tag{\(\ddagger\)}\label{eq:application2}
\end{equation}
\end{minipage}

The two representative examples above illustrate the scope of our framework. In problem~\eqref{eq:application1}, the relevant quantity is the population gradient \(g(w)=\nabla f(w)\), and the goal is to maintain an accurate stochastic estimate of the gradient along the optimization trajectory. In problem \eqref{eq:application2}, however, the quantity of interest is the constraint value \(h(w)\), which must be estimated from noisy samples to assess feasibility~\cite{lan2020algorithms}. In this setting, poor estimation can lead either to unreliable feasibility detection or to overly conservative large-batch procedures. This paper, therefore, studies a broader estimation problem in stochastic optimization, with stochastic gradient estimation and expectation-constraint estimation as two central instances.

Another limitation of the existing literature is that high-probability guarantees are predominantly restricted to Euclidean geometries. In more general first-order methods, such as mirror descent~\cite{beck2003mirror,bauschke2003bregman,burachik2000proximal}, the estimator naturally lives in the dual space rather than the primal space. The literature currently lacks a unified, high-probability concentration framework that accommodates arbitrary smooth Banach spaces without relying on estimator-specific algebra. Our analysis in this paper is developed at a level of generality that covers both standard Euclidean updates and generalized first-order methods such as mirror descent, while focusing on high-probability guarantees.
Please see Appendix \ref{apdx:related} for a detailed discussion of related work.

\noindent\textbf{Contributions. }Our main contributions are as follows:
\begin{itemize}[leftmargin=*]
    \item \textbf{Unified estimator framework.} We propose a single recursive stochastic estimator~\eqref{eq:unified_est_general} that subsumes classical momentum, STORM, PAGE, and SPIDER as special cases, while also suggesting new second-order variants; see Table~\ref{tab:estimator_configs}.

    \item \textbf{Unified high-probability analysis.} We prove a general high-probability bound for the estimation error of every estimator in the framework, together with an explicit bias-variance decomposition; see Theorem~\ref{thm:unified}. The proof relies on a new vector-valued Freedman inequality; see Theorem~\ref{thm:freedman}.

    \item \textbf{Unconstrained optimization.} We establish high-probability guarantees for finding first-order \(\varepsilon\)-stationary points in stochastic optimization with mirror descent, and we derive the resulting oracle complexities for different estimator choices; see Table~\ref{tab:complexity_summary} and Theorem~\ref{thm:mirror_stationarity} in Appendix~\ref{apdx:application1}

    \item \textbf{Constrained optimization.} We provide the first variance-reduced oracle complexity guarantees for expectation-constrained stochastic optimization, improving the rate from \(\tilde{\mathcal{O}}(\varepsilon^{-4})\) in \cite{lan2020algorithms} to \(\tilde{\mathcal{O}}(\varepsilon^{-3})\); see Table~\ref{tab:unified_complexity}.
\end{itemize}

\section{Preliminaries}\label{sec:prelim}
\vspace{-2mm}
We briefly introduce the notations and setup used throughout the paper. Let \((\mathcal{X},\|\cdot\|)\) be a reflexive Banach space, and let \((\mathcal{X}^*,\|\cdot\|_*)\) denote its dual space of all continuous bounded linear functionals. For \(u\in \mathcal{X}^*\) and \(x\in \mathcal{X}\), we write \(\langle u,x\rangle\) for the canonical duality pairing between \(\mathcal{X}^*\) and \(\mathcal{X}\) such that \(\langle \cdot, \cdot \rangle:\mathcal{X}^*\times \mathcal{X} \to \mathbb{R}\). The dual norm is defined by \(\|u\|_* \coloneqq \sup_{\|x\|\le 1, x\in \mathcal{X}} \langle u,x\rangle\) for any \(u\in \mathcal{X}^*\). The optimization iterates lie in a nonempty closed, convex set \(\mathcal W \subseteq \mathcal{X}\). 
\begin{definition}[Mirror Map]\label{def:mirror_map}
The mirror map \(\Phi: \mathcal{D} \to \mathbb{R} \cup \{+\infty\}\) with \(\mathcal{D} \subseteq \mathcal{X}\) is proper, weakly lower semi-continuous (w.l.s.c), and \(1\)-strongly convex on a nonempty closed convex set \(\mathcal{W}\subseteq \text{int}(\mathcal{D})\) with respect to \(\|\cdot\|\). Furthermore, \(\Phi\) is G\^{a}teaux differentiable on \(\mathcal{W}\).
\end{definition}

\begin{definition}[Proximal Gradient Mapping]\label{def:proximal_gradient_mapping}
    For \(w \in \mathcal{W}\), an input vector \(u\in\mathcal{X}^*\), and a step size \(\eta > 0\), the \textbf{proximal gradient mapping} is 
    \(P(w, u, \eta) = (w - w^+)/\eta\), 
    where the updated point \(w^+\) is the unique minimizer of the proximal subproblem:
    \(w^+= \arg\min_{w'\in\mathcal{W}} \langle \eta u, w'\rangle + D_\Phi(w', w)\)
    and \(D_\Phi(w', w)\) denotes the Bregman divergence induced by \(\Phi\).
\end{definition}

\vspace{-1mm}
The existence and uniqueness are guaranteed by variational principles. The strong convexity of \(\Phi\) renders the objective strictly convex and coercive, while its closed convexity ensures it is w.l.s.c. \cite[Cor.~2.2, p.~11]{ekeland1999convex}. By the  Mazur's Lemma and Eberlein-\v{S}mulian theorem \cite[Thm.~3.7, 3.18]{brezis2011functional}, coercivity within the reflexive space \(\mathcal{X}\) forces bounded sequences to converge weakly to a feasible limit inside the closed set \(\mathcal{W}\). Consequently, the objective reliably attains a unique global minimum \cite[Prop.~1.2, p.~35]{ekeland1999convex}, \cite[Thm.~2.11, p.~72]{barbu2012convexity}.
Based on variational inequality as supported by \cite{ekeland1999convex}, we derive properties of \(P(w,u,\eta)\) (see Lemma~\ref{lemma:properties_proximal_gradient_mapping} and \cite{ghadimi2016mini,bauschke2003bregman}), including \(\|P(w, u, \eta)\|_* \leq \|u\|\).

We consider a stochastic oracle \(G:\mathcal W \times \Omega \to \calG\) and its population counterpart \(g(w) \coloneqq \mathbb E[G(w,\xi)],\) where \(\xi\) denotes the oracle randomness, \(\calG\) is a \(\kappa\)-smooth (\(\kappa\geq 1\)) normed space with its norm \(\|\cdot\|_\calG\). %
For high-probability analysis, we introduce \(C(\delta, \kappa) \coloneqq \sqrt{\kappa} + \sqrt{3\ln(1/\delta)}\) for \(\delta \in (0,1)\), arising from our proposed vector-valued Freedman inequality (see Theorem~\ref{thm:freedman}).
Our goal is to estimate the time-varying target \(g(w_t)\) along a trajectory \(\{w_t\}_{t\ge 0}\). Given a recursive estimator \(v_t\), we define the estimation error \(e_t \coloneqq v_t - g(w_t).\) Our analysis is not restricted to Euclidean SGD and applies more broadly to generalized first-order methods, including mirror descent~\cite{beck2003mirror,bauschke2003bregman,burachik2000proximal}. 

\textbf{Mirror Descent Update.} At each iteration \(t\), we form a unified estimator \(v_t\) and use its corresponding update vector \(U_t\), to perform a mirror descent update with step size \(\eta_t>0\):
\begin{equation}\label{eq:optimize_mirror_descent}
    w_{t+1} = \arg\min_{w\in\mathcal{W}} \left\{ \eta_t \langle U_t, w\rangle + D_\Phi(w,w_t) \right\} = w_t - \eta_t P(w_t, U_t, \eta_t).
\end{equation}

\vspace{-2mm}
\begin{assumption}[Boundedness of Generalized Updates]\label{ass:update_bound}
Let \(U_t\) be the update vector computed at iteration \(t\). We assume the algorithm employs a normalization or clipping mechanism such that the update vector is bounded almost surely, satisfying \(\|U_t\|_\ast \le G\) for all \(t \geq 0\).
\end{assumption}
We note that this boundedness assumption, while necessary, is not as strong since in many problems it can be ensured by applying normalization or clipping operations (e.g., Application 1 in Subsection \ref{sec:app1}), or holds due to certain Lipschitzness conditions (e.g., Application 2 in Subsection \ref{sec:app2}). Under the above assumption, we show that \(\|w_{t+1}-w_t\|=\eta_t\|P(w_t, U_t, \eta_t)\|\leq \eta_t \|U_t\|_*\leq \eta_t G\).

\section{A Unified Estimator}\label{sec:estimators}
In stochastic optimization, estimating an evolving population quantity \(g(w_t)\) from noisy observations \(G(w_t, \xi_t)\) requires combining fresh information with historical memory. A classical example is momentum~\cite{liu2020improved} that filters noise via \(v_t = (1-\beta) G(w_t,\xi_t) + \beta v_{t-1}\). If we define the estimation error by \(e_t \coloneqq v_t - g(w_t)\), then the corresponding error recursion by \(e_t = \beta e_{t-1} + (1-\beta)(G(w_t,\xi_t) - g(w_t)) - \beta (g(w_t) - g(w_{t-1}))\). The final term captures the deterministic lag induced by reusing information from the previous iterate.
Variance-reduced estimators, such as STORM and SPIDER, avoid this deterministic bias by incorporating stochastic differences. This suggests a common design principle for recursive estimators: once past information is carried over from iteration \(t-1\), it should be corrected before being reused at the new iterate \(w_t\). We make this principle explicit through an explicit \emph{correction term } \(\mathcal{T}_t\), which leads to a unified recursion governed by three design choices: memory retention \(\beta_t\), the reset probability \(p_t\), and the correction mechanism \(\mathcal{T}_t\). 

\noindent \textbf{The Unified Recursion.  }
Let \(\{\mathcal{F}_t\}_{t\ge0}\) be the filtration generated by oracle noise and reset events (see Appendix~\ref{apdx:prelim}). We assume parameter predictability and sub-Gaussian oracle noise stated below.

\begin{assumption}[Parameter Predictability]\label{ass:predictable_params}
The parameters \(\eta_{t-1}\), \(p_t \in [0, 1]\), and \(\beta_t \in [0, 1]\) are \(\mathcal{F}_{t-1}\)-adapted. The reset indicator \(b_t\sim \mathrm{Bernoulli}(p_t)\) is sampled at time \(t\), ensuring \(\mathbb{E}[b_t|\mathcal{F}_{t-1}] = p_t\).
\end{assumption}

\begin{assumption}[Sub-Gaussianity]\label{ass:subgaussian}
The oracle is unbiased, \(\mathbb{E}[G(w_t, \xi) \mid \mathcal{F}_{t-1} ] = g(w_t)\). Furthermore, for any fresh sample \(\xi\) drawn at time \(t\) (whether a single sample \(\xi_t\) or a component \(\xi_t^{(i)}\) of a batch of \(B\) independent samples \(\xi_t^B\equiv (\xi_t^{(i)})_{i=1}^B\)), the noise is conditionally sub-Gaussian with variance proxy \(\sigma^2>0\), satisfying \(\mathbb{E}[ \exp( \|G(w_t, \xi) - g(w_t)\|_\calG^2/\sigma^2 ) \mid \mathcal{F}_{t-1} ] \le 2\).
\end{assumption}

\vspace{-1mm}
At iteration \(t\), given the reset indicator \(b_t \sim \mathrm{Bernoulli}(p_t)\), we define the unified estimator as:
\begin{equation}
\label{eq:unified_est_general}
v_t \coloneqq
\begin{cases}
G(w_t,\xi_t^B), & b_t = 1,\\
G(w_t,\xi_t) + \beta_t \Delta_t + \mathcal{T}_t(w_t, w_{t-1}, \xi_t), & b_t = 0,
\end{cases}
\tag{\(\ast\)}
\end{equation}
where \(\xi_t^B\) represents a fresh batch of samples, \(\xi_t\) is a single stochastic sample, and \(\Delta_t \coloneqq v_{t-1} - G(w_{t-1},\xi_t)\). Viewed through this lens, \(\mathcal{T}_t\) is the mechanism that corrects stale information from the previous iterate. Different choices of \(\mathcal{T}_t\) recover classical estimators and naturally suggest new ones.
\subsection{Error Dynamics and Effective Innovation}
\label{subsec:error_dynamics}
We next derive the common error recursion induced by \eqref{eq:unified_est_general}. When a reset occurs, i.e.,  \(b_t=1\), the estimator is reinitialized using a fresh stochastic estimate. The more interesting case is the recursive steps \(b_t=0\), where past estimation error is propagated and modified by the correction mechanism.

Expanding \(e_t = v_t - g(w_t)\) and substituting \(v_{t-1}=e_{t-1}+g(w_{t-1})\) yields the core error recursion \(e_t = \beta_t e_{t-1} + \mathcal{I}_t(\mathcal{T})\). The estimation error thus consists of a contracted past error and a stochastic source term, which we define as the \emph{Effective Innovation}.
\begin{definition}[Effective Innovation]
\label{def:effective_innovation}
    The effective innovation \(\mathcal{I}_t(\mathcal{T})\) at iteration \(t\) is defined as
    \begin{equation} \label{eq:innovation_def}
        \begin{aligned}
        \mathcal{I}_t(\mathcal{T}) \coloneqq (1-\beta_t) \underbrace{(G(w_t, \xi_t) - g(w_t))}_{\text{Direct Noise }}  
        + \beta_t \underbrace{\Delta(w_t, w_{t-1}, \xi_t)}_{\text{Difference Noise}} + \mathcal{T}_t(w_t, w_{t-1}, \xi_t),
        \end{aligned}
    \end{equation}
    where \(\Delta(w_t, w_{t-1}, \xi_t) \coloneqq (G(w_t, \xi_t) - G(w_{t-1}, \xi_t)) - (g(w_t) - g(w_{t-1}))\) is the centered difference.
\end{definition}

The decomposition in \eqref{eq:innovation_def} makes the bias-variance structure of the unified recursion. The conditional expectation of \(\mathcal{I}_t(\mathcal{T})\) determines the predictable bias, while its variability drives the variance.

\textbf{Specific choices of the correction term \(\mathcal{T}_t\).  } 
By specifying \(\mathcal{T}_t\), we recover existing variance-reduced methods and suggest new higher-order variants, summarized in Table~\ref{tab:estimator_configs}.

\noindent \textbullet\ \textbf{Family 1: Zeroth-Order Correction (Standard Recursion and Resetting).}
    Setting
    \(\mathcal{T}_t \coloneqq \beta_t\bigl(G(w_{t-1}, \xi_t) - G(w_t, \xi_t)\bigr)\) in~\eqref{eq:unified_est_general}
    recovers the momentum-style update
    \(v_t = (1-\beta_t)G(w_t,\xi_t) + \beta_t v_{t-1}.\)
    This attenuates direct oracle noise through averaging, but introduces a deterministic bias induced by iterate movement, since information formed at \(w_{t-1}\) is reused at \(w_t\).

\noindent \textbullet\ \textbf{Family 2: First-Order Correction (Differential Recursion and Resetting).}
    By setting \(\mathcal{T}_t \coloneqq 0\), the recursion in~\eqref{eq:unified_est_general} is driven entirely by stochastic differences. This eliminates the deterministic drift inherent to momentum updates, leaving an estimation error governed solely by centered difference noise that scales with \(\|w_t-w_{t-1}\|\). This formulation recovers STORM (\(\beta_t<1\)), PAGE (\(p_t>0,\beta_t=1\)), and SPIDER (periodic resets).

\noindent \textbullet\ \textbf{Family 3: Second-Order Correction (Second-Order Recursion and Resetting).} Letting \(\mathcal{T}_t\) encode a Taylor correction, \( \mathcal{T}_t \coloneqq \beta_t\bigl(G(w_{t-1},\xi_t)-[G(w_t,\xi_t)+\nabla G(w_t,\xi_t)(w_{t-1}-w_t)]\bigr),\) incorporates local curvature information in~\eqref{eq:unified_est_general}. This reduces the leading-order deterministic bias to a second-order residual, while the remaining stochastic error is governed by the Hessian-vector product.

\begin{table}[t]
\vspace{-3mm} %
\centering
\caption{Special cases of the unified estimator. Here, \(\Delta G:=\beta_t(G(w_{t-1},\xi_t) - G_t(w_t, \xi_t))\), \(2^{\text{nd}}\coloneqq  \beta_t\bigl(G(w_{t-1},\xi_t)-[G(w_t,\xi_t)+\nabla G(w_t,\xi_t)(w_{t-1}-w_t)]\bigr)\), and \(\mathbb{I}_{\mathsf{Sched}}\) refers to a deterministic schedule (e.g., \(t \mod E = 0\)). Bolded entries are \textbf{new} estimators. \textbf{SO} denotes Second-Order.}
\label{tab:estimator_configs}
\setlength{\tabcolsep}{4pt}
\renewcommand{\arraystretch}{1.15}
\begin{tabular}{r@{\;}c@{\quad}c||c|c|c}
\toprule
& \multicolumn{2}{c||}{\textbf{Case}} & 
\textbf{Standard} & \textbf{Differential} & \textbf{Second-Order} \\
& \(\beta_t\) & \(p_t\) & 
\(\mathcal{T}_t = \Delta G\) & 
\(\mathcal{T}_t = 0\) & 
\(\mathcal{T}_t = 2^{\text{nd}}\) \\
\midrule
1. & \(\beta\)                & \(0\) & Momentum             & STORM       & SO Momentum  \\
2. & \(1\)             & \(p\)    & \algoname{Probabilistic Momentum}  & PAGE/ Loopless SVRG        & \algoname{SO PAGE}   \\
3. & \(1\) & \(\mathbb{I}_{\mathsf{Sched}}\) & \algoname{Periodic\ Momentum}   & SPIDER/SVRG & \algoname{SO SPIDER} \\
\bottomrule
\end{tabular}
\vspace{-4mm} %
\end{table}

\section{Unified Estimation Error Bound}\label{sec:main_results}
We now develop a high-probability bound for the unified estimator introduced in Section~\ref{sec:estimators}. The error recursion derived in Section~\ref{subsec:error_dynamics} shows that the estimation error is driven by the accumulation of effective innovations over epochs delimited by reset times. Our goal in this section is to control this accumulation uniformly with high probability. 
We first impose a regularity condition on the effective innovation, separating bias from the centered stochasticity and enabling martingale concentration.

\begin{assumption}[Conditional Regularity of Innovation] \label{ass:innovation_regularity}
    We assume there exist non-negative, predictable processes \(\{\mathcal{B}_t\}_{t \ge 1}\) and \(\{\Sigma_t\}_{t \ge 1}\) adapted to \(\mathcal{F}_{t-1}\) such that, almost surely

    \noindent \textit{i)}\  The conditional bias is bounded by \(\mathcal{B}_t\), i.e., \(\|\mathbb{E}[\mathcal{I}_t(\mathcal{T}) \mid \mathcal{F}_{t-1}]\|_\calG \le \mathcal{B}_t.\)

    \noindent \textit{ii)} \ The centered innovation \(\mathcal{Z}_t \coloneqq \mathcal{I}_t(\mathcal{T}) - \mathbb{E}[\mathcal{I}_t(\mathcal{T}) \mid \mathcal{F}_{t-1}]\) is sub-Gaussian conditioned on \(\mathcal{F}_{t-1}\) with a random, predictable variance proxy \(\Sigma_t^2\). That is, \(\mathbb{E}\left[ \exp\left( \frac{\|\mathcal{Z}_t\|_\calG^2}{\Sigma_t^2} \right) \middle| \mathcal{F}_{t-1} \right] \le 2.\) We also assume that on the \(\mathcal{F}_{t-1}\)-measurable event \(\{\Sigma_t = 0\}\), \(\mathcal{Z}_t = 0\) almost surely.
\end{assumption}

Assumption~\ref{ass:innovation_regularity} controls the effective innovation by bounding its conditional bias \(\mathcal{B}_t\) and its centered stochastic component via the variance proxy \(\Sigma_t^2\). We now state the main theorem for estimation error.

\begin{myframe}
\begin{main theorem}[Unified Estimation Error Bound]\label{thm:unified}
    Consider the unified estimator \eqref{eq:unified_est_general} governed by predictable sequences \(\{\beta_t\}_{t\ge 1}, \{p_t\}_{t\ge 1}\) and \(\{\eta_{t-1}\}_{t\geq1}\) satisfying Assumption~\ref{ass:predictable_params}. Assume \(b_0=1\) and, without loss of generality, \(b_T=1\) for some deterministic \(T\). Let \(\tau_m\) denote the reset times, defined recursively by \(\tau_0 = 0,\) \(\tau_m = \inf\{t>\tau_{m-1}: b_t=1\}\) for \(1\leq m \leq T,\) and define the cumulative momentum factors \(\Lambda_{t,j} \coloneqq \prod_{i=j+1}^t \beta_i,\) \(A_t \coloneqq \prod_{i=1}^t \beta_i^{-1}\), with the convention \(\Lambda_{t,t}=1\). Let \(m(t)\) denote the unique index such that \(t \in (\tau_{m(t)-1}, \tau_{m(t)}].\) Under Assumptions~\ref{ass:subgaussian},~\ref{ass:innovation_regularity}, for any \(\delta\in(0,1)\) and any deterministic variance budget \(V_t\ge 0\), with probability at least \(1-\delta\), the condition \(\mathfrak V_t^2 \coloneqq \sum_{j=\tau_{m(t)-1}+1}^t A_j^2\Sigma_j^2 \le V_t\) implies that the estimation error satisfies
    \begin{equation}
        \label{eq:unified_main_bound}
        \|e_t\|_\calG \leq  C\left(\frac{\delta}{2T}, \kappa\right)\Lambda_{t,\tau_{m(t)-1}} \sigma/\sqrt{B} 
        + \sum_{j=\tau_{m(t)-1}+1}^t \Lambda_{t,j} \mathcal{B}_j + C\left(\frac{\delta}{2T},\kappa\right)A_t^{-1}\sqrt{V_t}.
    \end{equation}
\end{main theorem}
\end{myframe}

The bound in Theorem~\ref{thm:unified} decomposes the estimation error into three terms. The first term captures the effect of the most recent reset or batch initialization and decays according to the cumulative momentum since that reset. The second term accumulates the conditional bias of the effective innovations through the quantities \(\mathcal{B}_j\). The third term is a high-probability concentration term controlled by the predictable variance accumulated over the current reset interval. This decomposition makes explicit how reset probability, momentum strength, and the choice of correction term jointly determine the quality of the estimator. The complete details of the proof are presented in Appendix~\ref{apdx:proof_main}.
The technical difficulty in proving this result requires establishing concentration bounds for sums of martingale difference sequences over random reset times. 
To handle this, we combine a masking argument for stopping-time intervals (see Lemma~\ref{lemma:masked_mds}) with the vector-valued Freedman inequality (see Theorem~\ref{thm:freedman}), which is the main concentration tool underlying the proof. 

\section{Instantiations of the Unified Estimation Error Bound}\label{sec:instantiations}
Theorem~\ref{thm:unified} reduces the analysis of any estimator in our unified framework to controlling two quantities associated with the \textit{Effective Innovation}: the conditional bias bound \(\mathcal{B}_t\) and the predictable variance proxy \(\Sigma_t^2\) in Assumption~\ref{ass:innovation_regularity}. In this section, we instantiate these quantities for the three families. The resulting bounds reveal how \(\mathcal{T}_t\) dictates the bias-variance tradeoff.
Our standing assumptions from previous sections remain in play throughout. In particular, we use Assumption~\ref{ass:update_bound} to control the displacement of consecutive iterates, Assumption~\ref{ass:predictable_params} for the predictability of the control sequences, and Assumption~\ref{ass:subgaussian} for the sub-Gaussianity of the oracle.
The only additional ingredients needed in this section are family-specific regularity conditions that allow us to identify \(\mathcal{B}_t\) and \(\Sigma_t^2\) in Assumption~\ref{ass:innovation_regularity} concretely. 

Beyond Assumptions~\ref{ass:update_bound}, \ref{ass:predictable_params}, and \ref{ass:subgaussian}, we require the additional assumptions associated with the three estimator families. These enable us to satisfy Assumption~\ref{ass:innovation_regularity} and apply Theorem~\ref{thm:unified} to bound the estimation error \(\|e_t\|_\calG\). All proofs are available in Appendix~\ref{apdx:est_instant}.

\subsection{Family 1: Zeroth-Order Correction}
\label{subsec:family1}
We begin with the zeroth-order family, which recovers the momentum-style update, where the correction term is 
\(\mathcal{T}_t(w_t,w_{t-1},\xi_t) = \beta_t\bigl(G(w_{t-1},\xi_t)-G(w_t,\xi_t)\bigr)\), and its effective innovation is \(\calI_t(\calT)=(1-\beta_t)(G(w_t,\xi_t)-g(w_t))-\beta_t(g(w_t)-g(w_{t-1}))\). 

\begin{assumption}[Lipschitz Continuity] 
\label{ass:lipschitz}
    The mean function \(g\) is \(L\)-Lipschitz continuous, satisfying \(\|g(w) - g(w')\|_\calG \le L \|w - w'\|\) for all \(w, w' \in \mathcal{W}\).
\end{assumption}
Assumption~\ref{ass:lipschitz} controls the drift caused by reusing past information at \(w_{t-1}\) while the target has moved to \(g(w_t)\). 
Under Assumption~\ref{ass:lipschitz}, we show that the conditional regularity of the innovation (Assumption~\ref{ass:innovation_regularity}) is satisfied. This holds because the conditional bias \(\mathcal{B}_t\) is bounded as follows:
\[
\|\E[\calI_t(\calT)\mid \calF_{t-1}]\|_\calG = \beta_t \| \E[g(w_t) - g(w_{t-1})\mid \calF_{t-1}]\|_\calG\leq \beta_t\eta_{t-1} LG =: \calB_t,
\]
and the predictable variance proxy for \(\calZ_t = \calI_t(\calT)-\E[\calI_t(\calT)\mid \calF_{t-1}]=(1-\beta_t)(G(w_t, \xi_t)-g(w_t))\) evaluates to \(\Sigma_t^2 = (1-\beta_t)^2\sigma^2\) (see Proposition~\ref{prop:family1} and its proof in Appendix~\ref{apdx:est_instant}).

\subsection{Family 2: First-Order Correction}
We next consider the first-order differential family, corresponding to the choice \(\mathcal T_t(w_t,w_{t-1},\xi_t) = 0\) and its effective innovation is \(\calI_t(\calT)=(1-\beta_t)(G(w_t,\xi_t)-g(w_t))+\beta_t \Delta(w_t, w_{t-1}, \xi_t)\). This family includes STORM, PAGE, and SPIDER-type estimators as special cases. The key feature of this recursion is that the deterministic drift from the moving target is removed and replaced by the centered difference noise.

\begin{assumption}[Sub-Gaussian Lipschitzness of the Centered Differences] 
\label{ass:subg_lipschitz}
    For the analysis of variance-reduced estimators that rely on estimate differences, such as STORM, we require a stronger tail condition on the continuity of the noise. Let \(\Delta(w, w', \xi) \coloneqq (G(w, \xi) - G(w', \xi)) - (g(w) - g(w'))\) denote the centered estimate difference noise. We assume the stochastic oracle is \(\ell\)-sub-Gaussian Lipschitz, meaning that for all \(w, w' \in \mathcal{W}\), \( \mathbb{E}\left[ \exp\left( \tfrac{\|\Delta(w, w', \xi)\|_\calG^2}{\ell^2 \|w - w'\|^2} \right) \right] \le 2.\)
\end{assumption}
Assumption~\ref{ass:subg_lipschitz} is tailored to first-order differential recursions. It ensures that stochastic differences concentrate with a variance proxy proportional to the squared distance between iterates (see two concrete examples for Assumption~\ref{ass:subg_lipschitz} in Subsection~\ref{subsec:example}, Appendix~\ref{apdx:est_instant}). 
Under Assumption~\ref{ass:subg_lipschitz}, the innovation satisfies the conditional regularity condition in Assumption~\ref{ass:innovation_regularity}. Specifically, this holds with a conditional bias of \(\calB_t = \|\E[\calI_t(\calT) \mid\calF_{t-1}]\|_\calG = 0\) and a predictable variance proxy of \(\Sigma_t^2 = 2 (1-\beta_t)^2\sigma^2 + 2 \beta_t^2 \ell^2 \eta_{t-1}^2 G^2\) (see Proposition~\ref{prop:family2} and its proof in Appendix~\ref{apdx:est_instant}).

Specific choices of \((p_t,\beta_t)\) smoothly recover standard algorithms: STORM when \(p_t=0\), \(\beta_t<1\), loopless SVRG/PAGE when \(p_t \in (0,1)\), \(\beta_t=1\), and SPIDER/SVRG with periodic resets and \(\beta_t=1\). In the fully differential regime (\(\beta_t=1\)), the direct-noise term vanishes, leaving only the difference-noise variance proxy \(\Sigma_t^2 = 2\ell^2\eta_{t-1}^2G^2.\)

\begin{table*}[!t]
\centering
\caption{
Family-wise instantiations of Theorem~\ref{thm:unified}. Each row corresponds to a representative choice of \((\beta_t,p_t)\) within a family, assuming a constant step size \(\eta_t = \eta\). For deterministic periodic resets (Case~3), the initialization term is union-bounded over the \(T/E\) scheduled epochs to yield \(C(\delta/(2T/E),\kappa)\), though \(C(\delta/(2T),\kappa)\) from Theorem~\ref{thm:unified} remains valid up to logarithmic factors.
}
\label{tab:error_bounds}
\renewcommand{\arraystretch}{2.0}
\resizebox{\textwidth}{!}{%
\begin{tabular}{l || c | c || l}
\hline\hline
\textbf{Algorithm} & \(\beta_t\) & \(p_t\) & \textbf{Upper Bound on \(\|e_t\|_\calG,\;\forall t\in[T]\) 
} \\
\hline\hline

\rowcolor{orange!10} \multicolumn{4}{l}{\textit{Family 1: Standard Recursion and Resetting (Zeroth-Order)} \quad \textbf{Assumptions:} 
\ref{ass:update_bound},\ref{ass:predictable_params},\ref{ass:subgaussian},\ref{ass:lipschitz}} \\
Momentum       & \(\beta\) & \(0\)                              & \(\beta^t C\left(\frac{\delta}{2T},\kappa\right)\sigma/\sqrt{B} + \frac{GL\eta}{1-\beta} + C\left(\frac{\delta}{2T},\kappa\right)\sigma\sqrt{1-\beta} \) \\
Probabilistic Momentum & \(1\)     & \(p\)                              & \(C\left(\frac{\delta}{4T},\kappa\right)\sigma/\sqrt{B} + \frac{GL\eta}{p}\log\frac{4T}{\delta} \) \\
Periodic Momentum      & \(1\)     & \(\mathbb{I}_{\{t \bmod E = 0\}}\) & \(C\left(\frac{\delta}{2T/E},\kappa\right)\sigma/\sqrt{B} + GL\eta E \) \\
\hline

\rowcolor{orange!10} \multicolumn{4}{l}{\textit{Family 2: Differential Recursion and Resetting (First-Order)} \quad \textbf{Assumptions:} 
\ref{ass:update_bound},\ref{ass:predictable_params},\ref{ass:subgaussian},\ref{ass:subg_lipschitz}} \\
STORM                & \(\beta\) & \(0\)                              & \(\beta^t C\left(\frac{\delta}{2T},\kappa\right)\sigma/\sqrt{B} + C\left(\frac{\delta}{2T},\kappa\right) \sqrt{\frac{2(1-\beta)^2\sigma^2+2\beta^2\ell^2\eta^2G^2}{1-\beta^2}} \) \\
PAGE / Loopless SVRG & \(1\)     & \(p\)                              & \(C\left(\frac{\delta}{4T},\kappa\right)\sigma/\sqrt{B} + C\left(\frac{\delta}{4T},\kappa\right)\ell\eta G \sqrt{\frac{2}{p}\log\frac{4T}{\delta}} \) \\
SPIDER / SVRG        & \(1\)     & \(\mathbb{I}_{\{t \bmod E = 0\}}\) & \(C\left(\frac{\delta}{2T/E},\kappa\right)\sigma/\sqrt{B} + C\left(\frac{\delta}{2T},\kappa\right)\ell\eta G\sqrt{2E} \) \\
\hline

\rowcolor{orange!10} \multicolumn{4}{l}{\textit{Family 3: Second-Order Recursion and Resetting (Second-Order)} \quad \textbf{Assumptions:} 
\ref{ass:update_bound},\ref{ass:predictable_params},\ref{ass:subgaussian},\ref{ass:subg_hessian},\ref{ass:smoothness}} \\
Second-Order Momentum  & \(\beta\) & \(0\)                              & \(\beta^t C\left(\frac{\delta}{2T},\kappa\right)\sigma/\sqrt{B} + \frac{\alpha\beta\eta^2G^2}{2(1-\beta)} + C\!\left(\frac{\delta}{2T},\kappa\right) \sqrt{\frac{2(1-\beta)^2\sigma^2+2\beta^2\gamma^2\eta^2G^2}{1-\beta^2}} \) \\
Second-Order PAGE   & \(1\)     & \(p\)                              & \(C\left(\frac{\delta}{4T},\kappa\right)\sigma/\sqrt{B} + \frac{\alpha\eta^2G^2}{2p}\log\frac{4T}{\delta} + C\left(\frac{\delta}{4T},\kappa\right)\gamma\eta G \sqrt{\frac{2}{p}\log\frac{4T}{\delta}} \) \\
Second-Order SPIDER & \(1\)     & \(\mathbb{I}_{\{t \bmod E = 0\}}\) & \(C\left(\frac{\delta}{2T/E},\kappa\right)\sigma/\sqrt{B} + \frac{\alpha\eta^2G^2E}{2} + C \left(\frac{\delta}{2T},\kappa\right)\gamma\eta G\sqrt{2E}\) \\
\hline\hline

\end{tabular}%
}
\vspace{-3mm} %
\end{table*}

\subsection{Family 3: Second-order Correction}
\vspace{-2mm}
We finally consider the second-order family, where the correction term incorporates local curvature information \(\mathcal T_t(w_t,w_{t-1},\xi_t) \coloneqq \beta_t\left( G(w_{t-1},\xi_t) - \left[G(w_t,\xi_t)+\nabla G (w_t,\xi_t)(w_{t-1}-w_t)\right] \right)\), and its effective innovation is \(\calI_t(\calT)=(1-\beta_t)(G(w_t, \xi_t)-g(w_t))+\beta_t(g(w_{t-1})-g(w_t)-\nabla G(w_t, \xi_t)(w_{t-1}-w_t))\). This correction is designed to remove the leading-order drift term and leave only a second-order Taylor residual, therefore we introduce \(L(\mathcal{X},\calG)\) as the set of all bounded linear operators from \(\mathcal{X}\) to \(\calG\), and for any \(T \in L(\mathcal{X},\calG)\) and its norm is defined by \(\|T\|_{L(\mathcal{X},\calG)} := \sup_{\|x\| \leq 1, x\in \mathcal{X}} \|Tx\|_\calG\).

\begin{assumption}[Sub-Gaussian Hessian Noise] 
\label{ass:subg_hessian}
    Let \(\nabla G(w, \xi)\) %
    denote the stochastic operator returned by the oracle such that %
    \(\mathbb{E}_\xi[\nabla G(w, \xi)]= \nabla g(w)\). We assume the noise in Hessian-vector products is sub-Gaussian. Specifically, 
    \(\mathbb{E}\left[\exp\left(\|\nabla G(w,\xi)-\nabla g(w)\|_{L(\mathcal{X},\calG)}^2/\gamma^2\right)\right]\leq 2\) for any \(w \in \mathcal{W}\).
\end{assumption}

\begin{assumption}[Smoothness] \label{ass:smoothness}
    \(g\) is \(\alpha\)-smooth. 
    That is, 
    \(\|\nabla g(w) - \nabla g(w')\|_{L(\mathcal{X}, \calG)} \le \alpha \|w - w'\|\), where \(\nabla g\) is the Jacobian of \(g\), or the Hessian of the objective if \(g = \nabla f\), bounded in spectral norm.
\end{assumption}

Assumptions~\ref{ass:subg_hessian} and \ref{ass:smoothness} allow us to control the stochastic noise in the Hessian-vector products and bound the second-order Taylor remainder, such that \( \|g(w_{t-1})-g(w_t) - \nabla g(w_t)(w_{t-1}-w_t)\|_\calG \leq \frac{\alpha}{2} \|w_t-w_{t-1}\|^2 \). Under these assumptions, the conditional regularity of the innovation (Assumption~\ref{ass:innovation_regularity}) holds with a conditional bias bounded by \(\calB_t\) since
\[
\|\E[\calI_t(\calT)\mid \calF_{t-1}]\|_\calG = \beta_t \|g(w_{t-1})-g(w_t) - \nabla g(w_t)(w_{t-1}-w_t)\|_\calG \leq \frac{\alpha}{2} \beta_t \eta_{t-1}^2 G^2 =: \calB_t,
\]
and the predictable variance proxy for \(\calZ_t = \calI_t(\calT) - \E[\calI_t(\calT)\mid\calF_{t-1}]=(1-\beta_t)(G(w_t, \xi_t)-g(w_t))-\beta_t (\nabla G(w_t, \xi_t)-\nabla g(w_t))(w_{t-1} - w_t)\) is given by \(\Sigma_t^2 = 2(1-\beta_t)^2\sigma^2+2\beta_t^2\gamma^2\eta_{t-1}^2G^2\) (see Proposition~\ref{prop:family3} and its proof in Appendix~\ref{apdx:est_instant}).

\section{Optimization Guarantees from Unified Estimation}\label{sec:app}
\vspace{-2mm}
We now show how the unified estimation bound translates into optimization guarantees in two representative settings. The key point is that Theorem~\ref{thm:unified} reduces the analysis to obtaining a uniform high-probability bound on the estimator error. Once such a bound is available, the remaining argument becomes application-specific. In unconstrained stochastic optimization, this yields first-order stationarity guarantees for mirror descent. In expectation-constrained stochastic optimization, it yields feasibility and optimality guarantees for the optimization algorithm. 
\vspace{-2mm}
\subsection{Application 1: Stochastic Optimization with Mirror Descent}\label{sec:app1}
\vspace{-2mm}
We consider the stochastic optimization problem~\eqref{eq:application1} of minimizing \(f(w) = \mathbb{E}_{\xi}[F(w, \xi)]\) over a closed convex set \(\mathcal{W}\), where we assume \(f\) is bounded below such that \(\inf_{w\in\mathcal{W}} f(w) > -\infty\). We define the true objective gradient as \(g(w) := \nabla f(w) = \nabla \mathbb{E}_{\xi}[F(w, \xi)]\) and its unbiased stochastic estimator as \(G(w, \xi)\). To relate the unified estimator \(v_t\) of the objective gradient \(g(w_t)\) to the update vector \(U_t\), we select \((\calG, \|\cdot\|_{\calG}):=(\calX^*, \|\cdot\|_*)\) and choose \(U_t=v_t/\|v_t\|_*\) as the normalized direction of \(v_t\). We then perform a mirror descent update \(w_{t+1}=w_t - \eta_t P(w_t, U_t, \eta_t)\) using a constant step size \(\eta_t=\eta>0\); since the update vector \(\|U_t\|_*=1\), Assumption~\ref{ass:update_bound} is satisfied with \(G=1\).

\textbf{Stationarity Criterion.} For any choice of the correction term \(\mathcal{T}_t\), we measure convergence using the criterion \(\frac{1}{T}\sum_{t=1}^T \|\nabla f(w_t)\|_\ast \|P_t\|^2 \leq \varepsilon\) for mirror descent with a normalized update vector, where the \textit{proximal stationarity witness} is defined as \(P_t \coloneqq P\!\left(w_t,\nabla f(w_t)/\|\nabla f(w_t)\|_\ast,\frac{\eta}{2}\right)\). For an unconstrained Hilbert space (e.g., \(\mathcal{X} = \mathbb{R}^d\) with \(\mathcal{W} = \mathcal{X}\)) under the \(\ell_2\)-norm, selecting the mirror map \(\Phi(w) = \frac{1}{2}\|w\|_2^2\) yields exactly \(P(w, u, \eta) = u\). This implies \(P_t = \nabla f(w_t)/\|\nabla f(w_t)\|_2\) and \(\|P_t\| = 1\), meaning our results with above stationary criterion recover the classic theoretical guarantees for the \(\varepsilon\)-stationarity of \(\frac{1}{T}\sum_{t=1}^T \|\nabla f(w_t)\|_2\) \cite{liu2020improved,cutkosky2019momentum,cutkosky2020momentum,cutkosky2021high,li2021page,fang2018spider}. For comparison, STORM-type algorithm that uses the gradient estimator directly as the update vector~\cite{xu2023momentum} exhibits a comparable oracle complexity on the order of \(\varepsilon^{-3}\), while relying on \(\|P(w,\nabla f(w), \eta)\|\) as its criterion for first-order stationarity.

\textbf{Oracle / Iteration Complexities.} For brevity, the results of oracle / iteration complexities for all special cases of these three families are summarized in  Table~\ref{tab:complexity_summary} with different selections of \(\beta_t, p_t\) and \(\eta_t=\eta\) as listed in Table~\ref{tab:parameter_configurations}, Appendix~\ref{apdx:application1}, where we define the initial suboptimality gap as \(\Delta_f \coloneqq f(w_1)-\inf_{w\in\mathcal{W}} f(w)\), and introduce the following shorthand constants:
\begin{equation}\label{eq:define_simga}
\sigma_L \coloneqq (\Delta_f L)^{\frac{1}{2}}, \quad \sigma_\ell \coloneqq (\Delta_f \ell)^{\frac{1}{2}}, \quad \sigma_\gamma \coloneqq (\Delta_f \gamma)^{\frac{1}{2}}, \quad \sigma_\alpha \coloneqq (\Delta_f^2 \alpha)^{\frac{1}{3}}.
\end{equation}
The parameters \(L, \ell, \gamma,\) and \(\alpha\) refer to constants in Assumptions~\ref{ass:lipschitz}, \ref{ass:subg_lipschitz}, \ref{ass:subg_hessian}, and \ref{ass:smoothness}, respectively.

\begin{theorem}[Complexities, Informal]\label{thm:informal_complexities}
Under the parameter selections in Table~\ref{tab:parameter_configurations}, the oracle and iteration complexities \(N, T\) to achieve \(\varepsilon\)-stationarity with high probability are:

\algofam{Family 1:} 
\algoname{SGD-M:} \(\tilde{\mathcal{O}}\big(\frac{\sigma_L^2\sigma^2}{\varepsilon^4}\vee \frac{\sigma^3}{\varepsilon^3}\big)\); \\
\algoname{Probabilistic Momentum:} \(\tilde{\mathcal{O}}\big(\frac{\sigma_L^2\sigma^2}{\varepsilon^4}\big)\); 
\algoname{Periodic Momentum:} \(\tilde{\mathcal{O}}\big(\frac{\sigma_L^2\sigma^2}{\varepsilon^4}\big)\).

\algofam{Family 2:} 
\algoname{STORM:} \(\tilde{\mathcal{O}}\big(\frac{\sigma^3}{\varepsilon^3} \vee \frac{\sigma_\ell^2\sigma}{\varepsilon^3} \vee \frac{\sigma_L^2}{\varepsilon^2}\big)\); \\
\algoname{PAGE:} \(\tilde{\mathcal{O}}\big(\frac{\sigma_\ell^2\sigma}{\varepsilon^3} \vee \frac{\sigma_L^2}{\varepsilon^2}\big)\); 
\algoname{SPIDER:} \(\tilde{\mathcal{O}}\big(\frac{\sigma_\ell^2\sigma}{\varepsilon^3} \vee \frac{\sigma_L^2}{\varepsilon^2}\big)\).

\algofam{Family 3:} 
\algoname{SO Momentum:} \(\tilde{\mathcal{O}}\big(\frac{\sigma^3}{\varepsilon^3} \vee \frac{\sigma_\gamma^2\sigma}{\varepsilon^3} \vee \frac{\sigma_\alpha^{3/2}\sigma}{\varepsilon^{5/2}} \vee \frac{\sigma_L^2}{\varepsilon^2}\big)\); \\
\algoname{SO PAGE:} \(\tilde{\mathcal{O}}\big(\frac{\sigma_\gamma^2\sigma}{\varepsilon^3} \vee \frac{\sigma_\alpha^{3/2}\sigma}{\varepsilon^{5/2}} \vee \frac{\sigma_L^2}{\varepsilon^2}\big)\); 
\algoname{SO SPIDER:} \(\tilde{\mathcal{O}}\big(\frac{\sigma_\gamma^2\sigma}{\varepsilon^3} \vee \frac{\sigma_\alpha^{3/2}\sigma}{\varepsilon^{5/2}} \vee \frac{\sigma_L^2}{\varepsilon^2}\big)\).
\end{theorem}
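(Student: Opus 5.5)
The plan has two stages. First, prove a deterministic descent lemma for normalized mirror descent in which the estimator error enters additively. Second, plug in the family-wise high-probability error bounds of Table~\ref{tab:error_bounds}, which come from Theorem~\ref{thm:unified} together with the instantiations of Section~\ref{sec:instantiations}, and tune $(\beta,p,E,B,\eta)$ as in Table~\ref{tab:parameter_configurations}.

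\textbf{Descent lemma.} We use the smoothness of $f$, namely the $L$-Lipschitz gradient from Assumption~\ref{ass:lipschitz} with $g=\nabla f$. This gives
\[
f(w_{t+1})\le f(w_t)-\eta\langle \nabla f(w_t),P(w_t,U_t,\eta)\rangle+\tfrac{L\eta^2}{2}\|P(w_t,U_t,\eta)\|^2 .
\]
We then split $\nabla f(w_t)=\|v_t\|_*U_t-e_t$. The first-order optimality condition of the proximal subproblem (Lemma~\ref{lemma:properties_proximal_gradient_mapping}) gives $\langle U_t,P(w_t,U_t,\eta)\rangle\ge\|P(w_t,U_t,\eta)\|^2$, and $\|P\|\le\|U_t\|_*=1$ controls the error term, which is at most $\eta\|e_t\|_*$.

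The delicate step is converting $\|v_t\|_*\|P(w_t,U_t,\eta)\|^2$ into the witness quantity $\|\nabla f(w_t)\|_*\|P_t\|^2$, where $P_t$ uses the true normalized gradient and step $\eta/2$. I would use the Lipschitz (nonexpansive-type) dependence of $P(w,\cdot,\eta)$ on its input, also from Lemma~\ref{lemma:properties_proximal_gradient_mapping}, together with the monotonicity of $\|P(w,u,\eta)\|$ in $\eta$; this monotonicity is why the halved step appears. Splitting into the cases $\|e_t\|_*\le\tfrac12\|\nabla f(w_t)\|_*$ and its complement should yield
\[
\eta c\,\|\nabla f(w_t)\|_*\|P_t\|^2\le f(w_t)-f(w_{t+1})+c'\eta\|e_t\|_*+\tfrac{L\eta^2}{2}
\]
for absolute constants $c,c'$. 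I expect this to be the main obstacle, because in non-Euclidean geometry the normalization and the mapping $P$ do not commute.

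\textbf{Summation and error bounds.} Telescoping over $t=1,\dots,T$ gives
\[
\frac1T\sum_{t=1}^T\|\nabla f(w_t)\|_*\|P_t\|^2\lesssim\frac{\Delta_f}{\eta T}+\frac1T\sum_{t=1}^T\|e_t\|_*+L\eta .
\]
On a single event of probability at least $1-\delta$, the bound on $\|e_t\|_*$ holds uniformly in $t$. This follows from Theorem~\ref{thm:unified} with $G=1$, where we choose the deterministic budgets $V_t$ to dominate $\mathfrak V_t^2$. For constant $\beta$ this is immediate. For random resets we additionally control the epoch lengths: with probability $1-\delta/2$, every gap between consecutive resets is at most $\tfrac1p\log\tfrac{4T}{\delta}$, which explains the $\log(4T/\delta)$ factors and the use of $C(\delta/4T,\kappa)$.

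\textbf{Parameter tuning.} For each family we set every term to $O(\varepsilon)$.
\begin{itemize}
\item \textbf{Family 1.} The bias $L\eta/(1-\beta)$ (or $L\eta/p$, or $L\eta E$) must be at most $\varepsilon$, and the term $\Delta_f/(\eta T)\le\varepsilon$ then forces $T\sim\Delta_f L\sigma^2/\varepsilon^4$ up to logarithmic factors.
\item \textbf{Family 2.} We balance $\ell\eta/\sqrt{1-\beta}$ against $\sigma\sqrt{1-\beta}$, or take $\ell\eta\sqrt{E}\le\varepsilon$ with $B\sim\sigma^2/\varepsilon^2$. This gives $\eta\sim\varepsilon^2/(\ell\sigma)$ and hence $\varepsilon^{-3}$ complexity. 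The oracle count is $N=T(1+pB)$, or $T(1+B/E)$ for periodic resets, with $p\sim 1/B$.
\item \textbf{Family 3.} The same balancing applies with $\gamma$ in place of $\ell$, plus the additional bias $\alpha\eta^2/(1-\beta)$. That bias produces the $\sigma_\alpha^{3/2}\sigma/\varepsilon^{5/2}$ term.
\end{itemize}
Finally, the $L\eta\le\varepsilon$ requirement contributes the $\sigma_L^2/\varepsilon^2$ term in each family.
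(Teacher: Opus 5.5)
Your two-stage architecture matches the paper: a smoothness descent step with an additive $\|e_t\|_*$ term, the uniform high-probability envelope from Theorem~\ref{thm:unified} with $G=1$ (including the geometric control of epoch lengths), and then term-by-term tuning. Your scalings recover the stated rates. The gap is at the step you flagged yourself.

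\textbf{Where the descent lemma breaks.} After $\langle U_t,P(w_t,U_t,\eta)\rangle\ge\|P(w_t,U_t,\eta)\|^2$ you are holding only norms at step $\eta$. To reach the witness $P_t=P(w_t,\hat u_t,\eta/2)$, with $\hat u_t=\nabla f(w_t)/\|\nabla f(w_t)\|_*$, you would need $\|P(w,\hat u,\eta)\|\gtrsim\|P(w,\hat u,\eta/2)\|$, i.e.\ that $\eta\|P(w,u,\eta)\|$ is nondecreasing. The Euclidean fact that $\|P(w,u,\eta)\|$ is nonincreasing points the wrong way. Lemma~\ref{lemma:properties_proximal_gradient_mapping} gives no norm monotonicity at all; it only says that $\langle u,\eta P(w,u,\eta)\rangle$ is nondecreasing.

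The norm version in fact fails for general mirror maps. On $\mathcal W=\mathbb R^2$, let $\Phi$ be the conjugate of $\Phi^*(y)=\tfrac{\epsilon^2}{2}\|y\|_2^2+g(\langle a,y\rangle)+h(\langle b,y\rangle)$, where $a=(\epsilon,1)$, $b=(\epsilon,-1)$, $g''=\tfrac14\mathbb I_{[-\epsilon,0]}$ and $h''=\tfrac14\mathbb I_{[-2\epsilon,-\epsilon]}$. For small $\epsilon$, $\Phi$ is $1$-strongly convex. Take $u=(1,0)$ and $w=\nabla\Phi^*(0)$. Then $w^+(1)-w=-(\tfrac54\epsilon^2,\tfrac14\epsilon)$ and $w^+(2)-w=-(\tfrac52\epsilon^2,0)$. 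Hence $\|P(w,u,1)\|\ge\epsilon/4$ while $\|P(w,u,2)\|=\tfrac54\epsilon^2$.

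If you complete your norm route using nonexpansiveness alone, you do get $\langle\nabla f(w_t),P(w_t,U_t,\eta)\rangle\ge\tfrac12\|\nabla f(w_t)\|_*\|P(w_t,\hat u_t,\eta)\|^2-3\|e_t\|_*$. But this controls a witness at step $\eta$, and the example shows that this does not bound the criterion in the statement.

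\textbf{The fix: keep the inner product.} Keep the inner product with $u=\nabla f(w_t)$ until the very end; this is Lemma~\ref{lemma:bound_for_proximal_gradient_mapping_with_normalized_direction}. By the scaling property, $\langle u,P(w_t,v_t/\|v_t\|_*,\eta)\rangle=\langle u,P(w_t,v_t,\eta/\|v_t\|_*)\rangle/\|v_t\|_*$.

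When $\|e_t\|_*\le\tfrac13\|u\|_*$, nonexpansiveness at the common step $\eta/\|v_t\|_*$ swaps $v_t$ for $u$ at cost $\|u\|_*\|e_t\|_*$. The inner-product monotonicity then lowers the step from $\eta/\|v_t\|_*$ to $\eta/(2\|u\|_*)$. This is legitimate because $\|v_t\|_*\le 2\|u\|_*$, and it loses only the factor $\|v_t\|_*/(2\|u\|_*)$. This is where $\eta/2$ comes from.

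When $\|e_t\|_*>\tfrac13\|u\|_*$, both sides are within $O(\|e_t\|_*)$ of zero. The result is $\langle u,P(w_t,U_t,\eta)\rangle\ge\tfrac14\langle u,P_t\rangle-2\|e_t\|_*$. Only after telescoping do you apply $\|u\|_*\|P_t\|^2\le\langle u,P_t\rangle$.

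\textbf{A smaller point.} The $\sigma^3/\varepsilon^3$ terms for SGD-M, STORM and SO momentum (with $B=1$) come from averaging the initialization term: $\frac1T\sum_t\beta^tC\sigma\le C\sigma/((1-\beta)T)$. So you must use the $t$-dependent envelope, not a $t$-uniform constant bound on $\|e_t\|_*$.
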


We use \(\tilde{\mathcal{O}}\) to hide the dependency on \(\log\frac{1}{\varepsilon}\)  and \(\log\frac{1}{\delta}\) in the informal Theorem~\ref{thm:informal_complexities} above. It is important to note that our comparisons to existing literature remain valid and fair despite differences in underlying sub-Gaussian assumptions versus bounded expected variance. For Family 1, our complexities for SGD-M match the \(\varepsilon^{-4}\) rate of existing work that also employs a normalized update vector (see Theorem 1 in~\cite{cutkosky2020momentum}). It allows us to achieve a sharper subdominant \(\sigma^3/\varepsilon^3\) term compared to the \((\sigma^4/\sigma_L^2)/\varepsilon^2 + (\sigma_L^2 \sigma^2)/\varepsilon^4\) rate established in~\cite{cutkosky2020momentum}, since the latter is strictly lower-bounded by our rate via \(2\sqrt{xy} \leq x+y\). While previous works within Family 2 and Family 3 rely on unnormalized update vectors, our approach incorporates normalization; nevertheless, direct comparisons to their respective baselines of complexities remain mathematically fair. Specifically, the complexity rates for STORM align with the dominant \(\sigma^3/\varepsilon^3\) term established in existing work (Theorem 1 of \cite{cutkosky2019momentum}), and the \((\sigma_\ell^2 \sigma)/\varepsilon^3\) oracle complexity matches previous findings for SPIDER (Theorem 1 in~\cite{fang2018spider}). For Family~3, which incorporates second-order correction, the oracle complexities of our new second-order variants align with the \((\sigma_\gamma^2\sigma)/\varepsilon^3+(\sigma_\alpha^{3/2}\sigma)/\varepsilon^{5/2}+\sigma_L^2/\varepsilon^2\) rate in prior work on variance reduction via Hessian-vector products (Theorem 2, page 8 of~\cite{arjevani2020second}). For details and configurations of our theoretical results, please refer to Tables~\ref{tab:complexity_summary},~\ref{tab:parameter_configurations}, as well as the formal Theorem~\ref{thm:mirror_stationarity} in Appendix~\ref{apdx:application1}.
\begin{table*}[t]
\centering
\caption{
Dominant \(\varepsilon\)-dependence of the oracle complexity to achieve \(\varepsilon\)-stationarity (Application~1) or \(\varepsilon\)-optimality and feasibility tolerance (Application~2), with logarithmic factors in \(\varepsilon^{-1}\) and \(\delta^{-1}\) suppressed. Application~1 refers to stochastic optimization with mirror descent~\eqref{eq:application1}; Application~2 refers to stochastic optimization with an expectation constraint~\eqref{eq:application2}. Full expressions with explicit constants appear in Appendices~\ref{apdx:application1} and~\ref{apdx:constrained_sgm}.
}
\label{tab:unified_complexity}
\renewcommand{\arraystretch}{1.5} %
\setlength{\tabcolsep}{6pt}
\small
\begin{tabular}{l || c | c | c || c | c | c}
\hline\hline
& \multicolumn{3}{c||}{\cellcolor{orange!15}\textbf{Application 1}~\eqref{eq:application1}} & \multicolumn{3}{c}{\cellcolor{blue!10}\textbf{Application 2}~\eqref{eq:application2}} \\
& \cellcolor{orange!15}Family 1 & \cellcolor{orange!15}Family 2 & \cellcolor{orange!15}Family 3 & \cellcolor{blue!10}Family 1 & \cellcolor{blue!10}Family 2 & \cellcolor{blue!10}Family 3 \\
\hline
Case 1: \(\beta{<}1,\; p{=}0\)                            & \multirow{3}{*}{\(\varepsilon^{-4}\)} & \multirow{3}{*}{\(\varepsilon^{-3}\)} & \multirow{3}{*}{\(\varepsilon^{-3}\)} & \multirow{3}{*}{\(\varepsilon^{-4}\)} & \multirow{3}{*}{\(\varepsilon^{-3}\)} & \multirow{3}{*}{\(\varepsilon^{-3}\)} \\
Case 2: \(\beta{=}1,\; p{>}0\)                            & & & & & & \\
Case 3: \(\beta{=}1\), \(p=\mathbb{I}_{\{t \bmod E = 0\}}\) & & & & & & \\
\hline\hline
\end{tabular}
\end{table*}
\subsection{Application 2: Expectation-Constrained Stochastic Optimization}\label{sec:app2}
\label{subsec:constrained_app}
We now consider the expectation-constrained stochastic optimization problem~\eqref{eq:application2}, in which feasibility must be determined from noisy observations of the constraint function. Unlike Application~1, where the estimator tracks a gradient used for the update direction, here the estimator tracks the scalar constraint value used for feasibility detection. This setting fits directly into the estimation framework developed in the previous sections by setting
\(g(w) := h(w),\) \(G(w,\xi) := H(w,\xi),\)
and hence \((\calG,\|\cdot\|_\calG):=(\mathbb R,|\cdot|)\). The unified estimator~\eqref{eq:unified_est_general} then produces an estimate of \(h(w_t)\), whose error is controlled by Theorem~\ref{thm:unified}. This perspective yields a general reduction from high-probability estimation guarantees to optimization guarantees for constrained stochastic problems. Throughout this subsection, we use a constant step size \(\eta_t\equiv\eta>0\).

\begin{assumption}[Constrained Setup]
\label{ass:constrained_setup}
The functions \(f, h\) are convex on \(\mathcal{W}\). The domain \(\mathcal{W}\) is bounded such that \(\sup_{u,v\in\mathcal{W}}\|u-v\| \le D\) and \(\sup_{u,v\in\mathcal{W}} D_\Phi(u,v) \le \frac{R^2}{2}\). Furthermore, the stochastic subgradients are conditionally unbiased given the filtration \(\mathcal{F}_{t-1}\) and uniformly bounded:
\[
    \mathbb{E}[F'(w_t,\zeta_t)\mid \mathcal{F}_{t-1}] \in \partial f(w_t),\quad \mathbb{E}[H'(w_t,\zeta_t)\mid \mathcal{F}_{t-1}] \in \partial h(w_t),
\]
with \(\|F'(w,\zeta)\|_* \le G\) and \(\|H'(w,\zeta)\|_* \le G\) almost surely for all \(w\in\mathcal{W}\). Moreover, conditioned on the history and the switching decision, the stochastic subgradient used in the update remains unbiased, since \(\zeta_t\) is independent of \(\calF_{t-1}, \xi_t, \xi_t^B\) and \(b_t\). There exists an optimal solution \(w^\star \in \arg\min_{w\in\mathcal W:\, h(w)\le 0} f(w)\) to~\eqref{eq:application2}.
\end{assumption}

\vspace{-1mm}
\noindent\textbf{Switching Gradient Method (SGM).}
To solve~\eqref{eq:application2}, we use the primal-only \emph{Switching Gradient Method} (SGM)~\cite{polyak1967general,bayandina2018mirror,lan2020algorithms}. At iteration \(t\), the estimator \(v_t\) is used to decide whether to update using the objective subgradient or the constraint subgradient. Given a switching threshold \(\varepsilon > 0\), define
\begin{equation}
U_t \coloneqq \mathbb I_{\{v_t \le \varepsilon\}}\,F'(w_t,\zeta_t)
+ \bigl(1-\mathbb I_{\{v_t \le \varepsilon\}}\bigr)\,H'(w_t,\zeta_t),
\end{equation}
and perform mirror-descent \(w_{t+1} = \arg\min_{w\in\mathcal W} \left\{ \eta \langle U_t,w\rangle + D_\Phi(w,w_t) \right\}.\)
Thus, whenever the estimated constraint value is sufficiently small, the method prioritizes descent on the objective; otherwise, it prioritizes reducing constraint violation. 
\begin{theorem}[Constrained SGM via Unified Estimation]
\label{thm:constrained_main}
Suppose Assumption~\ref{ass:constrained_setup} holds and the unified estimator~\eqref{eq:unified_est_general} satisfies the uniform tail bound \(\mathbb{P}\big(\sup_{t \in [T]} |e_t| > \mathcal{E}\big) \leq \delta/2\) for some deterministic \(\mathcal{E} \ge 0\). If we set the switching threshold to
\begin{equation*}
    \varepsilon \coloneqq \frac{R^2}{2\eta T} + \frac{\eta G^2}{2} + \frac{2DG}{\sqrt{T}}\sqrt{2\log\frac{4}{\delta}} + \mathcal{E},
\end{equation*}
then, with probability at least \(1-\delta\), the selected set \(\mathcal{A}:=\{t\in[T] : v_t \leq \varepsilon\}\) is non-empty, and the uniform average \(\bar{w} = \frac{1}{|\mathcal{A}|}\sum_{t \in \mathcal{A}} w_t\) satisfies both \(f(\bar{w}) - f(w^\star) \le \varepsilon\) and \(h(\bar{w}) \le \varepsilon + \mathcal{E}\).
\end{theorem}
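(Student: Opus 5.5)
We follow the classical switching-gradient argument of Polyak and Lan--Zhou, adapted to mirror descent with a noisy feasibility test. We work on the intersection of two events. The first is the estimation event \(\mathcal{E}_1 := \{\sup_{t\in[T]}|e_t|\le \mathcal{E}\}\), which has probability at least \(1-\delta/2\) by hypothesis. The second is an Azuma--Hoeffding event \(\mathcal{E}_2\) for the subgradient noise martingale, which will also have probability at least \(1-\delta/2\). A union bound then gives probability at least \(1-\delta\).

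\textbf{Step 1: the one-step mirror descent inequality.} For any \(w\in\mathcal{W}\), the three-point property of the Bregman prox step together with the \(1\)-strong convexity of \(\Phi\) gives
\[
\eta\langle U_t, w_t - w\rangle \le D_\Phi(w,w_t) - D_\Phi(w,w_{t+1}) + \tfrac{\eta^2}{2}\|U_t\|_*^2 .
\]
We take \(w=w^\star\), use \(\|U_t\|_*\le G\), and telescope. Since \(D_\Phi(w^\star,w_1)\le R^2/2\), this yields
\[
\sum_{t}\langle U_t, w_t-w^\star\rangle \le \tfrac{R^2}{2\eta}+\tfrac{\eta G^2 T}{2}.
\]

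\textbf{Step 2: replace stochastic subgradients by true subgradients.} Write \(U_t = u_t + \nu_t\), where \(u_t := \mathbb{E}[U_t\mid\mathcal{F}_{t-1},\xi_t,\xi_t^B,b_t]\) is a true subgradient of \(f\) or \(h\) at \(w_t\). This holds because the switching decision depends only on \(v_t\), and \(\zeta_t\) is independent of \(v_t\) given the history. Then \(\langle\nu_t,w_t-w^\star\rangle\) is a martingale difference bounded in absolute value by \(2GD\). By Azuma--Hoeffding, with probability at least \(1-\delta/2\), its sum is at most \(2DG\sqrt{2T\log(4/\delta)}\); this defines \(\mathcal{E}_2\). (The extra factor relative to \(2/\delta\) leaves room for a two-sided bound.)

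Convexity gives \(\langle u_t,w_t-w^\star\rangle \ge f(w_t)-f(w^\star)\) for \(t\in\mathcal{A}\), and \(\langle u_t,w_t-w^\star\rangle \ge h(w_t)-h(w^\star)\ge h(w_t)\) for \(t\notin\mathcal{A}\). Dividing by \(T\), we obtain on \(\mathcal{E}_2\)
\[
\tfrac1T\Big[\sum_{t\in\mathcal{A}}(f(w_t)-f(w^\star))+\sum_{t\notin\mathcal{A}}h(w_t)\Big]\le \varepsilon-\mathcal{E}.
\]

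\textbf{Step 3: non-emptiness of \(\mathcal{A}\) and the objective bound.} On \(\mathcal{E}_1\), every \(t\notin\mathcal{A}\) has \(v_t>\varepsilon\), so \(h(w_t) = v_t - e_t > \varepsilon-\mathcal{E}\).

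\emph{Non-emptiness.} If \(\mathcal{A}=\emptyset\), the left side of the Step 2 inequality would exceed \(\varepsilon-\mathcal{E}\), a contradiction.

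\emph{Objective bound.} Suppose instead that \(\sum_{t\in\mathcal{A}}(f(w_t)-f(w^\star))>|\mathcal{A}|\varepsilon\). Then the left side exceeds \(\tfrac1T\big[|\mathcal{A}|\varepsilon + (T-|\mathcal{A}|)(\varepsilon-\mathcal{E})\big]\ge \varepsilon-\mathcal{E}\), again a contradiction. Hence the average of \(f(w_t)-f(w^\star)\) over \(\mathcal{A}\) is at most \(\varepsilon\), and Jensen's inequality gives \(f(\bar w)-f(w^\star)\le\varepsilon\).

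\emph{Feasibility.} For \(t\in\mathcal{A}\) we have \(h(w_t)=v_t-e_t\le\varepsilon+\mathcal{E}\). Convexity of \(h\) then gives \(h(\bar w)\le\varepsilon+\mathcal{E}\).

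\textbf{Main obstacle.} The delicate point is Step 2: justifying that \(\nu_t\) is a martingale difference even though the choice of subgradient depends on \(v_t\). This requires a filtration that includes \(\xi_t,\xi_t^B,b_t\) but not \(\zeta_t\), which is exactly the independence stated in Assumption~\ref{ass:constrained_setup}. A related subtlety is that the objective bound must be extracted from a single summed inequality mixing \(f\)-terms and \(h\)-terms. This only works because \(\varepsilon\) exceeds the regret term by exactly \(\mathcal{E}\), and that matching is what the choice of threshold provides.
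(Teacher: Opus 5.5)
Your proof is correct and follows the paper's argument essentially step for step: the same union bound over the estimation-envelope event and an Azuma--Hoeffding event (with the same conditioning $\sigma$-field $\sigma(\mathcal{F}_{t-1},\xi_t,\xi_t^B,b_t)$ and the same $2DG$ increment bound), the same telescoped mirror-descent regret bound, and the same contradiction, Jensen, and convexity arguments for non-emptiness, optimality, and feasibility. The only difference is cosmetic. You lower-bound $h(w_t)>\varepsilon-\mathcal{E}$ directly off $\mathcal{A}$, whereas the paper first rewrites in terms of $v_t$; the two are equivalent, and your version implicitly gives the slightly sharper $f(\bar w)-f(w^\star)\le\varepsilon-\mathcal{E}$.
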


Theorem~\ref{thm:constrained_main} provides a reusable reduction: once a family-specific estimator yields a high-probability bound \(\mathcal{E}\) (see instantiations in Table~\ref{tab:error_bounds}), the optimization guarantee follows by combining this envelope with the SGM error terms from Table~\ref{tab:error_bounds}. Optimizing the parameters case by case yields the oracle complexities in Table~\ref{tab:unified_complexity}; full statements are given in Corollaries~\ref{cor:constrained_case1_all_families_complexity}--\ref{cor:constrained_case3_all_families_complexity} in Appendix~\ref{apdx:constrained_sgm}. The rate is dictated by the estimator envelope \(\mathcal{E}\). Family~1 retains a first-order estimation bias and matches the baseline \(\mathcal{O}(\varepsilon^{-4}\log^4(1/\delta))\) rate of~\cite{lan2020algorithms}, whereas Families~2 and~3 remove this and improve the dominant rate to \(\tilde{\mathcal O}(\varepsilon^{-3})\) with milder confidence dependence. In particular, for Families~2 and~3, the \(\varepsilon^{-3}\) oracle-complexity terms have at most \(\log^{3/2}(1/(\varepsilon\delta))\) logarithmic confidence dependence in the fully recursive and loopless-reset settings (Cases~1 and 2), whereas the periodic-reset setting (Case~3) exhibits an even tighter logarithmic dependence.
Although Family~1 does not improve the dominant rate, its fully recursive variant \((\beta_t=\beta<1,p_t=0)\) uses batch size \(1\) throughout, avoiding repeated large-batch constraint evaluations.

\section{Conclusion}\label{sec:conc}
In this paper, we introduced a unified framework for the design and analysis of recursive stochastic estimators. By decomposing estimator dynamics into memory retention, reset probability, and a dedicated correction term for iterate movement, we provided a lens through which various variance-reduced estimators can be analyzed with high probability.
A main contribution of our work is the development of a unified high-probability estimation bound for general smooth normed spaces. By moving beyond expectation-based and Euclidean-centric analyses, this bound provides robust estimation guarantees for generalized first-order methods, including mirror descent. We demonstrated the power of this approach by recovering high-probability guarantees for unconstrained stochastic optimization and establishing the first variance-reduced oracle complexities for expectation-constrained problems, improving the dominant rate from \(\tilde{\mathcal{O}}(\varepsilon^{-4})\) to \(\tilde{\mathcal{O}}(\varepsilon^{-3})\).

\newpage
\bibliographystyle{plain}
\bibliography{ref_variance_reduced}

\newpage
\appendix
\part{}
\parttoc

\section*{Appendices}
We organize the Appendices as follows:

\begin{itemize}
    \item In Appendix~\ref{apdx:prelim}, we provide the notations, symbols, and preliminaries in Section~\ref{sec:prelim}.
    \item In Appendix~\ref{apdx:proof_main}, we provide proofs for the results of the unified estimation error bound in Section~\ref{sec:main_results}.
    \item In Appendix~\ref{apdx:est_instant}, we present proofs for instantiations of the unified estimation error bound in Section~\ref{sec:instantiations}.
    \item In Appendix~\ref{apdx:lemma}, we prepare lemmas used in the proofs for the two applications: \eqref{eq:application1} and \eqref{eq:application2}.
    \item In Appendix~\ref{apdx:application1}, we detail the first application~\eqref{eq:application1} regarding stochastic optimization with mirror descent in Subsection~\ref{sec:app1}.
    \item In Appendix~\ref{apdx:constrained_sgm}, we detail the second application~\eqref{eq:application2} regarding expectation-constrained stochastic optimization in Subsection~\ref{sec:app2}.
    \item In Appendix~\ref{apdx:related}, we discuss related work.
\end{itemize}

\newpage
\section{Notations, Symbols and Preliminaries}
\label{apdx:prelim}
\renewcommand{\arraystretch}{1.15}
\begin{longtable}{@{}p{0.22\linewidth}p{0.74\linewidth}@{}}
\label{tab:notation}
\\
\hline
\textbf{Symbol} & \textbf{Meaning} \\
\hline
\endfirsthead

\hline
\textbf{Symbol} & \textbf{Meaning} \\
\hline
\endhead

\hline
\endfoot

\hline
\endlastfoot

\(\mathcal{X},\|\cdot\|\) &  Reflexive Banach space with primal norm. \\
\(\mathcal{X}^*,\|\cdot\|_\ast\) & Dual space and dual norm of \(\mathcal{X}\). \\
\(\Phi, \mathcal{D}\) & Mirror domain for the mirror map \(\Phi:\mathcal{D}\to \mathbb{R}\), with \(\mathcal{D} \subseteq \mathcal{X}\). \\
\(\mathcal{W}\) & Closed convex feasible iterate set with \(\mathcal{W}\subseteq \text{int}(\mathcal{D})\). \\
\(D_\Phi(x,y)\) & Bregman divergence induced by \(\Phi\). \\
\(P(w,u,\eta)\) & Proximal gradient mapping: \(P(w,u,\eta)\coloneqq (w-w^+)/\eta\) where \(w^+\) minimizes \(\langle\eta u,w'\rangle+D_\Phi(w',w)\) over \(w'\in\mathcal{W}\) (Definition~\ref{def:proximal_gradient_mapping}). \\
\(w_t\) & Optimization iterate at time \(t\), with \(w_t\in\mathcal{W}\). \\
\(U_t\) & Update vector used by the (generalized) update rule at time \(t\). \\
\(\eta_t\) & Learning rate / step size at time \(t\). \\
\(\calG, \|\cdot\|_\calG\) & \(\kappa\)-smooth space equipped with \(\|\cdot\|_\calG\), and \(G:\mathcal{X}\times \Omega \to \calG\)\\
\(\kappa\) & Smoothness parameter (\(\geq 1\)) for the geometry regularity of \((\calG,\|\cdot\|_\calG)\). \\
\(G(w,\xi), g(w)\) & Stochastic oracle with randomness \(\xi\), and \(g(w):=\E_{\xi}[G(w, \xi)]\). \\
\(\xi_t\) / \(\xi_t^B:=(\xi_t^{(i)})_{i=1}^B\) & Fresh oracle randomness at time \(t\) / ``batch'' oracle randomness at resets. \\
\(v_t, e_t\) & Unified estimator tracking \(g(w_t)\), estimation error: \(e_t\coloneqq v_t-g(w_t)\). \\
\(L(\mathcal{X}, \calG), \|\cdot\|_{L(\mathcal{X}, \calG)}\) & \(L(\mathcal{X},\calG)\) is the set of all bounded linear operators from \(\mathcal{X}\) to \(\calG\), and for any \(T \in L(\mathcal{X},\calG)\), its norm is given by \(\|T\|_{L(\mathcal{X},\calG)} := \sup_{\|x\| \leq 1,x\in\mathcal{X}} \|Tx\|_{\calG}\)\\
\(\mathcal{F}_t\) & Filtration generated by oracle noise and reset events up to time \(t\). \\
\(b_t\) & Reset indicator: \(b_t\sim\mathrm{Bernoulli}(p_t)\). \\
\(p_t\) & Reset probability (predictable, taking values in \([0,1]\)). \\
\(\beta_t\) & Momentum parameter (in \([0,1]\)), controlling exponential forgetting. \\
\(\mathcal{T}_t\) & Correction term used by the unified estimator. \\
\(\mathcal{I}_t(\mathcal{T})\) & Effective innovation at time \(t\) induced by correction term \(\mathcal{T}_t\). \\
\(\Delta_t\) & Difference term in the unified recursion: \(\Delta_t\coloneqq v_{t-1}-G(w_{t-1},\xi_t)\). \\
\(\Delta(w_t,w_{t-1},\xi_t)\) & Centered estimate-difference noise: \(\Delta(w_t,w_{t-1},\xi_t)\coloneqq (G(w_t,\xi_t)-G(w_{t-1},\xi_t))-(g(w_t)-g(w_{t-1}))\). \\
\(\tau_m\) & \(m\)-th reset time, with \(\tau_0=0\) and \(\tau_m=\inf\{t>\tau_{m-1}:b_t=1\}\). \\
\(m(t)\) & Epoch index such that \(t\in(\tau_{m(t)-1},\tau_{m(t)}]\). \\
\(\Lambda_{t,j}\) & Cumulative momentum multiplier: \(\Lambda_{t,j}\coloneqq\prod_{i=j+1}^t\beta_i\), \(\Lambda_{t,t}=1\). \\
\(A_t\) & Auxiliary multiplier: \(A_t\coloneqq\prod_{i=1}^t\beta_i^{-1}\). \\
\(\mathcal{B}_t\) & Conditional bias magnitude of the effective innovation (Assumption~\ref{ass:innovation_regularity}). \\
\(\Sigma_t^2\) & Conditional variance proxy for the centered effective innovation (Assumption~\ref{ass:innovation_regularity}). \\
\(\mathfrak{V}_t^2\) & Accumulated predictable variance proxy over the current epoch. \\
\(V_t\) & Deterministic variance budget such that \(\mathfrak{V}_t^2\le V_t\). \\
\(\sigma^2\) & Oracle-noise variance proxy parameter (Assumption~\ref{ass:innovation_regularity}). \\
\(B\) & Batch/reset parameter used in the reset/initialization concentration term. \\
\(T\) & Deterministic time horizon (w.l.o.g. with \(b_T=1\)). \\
\(E\) & Epoch length for periodic reset schedules (e.g., \(p_t=\mathbb{I}_{\{t\ \mathrm{mod}\ E=0\}}\)). \\
\multicolumn{2}{c}{\textit{Application 1: Stochastic Optimization with Mirror Descent}} \\
\(P_t\) & Proximal stationarity witness at time \(t\): \(P_t \coloneqq P\!\left(w_t,\frac{\nabla f(w_t)}{\|\nabla f(w_t)\|_\ast},\frac{\eta}{2}\right)\) with \(f(w)=\mathbb{E}_{\xi}[F(w,\xi)], g(w) = \nabla f(w) = \mathbb{E}_{\xi}[G(w, \xi)]\). \\
\(\Delta_f\) & Initial objective gap: \(\Delta_f\coloneqq f(w_1)-\inf_{w\in\mathcal{W}} f(w) <+\infty\). \\
\(\sigma_L\) & Shorthand scale \(\sigma_L \coloneqq (\Delta_f\,L)^{1/2}\), where \(L\) is from Assumption~\ref{ass:lipschitz}. \\
\(\sigma_\ell\) & Shorthand scale \(\sigma_\ell \coloneqq (\Delta_f\,\ell)^{1/2}\), where \(\ell\) is from Assumption~\ref{ass:subg_lipschitz}. \\
\(\sigma_\gamma\) & Shorthand scale \(\sigma_\gamma \coloneqq (\Delta_f\,\gamma)^{1/2}\), where \(\gamma\) is from Assumption~\ref{ass:subg_hessian}. \\
\(\sigma_\alpha\) & Shorthand scale \(\sigma_\alpha \coloneqq (\Delta_f^2\,\alpha)^{1/3}\), where \(\alpha\) is from Assumption~\ref{ass:smoothness}. \\
\multicolumn{2}{c}{\textit{Application 2: Expectation-Constrained Stochastic Optimization}} \\
\(F(w,\xi),\,H(w,\xi)\) & Sample Objective and Constraint functions in the constrained optimization application. \\
\(f(w),\,h(w)\) & Population objective and population constraint:
\(f(w)\coloneqq \mathbb{E}[F(w,\xi)]\), \(h(w)\coloneqq \mathbb{E}[H(w,\xi)]\). \\
\(\zeta_t\) & Fresh stochastic randomness for the stochastic sub-gradients at iteration \(t\), which is independent of \(\calF_{t-1}, \xi_t, \xi_t^B\) and \(b_t\). \\
\(F'(w_t, \zeta_t),\, H'(w_t, \zeta_t)\) & Stochastic sub-gradients at iteration \(t\), unbiased gradient estimators.\\
\(w^\star\) & Any optimal feasible solution of the constrained problem. \\
\(\varepsilon\) & Switching threshold used to decide whether the update prioritizes the objective or the constraint. \\
\(\mathcal{A}\) & Active-phase feasible index set:
\(\mathcal{A}\coloneqq \{t\in[T]: v_t\le \varepsilon\}\). \\
\(\mathcal{B}\) & Active-phase non-feasible index set:
\(\mathcal{B}\coloneqq [T]\setminus \mathcal{A}\). \\
\(\bar w\) & Active-phase selected average:
\(\bar w \coloneqq \frac{1}{|\mathcal A|}\sum_{t\in\mathcal A} w_t\). \\
\(\mathcal{E}\) & Deterministic tail-envelope bound satisfying
\(|e_t|\le \mathcal{E}\) uniformly for all \(t\in[T]\) on a high-probability event. \\
\(\mathsf{OptError}(\eta,T,\delta)\) & Optimization Error in Theorem~\ref{thm:constrained_generic}:
\(\tfrac{R^2}{2\eta T}+\tfrac{\eta G^2}{2}+\tfrac{2DG}{\sqrt{T}}\sqrt{2\log\tfrac{4}{\delta}}\). \\
\hline
\end{longtable}

\noindent\textbf{Banach space (primal norm, dual norm).}
The norm \(\|\cdot\|\) on the reflexive Banach space \(\mathcal{X}\) is the primal norm used throughout.
Let \(\mathcal{X}^*\) be the dual space, which is also a Banach space. For \(u\in\mathcal{X}^*\), the dual norm is defined by
\(\|u\|_\ast \coloneqq \sup_{\|x\|\le 1} |u(x)|\). In particular, with the definition of the canonical
duality pairing \(\langle u, x\rangle := u(x)\) for \(u\in\mathcal{X}^*\) and \(x\in\mathcal{X}\), we have 
\[
\|u\|_\ast = \sup_{\|x\|\le 1, x\in \mathcal{X}} |\langle u, x\rangle| = \sup_{\substack{x \in \mathcal{X}\\ x\neq 0}} \frac{|\langle u, x\rangle|}{\|x\|},
\] and hence \(\|u\|_* \|x\| \geq \langle u, x\rangle\).

\begin{definition}[\(\kappa\)-Smoothness for Geometric Regularity](see \cite[Def.~2.1, p.~2]{juditsky2008large})
Let \(\kappa\geq 1\) and \((\calG,\|\cdot\|_\calG)\) is a  normed space, then \((\calG,\|\cdot\|_\calG)\) is \(\kappa\)-smooth if \(p:\calG\to \mathbb{R}, p(z)\equiv \|z\|_\calG^2, \forall z\in \calG\) is  continuously differentiable, and for all \(z,\Delta\in \calG\),
\[
p(z+\Delta)\le p(z)+\langle \nabla p(z),\Delta\rangle+\kappa p(\Delta).
\]
\end{definition}

\noindent In particular, for Hilbert spaces we have \(\kappa=1\).
For Lebesgue spaces \(\calG=L_q\):
\begin{itemize}[leftmargin=*]
    \item If \(q\ge 2\), then \(\kappa=q-1\).
    \item If \(q\in(1,2)\), then the dual exponent is \(q'=\frac{q}{q-1}\in(2,\infty)\), and one typically applies \(\kappa\)-smoothness to the dual space \(\calG^*=(L_q)^*=L_{q'}\), giving \(\kappa=q'-1=\frac{1}{q-1}\).
    \item If \(q=1\), then \(\calG^*=L_\infty\) is not \(\kappa\)-smooth for any finite \(\kappa\) in this framework.
\end{itemize}

\begin{remark}[Geometric Regularity]
For a \(\kappa\)-smooth normed space \((\calG, \|\cdot\|)\), the \(\kappa\)-regularity requires the existence of a proxy function \(V(\cdot)\) with a \(\kappa\)-Lipschitz gradient \cite{juditsky2008large}. For the Lebesgue space \(L_q\) with \(q \ge 2\), we have \(\kappa = q-1\). Furthermore, for Hilbert spaces (e.g., \(d\)-dimensional Euclidean space), we have \(\kappa = 1\).
To extend our analysis to the inherently non-smooth \(\ell_\infty\) geometry,
\footnote{This is desired as this is the natural space the gradients belong to in the most interesting application of mirror descent methods, i.e., solving problems on the probability simplex, on which the entropy is strongly convex with respect to the \(\ell_1\) norm.}
we utilize a smooth \(\ell_q\) approximation. Given a primal space \((\mathbb{R}^d, \|\cdot\|_p)\) with \(p = 1 + \varepsilon\) and its conjugate \(q = 1 + 1/\varepsilon\), the regularity constant scales as \(\kappa = q-1 = 1/\varepsilon\). Notably let \(\varepsilon = \frac{1}{\ln(\max\{e, d\})}\) which yields \(\kappa = \ln(\max\{e, d\})\), providing a dimension-insensitive deviation radius of order \(\mathcal{O}(\sqrt{\ln d})\), while enforcing \(\kappa = 1\) when \(d\leq 2\). 
\end{remark}

\newpage
\noindent\textbf{Mirror map and Bregman divergence.}

\begin{definition}[Mirror Map]
The mirror map \(\Phi: \mathcal{D} \to \mathbb{R} \cup \{+\infty\}\) with \(\mathcal{D} \subseteq \mathcal{X}\) is proper, weakly lower semi-continuous (w.l.s.c), and \(1\)-strongly convex on a nonempty closed convex set \(\mathcal{W}\subseteq \text{int}(\mathcal{D})\) with respect to \(\|\cdot\|\). Furthermore, \(\Phi\) is G\^{a}teaux differentiable on \(\mathcal{W}\).
\end{definition}
Its Bregman divergence \(D_\Phi(x,y)\) which is defined below satisfies the inequality for all \(x,y\in\mathcal{W}\):
\[
D_\Phi(x,y)\coloneqq \Phi(x)-\Phi(y)-\langle \nabla \Phi(y), x-y\rangle \geq \frac{1}{2}\|x-y\|^2.
\]
Then we have the following inequality for all \(x,y\in\mathcal{W}\):
\[
D_\Phi(x,y) + D_\Phi(y, x) =
\langle \nabla \Phi(x) - \nabla \Phi(y), x-y\rangle \geq \|x-y\|^2.
\]
\begin{definition}[Proximal Gradient Mapping]
    For \(w \in \mathcal{W}\), an input vector \(u\in\mathcal{X}^*\), and a step size \(\eta > 0\), the \textbf{proximal gradient mapping} is 
    \(P(w, u, \eta) = (w - w^+)/\eta\), 
    where the updated point \(w^+\) is the unique minimizer of the proximal subproblem:
    \(w^+= \arg\min_{w'\in\mathcal{W}} \langle \eta u, w'\rangle + D_\Phi(w', w)\)
    and \(D_\Phi(w', w)\) denotes the Bregman divergence induced by \(\Phi\).
\end{definition}

\begin{remark}[First-Order Sufficient/Necessary Optimality Condition]\label{remark:first_order_optimality_condition}
Because the proximal objective \(\mathcal{L}(w'; w, u, \eta) = \langle \eta u, w' \rangle + D_\Phi(w', w)\) is strictly convex and G\^{a}teaux differentiable, and the domain \(\mathcal{W}\) is convex, the variational inequality below is both a necessary and sufficient condition for global optimality. The necessity follows from the non-negativity of the directional derivative along feasible segments, while the sufficiency follows directly from the first-order characterization of convexity, which guarantees \(\mathcal{L}(w'; w, u, \eta) \ge \mathcal{L}(w^+; w, u, \eta) + \langle \nabla \mathcal{L}(w^+; w, u, \eta), w' - w^+ \rangle \ge \mathcal{L}(w^+; w, u, \eta)\) for all \(w' \in \mathcal{W}\).
\[
\langle \nabla \mathcal{L}(w^+; w, u, \eta),\, w' - w^+ \rangle =
\left\langle \eta u + \big(\nabla \Phi(w^+) - \nabla \Phi(w)\big),\, w' - w^+ \right\rangle \ge 0.
\]
By applying the Bregman Three-Point Identity, we have:
\[
    \langle \nabla \Phi(w^+)-\nabla \Phi(w), w'-w^+\rangle 
   = D_\Phi(w', w)-D_\Phi(w', w^+)-D_\Phi(w^+, w)
\]
Substituting the above into the first-order optimality condition, we obtain the equivalent condition:
\[
\eta \langle u, w'-w^+\rangle + D_\Phi(w', w)-D_\Phi(w', w^+)-D_\Phi(w^+, w) \ge 0.
\]
\end{remark}

For Mirror Descent, the update rule with some update vector \(U_t\) and step size \(\eta_t\) is given by
\[
w_{t+1}=\underset{w\in\mathcal{W}}{\arg\min}\left\{\langle U_t,w\rangle+\frac{1}{\eta_t}D_\Phi(w,w_t)\right\}
= P(w_t, U_t, \eta_t)
\]
\begin{lemma}[Properties of Proximal Gradient Mapping, see Lemmas 1-3 in \cite{ghadimi2016mini} for first two properties]\label{lemma:properties_proximal_gradient_mapping}
For any \(w\in\mathcal{W}\), input vector \(u\in\mathcal{X}^*\), and step size \(\eta > 0\), the proximal gradient mapping \(P(w, u, \eta)\) has the following properties:
\begin{itemize}[leftmargin=*]
    \item (Inner Product Bounds) For any \(u_1, u_2\in \mathcal{X}^*\),
    \begin{eqnarray*}
    \|P(w, u_1, \eta) - P(w, u_2, \eta)\|^2
    &\leq& \langle u_1 - u_2, P(w, u_1, \eta) - P(w, u_2, \eta)\rangle\\
    &\leq& \|u_1 - u_2\|_\ast \|P(w, u_1, \eta) - P(w, u_2, \eta)\|.
    \end{eqnarray*}
    In particular, \(P(w, 0, \eta)=0\) from the definition of proximal gradient mapping, therefore
    \begin{eqnarray*}
    \|P(w, u, \eta)\|^2 \leq \langle u, P(w, u, \eta)\rangle \leq \|u\|_\ast \|P(w, u, \eta)\|.
    \end{eqnarray*}
    \item (Non-expansiveness) For any \(u_1, u_2\in \mathcal{X}^*\),
    \[
    \|P(w, u_1, \eta) - P(w, u_2, \eta)\| \leq \|u_1 - u_2\|_\ast.
    \]
    In particular, \(P(w, 0, \eta)=0\) from the definition of proximal gradient mapping, therefore
    \[
    \|P(w, u, \eta)\| =
    \|P(w, u, \eta)\| \leq \|u\|_\ast.
    \]
    \item (Scaling Property) For any scalar \(\alpha > 0\),
    \[
    P(w, \alpha u, \eta) = \alpha P(w, u, \alpha\eta).
    \]
    \item (Monotonicity) For any \(\eta'>\eta>0\),
    \[
    \langle u, \eta' P(w, u, \eta')\rangle \geq \langle u, \eta P(w, u, \eta)\rangle + \frac{\|\eta' P(w, u, \eta')-\eta P(w, u, \eta)\|^2}{\eta'-\eta}.
    \]
    Therefore, \(\langle u, \eta' P(w, u, \eta') \rangle \geq \langle u, \eta P(w, u, \eta) \rangle\) and:
    \[
    \|\eta' P(w, u, \eta')-\eta P(w, u, \eta)\| \leq (\eta'-\eta) \|u\|_\ast.
    \]
\end{itemize}
\end{lemma}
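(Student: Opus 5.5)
The plan is to derive all four properties from the variational inequality in Remark~\ref{remark:first_order_optimality_condition}, which characterizes $w^+$. The only other ingredient is the strong monotonicity of $\nabla\Phi$ recorded above: $\langle \nabla\Phi(x)-\nabla\Phi(y),x-y\rangle \ge \|x-y\|^2$ for $x,y\in\mathcal W$. Throughout, the key identity is $w^+ = w-\eta P(w,u,\eta)$.

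\textbf{Inner product bounds and non-expansiveness.} Let $w_i^+$ be the minimizer for $u_i$ with step $\eta$, and write $P_i=P(w,u_i,\eta)$.
\begin{enumerate}
\item Apply the optimality condition for $w_1^+$ with test point $w'=w_2^+$. Apply the optimality condition for $w_2^+$ with test point $w'=w_1^+$. The $\nabla\Phi(w)$ terms cancel when the two inequalities are added, giving
\[
\eta\langle u_1-u_2, w_2^+-w_1^+\rangle \ge \langle \nabla\Phi(w_1^+)-\nabla\Phi(w_2^+), w_1^+-w_2^+\rangle \ge \|w_1^+-w_2^+\|^2 .
\]
\item Substitute $w_2^+-w_1^+=\eta(P_1-P_2)$ and divide by $\eta^2$. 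This gives $\|P_1-P_2\|^2\le\langle u_1-u_2,P_1-P_2\rangle$.
\item The second inequality, $\langle u_1-u_2,P_1-P_2\rangle\le\|u_1-u_2\|_\ast\|P_1-P_2\|$, is the dual-norm bound $\langle u,x\rangle\le\|u\|_\ast\|x\|$.
\item Non-expansiveness follows by dividing by $\|P_1-P_2\|$; the case $P_1=P_2$ is trivial.
\item The special cases use $P(w,0,\eta)=0$, which holds because $w^+=w$ minimizes $D_\Phi(\cdot,w)\ge 0$.
\end{enumerate}

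\textbf{Scaling.} The objectives $\langle \eta\alpha u,w'\rangle + D_\Phi(w',w)$ and $\langle (\alpha\eta)u,w'\rangle+D_\Phi(w',w)$ are identical. Hence they share the same minimizer $w^+$, and
\[
P(w,\alpha u,\eta)=\frac{w-w^+}{\eta}=\alpha\,\frac{w-w^+}{\alpha\eta}=\alpha P(w,u,\alpha\eta).
\]

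\textbf{Monotonicity.} This is the step needing the most care, because the terms $\nabla\Phi(w)$ do not cancel by simple addition. Let $x$ be the minimizer for step $\eta$ and $y$ the minimizer for step $\eta'$. Set $a=\nabla\Phi(x)-\nabla\Phi(w)$ and $b=\nabla\Phi(y)-\nabla\Phi(w)$. The two optimality conditions, each tested at the other point, are
\[
\eta\langle u,y-x\rangle+\langle a,y-x\rangle\ge 0,\qquad \eta'\langle u,x-y\rangle+\langle b,x-y\rangle\ge0 .
\]
\begin{enumerate}
\item Multiply the first by $\eta'$ and the second by $\eta$, then add. The $u$ terms cancel, leaving $\langle \eta' a-\eta b, y-x\rangle\ge 0$.
\item Write $\eta' a-\eta b=\eta'(a-b)+(\eta'-\eta)b$ and use strong monotonicity on $a-b=\nabla\Phi(x)-\nabla\Phi(y)$. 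This gives $(\eta'-\eta)\langle b,y-x\rangle\ge \eta'\|x-y\|^2$.
\item The second optimality inequality gives $\langle b,y-x\rangle\le \eta'\langle u,x-y\rangle$. Combining this with the previous step and cancelling $\eta'>0$ yields $\langle u,x-y\rangle\ge \|x-y\|^2/(\eta'-\eta)$.
\item Since $x-y=\eta'P(w,u,\eta')-\eta P(w,u,\eta)$, this is exactly the claimed inequality. Nonnegativity of the right-hand side gives the monotonicity of $\langle u,\eta P(w,u,\eta)\rangle$ in $\eta$.
\item Bounding $\langle u,x-y\rangle\le\|u\|_\ast\|x-y\|$ and dividing by $\|x-y\|$ gives $\|x-y\|\le(\eta'-\eta)\|u\|_\ast$.
\end{enumerate}
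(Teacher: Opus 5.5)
Your proof is correct and follows the paper's route for all four properties. As in the paper, you test each variational inequality at the other minimizer, add the two, and use $\langle \nabla\Phi(x)-\nabla\Phi(y),x-y\rangle = D_\Phi(x,y)+D_\Phi(y,x)\ge\|x-y\|^2$; the paper writes the same steps through the Bregman three-point identity.

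One remark in your monotonicity step is inaccurate: the $\nabla\Phi(w)$ terms do cancel under plain addition, because the test directions $y-x$ and $x-y$ are opposite. Adding the two inequalities therefore gives $(\eta'-\eta)\langle u,x-y\rangle\ge\|x-y\|^2$ directly, which is what the paper does. Your $\eta'/\eta$-weighted combination, followed by reusing the second inequality, is just $\eta'$ times that plain sum, so it is valid but an unnecessary detour.
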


\begin{proof}

\textbf{Proof of the inner product bounds:}

We establish the lower bound of \(\<u_1-u_2, P(w, u_1, \eta) - P(w, u_2, \eta)\>\) by 
applying the first-order optimality condition discussed in Remark on Definition~\ref{def:proximal_gradient_mapping}
and using the fact \(D_\Phi(x, y) + D_\Phi(y, x) \geq \|x-y\|^2\) for the 1-strongly convex mirror map \(\Phi\).
\[
\eta \langle u, w'-w^+ \rangle + D_\Phi(w', w)-D_\Phi(w', w^+)-D_\Phi(w^+, w) \ge 0.
\]
By introducing \(w^+_1 := \arg\min_{w'\in\mathcal{W}} \langle\eta u_1, w'\rangle + D_\Phi(w', w)\), 
\(w^+_2 := \arg\min_{w'\in\mathcal{W}} \langle\eta u_2, w'\rangle + D_\Phi(w', w)\), therefore
\(w^+_1 = w- \eta P(w, u_1, \eta), w^+_2 = w- \eta P(w, u_2, \eta)\), 
and letting \(u\gets u_1, w^+\gets w^+_1, w'\gets w^+_2\) and \(u\gets u_2,w^+\gets w^+_2, w'\gets w^+_1\) 
in the first-order optimality condition above:
\[
\eta^2 \langle u_1, P(w, u_1, \eta) - P(w, u_2, \eta)\rangle 
\geq -D_\Phi(w^+_2, w)+D_\Phi(w^+_2, w^+_1)+D_\Phi(w^+_1, w)
\]
\[
\eta^2 \langle u_2, P(w, u_2, \eta) - P(w, u_1, \eta)\rangle 
\geq -D_\Phi(w^+_1, w)+D_\Phi(w^+_1, w^+_2)+D_\Phi(w^+_2, w)
\]
Adding the above two inequalities, and applying \(D_\Phi(x, y) + D_\Phi(y, x) \geq \|x-y\|^2\)
for all \(x, y\in\mathcal{W}\) and 1-strongly convex mirror map \(\Phi\), we obtain:
\[
\eta^2 \langle u_1-u_2, P(w, u_1, \eta) - P(w, u_2, \eta)\rangle 
\geq \|w^+_1-w^+_2\|^2 
= \eta^2 \|P(w, u_1, \eta) - P(w, u_2, \eta)\|^2.
\]
Therefore, since \(\eta > 0\), we establish the lower bound of \(\<u_1-u_2, P(w, u_1, \eta) - P(w, u_2, \eta)\>\):
\[
\<u_1-u_2, P(w, u_1, \eta) - P(w, u_2, \eta)\>\geq \|P(w, u_1, \eta) - P(w, u_2, \eta)\|^2.
\]
From the definition of \(\|\cdot\|_*\), we have \(\langle u, x\rangle\leq \|u\|_*\|x\|\) for all \(x\in\mathcal{X},u\in\mathcal{X}^*\). Therefore, we establish the upper bound of \(\<u_1-u_2, P(w, u_1, \eta) - P(w, u_2, \eta)\>\):
\[
\<u_1-u_2, P(w, u_1, \eta) - P(w, u_2, \eta)\>\leq \|u_1-u_2\|_\ast \|P(w, u_1, \eta) - P(w, u_2, \eta)\|.
\]

\textbf{Proof of the non-expansiveness:}

Combining the lower and upper bounds, and noting that \(x^2 \leq xy\) for some \(x,y\geq 0\) implies \(x \leq y\),
we obtain the desired non-expansiveness of the proximal gradient mapping:
\[
\|P(w, u_1, \eta) - P(w, u_2, \eta)\| \leq \|u_1 - u_2\|_\ast.
\]

\textbf{Proof of the scaling property:}

The scaling property of the proximal gradient mapping follows from its definition, 
and noting that \(\eta \cdot \alpha u = \alpha \eta \cdot u\).

\textbf{Proof of the monotonicity:}

For any \(\eta'>\eta\geq0\), \(u\in \mathcal{X}^\ast, w\in\mathcal{W}\subseteq \mathcal{X}\), 
we introduce \(w^+ := \arg\min_{w'\in\mathcal{W}} \langle\eta u, w'\rangle + D_\Phi(w', w)\),
\(w'^{+} := \arg\min_{w'\in\mathcal{W}} \langle\eta' u, w'\rangle + D_\Phi(w', w)\),
therefore \(w^+ = w-\eta P(w, u, \eta), w^{'+} = w-\eta' P(w, u, \eta')\),
and letting \(w' \gets w'^{+}\) and \(\eta \gets \eta', w^+ \gets w'^{+}, w'\gets w^+\) in the first-order optimality condition:
\[
\eta \langle u, \eta P(w, u, \eta)- \eta' P(w, u, \eta') \rangle \geq -D_\Phi(w'^{+}, w)+D_\Phi(w'^{+}, w^+)+D_\Phi(w^+, w)
\]
\[
\eta' \langle u, \eta' P(w, u, \eta')- \eta P(w, u, \eta) \rangle \geq -D_\Phi(w^+, w)+D_\Phi(w^+, w'^{+})+D_\Phi(w'^{+}, w)
\]
Adding the above two inequalities and dividing by \(\eta'-\eta >0\), then 
appyling \(D_\Phi(x, y) + D_\Phi(y, x) \geq \|x-y\|^2\) for all \(x, y\in\mathcal{W}\) and 1-strongly convex mirror map \(\Phi\), we obtain:
\[
\langle u, \eta' P(w, u, \eta')- \eta P(w, u, \eta) \rangle \geq 
\frac{D_{\Phi}(w'^{+}, w^+)+D_{\Phi}(w^+, w'^{+})}{\eta'-\eta} \geq %
 \frac{\|\eta' P(w, u, \eta')-\eta P(w, u, \eta)\|^2}{\eta'-\eta}
\]
By \(\langle u, \eta' P(w, u, \eta')-\eta P(w, u, \eta)\rangle \leq \|u\|_\ast \|\eta' P(w, u, \eta')-\eta P(w, u, \eta)\|\) 
from the definition of the dual norm \(\|\cdot\|_\ast\), we have:
\[
\|\eta' P(w, u, \eta')-\eta P(w, u, \eta)\|\leq (\eta'-\eta) \|u\|_\ast.
\]
\end{proof}

\noindent\textbf{Unified estimator \(v_t\) and estimation error \(e_t\).}
For a target \(g(w)\) and a stochastic oracle \(G(w, \xi)\) with \[\mathbb{E}_{\xi}[G(w, \xi)] = g(w)\]
which is an unbiased estimator of the target function \(g(w)\).
We introduce a unified estimator \(v_t\) to track the target function value \(g(w_t)\) at the \(t\)-th iteration,
by using \(G(w, \xi)\) and the unbiased estimator of gradient \(\nabla G(w, \xi)\) such that \(\E[\nabla G(w, \xi)]=\nabla g(w)\)  to reduce the variance of estimation error.

At each iteration \(t\), let the reset indicator be \(b_t\sim \mathrm{Bernoulli}(p_t)\), 
where \(p_t\in[0,1]\) is the reset probability 
(possibly adaptive and predictable; see Assumption~\ref{ass:predictable_params}). 
When \(b_t=1\), the estimator uses an independent ``batch'' oracle randomness \(\xi_t^B := (\xi_t^{(i)})_{i=1}^B\) from the oracle,
then reset the estimator \(v_t\) to the average of evaluated function values \(G(w_t, \xi_t^{(i)}), i=1,\dots,B\), 
with \(B\) samples \(\xi_t^B:=(\xi_t^{(i)})_{i=1}^B\).
When \(b_t=0\), the estimator uses a single independent sample \(\xi_t\) drawn from the oracle, 
to evaluate function value \(G(w, \xi)\) or gradient value \(\nabla G(w, \xi)\) at \(w_t, w_{t-1}\) with the same sample \(\xi_t\),
and then update the estimator \(v_t\) by using the evaluated function/gradient value and the previous estimator \(v_{t-1}\) 
to track the target function value \(g(w_t)\).

The unified estimator is specified by momentum parameters \(\{\beta_t\}_{t\ge 1}\) with \(\beta_t\in[0,1]\) and by 
a correction term \(\Delta_t\) and 
a correction term \(\mathcal{T}_t\) as follows: 
\begin{equation}\label{eq:unified_est_general_def}
v_t :=
\begin{cases}
G(w_t,\xi_t^B), & b_t=1,\\
G(w_t,\xi_t)+\beta_t \Delta_t + \mathcal{T}_t, & b_t=0,
\end{cases}
\end{equation}
where \(\Delta_t\coloneqq v_{t-1}-G(w_{t-1},\xi_t)\),
\(\mathcal{T}_t\coloneqq \mathcal{T}_t(w_t,w_{t-1},\xi_t)\) 
depends on the evaluated function value \(G(w, \xi)\) and estimated gradient value \(\nabla G(w, \xi)\) at \(w_t, w_{t-1}\) with the same sample \(\xi_t\),
and \(G(w_t, \xi_t^B) \coloneqq \frac{1}{B}\sum_{i=1}^B G(w_t, \xi_t^{(i)})\) 
is the average of evaluated function values \(G(w_t, \xi_t^{(i)}), i=1,\dots,B\). 
At the first iteration \(t=1\), we set \(\Delta_1 = 0, \mathcal{T}_1 = 0\) by definition.

The tracking error \(e_t\) between the unified estimator \(v_t\) and the target function value \(g(w_t)\) is given by
\[
e_t\coloneqq v_t-g(w_t).
\]

\noindent\textbf{Filtration, resets/epochs, and cumulative momentum multipliers.}
For high-probability analysis, let \(\{\mathcal{F}_t\}_{t\ge 0}\) be the filtration generated by all oracle noise and reset events up to time \(t\).
\[
\mathcal{F}_0 = \Omega,\quad
\mathcal{F}_t = 
    \begin{cases} 
        \sigma(\mathcal{F}_{t-1}, b_t, \xi_t\big|_{b_t=0}, \xi_t^B\big|_{b_t=1}) & \text{for Application 1~\eqref{eq:application1}}, \\
        \sigma(\mathcal{F}_{t-1}, b_t, \xi_t\big|_{b_t=0}, \xi_t^B\big|_{b_t=1}, \zeta_t) & \text{for Application 2~\eqref{eq:application2}}.
    \end{cases}.
\]
Let the random sequence of epoch boundaries \(\{\tau_m\}_{m\ge 0}\) for \(m\)-th reset time be defined by
\[
\tau_0=0,\qquad \tau_m=\inf\{t>\tau_{m-1}: b_t=1\}\quad (1\le m\le T),
\]
where \(T\) is a deterministic horizon with \(b_T=1\) (w.l.o.g.). For each \(t\in\{1,\dots,T\}\), let \(m(t)\) be the unique index such that \(t\in(\tau_{m(t)-1},\tau_{m(t)}]\).

For the momentum parameters \(\{\beta_t\}_{t\ge 1}\) with \(\beta_t\in[0,1]\), 
we define the cumulative momentum multipliers for \(j=0,\dots,t-1\) as follows:
\[
\Lambda_{t,j}\coloneqq \prod_{i=j+1}^{t}\beta_i,\qquad \Lambda_{t,t}=1.
\]
We also introduce \(A_t\coloneqq \prod_{i=1}^{t}\beta_i^{-1}\) and its inverse \(A_t^{-1}=\prod_{i=1}^{t}\beta_i\) to simplify the notations.

\noindent\textbf{Oracle noise variance proxy.}
We also use the oracle noise variance proxy \(\sigma^2>0\) and a (positive) reset/batch parameter \(B\) in \(\xi_t^B\) and \(\xi_t\), see also Assumption~\ref{ass:innovation_regularity}.
We also introduce the conditional bias magnitude \(\mathcal{B}_t\) and variance proxy \(\Sigma_t^2\)
for the effective innovation \(\mathcal{I}_t(\mathcal{T})\) (see Definition~\ref{def:effective_innovation}) as shown in Assumption~\ref{ass:innovation_regularity} to simplify the notations:
\[
\mathcal{B}_t \geq \big\|\mathbb{E}[\mathcal{I}_t(\mathcal{T})\mid \mathcal{F}_{t-1}]\big\|_\calG,\quad
\mathbb{E}\left[\exp\left(\frac{\|\mathcal{I}_t(\mathcal{T})-\mathbb{E}[\mathcal{I}_t(\mathcal{T})\mid \mathcal{F}_{t-1}]\|_\calG^2}{\Sigma_t^2}\right)\middle|\mathcal{F}_{t-1}\right] \leq \exp(1).
\]

For the accumulated predictable variance proxy on the current epoch, we introduce the notation:
\[
\mathfrak{V}_t^2 \coloneqq \sum_{j=\tau_{m(t)-1}+1}^{t} A_j^2 \Sigma_j^2,
\]
which is bounded by a deterministic variance budget \(V_t\ge 0\) 
such that \(\mathfrak{V}_t^2 \le V_t\) as shown in Theorem~\ref{thm:freedman} and Main Theorem~\ref{thm:unified}.
\newpage
\section{Proofs for Results of Unified Estimation Error Bound}\label{apdx:proof_main}
\subsection{Lemma of Masked Martingale Difference Sequence}\label{subsec:masked}
\begin{lemma}[Masked Martingale Difference Sequence] \label{lemma:masked_mds}
    Let \((\Omega, \mathcal{F}, \mathbb{P})\) be a probability space equipped with a filtration \((\mathcal{F}_n)_{n \ge 1}\). Let \((X_n)_{n \ge 1}\) be a martingale difference sequence adapted to \((\mathcal{F}_n)\), satisfying \(\mathbb{E}[|X_n|] < \infty\) and \(\mathbb{E}[X_n \mid \mathcal{F}_{n-1}] = 0\) almost surely. 
    Let \(S\) and \(T\) be two stopping times with respect to \((\mathcal{F}_n)\) such that \(S \le T\) almost surely. Define the masked sequence \((Y_n)_{n \ge 1}\) as
    \begin{equation}
        Y_n \coloneqq X_n \cdot \mathbb{I}_{\{S < n \le T\}}.
    \end{equation}
    Then, \((Y_n)_{n \ge 1}\) is a martingale difference sequence with respect to \((\mathcal{F}_n)\). Consequently, for any \(N \ge 1\), the sum over the random interval corresponds to the sum of the masked sequence
    \begin{equation}
        \sum_{n=S+1}^{T} X_n = \sum_{n=1}^{\infty} Y_n.
    \end{equation}
\end{lemma}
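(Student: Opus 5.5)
The plan is to exploit the predictability of the mask. The first step is to show that the indicator \(\mathbb{I}_{\{S<n\le T\}}\) is \(\mathcal{F}_{n-1}\)-measurable. We write
\[
\mathbb{I}_{\{S<n\le T\}} = \mathbb{I}_{\{S\le n-1\}}\bigl(1-\mathbb{I}_{\{T\le n-1\}}\bigr).
\]
Since \(S\) and \(T\) are stopping times, the events \(\{S\le n-1\}\) and \(\{T\le n-1\}\) both lie in \(\mathcal{F}_{n-1}\). Hence the mask is predictable. Adaptedness of \(Y_n\) to \(\mathcal{F}_n\) is then immediate, because \(Y_n\) is the product of an \(\mathcal{F}_{n-1}\subseteq\mathcal{F}_n\)-measurable indicator with the \(\mathcal{F}_n\)-measurable \(X_n\). (For \(n=1\) we use the convention \(\mathcal{F}_0\), the trivial or initial \(\sigma\)-field, as in the paper's filtration setup.)

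Next we verify integrability and the martingale-difference property. Integrability follows from \(|Y_n|\le |X_n|\), so \(\mathbb{E}|Y_n|\le \mathbb{E}|X_n|<\infty\). For the conditional mean we pull out the known factor:
\[
\mathbb{E}[Y_n\mid\mathcal{F}_{n-1}] = \mathbb{I}_{\{S<n\le T\}}\,\mathbb{E}[X_n\mid\mathcal{F}_{n-1}] = 0 \quad \text{a.s.}
\]
This is legitimate since the indicator is bounded and \(\mathcal{F}_{n-1}\)-measurable. If the \(X_n\) are vector-valued in \(\calG\), the same argument applies using Bochner conditional expectations, or coordinatewise via continuous linear functionals.

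Finally we prove the sum identity pathwise. For each \(\omega\), the set of \(n\) with \(S(\omega)<n\le T(\omega)\) is exactly \(\{S+1,\dots,T\}\). This set is finite whenever \(T<\infty\); in the paper's use \(T\le\) the deterministic horizon, so the series has finitely many nonzero terms. Then \(\sum_{n\ge1}Y_n\) reduces to \(\sum_{n=S+1}^T X_n\), and it is empty (equal to zero) when \(S=T\).

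The only delicate point is the predictability of the mask. In particular, the upper boundary must be written as \(\{n\le T\}=\{T\le n-1\}^c\), not as \(\{T\ge n\}\) viewed as an \(\mathcal{F}_n\)-event. A secondary subtlety is restricting attention to \(T<\infty\), or to bounded \(T\), so that the infinite sum is well defined.
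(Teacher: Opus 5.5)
Your proposal is correct and follows the paper's proof essentially step for step. Both arguments decompose \(\{S<n\le T\}=\{S\le n-1\}\cap\{T\le n-1\}^c\) to show the mask is \(\mathcal{F}_{n-1}\)-measurable, then get adaptedness and integrability from \(|Y_n|\le|X_n|\), pull the predictable indicator out of \(\mathbb{E}[\,\cdot\mid\mathcal{F}_{n-1}]\), and read off the sum identity pathwise; your remarks on the \(\mathcal{F}_0\) convention and on needing \(T<\infty\) (guaranteed in the paper's use by the deterministic horizon) for the series to be well defined are small points of care that the paper leaves implicit.
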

\begin{proof}
    To establish that \((Y_n)_{n \ge 1}\) constitutes a martingale difference sequence, we must demonstrate that \(Y_n\) is integrable, adapted to \(\mathcal{F}_n\), and satisfies the zero conditional mean property. We begin by examining the indicator process \(H_n \coloneqq \mathbb{I}_{\{S < n \le T\}}\). The event \(\{S < n \le T\}\) can be decomposed as the intersection \(\{S \le n-1\} \cap \{T \le n-1\}^c\). By the definition of a stopping time, the event \(\{S \le n-1\}\) is \(\mathcal{F}_{n-1}\)-measurable. Similarly, \(\{T \le n-1\} \in \mathcal{F}_{n-1}\), which implies its complement is also in \(\mathcal{F}_{n-1}\). Since \(\mathcal{F}_{n-1}\) is a \(\sigma\)-algebra, the intersection of these events is \(\mathcal{F}_{n-1}\)-measurable.
    This establishes that the process \((H_n)_{n \ge 1}\) is predictable. Since \(\mathcal{F}_{n-1} \subseteq \mathcal{F}_n\), the variable \(H_n\) is also \(\mathcal{F}_n\)-measurable. As \(X_n\) is adapted to \(\mathcal{F}_n\), the product \(Y_n = X_n H_n\) is \(\mathcal{F}_n\)-measurable and integrable, satisfying \(\mathbb{E}[|Y_n|] \le \mathbb{E}[|X_n|] < \infty\).
    We proceed to compute the conditional expectation of \(Y_n\) given the filtration \(\mathcal{F}_{n-1}\)
    \begin{equation}
        \mathbb{E}[Y_n \mid \mathcal{F}_{n-1}] = \mathbb{E}[X_n \cdot H_n \mid \mathcal{F}_{n-1}].
    \end{equation}
    Because \(H_n\) is \(\mathcal{F}_{n-1}\)-measurable, it acts as a constant with respect to the conditional expectation and can be factored out. This yields
    \begin{equation}
        \mathbb{E}[Y_n \mid \mathcal{F}_{n-1}] = H_n \cdot \mathbb{E}[X_n \mid \mathcal{F}_{n-1}].
    \end{equation}
    By hypothesis, \((X_n)\) is a martingale difference sequence, so \(\mathbb{E}[X_n \mid \mathcal{F}_{n-1}] = 0\) almost surely. Therefore, \(\mathbb{E}[Y_n \mid \mathcal{F}_{n-1}] = 0\). This confirms that \((Y_n)\) is a martingale difference sequence. The equality of the summations follows directly from the definition of the indicator function, which takes the value \(1\)  when the index \(n\) falls between the stopping times \(S\) and \(T\), and \(0\) otherwise.
\end{proof}

\newpage
\subsection{Theorem of Vector-Valued Freedman Inequality}\label{subsec:freed}
\begin{theorem}[Vector-Valued Freedman Inequality]\label{thm:freedman}
Let \((\calG, \|\cdot\|_\calG)\) be a \(\kappa\)-smooth normed space. Let \((\Omega, \mathcal{F}, \{\mathcal{F}_t\}_{t \ge 0}, \mathbb{P})\) be a filtered probability space, and let \(\{Y_t\}_{t \ge 1}\) be an \(\mathcal{X}\)-valued martingale difference sequence adapted to the filtration \(\{\mathcal{F}_t\}_{t \ge 1}\). Assume there exists a sequence of non-negative predictable random variables \(\{\Sigma_t\}_{t \ge 1}\), where each \(\Sigma_t\) is \(\mathcal{F}_{t-1}\)-measurable. Assume that on the \(\mathcal{F}_{t-1}\)-measurable event \(\{\Sigma_t = 0\}\), \(Y_t = 0\) almost surely.\footnote{We adopt the analytic convention that \(0/0 = 0\); such a requirement is essential for our ensuing assumptions on sub-Gaussian notions of Lipschitzness where the variance proxy includes terms such as \(\|w-w'\|\) in its denominator.} Assume that the conditional light-tail bound evaluates to
\begin{equation}\label{eq:subg-def}
    \mathbb{E}\left[ \exp\left( \frac{\|Y_t\|_\calG^2}{\Sigma_t^2} \right) \middle| \mathcal{F}_{t-1} \right] \le 2,
\end{equation}
almost surely for all \(t \ge 1\). Define the accumulated predictable variance proxy as \(\mathfrak{V}_t^2 = \sum_{j=1}^t \Sigma_j^2\), and let \(M_t = \sum_{j=1}^t Y_j\) denote the martingale. For any deterministic variance budget \(V > 0\) and any deviation parameter \(\gamma \ge 0\), the following high-probability bound holds for all \(t \ge 1\):
\begin{equation}
    \mathbb{P}\left( \|M_t\|_\calG \ge \left(\sqrt{\kappa} + \gamma\right)\sqrt{V} \quad \text{and} \quad \mathfrak{V}_t^2 \le V \right) \le \exp\left(-\frac{\gamma^2}{3}\right).
\end{equation}
Hence, for any deterministic variance budget \(V > 0\) and any target confidence level \(\delta \in (0, 1)\), the following concentration bound holds for all \(t \ge 1\):
\begin{equation}
    \mathbb{P}\left( \|M_t\|_\calG \le C(\delta,\kappa)\sqrt{V} \right) \ge 1 - \delta - \mathbb{P}\left(\mathfrak{V}_t^2 > V\right),
\end{equation}
where \(C(\delta,\kappa):=\sqrt{\kappa} + \sqrt{3 \ln\frac{1}{\delta}}\).
\end{theorem}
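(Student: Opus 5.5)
We aim for a Freedman-type exponential supermartingale built from the smoothed squared norm. Take the natural Lyapunov function \(p(z)=\|z\|_\calG^2\). By \(\kappa\)-smoothness,
\[
p(M_t)\le p(M_{t-1})+\langle\nabla p(M_{t-1}),Y_t\rangle+\kappa\|Y_t\|_\calG^2 .
\]
Since \(\nabla p(M_{t-1})\) is \(\mathcal{F}_{t-1}\)-measurable and \(\mathbb{E}[Y_t\mid\mathcal{F}_{t-1}]=0\), the linear term is a martingale difference. Also \(\|\nabla p(z)\|_*\le 2\|z\|_\calG\), which follows from \(p(z+s\Delta)\le p(z)+s\langle\nabla p(z),\Delta\rangle+\kappa s^2p(\Delta)\) together with homogeneity. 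So the increment \(\xi_t:=\langle\nabla p(M_{t-1}),Y_t\rangle\) is conditionally sub-Gaussian with proxy \(4\|M_{t-1}\|_\calG^2\Sigma_t^2\), up to constants.

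Working directly with \(p\) gives a self-normalized recursion. To stay dimension-free and get the \(\sqrt{\kappa}+\gamma\) form, the cleaner route is to track the norm \(\|M_t\|_\calG\) through the smooth surrogate \(\phi(z)=\sqrt{\lambda^{-2}\kappa V+\|z\|^2_\calG}\), in the spirit of Pinelis/Juditsky–Nemirovski. Its key property is a one-step mgf bound of the form
\[
\mathbb{E}\bigl[\exp(\lambda(\phi(M_t)-\phi(M_{t-1})))\mid\mathcal{F}_{t-1}\bigr]\le \exp\bigl(c\lambda^2\Sigma_t^2\bigr)
\]
for \(\lambda\) in a suitable range. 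To derive it:
\begin{itemize}
\item A second-order expansion using the smoothness inequality gives \(\phi(M_{t-1}+Y)\le\phi(M_{t-1})+\langle\nabla\phi(M_{t-1}),Y\rangle+\kappa\|Y\|^2/(2\phi(M_{t-1}))\).
\item The directional gradient has dual norm at most \(1\).
\item The sub-Gaussian condition \(\mathbb{E}[\exp(\|Y_t\|^2/\Sigma_t^2)]\le2\) converts into \(\mathbb{E}[\exp(\lambda\langle u,Y_t\rangle+\mu\|Y_t\|^2)]\le\exp(c(\lambda^2+\mu)\Sigma_t^2)\) for a predictable unit functional \(u\), using the zero mean and the standard \(\psi_2\)-to-mgf bound.
\end{itemize}
The zero-variance convention, under which \(Y_t=0\) when \(\Sigma_t=0\), makes the bound trivial on that event.

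Iterating the one-step bound, \(D_t:=\exp(\lambda\phi(M_t)-c\lambda^2\mathfrak{V}_t^2)\) is a supermartingale with \(D_0=\exp(\lambda\phi(0))=\exp(\sqrt{\kappa V})\) at the chosen scaling. On the event \(\{\|M_t\|\ge(\sqrt\kappa+\gamma)\sqrt V,\ \mathfrak V_t^2\le V\}\) we have \(\phi(M_t)\ge\|M_t\|\), so Markov's inequality applied to \(D_t\) gives
\[
\mathbb{P}\le\exp\bigl(-\lambda(\sqrt\kappa+\gamma)\sqrt V+\lambda\phi(0)+c\lambda^2V\bigr).
\]
The \(\kappa\)-dependent curvature cost is absorbed into the \(\sqrt\kappa\sqrt V\) offset: the initial value \(\phi(0)\) and the Itô-type correction \(\kappa\|Y\|^2/\phi\) sum to at most \(\sqrt{\kappa V}\) when \(\phi\ge\sqrt{\kappa V}/\ldots\). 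Optimizing with \(\lambda=\gamma/(2c\sqrt V)\) yields \(\exp(-\gamma^2/(4c))\), and we tune constants to reach \(\exp(-\gamma^2/3)\).

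The second display follows by setting \(\gamma=\sqrt{3\ln(1/\delta)}\) and using \(\mathbb{P}(\|M_t\|>C\sqrt V)\le\mathbb{P}(\|M_t\|>C\sqrt V,\mathfrak V_t^2\le V)+\mathbb{P}(\mathfrak V_t^2>V)\).

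The main obstacle is the one-step mgf bound with sharp constants. The \(\kappa\) term must appear additively, as \(\sqrt\kappa\), and not multiply \(\gamma\), and the sub-Gaussian constant must come out as exactly \(3\). I would first try the surrogate \(\phi\) with the regularizer \(\kappa V\), since \(V\) is deterministic. If the constant \(3\) does not close, I would fall back on the cruder argument: bound \(\mathbb{E}\|M_t\|\le\sqrt{\kappa\sum\Sigma_j^2}\) via smoothness, then apply a scalar Freedman/Azuma inequality to the Doob martingale of \(\|M_t\|\), stopped when \(\mathfrak V^2\) exceeds \(V\).
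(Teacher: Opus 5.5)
There is a genuine gap, and it sits exactly at the step you call the main obstacle: a surrogate of the form $\phi(z)=\sqrt{a^2+\|z\|_\calG^2}$ cannot produce the additive offset $\sqrt{\kappa}\sqrt{V}$. As written, $\phi(z)=\sqrt{\lambda^{-2}\kappa V+\|z\|_\calG^2}$ and $D_0=\exp(\sqrt{\kappa V})$ are not scale-consistent, so this bookkeeping was never actually carried out.

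Your own expansion gives $\phi(M_{t-1}+Y_t)-\phi(M_{t-1})\le\langle\nabla\phi(M_{t-1}),Y_t\rangle+\kappa\|Y_t\|_\calG^2/(2a)$. That is a conditional drift of size up to $\kappa\Sigma_t^2/(2a)$, and it enters the log-mgf \emph{linearly} in $\lambda$. With $a$ fixed, your one-step bound $\exp(c\lambda^2\Sigma_t^2)$ has no linear term, so it fails for small $\lambda$. Tying $a$ to $\lambda$ only relabels this drift as a $\kappa$-dependent part of $c$. Either way, the drift costs $\kappa V/(2a)$ in total.

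On $\{\mathfrak{V}_t^2\le V\}$, the best this surrogate gives is $\|M_t\|_\calG\le\phi(M_t)\le a+\kappa V/(2a)+r$, and $a+\kappa V/(2a)\ge\sqrt{2\kappa V}$ for every $a>0$. With $a=\sqrt{\kappa V}$, which is the choice that makes $\lambda\phi(0)$ cancel in your Markov step, an uncancelled drift of $\tfrac12\sqrt{\kappa V}$ remains. Even the sharper bound $\|M_t\|_\calG=\sqrt{\phi(M_t)^2-a^2}$ with $r=\gamma\sqrt{V}$ leaves a term linear in $\gamma$ of at least $2\sqrt{2\kappa}\,\gamma V$ in the bound on $\|M_t\|_\calG^2$, whereas $(\sqrt{\kappa}+\gamma)^2V$ has $2\sqrt{\kappa}\,\gamma V$. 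So your assertion that $\phi(0)$ and the It\^o-type correction sum to at most $\sqrt{\kappa V}$ is false: you pay $\phi(0)=a$ on top of the drift.

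The missing ingredient is the Juditsky--Nemirovski proxy $V_\beta(\xi)=\beta\sup_{\|\eta\|_{\calG^*}\le1}\bigl[\langle\xi/\beta,\eta\rangle-\tfrac{\kappa}{2}\|\eta\|_{\calG^*}^2\bigr]$. This is a Huber-type smoothing with $V_\beta(0)=0$, $\|V_\beta'\|_{\calG^*}\le1$, $(1/\beta)$-Lipschitz gradient, and $\|\xi\|_\calG\le\kappa\beta/2+V_\beta(\xi)$. At curvature $1/\beta$ its offset is $\kappa\beta/2$, half that of your $\phi$ (which is $a=\kappa\beta$). Hence the offset plus the accumulated drift $V/(2\beta)$ equals exactly $\sqrt{\kappa V}$ at $\beta=\sqrt{V/\kappa}$.

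The constant $3$ is also left unproved (``tune constants''), and it does not come out of a generic $\psi_2$-to-mgf conversion. The paper works with the scalar increments $\psi_i=V_\beta(M_i)-V_\beta(M_{i-1})$, which satisfy $|\psi_i|\le\|Y_i\|_\calG$ and $\mathbb{E}[\psi_i\mid\mathcal{F}_{i-1}]\le\mu_i:=\Sigma_i^2/(2\beta)$. It then proves $\mathbb{E}[e^{\lambda\psi_i}\mid\mathcal{F}_{i-1}]\le\exp(\lambda\mu_i+\tfrac34\lambda^2\Sigma_i^2)$ for all $\lambda\ge0$ from $e^s\le s+e^{9s^2/16}$. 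The proof splits at $\lambda=4/(3\Sigma_i)$: Jensen with exponent $9\lambda^2\Sigma_i^2/16\le1$ below, Young's inequality above. The drifts sum to at most $V/(2\beta)$, and minimizing $-\lambda r+\tfrac34\lambda^2V$ yields $e^{-r^2/(3V)}$, which is where the $3$ comes from.

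Your fallback does not close the gap. With random predictable $\Sigma_j$, the increments of the Doob martingale of $\|M_t\|_\calG$ are not controlled by $\|Y_s\|_\calG$, because changing $Y_s$ changes the conditional law of every later increment. It would not produce the constant $3$ in any case.

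Your device of placing $\mathfrak{V}_t^2$ directly in the exponent of the supermartingale, instead of the paper's predictable stopping time and masked increments, is legitimate. It would work once the proxy and the one-step bound are replaced as above.
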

\noindent\textbf{Geometric Implications. } Let us first remark. Note that the \((2, D)\)-smoothness of Pinelis \cite{pinelis1994optimum} and the \(\kappa\)-regularity of Juditsky and Nemirovski \cite{juditsky2008large} are connected via \(\kappa = D^2\) (see the definition of \(\kappa\)-smoothness in Appendix~\ref{apdx:prelim}).  Specifically, for a normed space \((\calG, \|\cdot\|_\calG)\), \cite{juditsky2008large} requires the existence of a proxy function \(V(\cdot)\) with a \(\kappa\)-Lipschitz gradient, whereas \cite{pinelis1994optimum} defines smoothness through the inequality \(\|x+y\|_\calG^2 + \|x-y\|_\calG^2 \le 2\|x\|_\calG^2 + 2D^2\|y\|_\calG^2\). For the Lebesgue space \(L_q\) with \(q \ge 2\), both frameworks converge on the optimal regularity parameter \(\kappa = q-1\). Furthermore, for Hilbert spaces (e.g., \(d\)-dimensional Euclidean space), we have \(\kappa = D = 1\). Further discussion is deferred to the Appendix~\ref{apdx:prelim}.

\begin{proof}
Following standard concentration techniques for martingales with predictable variance sequences, e.g. \cite{pinelis1994optimum}, we define the stopping time \(\tau\) with respect to the filtration \(\{\mathcal{F}_t\}_{t \ge 0}\) as
\begin{equation}\label{eq:stopping-time}
    \tau = \inf \left\{ k \ge 0 : \sum_{j=1}^{k+1} \Sigma_j^2 > V \right\},
\end{equation}
where \(\inf \emptyset = \infty\) (indicating the ideal case). The equivalence \(\{\tau \le t-1\} = \left\{ \sum_{j=1}^t \Sigma_j^2 > V \right\}\) guarantees the event is \(\mathcal{F}_{t-1}\)-measurable, ensuring \(\tau\) is a valid stopping time. We subsequently analyze the stopped martingale sequence \(M_{t \wedge \tau}\), where \(t \wedge \tau\) is a shorthand for \(\min(t, \tau)\).

We now follow the approach of \cite{juditsky2008large} to construct a scalar supermartingale utilizing a proxy function mapping derived from the dual geometry of \(\calG\). Let \(\|\cdot\|_{\calG^*}\) denote the dual norm. We define the base proxy function \(V: \calG \to \mathbb{R}\) via the Legendre transform
\begin{equation}
    V(\xi) = \sup_{\|\eta\|_{\calG^*} \le 1} \left[ \langle \xi, \eta \rangle - \frac{\kappa}{2}\|\eta\|_{\calG^*}^2 \right].
\end{equation}
Because the space \(\calG\) is \(\kappa\)-smooth, the dual penalty \(v(\eta) = \frac{\kappa}{2}\|\eta\|_{\calG^*}^2\) is globally \(1\)-strongly convex (Page 22 of~\cite{juditsky2008large}). Standard convex analysis (Page 22 of~\cite{juditsky2008large}) establishes that \(V(\cdot)\) is continuously differentiable, its gradient satisfies \(\|V'(\xi)\|_{\calG^*} \le 1\), and \(V'\) is globally \(1\)-Lipschitz continuous, ensuring
\begin{equation}
    V(\xi + \Delta) \le V(\xi) + \langle V'(\xi), \Delta \rangle + \frac{1}{2}\|\Delta\|_\calG^2.
\end{equation}
Further, the primal geometric offset is bounded by
\begin{equation}
    \|\xi\|_\calG = \sup_{\|\eta\|_{\calG^*} \le 1} \langle \xi, \eta \rangle \le \frac{\kappa}{2} + V(\xi).
\end{equation}
Following the proof of Theorem 4.1 in \cite{juditsky2008large}, for any fixed scalar \(\beta > 0\) (to be optimized later), the scaled proxy function \(V_\beta(\xi) = \beta V(\xi/\beta)\) preserves the gradient bound and modifies the spatial constraints to
\begin{equation}\label{eq:primal-geometric}
    \|\xi\|_\calG \le \frac{\kappa \beta}{2} + V_\beta(\xi) \quad \text{and} \quad V_\beta(\xi + \Delta) \le V_\beta(\xi) + \langle V_\beta'(\xi), \Delta \rangle + \frac{1}{2\beta}\|\Delta\|_\calG^2.
\end{equation}

By defining the scalar increments \(\psi_i = V_\beta(M_i) - V_\beta(M_{i-1})\), the condition \(\|V_\beta'\|_{\calG^*} \le 1\) (i.e. \(V_\beta\) having a Lipschitz constant of 1) restricts the sequence to \(|\psi_i| \le \|Y_i\|_\calG\). The Lipschitz continuity of \(V_\beta'\) with constant \((1/\beta)\) and \(M_i = M_{i-1} + Y_i\) yields
\begin{equation}
\psi_i = V_\beta(M_i) - V_\beta(M_{i-1}) \le \langle V_\beta'(M_{i-1}), Y_i \rangle + \frac{1}{2\beta} \|Y_i\|_\calG^2,
\end{equation}
such that after taking the expectation and noting \(Y_i\) is a martingale difference
\begin{equation}
    \mathbb{E}[\psi_i \mid \mathcal{F}_{i-1}] \le \frac{1}{2\beta} \mathbb{E}[\|Y_i\|_\calG^2 \mid \mathcal{F}_{i-1}].
\end{equation}
Applying the inequality \(1 + x \le \exp(x)\) at \(x = \|Y_i\|_\calG^2/\Sigma_i^2\) provides \(\mathbb{E}[\|Y_i\|_\calG^2 \mid \mathcal{F}_{i-1}] \le \Sigma_i^2\). Consequently, the predictable drift is  bounded by
\begin{equation}\label{eq:drift-bound}
   \mathbb{E}[\psi_i \mid \mathcal{F}_{i-1}] \le \mu_i \coloneqq \frac{1}{2\beta} \Sigma_i^2.
\end{equation}

To bound the exponential moments, the condition \(|\psi_i| \le \|Y_i\|_\calG\) and  Eq. \eqref{eq:subg-def} enforce \(\mathbb{E}[\exp(\psi_i^2 / \Sigma_i^2) \mid \mathcal{F}_{i-1}] \le 2\). Using the inequality \(\exp(s) \le s + \exp(9s^2/16)\) which holds for any \(s \in \mathbb{R}\) and Eq. \eqref{eq:drift-bound}, we evaluate the moment generating function for any \(\lambda \ge 0\)
\begin{equation}
    \mathbb{E}[\exp(\lambda\psi_i) \mid \mathcal{F}_{i-1}] \le \lambda\mu_i + \mathbb{E}\left[\exp\left(\frac{9\lambda^2\psi_i^2}{16}\right) \middle| \mathcal{F}_{i-1}\right].
\end{equation}
As standard in deriving Bernstein-type inequalities or uniform sub-Gaussian bounds, we need to handle two cases to separately deal with convex and concave cases. We first consider the latter, which corresponds to \(0 \le \lambda \le \frac{4}{3\Sigma_i}\). Defining \(\theta = \frac{9\lambda^2\Sigma_i^2}{16} \in [0, 1]\) and applying Jensen's inequality to the  concave function \(z \mapsto z^\theta\) yields \(\mathbb{E}[\exp(\psi_i^2 / \Sigma_i^2)^\theta \mid \mathcal{F}_{i-1}] \le \exp(\theta)\). Because \(\lambda\mu_i\) and \(9\lambda^2\Sigma_i^2/16\) are non-negative, \(x + \exp(y) \le \exp(x+y)\) establishes the majorant \(\exp(\lambda\mu_i + \frac{3}{4}\lambda^2\Sigma_i^2)\). 

For \(\lambda > \frac{4}{3\Sigma_i}\) (the convex case), applying Young's inequality \(\lambda x \le \frac{3}{8}\lambda^2\Sigma_i^2 + \frac{2}{3}\frac{x^2}{\Sigma_i^2}\) at \(x = \psi_i\) gives
\begin{equation}
    \mathbb{E}[\exp(\lambda\psi_i) \mid \mathcal{F}_{i-1}] \le \exp\left(\frac{3}{8}\lambda^2\Sigma_i^2\right) \mathbb{E}\left[\exp\left(\frac{2}{3}\frac{\psi_i^2}{\Sigma_i^2}\right) \middle| \mathcal{F}_{i-1}\right].
\end{equation}
Now we can apply Jensen's inequality for the exponent \(2/3 \in [0, 1]\) to bound the inner expectation by \(\exp(2/3)\). The domain constraint necessitates \(\frac{3}{8}\lambda^2\Sigma_i^2 > \frac{2}{3}\), ensuring \(\exp(2/3) \le \exp(\frac{3}{8}\lambda^2\Sigma_i^2)\). Thus, the uniform bound 
\begin{equation}\label{eq:uniform-bound}
    \mathbb{E}[\exp(\lambda\psi_i) \mid \mathcal{F}_{i-1}] \le \exp(\lambda\mu_i + \frac{3}{4}\lambda^2\Sigma_i^2)
\end{equation}
holds globally for all \(\lambda\). Furthermore, integrating the stopping time, we define the masked variables 
\begin{equation}\label{eq:masked-variables}
    \tilde{\psi}_i = \psi_i \mathbb{I}_{\{i \le \tau\}}, \quad \tilde{\mu}_i = \mu_i \mathbb{I}_{\{i \le \tau\}}, \quad \tilde{\Sigma}_i^2 = \Sigma_i^2 \mathbb{I}_{\{i \le \tau\}}.
\end{equation}
Since \(\{i \le \tau\}\) is \(\mathcal{F}_{i-1}\)-measurable, the conditional expectation bound in Eq. \eqref{eq:uniform-bound} is preserved , i.e.\footnote{To see this, note that on event \(\{t \le \tau\}\) the indicator evaluates to \(I_t = 1\) and \eqref{eq:uniform-bound-masked} reduces to \eqref{eq:uniform-bound} while on event \(\{t > \tau\}\) both sides equate to 1 as the indicator evaluates to \(I_t = 0\).}
\begin{equation}\label{eq:uniform-bound-masked}
    \mathbb{E}[\exp(\lambda\tilde{\psi}_t) \mid \mathcal{F}_{t-1}] \le \exp(\lambda\tilde{\mu}_t + \frac{3}{4}\lambda^2\tilde{\Sigma}_t^2).
\end{equation}
To construct the unconditional bound, we define the variance-penalized process
\begin{equation}
    Z_t = \exp\left( \lambda \sum_{i=1}^t \tilde{\psi}_i - \lambda \sum_{i=1}^t \tilde{\mu}_i - \frac{3}{4}\lambda^2 \sum_{i=1}^t \tilde{\Sigma}_i^2 \right).
\end{equation}
To verify that \(Z_t\) is a supermartingale adapted to \(\mathcal{F}_t\), we isolate the terminal increment following an approach similar to Lemma 20.2 in \cite{lattimore2020bandit}, i.e.
\begin{equation}
    Z_t = Z_{t-1} \exp\left( \lambda \tilde{\psi}_t - \lambda \tilde{\mu}_t - \frac{3}{4}\lambda^2 \tilde{\Sigma}_t^2 \right).
\end{equation}
Because \(\tau\) is a predictable stopping time, \(\tilde{\mu}_t\) and \(\tilde{\Sigma}_t^2\) are  \(\mathcal{F}_{t-1}\)-measurable. Factoring these out of the conditional expectation yields
\begin{equation}
    \mathbb{E}[Z_t \mid \mathcal{F}_{t-1}] = Z_{t-1} \exp\left( - \lambda \tilde{\mu}_t - \frac{3}{4}\lambda^2 \tilde{\Sigma}_t^2 \right) \mathbb{E}\left[\exp(\lambda \tilde{\psi}_t) \middle| \mathcal{F}_{t-1}\right].
\end{equation}
Substituting the bound from Eq. \eqref{eq:uniform-bound-masked} cancels the variance penalty terms, establishing \(\mathbb{E}[Z_t \mid \mathcal{F}_{t-1}] \le Z_{t-1}\). Thus, by tower expectation, \(\mathbb{E}[\mathbb{E}[Z_k \mid \mathcal{F}_{k-1}]] \le \mathbb{E}[Z_{k-1}]\), and doing so iteratively yields \(\mathbb{E}[Z_t] \le \mathbb{E}[Z_0] = 1\).

By definitions in Eq. \eqref{eq:stopping-time} and Eq. \eqref{eq:masked-variables}, the cumulative constraints \(\sum_{i=1}^t \tilde{\Sigma}_i^2 \le V\) and \(\sum_{i=1}^t \tilde{\mu}_i \le  \frac{1}{2\beta} V\) hold deterministically. On the event \(\sum_{i=1}^t \tilde{\psi}_i \ge \frac{1}{2\beta} V + r\), the supermartingale ensures \(Z_t \ge \exp(\lambda r - \frac{3}{4}\lambda^2V)\). Markov's inequality yields
\begin{equation}\label{eq:deviation}
    \mathbb{P}\left(\sum_{i=1}^t \tilde{\psi}_i \ge \frac{1}{2\beta} V + r\right) \le \exp\left(-\lambda r + \frac{3}{4}\lambda^2V\right).
\end{equation}
Minimizing the quadratic exponent at \(\lambda = \frac{2r}{3V}\) produces the optimal limit \(\exp(-\frac{r^2}{3V})\).

On the intersection event where \(\mathfrak{V}_t^2 \le V\), we have \(\tau \ge t\), and thus \(\sum_{i=1}^t \tilde{\psi}_i = \sum_{i=1}^t \psi_i\).

Thus, using the definition \(\psi_i = V_\beta(M_i) - V_\beta(M_{i-1})\) and a telescoping sum we have \(V_\beta(M_t) \equiv \sum_{i=1}^t \tilde{\psi}_i\). Utilizing the first part of Eq. \eqref{eq:primal-geometric}, the norm satisfies
\begin{equation}
    \|M_t\|_\calG \le \frac{\kappa \beta}{2} + \sum_{i=1}^t \tilde{\psi}_i \le \frac{\kappa \beta}{2} + \frac{1}{2\beta} V + r.
\end{equation}
Optimizing with respect to \(\beta\) via differentiation yields the unique minimum \(\beta = \sqrt{\frac{V}{\kappa}}\). Substituting this parameter establishes
\begin{equation}
    \frac{\kappa}{2} \sqrt{\frac{V}{\kappa}} + \frac{1}{2} \sqrt{\kappa V} = \sqrt{\kappa V}.
\end{equation}
Parameterizing the deviation in Eq. \eqref{eq:deviation} as \(r = \gamma \sqrt{V}\), the deviation simplifies to \((\sqrt{\kappa} + \gamma)\sqrt{V}\), and the corresponding tail probability evaluates to \(\exp(-\gamma^2/3)\), thereby furnishing the proof.
\end{proof}

\begin{remark}[Optimization Implications]
\label{remark:opt}
The formulation of Theorem \ref{thm:freedman} exhibits a strict topological decoupling between the martingale geometry and the variance proxy constructions. The \(\kappa\)-smoothness condition is imposed exclusively on the space \((\calG, \|\cdot\|_{\calG})\) governing the martingale difference sequence \(Y_t\). The predictable variance proxy \(\Sigma_t\) operates purely as an \(\mathcal{F}_{t-1}\)-measurable scalar bound. Consequently, if \(\Sigma_t\) is parameterized by auxiliary algorithmic quantities, e.g., \(\|U\|_q\) or \(\|w-w'\|_r\) for arbitrary \(q, r \in [1, \infty]\), the probabilistic concentration mechanism remains completely agnostic to these auxiliary topologies. However, integrating this concentration into the ensuing application of optimization analysis imposes certain structural constraints on the permissible \(\ell_p\) norms, dictated entirely by the nature of the stochastic sequence:
\begin{itemize}[leftmargin=*]
    \item \textbf{Concentration of Gradients (Typical Application):} If the martingale models stochastic gradient residuals, the sequence inherently resides in the topological dual space, \(\calG = \mathcal{X}^*\). To invoke the theorem natively without dimension-dependent distortion, \(\mathcal{X}^*\) must be \(\kappa\)-smooth, which necessitates an \(\ell_p\) norm geometry where \(p \in [2,\infty)\). By strict Hölder conjugacy (\(1/p + 1/q = 1\)), the primal iterate space \(\mathcal{X}\) is formally restricted to an \(\ell_q\) geometry where \(q \in (1, 2]\).
    \item \textbf{Concentration of Function Values (SGM Application):} If the martingale models scalar stochastic noise in the objective evaluations, the sequence resides in \(\calG = \mathbb{R}\). The absolute value norm constitutes a trivially \(1\)-smooth Hilbert geometry (\(\kappa=1\)). The resulting concentration is unconditionally dimension-free. Under this regime, the primal iterate space \(\mathcal{W}\) may be equipped with any arbitrary norm without inducing geometric penalties, as the concentration bound does not interact with the dual pairing of the algorithmic step.
\end{itemize}
\end{remark}

\newpage
\subsection{Proof for Theorem \ref{thm:unified} of Unified Estimation Error Bound}\label{subsec:proof_main}
\begin{proof}[Proof of Theorem \ref{thm:unified}]
    Let \(\tau_m\) denote the sequence of resets, defined recursively as \(\tau_0 = 0\), and \(\tau_m = \inf\{t> \tau_{m-1}  :  b_t =1\}\) for \(1\le m \le T\). Note that \(\tau_m\) is a valid stopping time since \(b_t\) is \(\mathcal{F}_{t-1}\)-measurable by Assumption \ref{ass:predictable_params}. By the condition that the oracle noise is conditionally sub-Gaussian with \(\sigma^2\), we obtain that with probability at least \(1-\delta/2\) by applying the established Theorem~\ref{thm:freedman} and using Assumption~\ref{ass:subgaussian}, the error at the initialization and reset iterations satisfies
    \begin{equation}
        \|e_{\tau_{m-1}}\|_\calG \leq C\left(\frac{\delta}{2}, \kappa\right)\sigma /\sqrt{B}.
    \end{equation}
    Let us fix a specific epoch index \(m \ge 1\) and consider the temporal index \(t \in (\tau_{m-1},\tau_m]\). We recall the error recursion evaluated outside of the reset events
    \begin{equation}
        e_t = \beta_t e_{t-1} + \mathcal{I}_t(\mathcal{T}).
    \end{equation}
    Multiplying this recursive relation by the  \(\mathcal{F}_{t-1}\)-measurable multiplier \(A_t \coloneqq \prod_{i=1}^t \beta_i^{-1}\), we obtain
    \begin{equation}\label{eq:recur1}
        \begin{aligned}
            A_t e_t &=  A_t \beta_t e_{t-1} + A_t \mathcal{I}_t(\mathcal{T})\\
            &= A_{\tau_{m-1}} e_{\tau_{m-1}} + \sum_{j=\tau_{m-1}+1}^t A_j \mathbb{E}[\mathcal{I}_j(\mathcal{T}) \mid \mathcal{F}_{j-1}] +  \sum_{j=\tau_{m-1}+1}^t A_j \mathcal{Z}_j.
        \end{aligned}
    \end{equation}
    We define the masked stochastic sequence \(\tilde{\mathcal{Z}}_j \coloneqq \mathcal{Z}_j \mathbb{I}_{\{\tau_{m-1}<j\leq \tau_m\}}\). By Lemma \ref{lemma:masked_mds}, \(\tilde{\mathcal{Z}}_j\) constitutes a valid \(\mathcal{F}_{j}\)-adapted martingale difference sequence. 
    Define the martingale \(M_n = \sum_{j=1}^n A_j \tilde{\mathcal{Z}}_j\). Note that for all \(t \in (\tau_{m-1},\tau_m]\), the sequence  matches the localized stochastic accumulation \(M_t = \sum_{j=\tau_{m-1}+1}^t A_j \mathcal{Z}_j\). To apply Theorem \ref{thm:freedman}, we compute the variance proxy of the masked sequence. The variance proxy of \(A_j \tilde{\mathcal{Z}}_j\) evaluates  to \(A_j^2 \Sigma_j^2 \mathbb{I}_{\{\tau_{m-1}<j\leq \tau_m\}}\), which implies the predictable variance proxy for \(M_t\) is  \(\mathfrak{V}_t^2 = \sum_{j=\tau_{m-1}+1}^t A_j^2 \Sigma_j^2\). Consequently, invoking Theorem \ref{thm:freedman}, we establish that with probability at least \(1-\delta/2\), the following implication holds
    \begin{equation}
        \mathfrak{V}_t^2 \le V_t \implies \|M_t\|_\calG \le C\left(\frac{\delta}{2},\kappa\right)\sqrt{V_t},
    \end{equation}
    where \(C(\delta,\kappa) \coloneqq \sqrt{\kappa} + \sqrt{3 \ln\frac{1}{\delta}}\), for any deterministic variance budget \(V_t > 0\).

    To exploit this geometric concentration within the error dynamics, we manipulate \eqref{eq:recur1} by multiplying both sides by \(A_t^{-1}\) and recognizing the algebraic identity \(A_t^{-1} A_j = \Lambda_{t,j} \coloneqq \prod_{i=j+1}^t \beta_i\). This yields
    \begin{equation}
        e_t = \Lambda_{t,\tau_{m-1}} e_{\tau_{m-1}} + \sum_{j=\tau_{m-1}+1}^t \Lambda_{t,j} \mathbb{E}[\mathcal{I}_j(\mathcal{T}) \mid \mathcal{F}_{j-1}] +  A_t^{-1} M_t.
    \end{equation}
    Applying the triangle inequality, and acknowledging that the norm of the conditional bias \(\mathbb{E}[\mathcal{I}_j(\mathcal{T}) \mid \mathcal{F}_{j-1}]\) is deterministically upper-bounded by \(\mathcal{B}_j\) per Assumption \ref{ass:innovation_regularity}, we deduce
    \begin{equation}
        \|e_t\|_\calG \le \Lambda_{t,\tau_{m-1}} \|e_{\tau_{m-1}}\|_\calG + \sum_{j=\tau_{m-1}+1}^t \Lambda_{t,j} \mathcal{B}_j +  A_t^{-1} \|M_t\|.
    \end{equation}
    We have thus far established that with probability at least \(1-\delta\), for a given epoch mapping \(t \in (\tau_{m-1},\tau_m]\) and a deterministic variance budget \(V_t\), the condition \(\mathfrak{V}_t^2 \le V_t\) implies
    \begin{equation}
        \|e_t\|_\calG \leq C\left(\frac{\delta}{2}, \kappa\right) \Lambda_{t,\tau_{m-1}} \sigma /\sqrt{B} + \sum_{j=\tau_{m-1}+1}^t \Lambda_{t,j} \mathcal{B}_j + C\left(\frac{\delta}{2},\kappa\right)A_t^{-1}\sqrt{V_t}.
    \end{equation}
    This probabilistic guarantee is currently localized to a specific realization of the random epoch \(m\). To guarantee uniform convergence across the entire execution horizon \(t \in \{1, \dots, T\}\), we must construct a union bound. Because there are at most \(T\) resets, a union bound over the maximum number of initialization events requires adjusting the confidence parameter to \(\delta/T\) to enforce the uniform guarantee
    \begin{equation}
        \forall m \in [T], \quad \|e_{\tau_{m-1}}\|_\calG \leq C\left(\frac{\delta}{2T}, \kappa\right) \sigma/\sqrt{B}.
    \end{equation}
    For the martingale sequence constraint, we observe that the sequence of random stopping times \(\{\tau_m\}_{m \ge 0}\) forms a strict partition of the discrete temporal domain \(\{1, \dots, T\}\) into disjoint, contiguous sub-intervals \((\tau_{m-1}, \tau_m]\). For any global time index \(t \in \{1, \dots, T\}\), there exists  one corresponding epoch index \(m(t)\)  satisfying \(t \in (\tau_{m(t)-1}, \tau_{m(t)}]\). Therefore, across the entire  trajectory of length \(T\), the algorithm evaluates  \(T\) unique localized martingales taking the explicit form \(M_t = \sum_{j=\tau_{m(t)-1}+1}^t A_j \mathcal{Z}_j\). Because the cardinality of the set of all realized pairs \((m(t), t)\) evaluates to \(T\), applying the Union bound over all execution steps dictates refining the confidence parameter solely to \(\delta / (2T)\). This structural combinatorial restriction formally circumvents the pessimistic \(T^2\) scaling that one may initially consider.
\end{proof}

\newpage
\section{Proofs for Instantiations of the Unified Estimation Error Bound}
\label{apdx:est_instant}

\subsection{Propositions for Three Families with Zeroth / First / Second Order Corrections}
The strength of Theorem~\ref{thm:unified} lies in its ability to recover distinct convergence mechanisms via specific choices of the control sequences \(\{p_t, \beta_t, \eta_t, \mathcal{T}_t\}\). In this section, we consider three special cases corresponding to the three families of estimators we discussed previously.

\begin{proposition}[\textbf{Family 1} with Zeroth-Order Correction]
\label{prop:family1}
Suppose Assumptions~\ref{ass:update_bound},~\ref{ass:predictable_params},~\ref{ass:subgaussian}, %
and \ref{ass:lipschitz} holds. Then, for the zeroth-order family, \(\mathcal{B}_t = \beta_t \eta_{t-1} G L\) and \(\Sigma_t^2 = (1-\beta_t)^2\sigma^2.\) Consequently, on any reset interval \((\tau_{m(t)-1},t]\), Theorem~\ref{thm:unified} yields

\begin{equation}
\label{eq:family1_generic_bound}
\|e_t\|_\calG
\le C\left(\frac{\delta}{2T},\kappa\right)
\Lambda_{t,\tau_{m(t)-1}}
\sigma /\sqrt{B}
+
LG\sum_{j=\tau_{m(t)-1}+1}^{t}\Lambda_{t,j}\beta_j\eta_{j-1}
+
C\!\left(\frac{\delta}{2T},\kappa\right)A_t^{-1}\sqrt{V_t},
\end{equation}
whenever \(\sum_{j=\tau_{m(t)-1}+1}^{t}A_j^2(1-\beta_j)^2\sigma^2\le V_t\).
\end{proposition}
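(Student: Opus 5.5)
The plan is to verify the two conditions of Assumption~\ref{ass:innovation_regularity} for the zeroth-order correction with the stated $\mathcal{B}_t$ and $\Sigma_t$, and then substitute them into the bound~\eqref{eq:unified_main_bound} of Theorem~\ref{thm:unified}.

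\textbf{Step 1: the innovation.} First we compute the effective innovation. Inserting $\mathcal{T}_t=\beta_t(G(w_{t-1},\xi_t)-G(w_t,\xi_t))$ into Definition~\ref{def:effective_innovation}, the stochastic parts $G(w_t,\xi_t)-G(w_{t-1},\xi_t)$ inside $\beta_t\Delta(w_t,w_{t-1},\xi_t)$ cancel against $\mathcal{T}_t$. This leaves
\[
\mathcal{I}_t(\mathcal{T})=(1-\beta_t)\bigl(G(w_t,\xi_t)-g(w_t)\bigr)-\beta_t\bigl(g(w_t)-g(w_{t-1})\bigr).
\]
As a sanity check, this agrees with the recursion $e_t=\beta_te_{t-1}+\mathcal{I}_t$ derived directly from $v_t=(1-\beta_t)G(w_t,\xi_t)+\beta_tv_{t-1}$.

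\textbf{Step 2: measurability.} The key point is that $w_t=w_{t-1}-\eta_{t-1}P(w_{t-1},U_{t-1},\eta_{t-1})$ is $\mathcal{F}_{t-1}$-measurable. This holds because $U_{t-1}$ is built from $v_{t-1}$ (and $\zeta_{t-1}$ in Application~2), and $\eta_{t-1}$ is predictable by Assumption~\ref{ass:predictable_params}. Consequently $g(w_t)-g(w_{t-1})$ is $\mathcal{F}_{t-1}$-measurable.

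\textbf{Step 3: the bias.} Using unbiasedness from Assumption~\ref{ass:subgaussian}, $\mathbb{E}[\mathcal{I}_t(\mathcal{T})\mid\mathcal{F}_{t-1}]=-\beta_t(g(w_t)-g(w_{t-1}))$. Assumption~\ref{ass:lipschitz} together with the displacement bound $\|w_t-w_{t-1}\|\le\eta_{t-1}\|U_{t-1}\|_*\le\eta_{t-1}G$ (from Lemma~\ref{lemma:properties_proximal_gradient_mapping} and Assumption~\ref{ass:update_bound}) then gives $\mathcal{B}_t=\beta_t\eta_{t-1}LG$. This quantity is predictable, as required.

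\textbf{Step 4: the variance proxy.} The centered innovation is $\mathcal{Z}_t=(1-\beta_t)(G(w_t,\xi_t)-g(w_t))$. Because $(1-\beta_t)$ is $\mathcal{F}_{t-1}$-measurable, setting $\Sigma_t=(1-\beta_t)\sigma$ gives $\|\mathcal{Z}_t\|^2_\calG/\Sigma_t^2=\|G(w_t,\xi_t)-g(w_t)\|^2_\calG/\sigma^2$ whenever $\beta_t<1$. The exponential-moment bound $\le 2$ is then exactly Assumption~\ref{ass:subgaussian}. On the event $\{\beta_t=1\}$ we have $\Sigma_t=0$ and $\mathcal{Z}_t=0$ identically, so the degenerate-case clause of Assumption~\ref{ass:innovation_regularity} is satisfied.

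\textbf{Step 5: conclusion.} Theorem~\ref{thm:unified} now applies. Substituting the bias into the middle term gives $\sum_j\Lambda_{t,j}\beta_j\eta_{j-1}LG$. The variance condition becomes $\sum_jA_j^2(1-\beta_j)^2\sigma^2\le V_t$, which yields~\eqref{eq:family1_generic_bound}.

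\textbf{Main obstacle.} The step needing the most care is Step 2, the predictability of $w_t$ given $\mathcal{F}_{t-1}$. Without it, the drift term could not be treated as deterministic bias and Lipschitzness could not be used conditionally. I would settle it by induction on $t$ through the mirror-descent update~\eqref{eq:optimize_mirror_descent}.
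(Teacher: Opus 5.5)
Your proposal is correct and follows the paper's proof. Both compute the innovation, bound the conditional bias using Assumption~\ref{ass:lipschitz} and the displacement bound $\|w_t-w_{t-1}\|\le\eta_{t-1}G$, identify $\mathcal{Z}_t=(1-\beta_t)(G(w_t,\xi_t)-g(w_t))$ with proxy $(1-\beta_t)^2\sigma^2$, and substitute into Theorem~\ref{thm:unified}. Your explicit $\mathcal{F}_{t-1}$-measurability argument for $w_t$ and your treatment of the degenerate case $\beta_t=1$ are useful tightenings: the paper instead takes a conditional Jensen step, while its formula for $\mathcal{Z}_t$ tacitly relies on that measurability anyway.
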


Proposition~\ref{prop:family1} shows that the zeroth-order family attenuates the oracle noise by the factor \((1-\beta_t)\), but this variance reduction comes at the cost of a tracking bias proportional to the step displacement. This is precisely the classical tradeoff in momentum-style estimation: larger momentum suppresses fresh noise, while simultaneously increasing the lag induced by the moving target.

\begin{proposition}[\textbf{Family 2} with First-Order Correction]
\label{prop:family2}
    Suppose Assumptions~\ref{ass:update_bound},~\ref{ass:predictable_params},~\ref{ass:subgaussian}, 
    and \ref{ass:subg_lipschitz} holds. Then, for the first-order family, \(\mathcal B_t = 0\) and \(\Sigma_t^2 = 2(1-\beta_t)^2\sigma^2 + 2\beta_t^2 \ell^2 \eta_{t-1}^2 G^2.\) Consequently, on any reset interval \((\tau_{m(t)-1},t]\), Theorem~\ref{thm:unified} yields
    \begin{equation}
        \label{eq:family2_generic_bound}
        \|e_t\|_\calG \le C\left(\frac{\delta}{2T},\kappa\right) \Lambda_{t,\tau_{m(t)-1}} \sigma /\sqrt{B} + C\left(\frac{\delta}{2T},\kappa\right)A_t^{-1}\sqrt{V_t},
    \end{equation}
    whenever \(\sum_{j=\tau_{m(t)-1}+1}^{t} A_j^2\Bigl(2(1 -\beta_j)^2\sigma^2+2\beta_j^2\ell^2\eta_{j-1}^2G^2\Bigr) \le V_t.\)
\end{proposition}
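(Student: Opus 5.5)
The plan is to verify the two items of Assumption~\ref{ass:innovation_regularity} for the first-order family and then read off \eqref{eq:family2_generic_bound} from Theorem~\ref{thm:unified}. With \(\mathcal{T}_t=0\), Definition~\ref{def:effective_innovation} gives
\[
\mathcal{I}_t(\mathcal{T})=(1-\beta_t)\bigl(G(w_t,\xi_t)-g(w_t)\bigr)+\beta_t\Delta(w_t,w_{t-1},\xi_t).
\]
Since \(w_t=w_{t-1}-\eta_{t-1}P(w_{t-1},U_{t-1},\eta_{t-1})\) is \(\mathcal{F}_{t-1}\)-measurable and \(\xi_t\) is fresh, we have \(\mathbb{E}[G(w_t,\xi_t)-g(w_t)\mid\mathcal{F}_{t-1}]=0\) by Assumption~\ref{ass:subgaussian}. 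Similarly, \(\mathbb{E}[G(w_{t-1},\xi_t)\mid\mathcal{F}_{t-1}]=g(w_{t-1})\), so the centered difference has zero conditional mean. Hence \(\mathbb{E}[\mathcal{I}_t\mid\mathcal{F}_{t-1}]=0\), which gives \(\mathcal{B}_t=0\) and \(\mathcal{Z}_t=\mathcal{I}_t\).

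For the variance proxy, I would first obtain separate conditional sub-Gaussian bounds for the two pieces. The direct noise \(X=(1-\beta_t)(G(w_t,\xi_t)-g(w_t))\) satisfies \(\mathbb{E}[\exp(\|X\|^2/((1-\beta_t)^2\sigma^2))\mid\mathcal{F}_{t-1}]\le 2\) by Assumption~\ref{ass:subgaussian}. For the difference noise \(Y=\beta_t\Delta(w_t,w_{t-1},\xi_t)\), apply Assumption~\ref{ass:subg_lipschitz} conditionally with the \(\mathcal{F}_{t-1}\)-measurable pair \((w_t,w_{t-1})\). This uses independence of \(\xi_t\) from \(\mathcal{F}_{t-1}\) and a freezing argument. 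Combining with \(\|w_t-w_{t-1}\|\le\eta_{t-1}G\) from Assumption~\ref{ass:update_bound} gives
\[
\mathbb{E}\bigl[\exp\bigl(\|Y\|^2/(\beta_t^2\ell^2\eta_{t-1}^2G^2)\bigr)\mid\mathcal{F}_{t-1}\bigr]\le 2,
\]
using monotonicity of the exponential when the true distance is smaller. The degenerate case \(w_t=w_{t-1}\) (or \(\beta_t=0\)) gives \(Y=0\), consistent with the \(0/0\) convention.

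The combination step is where care is needed. Write \(a^2=(1-\beta_t)^2\sigma^2\), \(b^2=\beta_t^2\ell^2\eta_{t-1}^2G^2\), and \(\Sigma_t^2=2a^2+2b^2\). Since \(\|X+Y\|^2\le 2\|X\|^2+2\|Y\|^2\), we get
\[
\frac{\|X+Y\|^2}{\Sigma_t^2}\le \frac{a^2}{a^2+b^2}\cdot\frac{\|X\|^2}{a^2}+\frac{b^2}{a^2+b^2}\cdot\frac{\|Y\|^2}{b^2}.
\]
Convexity of \(\exp\) (Jensen with these weights) then yields
\[
\mathbb{E}\bigl[\exp(\|\mathcal{Z}_t\|^2/\Sigma_t^2)\mid\mathcal{F}_{t-1}\bigr]\le \tfrac{a^2}{a^2+b^2}\cdot 2+\tfrac{b^2}{a^2+b^2}\cdot 2=2.
\]
If one of \(a,b\) vanishes, that term is identically zero and the other bound applies directly. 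If \(\Sigma_t=0\), then \(\mathcal{Z}_t=0\) a.s. Predictability of \(\Sigma_t\) follows from Assumption~\ref{ass:predictable_params}, since \(\beta_t\) and \(\eta_{t-1}\) are \(\mathcal{F}_{t-1}\)-measurable.

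With Assumption~\ref{ass:innovation_regularity} verified, Theorem~\ref{thm:unified} applies directly. The bias sum vanishes because \(\mathcal{B}_j=0\), and \(\mathfrak{V}_t^2\) equals the stated sum, so \eqref{eq:family2_generic_bound} follows. The main obstacle is the conditional application of Assumption~\ref{ass:subg_lipschitz}: that assumption is stated for deterministic \(w,w'\), so one must justify it at random \(\mathcal{F}_{t-1}\)-measurable points. I would handle this via the standard freezing lemma for independent \(\xi_t\), using that the bound holds uniformly in \(w,w'\).
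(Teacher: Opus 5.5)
Your proposal is correct and follows the same route as the paper. Both pieces of the innovation have zero conditional mean, so \(\mathcal B_t=0\) and \(\mathcal Z_t=\mathcal I_t\). The displacement bound \(\|w_t-w_{t-1}\|\le\eta_{t-1}G\) (the paper's Fact~1) together with Assumption~\ref{ass:subg_lipschitz} controls the difference noise. The convexity-of-\(\exp\) combination (the paper's Fact~2) then gives \(\Sigma_t^2=2(1-\beta_t)^2\sigma^2+2\beta_t^2\ell^2\eta_{t-1}^2G^2\), which is substituted into Theorem~\ref{thm:unified}.

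Your explicit freezing argument for applying Assumption~\ref{ass:subg_lipschitz} at the random \(\mathcal F_{t-1}\)-measurable points \((w_t,w_{t-1})\), and your handling of the degenerate cases, are details the paper's proof leaves implicit.
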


Proposition~\ref{prop:family2} isolates the essential advantage of first-order differential estimators: the conditional bias disappears entirely. The price is that the variance term now depends on the centered stochastic difference, whose scale is proportional to the distance \(\|w_t-w_{t-1}\|\). In other words, this family replaces tracking lag with local difference noise.

\begin{proposition}[\textbf{Family 3} with Second-Order Correction]
    \label{prop:family3}
    Suppose Assumptions~\ref{ass:update_bound},~\ref{ass:predictable_params},~\ref{ass:subgaussian},
    \ref{ass:subg_hessian}, and \ref{ass:smoothness} hold. Then, for the second-order family, \(\mathcal B_t = \frac{\alpha}{2}\beta_t \eta_{t-1}^2 G^2\) and \(\Sigma_t^2 = 2(1-\beta_t)^2\sigma^2 + 2\beta_t^2\gamma^2\eta_{t-1}^2G^2.\) Consequently, on any reset interval \((\tau_{m(t)-1},t]\), Theorem~\ref{thm:unified} yields
    \begin{equation}
        \label{eq:family3_generic_bound}
        \|e_t\|_\calG \le C\left(\frac{\delta}{2T},\kappa\right) \Lambda_{t,\tau_{m(t)-1}} \sigma /\sqrt{B} + \frac{\alpha G^2}{2}\sum_{j=\tau_{m(t)-1}+1}^{t}\Lambda_{t,j}\beta_j\eta_{j-1}^2 +
        C\left(\frac{\delta}{2T},\kappa\right)A_t^{-1}\sqrt{V_t},
    \end{equation}
    whenever \(\sum_{j=\tau_{m(t)-1}+1}^{t} A_j^2\Bigl(2(1-\beta_j)^2\sigma^2+2\beta_j^2\gamma^2\eta_{j-1}^2G^2\Bigr) \le V_t.\)
\end{proposition}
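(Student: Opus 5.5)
The plan is to verify Assumption~\ref{ass:innovation_regularity} for the second-order correction with the stated $\mathcal{B}_t$ and $\Sigma_t^2$, and then substitute these into Theorem~\ref{thm:unified}. The first step is the innovation itself. With the second-order $\mathcal{T}_t$, the effective innovation is
\[
\mathcal{I}_t(\mathcal{T})=(1-\beta_t)\bigl(G(w_t,\xi_t)-g(w_t)\bigr)+\beta_t\bigl(g(w_{t-1})-g(w_t)-\nabla G(w_t,\xi_t)(w_{t-1}-w_t)\bigr),
\]
which I would check directly from Definition~\ref{def:effective_innovation}: the $G(w_{t-1},\xi_t)$ terms cancel, and so do the $\beta_t G(w_t,\xi_t)$ terms.

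Next I take the conditional expectation given $\mathcal{F}_{t-1}$. The iterates $w_t$ and $w_{t-1}$ are $\mathcal{F}_{t-1}$-measurable, since $w_t=w_{t-1}-\eta_{t-1}P(w_{t-1},U_{t-1},\eta_{t-1})$. Because $\xi_t$ is fresh, unbiasedness (Assumption~\ref{ass:subgaussian}) together with $\mathbb{E}[\nabla G(w_t,\xi)]=\nabla g(w_t)$ gives
\[
\mathbb{E}[\mathcal{I}_t(\mathcal{T})\mid\mathcal{F}_{t-1}]=\beta_t\bigl(g(w_{t-1})-g(w_t)-\nabla g(w_t)(w_{t-1}-w_t)\bigr).
\]
To bound this, I write the Taylor remainder as $\int_0^1(\nabla g(w_t+s(w_{t-1}-w_t))-\nabla g(w_t))(w_{t-1}-w_t)\,ds$. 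Assumption~\ref{ass:smoothness} then bounds its norm by $\frac{\alpha}{2}\|w_{t-1}-w_t\|^2$. Convexity of $\mathcal{W}$ keeps the segment inside the domain, and Gâteaux/Fréchet differentiability along the segment suffices. Assumption~\ref{ass:update_bound} gives $\|w_t-w_{t-1}\|\le\eta_{t-1}G$, so $\mathcal{B}_t=\frac{\alpha}{2}\beta_t\eta_{t-1}^2G^2$, which is $\mathcal{F}_{t-1}$-measurable by Assumption~\ref{ass:predictable_params}.

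For the variance proxy, the centered part is
\[
\mathcal{Z}_t=(1-\beta_t)X_t-\beta_t Y_t,\qquad X_t:=G(w_t,\xi_t)-g(w_t),\quad Y_t:=\bigl(\nabla G(w_t,\xi_t)-\nabla g(w_t)\bigr)(w_{t-1}-w_t).
\]
Here $\|Y_t\|_\calG\le\|\nabla G-\nabla g\|_{L(\mathcal{X},\calG)}\eta_{t-1}G$, so $Y_t$ is sub-Gaussian with proxy $\gamma^2\eta_{t-1}^2G^2$ by Assumption~\ref{ass:subg_hessian}, applied conditionally on the fresh sample. $X_t$ has proxy $\sigma^2$. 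I combine them using $\|a+b\|^2\le 2\|a\|^2+2\|b\|^2$ and set $a_t=(1-\beta_t)^2\sigma^2$, $c_t=\beta_t^2\gamma^2\eta_{t-1}^2G^2$, and $\Sigma_t^2=2(a_t+c_t)$. Then
\[
\exp\!\Bigl(\tfrac{\|\mathcal{Z}_t\|^2}{\Sigma_t^2}\Bigr)\le\exp\!\Bigl(\tfrac{a_t}{a_t+c_t}\tfrac{\|X_t\|^2}{\sigma^2}\Bigr)\exp\!\Bigl(\tfrac{c_t}{a_t+c_t}\tfrac{\|Y_t\|^2}{\gamma^2\eta_{t-1}^2G^2}\Bigr).
\]
The two weights sum to one, so Hölder's (or Young's) inequality bounds the conditional expectation by a convex combination of two quantities each at most $2$. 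This is the main technical step. The degenerate cases need care: if $\Sigma_t=0$, then $\beta_t\in\{0,1\}$ makes one term vanish, and either $\beta_t=1$ with $\eta_{t-1}=0$ or the factor $w_{t-1}-w_t=0$ forces $\mathcal{Z}_t=0$, matching the $0/0$ convention.

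Finally, with Assumption~\ref{ass:innovation_regularity} verified, I plug $\mathcal{B}_j$ and $\Sigma_j^2$ into \eqref{eq:unified_main_bound}. The bias sum becomes $\frac{\alpha G^2}{2}\sum_j\Lambda_{t,j}\beta_j\eta_{j-1}^2$. The variance condition $\mathfrak{V}_t^2\le V_t$ becomes exactly the stated hypothesis, which yields \eqref{eq:family3_generic_bound}.
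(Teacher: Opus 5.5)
Your proposal is correct and follows essentially the same route as the paper: you identify the bias through the Taylor remainder under Assumption~\ref{ass:smoothness} together with $\|w_t-w_{t-1}\|\le\eta_{t-1}G$, and you obtain the proxy by combining the two sub-Gaussian pieces via $\|a+b\|^2\le 2\|a\|^2+2\|b\|^2$ and a convexity (Jensen/Young) step, which is exactly the paper's Fact~2. Your treatment of the degenerate case $\Sigma_t=0$ is extra care the paper omits; note that since $\sigma>0$ this case forces $\beta_t=1$, not merely $\beta_t\in\{0,1\}$.
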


Proposition~\ref{prop:family3} makes explicit the gain obtained from using curvature information. Relative to the zeroth-order family, the deterministic bias is reduced from first order in the displacement to second order. Relative to the first-order family, the remaining stochastic fluctuation is now tied to Hessian-vector product noise rather than to stochastic differences. Thus, the second-order correction interpolates between low drift and curvature-sensitive variance control.

Before providing the proofs for the propositions of three families, we first state two facts which will be used in the proofs.

\textbf{Fact 1.}~Under Assumption~\ref{ass:update_bound} for \(U_t\) in the mirror descent \(w_{t+1}=w_t -\eta_t P(w_t, U_t, \eta_t)\), then \(\|w_t-w_{t-1}\| \leq \eta_{t-1} G\) is bounded almost surely.

\textbf{Fact 2.}~For the random vectors \(X, Y\in \calG\) that admit variance proxies \(\Sigma_X^2, \Sigma_Y^2\) such that \(\E[\exp(\|X\|_\calG^2/\Sigma_X^2)]\leq 2\) and \(\E[\exp(\|Y\|_\calG^2/\Sigma_Y^2)]\leq 2\), then \(X+Y\) processes a variance proxy \(2\Sigma_X^2+2\Sigma_Y^2\) such that \(\E[\exp(\|X+Y\|_\calG^2/(2\Sigma_X^2+2\Sigma_Y^2))]\leq 2\).

Fact 1 is established by noting \(\|w_t-w_{t-1}\| = \|\eta_{t-1} P(w_{t-1}, U_{t-1}, \eta_{t-1})\|\leq \eta_{t-1} \|U_{t-1}\|_*\) from  Lemma~\ref{lemma:properties_proximal_gradient_mapping} and \(\|U_{t-1}\|_*\leq G\) from Assumption~\ref{ass:update_bound}.

For fact 2, we establish a deterministic upper bound. For any two arbitrary, potentially dependent random vectors \(X\) and \(Y\) admitting variance proxies \(\Sigma_X^2\) and \(\Sigma_Y^2\), the triangle inequality and the geometric inequality \(\|X+Y\|_\calG^2 \leq (\|X\|_\calG+\|Y\|_\calG)^2 \le 2\|X\|_\calG^2 + 2\|Y\|_\calG^2\), paired with the convexity of the exponential function, dictates that
\begin{equation*}
    \mathbb{E}\left[\exp\left(\tfrac{\|X+Y\|_\calG^2}{2\Sigma_X^2 + 2\Sigma_Y^2}\right)\right] \le \tfrac{\Sigma_X^2}{\Sigma_X^2+\Sigma_Y^2} \mathbb{E}\left[\exp\left(\tfrac{\|X\|_\calG^2}{\Sigma_X^2}\right)\right] + \tfrac{\Sigma_Y^2}{\Sigma_X^2+\Sigma_Y^2} \mathbb{E}\left[\exp\left(\tfrac{\|Y\|_\calG^2}{\Sigma_Y^2}\right)\right] \le 2.
\end{equation*}
Consequently, the summation of two conditionally dependent sub-Gaussian vectors inherently possesses a variance proxy bounded by \(2\Sigma_X^2 + 2\Sigma_Y^2\). %

\begin{proof}[Proof of Proposition~\ref{prop:family1}]
Under the zeroth-order correction, the effective innovation becomes
\[
\mathcal I_t(\mathcal T) = (1-\beta_t)\bigl(G(w_t,\xi_t)-g(w_t)\bigr) - \beta_t\bigl(g(w_t)-g(w_{t-1})\bigr).\]
Taking conditional expectation, using Assumption~\ref{ass:lipschitz}, and noting fact 1 \(\|w_{t}-w_{t-1}\| \leq \eta_{t-1} G\) under Assumption~\ref{ass:update_bound} 
\begin{equation*}
    \begin{aligned}
    \left\|\mathbb E[\mathcal I_t(\mathcal T)\mid \mathcal F_{t-1}]\right\|_\calG 
    &= \beta_t \left\| \mathbb E[g(w_t)-g(w_{t-1})\mid \mathcal F_{t-1}] \right\|_\calG
    \\
    &\overset{\text{Jensen's}}{\le} \beta_t \mathbb E\!\left[\|g(w_t)-g(w_{t-1})\|_\calG\mid \mathcal F_{t-1}\right] 
    \\
    &\overset{\text{As.}~\ref{ass:lipschitz}}{\le} \beta_t L \mathbb E\!\left[\|w_t-w_{t-1}\|\mid \mathcal F_{t-1}\right]
    \le \beta_t \eta_{t-1} G L =: \mathcal{B}_t,
    \end{aligned}  
\end{equation*}
The centered innovation is
\[
\mathcal Z_t = \mathcal I_t(\mathcal T)-\mathbb E[\mathcal I_t(\mathcal T)\mid \mathcal F_{t-1}]
= (1-\beta_t)\bigl(G(w_t,\xi_t)-g(w_t)\bigr),
\]
Assumption~\ref{ass:innovation_regularity} implies \(\Sigma_t^2=(1-\beta_t)^2\sigma^2\). Substituting these quantities into Theorem~\ref{thm:unified} gives \eqref{eq:family1_generic_bound}.
\end{proof}
\begin{proof}[Proof of Proposition~\ref{prop:family2}]
    For \(\mathcal T_t = 0\), the effective innovation is 
    \[\mathcal I_t(\mathcal T) = (1-\beta_t)\bigl(G(w_t,\xi_t)-g(w_t)\bigr) + \beta_t \Delta(w_t,w_{t-1},\xi_t).\]

    Both terms have zero conditional mean given \(\mathcal F_{t-1}\), so \(\mathcal B_t = \left\|\mathbb E[\mathcal I_t(\mathcal T)\mid \mathcal F_{t-1}]\right\|_\calG=0.\) It remains to control the centered innovation, which in this case equals \(\mathcal I_t(\mathcal T)\) itself.

    To construct the variance proxy \(\Sigma_t^2\), we must account for the conditional dependence of the noise sources all evaluated at the same random realization \(\xi_t\). Because these components share the same underlying random seed conditionally on \(\mathcal{F}_{t-1}\), we cannot sum their variance proxies as if they were conditionally independent or orthogonal martingale sequences.

    By applying the above fact 1 and fact 2 and utilizing the Assumption~\ref{ass:update_bound}, the variance proxy for \(\mathcal{Z}_t\) is bounded by
    \begin{equation*}
        \Sigma_t^2 = 2(1-\beta_t)^2 \sigma^2 + 2\beta_t^2 \ell^2 \eta_{t-1}^2 G^2.
    \end{equation*}

    Substituting \(\mathcal B_t=0\) and this variance proxy into Theorem~\ref{thm:unified} gives \eqref{eq:family2_generic_bound}.
\end{proof}
\begin{proof}[Proof of Proposition~\ref{prop:family3}]
    Substituting the second-order correction into the effective innovation gives
    \[\mathcal I_t(\mathcal T) = (1-\beta_t)\bigl(G(w_t,\xi_t)-g(w_t)\bigr) + \beta_t\bigl(g(w_{t-1})-g(w_t)-\nabla G%
    (w_t,\xi_t)(w_{t-1}-w_t)\bigr).\]
    Taking conditional expectation and using \(\mathbb E[\nabla G(w_t,\xi_t)\mid \mathcal F_{t-1}]=\nabla g(w_t)\), we obtain 
    \[\left\|\mathbb E[\mathcal I_t(\mathcal T)\mid \mathcal F_{t-1}]\right\|_\calG  = \beta_t \|g(w_{t-1})-g(w_t)-\nabla g(w_t)(w_{t-1}-w_t)\|_\calG.\]
    Assumption~\ref{ass:smoothness} yields the second-order Taylor remainder bound, 
    and noting fact 1 under  Assumption~\ref{ass:update_bound}.
    \[\left\|\mathbb E[\mathcal I_t(\mathcal T)\mid \mathcal F_{t-1}]\right\|_\calG \le \frac{\alpha}{2}\beta_t \|w_t-w_{t-1}\|^2 \leq \frac{\alpha}{2}\beta_t\eta_{t-1}^2G^2 =: \mathcal{B}_t.\] 
    The centered innovation \(\mathcal{Z}_t = \mathcal{I}_t(\mathcal{T}) - \mathbb{E}[\mathcal{I}_t(\mathcal{T}) \mid \mathcal{F}_{t-1}]\) consequently distills into two purely stochastic components, representing the direct gradient evaluation noise and the error in the Hessian-vector product
    \begin{equation*}
        \mathcal{Z}_t = (1-\beta_t)(G(w_t, \xi_t) - g(w_t)) - \beta_t (\nabla G%
        (w_t, \xi_t) - \nabla g(w_t))(w_{t-1} - w_t).
    \end{equation*}
    By Assumption \ref{ass:subg_hessian} and fact 1 under Assumption~\ref{ass:update_bound}, the Hessian-vector product noise evaluated along the update direction \(w_{t-1} - w_t\) is sub-Gaussian with a variance proxy bounded by \(\gamma^2 \|w_t - w_{t-1}\|^2 \le \gamma^2 \eta_{t-1}^2 G^2\). Summing the proxies and using our previous argument of fact 2, we ascertain the uniform predictable variance proxy for \(\mathcal{Z}_t\) as
    \begin{equation*}
        \Sigma_t^2 = 2(1-\beta_t)^2 \sigma^2 + 2\beta_t^2 \gamma^2 \eta_{t-1}^2 G^2.
    \end{equation*}
    Substitution into Theorem~\ref{thm:unified} completes the proof.
\end{proof}

\subsection{Zeroth-Order Correction (Standard Recursion and Resetting)}
Recall that for the zeroth-order family, the correction map is defined as \(\mathcal{T}_t(w_t, w_{t-1}, \xi_t) \coloneqq \beta_t \big( G(w_{t-1}, \xi_t) - G(w_t, \xi_t) \big)\). 
\subsubsection{The case where \texorpdfstring{\(p_t=0\), \(\beta_t = \beta\)}{pt=0, bt=b}, and \texorpdfstring{\(\eta_t = \eta\)}{etat} deterministically}
\label{subsubsec:family1case1}

In this regime, the estimator is initialized with a large batch sample \(G(w_0, \xi_0^B)\) and the recursion proceeds deterministically as
\begin{equation}
    v_t = (1-\beta)G(w_t,\xi_t) + \beta v_{t-1}.
\end{equation}
The absence of stochastic resets implies the execution consists of a single continuous epoch where \(m=1\), \(\tau_0 = 0\), and \(\tau_1=T\). Applying the constant parameters, the accumulated variance proxy evaluates to
\begin{equation}
    \mathfrak{V}_t^2 = \sum_{j=1}^t A_j^2 \Sigma_j^2 = \sum_{j=1}^t \beta^{-2j} (1-\beta)^2 \sigma^2 = (1-\beta)^2 \sigma^2\frac{\beta^{-2t}-1}{1-\beta^2}.
\end{equation}
We set the deterministic variance budget \(V_t\)  to this exact bound. Utilizing the multiplier sequence \(\Lambda_{t,\tau_0} = A_t^{-1} = \beta^t\), the cumulative deterministic bias is bounded by
\begin{equation}
   \sum_{j=1}^t \Lambda_{t,j} \mathcal{B}_j  =  \sum_{j=1}^t LG \beta \eta \beta^{t-j} = LG \beta \eta \frac{1-\beta^t}{1-\beta} \leq \frac{LG \eta}{1-\beta}.
\end{equation}
Simultaneously, by letting \(V_t = \mathfrak{V}_t^2\), the scaled deviation radius simplifies to
\begin{equation}
    A_t^{-1} \sqrt{V_t} = \beta^t (1-\beta) \sigma \sqrt{\frac{\beta^{-2t}-1}{1-\beta^2}} = \sigma \sqrt{\frac{(1-\beta)(1-\beta^{2t})}{1+\beta}} \leq \sigma \sqrt{\frac{1-\beta}{1+\beta}} \leq \sigma\sqrt{1-\beta}.
\end{equation}
Therefore, applying Theorem \ref{thm:unified}, we establish that with probability at least \(1-\delta\), the estimation error satisfies
\begin{equation}
    \|e_t\|_\calG \leq \beta^t C\left(\frac{\delta}{2T},\kappa\right) \sigma /\sqrt{B} + \frac{GL \eta}{1-\beta} + C\left(\frac{\delta}{2T},\kappa\right)\sigma \sqrt{1-\beta}.
\end{equation}

\subsubsection{The case where \texorpdfstring{\(p_t=p\), \(\beta_t = 1\)}{pt=p, bt=1}, and \texorpdfstring{\(\eta_t = \eta\)}{etat} deterministically}\label{subsubsec:family1case2}
Under a probabilistic reset mechanism with full momentum, the estimator update takes the form
\begin{equation}
    v_t =
    \begin{cases}
    G(w_t, \xi_t^B), & \text{w.p. } p,\\
    v_{t-1}, & \text{w.p. } 1-p.
    \end{cases}
\end{equation}
The trajectory is partitioned into a random number of epochs, where the temporal length of each epoch is distributed geometrically with success probability \(p\). Setting \(\beta_t = 1\) enforces \(A_j=1\) and \(\Lambda_{t,j} = 1\) universally. Furthermore, the variance proxy evaluates to \(\mathfrak{V}_t^2 = 0\), as the direct stochastic noise is fully suppressed by the deterministic momentum weight. Assigning \(V_t = \mathfrak{V}_t^2= 0\), the cumulative bias inside the random epoch mapping to time \(t\) becomes the sole source of structural error:
\begin{equation}
     \sum_{j=\tau_{m-1}+1}^t \Lambda_{t,j} \mathcal{B}_j  =  \sum_{j=\tau_{m-1}+1}^t LG \eta =   LG \eta (t-\tau_{m-1}) \leq  LG \eta (\tau_m-\tau_{m-1}).
\end{equation}
Let \(E_m \coloneqq \tau_m-\tau_{m-1}\) for all \(m \in [T]\) denote the duration of the \(m\)-th epoch. The maximum epoch length across the horizon is controlled via a standard union bound over the geometric tails
\begin{equation}
    \mathbb{P}\left( \max_{m\in [T]} E_m \ge \mathfrak{E}_{max}\right) \leq \sum_{m=1}^T \mathbb{P}\left(  E_m \ge \mathfrak{E}_{max}\right) = T (1-p)^{\mathfrak{E}_{max}}.
\end{equation}
Enforcing the strict inequality \(\log(1/(1-p)) > p\), we constrain the failure probability to \(\delta/4\), which yields \(\mathfrak{E}_{max} \leq \frac{1}{p}\log \frac{4T}{\delta}\). Integrating this maximum accumulation span, we deduce that with probability at least \(1-\delta\), the estimation error respects the bound
\begin{equation}
    \|e_t\|_\calG \leq  C\left(\frac{\delta}{4T},\kappa\right) \sigma /\sqrt{B} + \frac{GL \eta}{p} \log \frac{4T}{\delta}.
\end{equation}

\subsubsection{The case where \texorpdfstring{\(p_t=\mathbb{I}_{\{t \mod E = 0\}}\), \(\beta_t = 1\)}{pt=Sched, bt=1}, and \texorpdfstring{\(\eta_t = \eta\)}{etat} deterministically}\label{subsubsec:family1case3}
Enforcing a deterministic periodic reset schedule restrains the maximum epoch length to \(E\) iterations. The estimator update evaluates as
\begin{equation}
    v_t =
    \begin{cases}
    G(w_t, \xi_t^B), & t \mod E = 0,\\
    v_{t-1}, & \text{otherwise}.
    \end{cases}
\end{equation}
Assuming the total horizon factors as \(T = E \times N\) for some positive integer \(N\), the algorithm generates  \(N = T/E\) deterministic epochs. Because the variance proxy \(\mathfrak{V}_t^2\) remains zero, by letting \(V_t = \mathfrak{V}_t^2=0\), the error is bounded entirely by the initialization noise and the maximum deterministic drift over the fixed interval \(E\). Because the total number of reset nodes is reduced from \(T\) to  \(T/E\), the union bound covering the reset initializations requires a refined confidence adjustment of \(\delta / (T/E) = \delta E / T\). Consequently, with probability at least \(1-\delta\), the estimation error satisfies
\begin{equation}
    \|e_t\|_\calG \leq C\left(\frac{\delta}{2T/E},\kappa\right) \sigma /\sqrt{B} + GL \eta E.
\end{equation}

\subsection{First-Order Correction (Differential Recursion and Resetting)}
Recall that for this family of estimators, the correction term is set to zero, yielding \(\mathcal{T}_t(w_t, w_{t-1}, \xi_t) \coloneqq 0\). The effective innovation, therefore, consists exclusively of the direct noise and the difference noise, expressed as
\begin{equation}
    \mathcal{I}_t(\mathcal{T}) = (1-\beta_t) (G(w_t, \xi_t) - g(w_t)) + \beta_t \Delta(w_t, w_{t-1}, \xi_t).
\end{equation}
\subsubsection{The case where \texorpdfstring{\(p_t=0\), \(\beta_t = \beta\)}{pt=0, bt=b}, and \texorpdfstring{\(\eta_t = \eta\)}{etat} deterministically}\label{subsubsec:family2case1}
In this regime, the estimator initializes with a large batch sample \(G(w_0, \xi_0^B)\) and processes deterministically without stochastic resets:
\begin{equation}
v_t =
G(w_t, \xi_t) + \beta (v_{t-1} - G(w_{t-1}, \xi_t)).
\end{equation}
Analogous to the zeroth-order family, the execution consists of a single continuous epoch where \(m=1\), \(\tau_0=0\), and \(\tau_1=T\). Applying the constant parameters to the derived variance proxy, the accumulated proxy scales according to the predefined multipliers \(A_j = \beta^{-j}\). The total variance proxy is evaluated as
\begin{equation}
    \mathfrak{V}_t^2 = \sum_{j=1}^t A_j^2 \Sigma_j^2 = \sum_{j=1}^t \beta^{-2j} \left( 2(1-\beta)^2 \sigma^2 + 2\beta^2 \ell^2 \eta^2 G^2 \right).
\end{equation}
Evaluating the geometric series, we define the deterministic variance budget \(V_t\) by letting \(V_t = \mathfrak{V}_t^2\)
\begin{equation}
    V_t = \left( 2(1-\beta)^2 \sigma^2 + 2\beta^2 \ell^2 \eta^2 G^2 \right) \frac{\beta^{-2t}-1}{1-\beta^2}.
\end{equation}
Because the systematic bias \(\mathcal{B}_j\) is zero, the cumulative bias component in the overarching theorem vanishes entirely, yielding \(\sum_{j=1}^t \Lambda_{t,j} \mathcal{B}_j = 0\). The concentration term governed by the martingale difference relies on the modulated variance budget, which simplifies to
\begin{equation}
    A_t^{-1} \sqrt{V_t} = \beta^t \sqrt{V_t} = \sqrt{\frac{1-\beta^{2t}}{1-\beta^2}} \sqrt{2(1-\beta)^2 \sigma^2 + 2\beta^2 \ell^2 \eta^2 G^2} \le \sqrt{\frac{2(1-\beta)^2 \sigma^2 + 2\beta^2 \ell^2 \eta^2 G^2}{1-\beta^2}}.
\end{equation}
Invoking Theorem \ref{thm:unified} we assert that with probability at least \(1-\delta\), the estimation error satisfies
\begin{equation}
    \|e_t\|_\calG \leq \beta^t C\left(\frac{\delta}{2T},\kappa\right) \sigma /\sqrt{B} + C\left(\frac{\delta}{2T},\kappa\right) \sqrt{\frac{2(1-\beta)^2 \sigma^2 + 2\beta^2 \ell^2 \eta^2 G^2}{1-\beta^2}}.
\end{equation}

\subsubsection{The case where \texorpdfstring{\(p_t=p\), \(\beta_t = 1\)}{pt=p, bt=1}, and \texorpdfstring{\(\eta_t = \eta\)}{etat} deterministically}\label{subsubsec:family2case2}
By assigning full weight to the differential update via \(\beta_t = 1\), the estimator replicates the behavior of the PAGE algorithm, resetting memory probabilistically with rate \(p\):
\begin{equation}
v_t :=
\begin{cases}
G(w_t, \xi_t^B), & w.p. \qquad p, \\
v_{t-1}+ G(w_t, \xi_t) - G(w_{t-1}, \xi_t), & w.p. \qquad 1-p,
\end{cases}
\end{equation}

The explicit choice of full momentum inherently nullifies the direct noise variance contribution, leaving the variance proxy dependent on the geometric continuity of the gradient space. Substituting \(\beta_t=1\) yields a constant variance proxy uniformly bounded by \(\Sigma_j^2 \le 2 \ell^2 \eta^2 G^2\). The corresponding multipliers resolve to unity (\(A_j=1\) and \(\Lambda_{t,j}=1\)), dictating that the variance accumulation grows linearly with the duration of the random epoch \(E_m \coloneqq \tau_m - \tau_{m-1}\). The accumulated variance proxy inside the epoch thus evaluates to
\begin{equation}
    \mathfrak{V}_t^2 = \sum_{j=\tau_{m-1}+1}^t 2 \ell^2 \eta^2 G^2 = 2\ell^2 \eta^2 G^2 (t - \tau_{m-1}) \leq 2\ell^2 \eta^2 G^2 E_m.
\end{equation}
As established previously, the maximal epoch length \(\mathfrak{E}_{max}\) governed by the underlying geometric distribution satisfies \(\mathfrak{E}_{max} \leq \frac{1}{p}\log \frac{4T}{\delta}\) with high probability. Setting the deterministic variance budget \(V_t\) to  \(2\ell^2 \eta^2 G^2 \frac{1}{p}\log \frac{4T}{\delta}\), and recalling the zero-bias condition \(\mathcal{B}_t = 0\), the structure of the error bound eliminates the linear growth associated with deterministic drift. For this configuration, with probability at least \(1-\delta\), the terminal estimation error is subsequently bounded by
\begin{equation}
    \|e_t\|_\calG \leq C\left(\frac{\delta}{4T},\kappa\right) \sigma /\sqrt{B} + C\left(\frac{\delta}{4T},\kappa\right) \ell \eta G \sqrt{\frac{2}{p} \log \frac{4T}{\delta}}.
\end{equation}

\subsubsection{The case where \texorpdfstring{\(p_t=\mathbb{I}_{\{t \mod E = 0\}}\), \(\beta_t = 1\)}{pt=Sched, bt=1}, and \texorpdfstring{\(\eta_t = \eta\)}{etat} deterministically}\label{subsubsec:family2case3}
Under a cyclic deterministic reset mechanism, the temporal length of each estimation epoch is constrained to  \(E\) iterations:
\begin{equation}
v_t :=
\begin{cases}
G(w_t, \xi_t^B), & t \mod E = 0, \\
v_{t-1}+ G(w_t, \xi_t) - G(w_{t-1}, \xi_t), & \text{otherwise}.
\end{cases}
\end{equation}
Assuming the horizon \(T = E \times N\) aligns perfectly with these periodic boundaries, the stochastic behavior is constrained similarly to the probabilistic reset case, yet enforces deterministic termination bounds for each tracking segment. Given \(\beta_t = 1\), the variance proxy bound simplifies  to \(\Sigma_j^2 \le 2\ell^2 \eta^2 G^2\). Because the length of the interval \(t - \tau_{m-1}\) is deterministically upper-bounded by \(E\), the variance budget across any arbitrary epoch deterministically accumulates to
\begin{equation}
    \mathfrak{V}_t^2 = \sum_{j=\tau_{m-1}+1}^t 2 \ell^2 \eta^2 G^2 \leq 2 \ell^2 \eta^2 G^2 E.
\end{equation}
Setting the strict variance budget \(V_t = 2 \ell^2 \eta^2 G^2 E\) and combining this with the intrinsic zero-bias property characterizing this estimator family, we embed these components into the unified framework. Consequently, incorporating the deterministic epoch structure and applying the appropriate uniform bounds over the respective intervals, with probability at least \(1-\delta\), the estimation error satisfies
\begin{equation}
    \|e_t\|_\calG \leq C\left(\frac{\delta}{2T/E},\kappa\right) \sigma /\sqrt{B} + C\left(\frac{\delta}{2T},\kappa\right) \ell \eta G \sqrt{2E}.
\end{equation}

\subsection{Second-Order Correction (Second-Order Recursion and Resetting)}
Recall the explicit formulation of the second-order correction mechanism designed to correct the drift using local curvature, given by \(\mathcal{T}_t(w_t, w_{t-1}, \xi_t) \coloneqq \beta_t \Big( G(w_{t-1}, \xi_t) - \big[ G(w_t, \xi_t) + \nabla G(w_t, \xi_t)(w_{t-1} - w_t) \big] \Big)\). Substituting this correction into the general effective innovation recursion, we obtain
\begin{equation}
    \mathcal{I}_t(\mathcal{T}) = (1-\beta_t)(G(w_t, \xi_t) - g(w_t)) + \beta_t \big( g(w_{t-1}) - g(w_t) - \nabla G%
    (w_t, \xi_t)(w_{t-1} - w_t) \big).
\end{equation}
\subsubsection{The case where \texorpdfstring{\(p_t=0\), \(\beta_t = \beta\)}{pt=0, bt=b}, and \texorpdfstring{\(\eta_t = \eta\)}{etat} deterministically}\label{subsubsec:family3case1}
Under a purely recursive update, absent random resets, the process evolves across a single continuous epoch:
\begin{equation}
v_t :=
(1-\beta_t)G(w_t, \xi_t) + \beta_t \Big(v_{t-1}-  \nabla G(w_t, \xi_t)(w_{t-1} - w_t) \Big).
\end{equation}

Imposing the fixed hyperparameters onto the variance framework, we establish the accumulated variance proxy utilizing the geometric sequence \(A_j = \beta^{-j}\). The proxy integral evaluates  to
\begin{equation}
    \mathfrak{V}_t^2 = \sum_{j=1}^t A_j^2 \Sigma_j^2 = \left( 2(1-\beta)^2 \sigma^2 + 2\beta^2 \gamma^2 \eta^2 G^2 \right) \frac{\beta^{-2t}-1}{1-\beta^2}.
\end{equation}
Setting the deterministic variance budget \(V_t=\mathfrak{V}_t^2\)  to this quantity, we concurrently evaluate the cumulative deterministic bias induced by the Taylor remainder. The bias accumulates through the discounted sum defined by \(\Lambda_{t,j} = \beta^{t-j}\), leading to
\begin{equation}
    \sum_{j=1}^t \Lambda_{t,j} \mathcal{B}_j \le \sum_{j=1}^t \beta^{t-j} \frac{\alpha}{2} \beta \eta^2 G^2 = \frac{\alpha}{2} \beta \eta^2 G^2 \frac{1-\beta^t}{1-\beta} \le \frac{\alpha \beta \eta^2 G^2}{2(1-\beta)}.
\end{equation}
Isolating the modulated variance term within the martingale concentration mechanism yields the upper bound \(A_t^{-1} \sqrt{V_t} \le \sqrt{\frac{2(1-\beta)^2 \sigma^2 + 2\beta^2 \gamma^2 \eta^2 G^2}{1-\beta^2}}\). Applying Theorem \ref{thm:unified} over this singular deterministic epoch, we conclude that with probability at least \(1-\delta\), the estimation error satisfies
\begin{equation}
    \|e_t\|_\calG \leq \beta^t C\left(\frac{\delta}{2T},\kappa\right) \sigma /\sqrt{B} + \frac{\alpha \beta \eta^2 G^2}{2(1-\beta)} + C\left(\frac{\delta}{2T},\kappa\right) \sqrt{\frac{2(1-\beta)^2 \sigma^2 + 2\beta^2 \gamma^2 \eta^2 G^2}{1-\beta^2}}.
\end{equation}

\subsubsection{The case where \texorpdfstring{\(p_t=p\), \(\beta_t = 1\)}{pt=p, bt=1}, and \texorpdfstring{\(\eta_t = \eta\)}{etat} deterministically}\label{subsubsec:family3case2}
Setting the momentum uniformly to 1 transitions the estimator into a differential tracker that leverages stochastic second-order corrections, accompanied by memory resets distributed geometrically with rate \(p\):
\begin{equation}
v_t :=
\begin{cases}
G(w_t, \xi_t^B), & w.p. \qquad p, \\
v_{t-1}-  \nabla G(w_t, \xi_t)(w_{t-1} - w_t) , & w.p. \qquad 1-p.
\end{cases}
\end{equation}
Because \(\beta_t = 1\), the direct noise term evaluates to zero,  coupling the variance proxy to the sub-Gaussian Hessian noise such that \(\Sigma_j^2 = 2\gamma^2 \eta^2 G^2\). Similarly, the per-step deterministic bias stabilizes at \(\mathcal{B}_j \le \frac{\alpha}{2} \eta^2 G^2\). Due to \(A_j = 1\) and \(\Lambda_{t,j} = 1\), the accumulation of both the bias and the variance proxy scales linearly with the duration of the current random epoch \(E_m \coloneqq \tau_m - \tau_{m-1}\). Within this epoch, the summations are bounded by
\begin{equation}
    \sum_{j=\tau_{m-1}+1}^t \mathcal{B}_j \le \frac{\alpha}{2} \eta^2 G^2 E_m \quad \text{and} \quad \mathfrak{V}_t^2 = \sum_{j=\tau_{m-1}+1}^t \Sigma_j^2 \le 2\gamma^2 \eta^2 G^2 E_m.
\end{equation}
Invoking the high-probability uniform bound on the maximum epoch length, we verify \(\mathfrak{E}_{max} \leq \frac{1}{p}\log \frac{4T}{\delta}\). By defining the deterministic variance budget as \(V_t = 2\gamma^2 \eta^2 G^2 \frac{1}{p}\log \frac{4T}{\delta}\), and bounding the total bias over the longest possible epoch, we assemble the complete error topology. Consequently, with probability at least \(1-\delta\), the terminal estimation error is bounded by
\begin{equation}
    \|e_t\|_\calG \leq C\left(\frac{\delta}{4T},\kappa\right) \sigma /\sqrt{B} + \frac{\alpha \eta^2 G^2}{2p} \log \frac{4T}{\delta} + C\left(\frac{\delta}{4T},\kappa\right) \gamma \eta G \sqrt{\frac{2}{p} \log \frac{4T}{\delta}}.
\end{equation}

\subsubsection{The case where \texorpdfstring{\(p_t=\mathbb{I}_{\{t \mod E = 0\}}\), \(\beta_t = 1\)}{pt=Sched, bt=1}, and \texorpdfstring{\(\eta_t = \eta\)}{etat} deterministically}\label{subsubsec:family3case3}
Imposing a  deterministic, periodic reset schedule ensures that memory truncation occurs  every \(E\) iterations:
\begin{equation}
v_t :=
\begin{cases}
G(w_t, \xi_t^B), & t \mod E = 0, \\
v_{t-1}-  \nabla G(w_t, \xi_t)(w_{t-1} - w_t), & \text{otherwise}.
\end{cases}
\end{equation}
Assuming the computational horizon is an integer multiple of the epoch length, \(T = E \times N\), this architecture completely arrests the unbounded accumulation of estimation lag while preserving the full corrective weight \(\beta_t=1\). The fundamental quantities parallel the probabilistic setting, maintaining \(\Sigma_j^2 = 2\gamma^2 \eta^2 G^2\) and \(\mathcal{B}_j \le \frac{\alpha}{2} \eta^2 G^2\). However, the interval lengths are now deterministically capped at \(t - \tau_{m-1} \le E\). The cumulative proxy and bias are correspondingly bounded across the fixed interval
\begin{equation}
    \sum_{j=\tau_{m-1}+1}^t \mathcal{B}_j \le \frac{\alpha}{2} \eta^2 G^2 E \quad \text{and} \quad \mathfrak{V}_t^2 \le 2\gamma^2 \eta^2 G^2 E.
\end{equation}
Assigning this upper bound to \(V_t\) and directly incorporating the maximal deterministic bias contribution, the bounds concentrate without the logarithmic geometric penalties. We establish that with probability at least \(1-\delta\), the resulting error magnitude honors the bound
\begin{equation}
    \|e_t\|_\calG \leq C\left(\frac{\delta}{2T/E},\kappa\right) \sigma /\sqrt{B} + \frac{\alpha \eta^2 G^2 E}{2} + C\left(\frac{\delta}{2T},\kappa\right) \gamma \eta G \sqrt{2E}.
\end{equation}

\newpage
\subsection{Examples for Assumption~\ref{ass:subg_lipschitz} of Sub-Gaussian Lipschitzness of the Centered Differences}\label{subsec:example}
The sub-Gaussian tail assumptions utilized in our analysis are automatically satisfied in standard Empirical Risk Minimization (ERM) configurations. Let the function (or the gradient) be defined over a finite dataset as \(g(w) = \frac{1}{n} \sum_{i=1}^n G_i(w)\), where the stochastic oracle samples a component index uniformly at random. If every individual component \(G_i\) is \(L_i\)-Lipschitz continuous, the discrete nature of the sample space guarantees that the centered difference noise is bounded almost surely by \(2 \max_{i} L_i \|w - w'\|\). Because any bounded random variable is sub-Gaussian, Assumption \ref{ass:subg_lipschitz} is globally satisfied with the variance proxy parameter \(\ell = 2 \max_{i} L_i\).  Nonetheless, we present concrete instantiations satisfying Assumption \ref{ass:subg_lipschitz} next.

We leverage the moment-generating function of a scaled chi-squared distribution: for a standard Gaussian random variable \(Z \sim \mathcal{N}(0,1)\), the expectation \(\mathbb{E}[\exp(\lambda Z^2)] = (1-2\lambda)^{-1/2}\) is valid for all \(\lambda < 1/2\). Enforcing the condition \((1-2\lambda)^{-1/2} \le 2\) necessitates \(\lambda \le \frac{3}{8}\).

Recall that Assumption \ref{ass:subg_lipschitz} requires the centered estimate difference noise \(\Delta(w, w', \xi) \coloneqq (G(w, \xi) - G(w', \xi)) - (g(w) - g(w'))\) to concentrate with a variance proxy bounded by \(\ell^2 \|w - w'\|^2\).

\paragraph{Example 1: Scalar Function Configuration.} Let the primal space \(\mathcal{X}:=\mathbb{R}^d\) be equipped with the Euclidean geometry \(\|\cdot\| := \|\cdot\|_2\). Consider a stochastic linear observation model where the oracle evaluates a scalar function \(G: \mathbb{R}^d \times \Omega \to \calG := \mathbb{R}\) with \(\|\cdot\|_\calG:=|\cdot|\) defined by
\begin{equation}
    G(w, \xi) = \langle \xi, w \rangle,
\end{equation}
where \(\xi \sim \mathcal{N}(0, \Sigma)\) is a centered Gaussian random vector. The population function is  zero, \(g(w) = \mathbb{E}[G(w, \xi)] = 0\). The centered difference evaluates  to the inner product
\begin{equation}
    \Delta(w, w', \xi) = \langle \xi, w - w' \rangle.
\end{equation}
For any fixed deterministic \(w\) and \(w'\), the random variable \(\Delta\) is a univariate Gaussian distributed as \(\mathcal{N}(0, (w-w')^\top \Sigma (w-w'))\). The variance is  bounded by the spectral norm, \(\nu^2 \coloneqq (w-w')^\top \Sigma (w-w') \le \|\Sigma\|_{\mathrm{op}} \|w-w'\|_2^2\). Letting \(Z = \Delta / \nu \sim \mathcal{N}(0,1)\), we verify the condition
\begin{equation}
    \mathbb{E}\left[ \exp\left( \frac{|\Delta(w, w', \xi)|^2}{\ell^2 \|w - w'\|_2^2} \right) \right] \le \mathbb{E}\left[ \exp\left( \frac{\|\Sigma\|_{\mathrm{op}}}{\ell^2} Z^2 \right) \right].
\end{equation}
By setting \(\lambda = \|\Sigma\|_{\mathrm{op}} / \ell^2\), the sub-Gaussian Lipschitz condition is  satisfied for any structural constant satisfying \(\ell \ge \sqrt{8 \|\Sigma\|_{\mathrm{op}} / 3}\).

\paragraph{Example 2: Vector Field Configuration (Gradient of a Potential).} Consider a stochastic quadratic objective \(F: \mathbb{R}^d \times \Omega \to \mathbb{R}\) with the primal space \((\mathcal{X},\|\cdot\|):= (\mathbb{R}^d, \|\cdot\|_2)\) defined by
\begin{equation}
    F(w, \xi) = \frac{1}{2} w^\top (A + \xi I) w,
\end{equation}
where \(A \in \mathbb{R}^{d \times d}\) is a deterministic symmetric matrix, and \(\xi \sim \mathcal{N}(0, \sigma^2)\) is a scalar Gaussian noise perturbing the curvature. The stochastic vector field is the gradient \(G(w, \xi) = \nabla_w F(w, \xi) = (A + \xi I)w\), implicitly setting \(\calG:=\mathbb{R}^d\) and \(\|\cdot\|_\calG := \|\cdot\|_2\), making the population gradient \(g(w) = Aw\). The centered estimate difference noise simplifies algebraically to
\begin{equation}
    \Delta(w, w', \xi) = \xi (w - w').
\end{equation}
Taking the squared dual norm (which coincides with the \(\ell_2\) norm), we obtain \(\|\Delta(w, w', \xi)\|_2^2 = \xi^2 \|w - w'\|_2^2\). The exponential moment evaluates as
\begin{equation}
    \mathbb{E}\left[ \exp\left( \frac{\|\Delta(w, w', \xi)\|_2^2}{\ell^2 \|w - w'\|_2^2} \right) \right] = \mathbb{E}\left[ \exp\left( \frac{\sigma^2}{\ell^2} \left(\frac{\xi}{\sigma}\right)^2 \right) \right].
\end{equation}
Recognizing \((\xi/\sigma) \sim \mathcal{N}(0,1)\), the condition holds globally provided \(\ell \ge \sigma \sqrt{8/3}\).

\newpage
\section{Lemmas in Proofs for Two Applications}\label{apdx:lemma}
\begin{lemma}[Bound for Proximal Gradient Mapping with Normalized Direction, see similar Lemma~2 in \cite{cutkosky2020momentum}]\label{lemma:bound_for_proximal_gradient_mapping_with_normalized_direction}
    Suppose \(w\in \mathcal{W}\subseteq \mathcal{X}\), \(u, \delta\in \mathcal{X}^*\), \(\eta > 0\), and 
    \(\|u\|_\ast \neq 0, \|u+\delta\|_\ast \neq 0\). 
    Then for the proximal gradient mapping \(P\) defined in Definition~\ref{def:proximal_gradient_mapping}, we have:
    \[
    \left\langle u, P\left( w, \frac{u+\delta}{\|u+\delta\|_\ast}, \eta\right) \right\rangle 
    \geq \frac{1}{4}\left\langle u, P\left( w, \frac{u}{\|u\|_\ast}, \frac{\eta}{2} \right) \right\rangle - 2 \|\delta\|_\ast 
    \]
\end{lemma}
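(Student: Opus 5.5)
The plan is to compare the perturbed direction with the clean normalized direction using the non-expansiveness and monotonicity parts of Lemma~\ref{lemma:properties_proximal_gradient_mapping}. No case split on the size of \(\|\delta\|_\ast\) should be needed. Write \(n\coloneqq u/\|u\|_\ast\) and \(m\coloneqq (u+\delta)/\|u+\delta\|_\ast\); both are unit vectors in \(\mathcal{X}^*\).

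\emph{Step 1: distance between the normalized directions.} I first show \(\|n-m\|_\ast\le 2\|\delta\|_\ast/\|u\|_\ast\). Insert the intermediate point \((u+\delta)/\|u\|_\ast\) and apply the triangle inequality. The first piece is \(\|\delta\|_\ast/\|u\|_\ast\). The second piece is \(\|u+\delta\|_\ast\,\bigl|1/\|u\|_\ast-1/\|u+\delta\|_\ast\bigr| = \bigl|\|u+\delta\|_\ast-\|u\|_\ast\bigr|/\|u\|_\ast\), which is at most \(\|\delta\|_\ast/\|u\|_\ast\) by the reverse triangle inequality.

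\emph{Step 2: split the left-hand side.} Write
\[
\langle u, P(w,m,\eta)\rangle = \|u\|_\ast\langle n, P(w,n,\eta)\rangle + \langle u, P(w,m,\eta)-P(w,n,\eta)\rangle .
\]
The error term is at least \(-\|u\|_\ast\,\|P(w,m,\eta)-P(w,n,\eta)\|\). By non-expansiveness this is at least \(-\|u\|_\ast\|m-n\|_\ast\), and by Step 1 it is at least \(-2\|\delta\|_\ast\). This is exactly the additive penalty in the statement.

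\emph{Step 3: halving the step size.} For the main term I use the monotonicity property with \(\eta'=\eta\) and \(\eta/2\) in place of \(\eta\). Since \(n\) is fixed, it gives \(\eta\langle n,P(w,n,\eta)\rangle\ge \tfrac{\eta}{2}\langle n,P(w,n,\eta/2)\rangle\). Hence
\[
\|u\|_\ast\langle n,P(w,n,\eta)\rangle\ge \tfrac12\langle u,P(w,n,\eta/2)\rangle .
\]
The inner product bound gives \(\langle n,P(w,n,\eta/2)\rangle\ge\|P(w,n,\eta/2)\|^2\ge 0\), so the right-hand side is nonnegative. Therefore \(\tfrac12\) can be weakened to \(\tfrac14\), which yields the claimed inequality.

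The only point needing care is Step 3. The monotonicity statement must be applied with the same input vector \(n\) at both step sizes, which is why the comparison is routed through \(P(w,n,\eta)\) rather than comparing \(P(w,m,\eta)\) directly with \(P(w,n,\eta/2)\). Once that ordering is fixed, the remaining steps are routine norm manipulations.
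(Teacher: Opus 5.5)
Your proof is correct, and it takes a different and cleaner route than the paper's. The paper follows Cutkosky--Mehta. It first uses the scaling property to rewrite both sides with the un-normalized inputs \(u+\delta\) and \(u\), at effective step sizes \(\eta/\|u+\delta\|_\ast\) and \(\eta/(2\|u\|_\ast)\). It then splits into the cases \(\|\delta\|_\ast\le\frac13\|u\|_\ast\) and \(\|\delta\|_\ast>\frac13\|u\|_\ast\). In the first case it applies non-expansiveness to \(u+\delta\) versus \(u\) at the common step \(\eta/\|u+\delta\|_\ast\), followed by step-size monotonicity. That monotonicity step needs \(\eta/\|u+\delta\|_\ast\ge\eta/(2\|u\|_\ast)\), i.e.\ \(\|u+\delta\|_\ast\le 2\|u\|_\ast\); this is what forces the case split, and the bounds \(\frac23\|u\|_\ast\le\|u+\delta\|_\ast\le\frac43\|u\|_\ast\) are where the \(\frac14\) comes from. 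In the second case it uses crude bounds: \(-\langle u,P(w,m,\eta)\rangle\le\|\delta\|_\ast\) via \(\langle u+\delta,P(w,u+\delta,\cdot)\rangle\ge 0\), and \(\frac14\langle u,P(w,n,\eta/2)\rangle\le\frac14\|u\|_\ast<\|\delta\|_\ast\). You instead keep the normalized inputs at the common step \(\eta\) and absorb the whole perturbation into the Lipschitz estimate \(\|u\|_\ast\|m-n\|_\ast\le 2\|\delta\|_\ast\) for the normalization map. That estimate holds for every \(\delta\), so no case split is needed. You then use monotonicity only with the fixed direction \(n\), where the halving \(\eta\to\eta/2\) costs exactly a factor \(\frac12\). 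Your argument in fact proves the inequality with \(\frac12\) in place of \(\frac14\). In the proof of Theorem~\ref{thm:mirror_stationarity}, this would improve the constants \(4,\,2L,\,8\) in the per-step descent inequality to \(2,\,L,\,4\). The paper's version gains nothing in generality; it is mainly a faithful transcription of the Euclidean argument.
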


\begin{proof}[Proof of Lemma~\ref{lemma:bound_for_proximal_gradient_mapping_with_normalized_direction}]
    Noting the scaling property of the proximal gradient mapping (Lemma~\ref{lemma:properties_proximal_gradient_mapping})
    such that \(P(w, \alpha u, \eta) = \alpha P(w, u, \alpha\eta), \forall \alpha >0\), we have:
    \[
    P\left( w, \frac{u+\delta}{\|u+\delta\|_\ast}, \eta\right) = \frac{P(w, u+\delta, \eta/\|u+\delta\|_\ast)}{\|u+\delta\|_\ast},\quad
    P\left( w, \frac{u}{\|u\|_\ast}, \frac{\eta}{2} \right) = \frac{P(w, u, \eta/(2\|u\|_\ast))}{\|u\|_\ast}
    \]
    Therefore, the desired inequality to be proved is equivalent to:
    \[
    -\frac{\left\langle u, P(w, u+\delta, \eta/\|u+\delta\|_\ast) \right\rangle}{\|u+\delta\|_\ast} 
    \leq -\frac{\left\langle u, P(w, u, \eta/(2\|u\|_\ast)) \right\rangle}{4\|u\|_\ast} + 2 \|\delta\|_\ast 
    \]
    Now we start to prove the above equivalent inequality, then the desired inequality will follow by noting the scaling property of the proximal gradient mapping.
    We consider the following two cases for the ease of analysis: 
    (1) \(\|\delta\|_\ast \leq \frac{1}{3}\|u\|_\ast\); (2) \(\|\delta\|_\ast > \frac{1}{3}\|u\|_\ast\).

    \noindent\textbf{Case (1):} \(\|\delta\|_\ast \leq \frac{1}{3}\|u\|_\ast\).

    Applying the properties of the proximal gradient mapping (Lemma~\ref{lemma:properties_proximal_gradient_mapping})
    such that \(\|P(w, u_1, \eta) - P(w, u_2, \eta)\| \leq \|u_1 - u_2\|_\ast\), 
    and \(\langle \bx, \by\rangle\leq \|\bx\|_\ast \|\by\|\) for any \(\bx \in \mathcal{X}^\ast, \by\in \mathcal{X}\), we have:
    \[
    -\<u, P(w, u+\delta, \eta/\|u+\delta\|_\ast)\> \leq
    -\<u, P(w, u, \eta/\|u+\delta\|_\ast)\> + \|u\|_\ast \|\delta\|_\ast 
    \]
    By using the condition \(\|\delta\|_\ast \leq \frac{1}{3}\|u\|_\ast\) and the triangle inequality, we have:
    \[
    \frac{2}{3} \|u\|_\ast \leq \|u\|_\ast -\|\delta\|_\ast \leq \|u+\delta\|_\ast \leq \|u\|_\ast + \|\delta\|_\ast \leq \frac{4}{3} \|u\|_\ast
    \]
    Noting the monotonicity of the proximal gradient mapping (Lemma~\ref{lemma:properties_proximal_gradient_mapping})
    \(\langle u, \eta' P(w, u, \eta') \rangle \geq \langle u, \eta P(w, u, \eta) \rangle \geq 0\) for any \(\eta'\geq\eta>0\), 
    and using the condition \(\|\delta\|_\ast \leq \frac{1}{3}\|u\|_\ast\):
    \[
    \<u, P(w, u, \eta/\|u+\delta\|_\ast)\> \geq
    \frac{\|u+\delta\|_\ast}{2\|u\|_\ast} \<u, P(w, u, \eta/(2\|u\|_\ast))\>
    \geq \frac{1}{3} \<u, P(w, u, \eta/(2\|u\|_\ast))\>
    \]
    By combining the above three inequalities, we have the desired inequality for Case (1): \(\|\delta\|_\ast \leq \frac{1}{3}\|u\|_\ast\).
    \[
    \left[-\frac{\langle u, P(w, u+\delta, \eta/\|u+\delta\|_\ast)\rangle}{\|u+\delta\|_\ast}\right]\1_{\|\delta\|_\ast \leq \frac{1}{3}\|u\|_\ast} 
    \leq \left[-\frac{\langle u, P(w, u, \eta/(2\|u\|_\ast))\rangle}{4\|u\|_\ast} + 2 \|\delta\|_\ast \right]\1_{\|\delta\|_\ast \leq \frac{1}{3}\|u\|_\ast} 
    \]

    \noindent\textbf{Case (2):} \(\|\delta\|_\ast > \frac{1}{3}\|u\|_\ast\).

    Applying the properties of the proximal gradient mapping (Lemma~\ref{lemma:properties_proximal_gradient_mapping})
    such that \(\langle u, P(w, u, \eta)\rangle \geq \|P(w, u, \eta)\|^2 \geq 0\), \(\|P(w, u, \eta)\|\leq \|u\|_\ast\), 
    and using \(\< \bx, \by\rangle\leq \|\bx\|_\ast \|\by\|\) for any \(\bx \in \mathcal{X}^\ast, \by\in \mathcal{X}\):
    \begin{eqnarray*}
        -\frac{\langle u, P(w, u+\delta, \eta/\|u+\delta\|_\ast)\rangle}{\|u+\delta\|_\ast} 
        &=& - \frac{\langle u+\delta, P(w, u+\delta, \frac{\eta}{\|u+\delta\|_\ast})\rangle}{\|u+\delta\|_\ast} 
        + \frac{\langle \delta, P(w, u+\delta, \frac{\eta}{\|u+\delta\|_\ast})\rangle}{\|u+\delta\|_\ast}\\
        &\leq& 0 + \frac{\|\delta\|_\ast \|P(w, u+\delta, \frac{\eta}{\|u+\delta\|_\ast})\|}{\|u+\delta\|_\ast}
        \leq \|\delta\|_\ast
    \end{eqnarray*}
    Given that condition \(\|\delta\|_\ast > \frac{1}{3}\|u\|_\ast\), 
    and using the properties of the proximal gradient mapping (Lemma~\ref{lemma:properties_proximal_gradient_mapping})
    such that \(\<u, P(w, u, \eta)\> \leq \|u\|_\ast \|P(w, u, \eta)\|, \|P(w, u, \eta)\|\leq \|u\|_\ast\), we have:
    \[
    \frac{\langle u, P(w, u, \eta/(2\|u\|_\ast))\rangle}{4\|u\|_\ast} \leq 
    \frac{1}{4}\|P(w, u, \eta/(2\|u\|_\ast))\| \leq \frac{1}{4}\|u\|_\ast \leq \frac{1}{3}\|u\|_\ast 
    < \|\delta\|_\ast
    \]
    Combining the above two inequalities, we have the desired inequality for Case (2): \(\|\delta\|_\ast > \frac{1}{3}\|u\|_\ast\).
    \[
    \left[-\frac{\langle u, P(w, u+\delta, \eta/\|u+\delta\|_\ast)\rangle}{\|u+\delta\|_\ast}\right]\1_{\|\delta\|_\ast > \frac{1}{3}\|u\|_\ast} 
    \leq \left[-\frac{\langle u, P(w, u, \eta/(2\|u\|_\ast))\rangle}{4\|u\|_\ast} + 2 \|\delta\|_\ast \right]\1_{\|\delta\|_\ast > \frac{1}{3}\|u\|_\ast} 
    \]
    Summing up the two inequalities for these \textbf{two cases} and noting that \(1=\1_{\|\delta\|_\ast \leq \frac{1}{3}\|u\|_\ast} + \1_{\|\delta\|_\ast > \frac{1}{3}\|u\|_\ast}\), 
    we complete the proof of the equivalent inequality, and the desired inequality follows. %
\end{proof}

\begin{lemma}[Optimal Decoupled Bounds with Selection of Step Size and Momentum Parameter]\label{lemma:opt_bound_stepsize_momentum}
    For constants \(C_1, \dots, C_6 \geq 0\), and \(C_2 C_4 C_5 C_6 \neq 0\), and horizon \(T \ge 1\), define the objective functions:
    \begin{align*}
        f_1(\eta, 1-\beta) &= \frac{C_1}{\eta T} + C_2 \eta + \frac{C_3}{(1-\beta) T} + C_4 \sqrt{1-\beta} + \frac{C_5 \eta}{1-\beta} \\
        f_2(\eta, 1-\beta) &= \frac{C_1}{\eta T} + C_2 \eta + \frac{C_3}{(1-\beta) T} + C_4 \sqrt{1-\beta} + \frac{C_5 \eta}{\sqrt{1-\beta}} \\
        f_3(\eta, 1-\beta) &= f_2(\eta, 1-\beta) + \frac{C_6 \eta^2}{1-\beta}
    \end{align*}

    Letting \(\eta = \min \left\{ \left( \frac{C_1}{C_2} \right)^{\frac{1}{2}} T^{-\frac{1}{2}}, \; \left( \frac{C_1^3}{C_4^2 C_5} \right)^{\frac{1}{4}} T^{-\frac{3}{4}} \right\}, 1-\beta = \max \left\{ \left( \frac{C_3}{C_4} \right)^{\frac{2}{3}} T^{-\frac{2}{3}}, \; \left( \frac{C_1 C_5}{C_4^2} \right)^{\frac{1}{2}} T^{-\frac{1}{2}} \right\}\)
    \begin{equation*}
        f_1(\eta, 1-\beta) \le 2\left( \frac{C_1 C_2}{T} \right)^{\frac{1}{2}} + 2\left( \frac{C_3 C_4^2}{T} \right)^{\frac{1}{3}} + 3\left( \frac{C_1 C_4^2 C_5}{T} \right)^{\frac{1}{4}}
        \, \textbf{(Family 1)}
    \end{equation*}
    when \(T \geq \max\left\{\frac{C_3}{C_4}, \frac{C_1 C_5}{C_4^2}\right\}\).

    Letting \(\eta = \min \left\{ \left( \frac{C_1}{C_2} \right)^{\frac{1}{2}} T^{-\frac{1}{2}}, \; \left( \frac{C_1^2}{C_4 C_5} \right)^{\frac{1}{3}} T^{-\frac{2}{3}} \right\}, 1-\beta = \max \left\{ \left( \frac{C_3}{C_4} \right)^{\frac{2}{3}} T^{-\frac{2}{3}}, \; \left( \frac{C_1 C_5}{C_4^2} \right)^{\frac{2}{3}} T^{-\frac{2}{3}} \right\}\)
    \begin{equation*}
        f_2(\eta, 1-\beta) \le 2\left( \frac{C_1 C_2}{T} \right)^{\frac{1}{2}} + 2\left( \frac{C_3 C_4^2}{T} \right)^{\frac{1}{3}} + 3\left( \frac{C_1 C_4 C_5}{T} \right)^{\frac{1}{3}}
        \, \textbf{(Family 2)}
    \end{equation*}
    when \(T \geq \max\left\{\frac{C_3}{C_4}, \frac{C_1 C_5}{C_4^2}\right\}\).

    Letting \(\eta = \min \left\{ \left( \frac{C_1}{C_2} \right)^{\frac{1}{2}} T^{-\frac{1}{2}}, \; \left( \frac{C_1^2}{C_4 C_5} \right)^{\frac{1}{3}} T^{-\frac{2}{3}}, \; \left( \frac{C_1^3}{C_4^2 C_6} \right)^{\frac{1}{5}} T^{-\frac{3}{5}} \right\}\),\\
    \(1-\beta = \max \left\{ \left( \frac{C_3}{C_4} \right)^{\frac{2}{3}} T^{-\frac{2}{3}}, \; \left( \frac{C_1 C_5}{C_4^2} \right)^{\frac{2}{3}} T^{-\frac{2}{3}}, \; \left( \frac{C_1^2 C_6}{C_4^3} \right)^{\frac{2}{5}} T^{-\frac{4}{5}} \right\}\)%
    \begin{equation*}
        f_3(\eta, 1-\beta) \le 2\left( \frac{C_1 C_2}{T} \right)^{\frac{1}{2}} + 2\left( \frac{C_3 C_4^2}{T} \right)^{\frac{1}{3}} + 3\left( \frac{C_1 C_4 C_5}{T} \right)^{\frac{1}{3}} + 3\left( \frac{C_1^2 C_4^2 C_6}{T^2} \right)^{\frac{1}{5}}
        \, \textbf{(Family 3)}
    \end{equation*}
    when \(T \geq \max\left\{\frac{C_3}{C_4}, \frac{C_1 C_5}{C_4^2}, \left(\frac{C_1^2 C_6}{C_4^3}\right)^{\frac{1}{2}} \right\}\).
\end{lemma}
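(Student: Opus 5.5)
The argument is a term-by-term bound on each objective, exploiting that every term is monotone in $\eta$ and in $1-\beta$. Since $\eta$ is a minimum of candidate values and $1-\beta$ a maximum, each term is bounded by evaluating it at whichever candidate makes the inequality go the right way. Terms increasing in $\eta$ ($C_2\eta$, $C_5\eta/(1-\beta)$, $C_6\eta^2/(1-\beta)$) are bounded using $\eta\le$ any single candidate. Terms decreasing in $1-\beta$ ($C_3/((1-\beta)T)$ and the $C_5,C_6$ terms) are bounded using $1-\beta\ge$ any single candidate. The only terms going the wrong way are $C_1/(\eta T)$ and $C_4\sqrt{1-\beta}$, which are handled by max/min splitting.

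\textbf{Family 1.} Proceed as follows.
\begin{itemize}
\item Since $1/\eta=\max\{\cdot\}\le$ sum of the reciprocals,
\[
\frac{C_1}{\eta T}\le (C_1C_2/T)^{1/2}+(C_1C_4^2C_5/T)^{1/4}.
\]
\item Bound $C_2\eta\le (C_1C_2/T)^{1/2}$ using the first candidate.
\item Bound $C_3/((1-\beta)T)\le (C_3C_4^2/T)^{1/3}$ using the first candidate for $1-\beta$.
\item Bound $C_4\sqrt{1-\beta}\le C_4(\text{cand}_1)^{1/2}+C_4(\text{cand}_2)^{1/2}$, which equals $(C_3C_4^2/T)^{1/3}+(C_1C_4^2C_5/T)^{1/4}$ after simplification.
\item Bound $C_5\eta/(1-\beta)$ by pairing the second $\eta$-candidate with the second $(1-\beta)$-candidate:
\[
\frac{C_5 (C_1^3/(C_4^2C_5))^{1/4}T^{-3/4}}{(C_1C_5/C_4^2)^{1/2}T^{-1/2}}=(C_1C_4^2C_5/T)^{1/4}.
\]
\end{itemize}
Collecting coefficients gives $2,2,3$ as claimed.

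The condition $T\ge\max\{C_3/C_4,\,C_1C_5/C_4^2\}$ serves to ensure $1-\beta\le1$, so that $\beta\in[0,1]$ is admissible. Indeed, $(C_3/C_4)^{2/3}T^{-2/3}\le1$ iff $T\ge C_3/C_4$, and similarly for the second candidate. I would note this but it does not enter the inequality itself.

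\textbf{Family 2.} The same template applies with exponents $2/3$.
\begin{itemize}
\item $C_1/(\eta T)\le (C_1C_2/T)^{1/2}+(C_1C_4C_5/T)^{1/3}$.
\item $C_4\sqrt{1-\beta}\le (C_3C_4^2/T)^{1/3}+C_4(C_1C_5/C_4^2)^{1/3}T^{-1/3}=(C_3C_4^2/T)^{1/3}+(C_1C_4C_5/T)^{1/3}$.
\item Pair the second candidates: $C_5\eta/\sqrt{1-\beta}\le C_5(C_1^2/(C_4C_5))^{1/3}T^{-2/3}\big/\big((C_1C_5/C_4^2)^{1/3}T^{-1/3}\big)=(C_1C_4C_5/T)^{1/3}$.
\end{itemize}

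\textbf{Family 3.} Add the third candidates. The reciprocal of the third $\eta$-candidate gives $C_1/(\eta T)$ an extra term $C_1(C_4^2C_6/C_1^3)^{1/5}T^{-2/5}=(C_1^2C_4^2C_6/T^2)^{1/5}$. Likewise $C_4\sqrt{\cdot}$ of the third $(1-\beta)$-candidate gives $C_4(C_1^2C_6/C_4^3)^{1/5}T^{-2/5}$, which is the same quantity. Finally, pairing third with third,
\[
\frac{C_6\eta^2}{1-\beta}\le \frac{C_6(C_1^3/(C_4^2C_6))^{2/5}T^{-6/5}}{(C_1^2C_6/C_4^3)^{2/5}T^{-4/5}}=(C_1^2C_4^2C_6/T^2)^{1/5}.
\]
This yields the coefficient $3$ on the new term.

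\textbf{Main obstacle and caveats.} The main care point is Family 3: the terms $C_5\eta/\sqrt{1-\beta}$ and $C_2\eta$ must still be bounded using only the first two candidates, which is legitimate since taking more candidates only decreases $\eta$ and increases $1-\beta$. Beyond that, the work is exponent bookkeeping, which I would verify by checking each monomial's powers of $C_i$ and $T$. The admissibility condition $T\ge(C_1^2C_6/C_4^3)^{1/2}$ again just ensures the third $(1-\beta)$-candidate is at most $1$.
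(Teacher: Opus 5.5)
Your proposal is correct and takes essentially the same route as the paper's proof: bound $1/\eta$ by the sum of the reciprocals of the candidates, bound $\sqrt{1-\beta}$ by the sum of the square roots of the candidates, use the first candidates for $C_2\eta$ and $C_3/((1-\beta)T)$, and evaluate each cross term at the matching pair of candidates, so that the terms collect into the stated coefficients $2,2,3$ (and $2,2,3,3$ for Family~3). Your remark that the condition on $T$ only guarantees $1-\beta\le 1$ and plays no role in the inequality itself is also accurate.
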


\begin{proof}
    To simplify the analysis, we introduce \(y = 1-\beta\). We denote the fundamental structural limits present in the bounds as:
    \begin{equation*}
        L_1 = \left( \frac{C_1 C_2}{T} \right)^{\frac{1}{2}}, 
        L_2 = \left( \frac{C_3 C_4^2}{T} \right)^{\frac{1}{3}}, 
        L_3 = \left( \frac{C_1 C_4^2 C_5}{T} \right)^{\frac{1}{4}},
        L_4 = \left( \frac{C_1 C_4 C_5}{T} \right)^{\frac{1}{3}}, 
        L_5 = \left( \frac{C_1^2 C_4^2 C_6}{T^2} \right)^{\frac{1}{5}}
    \end{equation*}
    We define the unscaled parameter candidate blocks present in the Lemma selections as follows:
    \[
    \eta_A = \left( \frac{C_1}{C_2} \right)^{\frac{1}{2}} T^{-\frac{1}{2}},\, 
    \eta_B = \left( \frac{C_1^3}{C_4^2 C_5} \right)^{\frac{1}{4}} T^{-\frac{3}{4}},\,
    \eta_C = \left( \frac{C_1^2}{C_4 C_5} \right)^{\frac{1}{3}} T^{-\frac{2}{3}},\,
    \eta_D = \left( \frac{C_1^3}{C_4^2 C_6} \right)^{\frac{1}{5}} T^{-\frac{3}{5}}
    \]
    \[
    y_A = \left( \frac{C_3}{C_4} \right)^{\frac{2}{3}} T^{-\frac{2}{3}},\,
    y_B = \left( \frac{C_1 C_5}{C_4^2} \right)^{\frac{1}{2}} T^{-\frac{1}{2}},\,
    y_C = \left( \frac{C_1 C_5}{C_4^2} \right)^{\frac{2}{3}} T^{-\frac{2}{3}},\,
    y_D = \left( \frac{C_1^2 C_6}{C_4^3} \right)^{\frac{2}{5}} T^{-\frac{4}{5}}
    \]

    For \textbf{Family 1}, substituting \(\eta = \min\{\eta_A, \eta_B\}\) and \(y = \max\{y_A, y_B\}\) into the objective function
    and using \(\eta \leq \eta_A, \frac{1}{\eta}\leq \frac{1}{\eta_A}+\frac{1}{\eta_B}, \sqrt{y}\leq \sqrt{y_A}+\sqrt{y_B}, \frac{1}{y} \leq \frac{1}{y_A},\frac{\eta}{y}\leq \frac{\eta_B}{y_B}\) 
    yields the following inequality:
    \begin{align*}
        f_1(\eta, 1-\beta) &= \frac{C_1}{\eta T} + C_2 \eta + \frac{C_3}{y T} + C_4 \sqrt{y} + \frac{C_5 \eta}{y} \\
        &\le \left( \frac{C_1}{\eta_A T} + \frac{C_1}{\eta_B T} \right) + C_2 \eta_A + \frac{C_3}{y_A T} + C_4 \left( \sqrt{y_A} + \sqrt{y_B} \right) + \frac{C_5 \eta_B}{y_B} \\
        &= \underbrace{\left( \frac{C_1}{\eta_A T} + C_2 \eta_A \right)}_{2L_1} + \underbrace{\left( \frac{C_3}{y_A T} + C_4 \sqrt{y_A} \right)}_{2L_2} + \underbrace{\left( \frac{C_1}{\eta_B T} + C_4 \sqrt{y_B} + \frac{C_5 \eta_B}{y_B} \right)}_{3L_3} \\
        &= 2L_1 + 2L_2 + 3L_3.
    \end{align*}

    For \textbf{Family 2}, substituting \(\eta = \min\{\eta_A, \eta_C\}\) and \(y = \max\{y_A, y_C\}\) 
    and using \(\eta\leq \eta_A, \frac{1}{\eta}\leq \frac{1}{\eta_A} + \frac{1}{\eta_C}, \sqrt{y}\leq \sqrt{y_A} + \sqrt{y_C}, \frac{1}{y}\leq \frac{1}{y_A}, \frac{\eta}{\sqrt{y}}\leq \frac{\eta_C}{\sqrt{y_C}}\) 
    yields the following inequality:
    \begin{align*}
        f_2(\eta, 1-\beta) &= \frac{C_1}{\eta T} + C_2 \eta + \frac{C_3}{y T} + C_4 \sqrt{y} + \frac{C_5 \eta}{\sqrt{y}} \\
        &\le \left( \frac{C_1}{\eta_A T} + \frac{C_1}{\eta_C T} \right) + C_2 \eta_A + \frac{C_3}{y_A T} + C_4 \left( \sqrt{y_A} + \sqrt{y_C} \right) + \frac{C_5 \eta_C}{\sqrt{y_C}} \\
        &= \underbrace{\left( \frac{C_1}{\eta_A T} + C_2 \eta_A \right)}_{2L_1} + \underbrace{\left( \frac{C_3}{y_A T} + C_4 \sqrt{y_A} \right)}_{2L_2} + \underbrace{\left( \frac{C_1}{\eta_C T} + C_4 \sqrt{y_C} + \frac{C_5 \eta_C}{\sqrt{y_C}} \right)}_{3L_4} \\
        &= 2L_1 + 2L_2 + 3L_4.
    \end{align*}

    For \textbf{Family 3}, substituting \(\eta = \min\{\eta_A, \eta_C, \eta_D\}\) and \(y = \max\{y_A, y_C, y_D\}\) 
    and using \(\eta\leq \eta_A, \frac{1}{\eta}\leq \frac{1}{\eta_A} + \frac{1}{\eta_C} + \frac{1}{\eta_D}, \sqrt{y}\leq \sqrt{y_A} + \sqrt{y_C} + \sqrt{y_D}, \frac{1}{y}\leq \frac{1}{y_A}, \frac{\eta}{\sqrt{y}}\leq \frac{\eta_C}{\sqrt{y_C}}, \frac{\eta^2}{y}\leq \frac{\eta_D^2}{y_D}\) 
    extends the previous inequality by incorporating the quadratic term:
    \begin{align*}
        f_3(\eta, 1-\beta) &= f_2(\eta, 1-\beta) + \frac{C_6 \eta^2}{y} \\
        &\leq 2L_1 + 2L_2 + 3L_4 + \underbrace{\left( \frac{C_1}{\eta_D T} + C_4 \sqrt{y_D} + \frac{C_6 \eta_D^2}{y_D} \right)}_{3L_5} \\
        &= 2L_1 + 2L_2 + 3L_4 + 3L_5.
    \end{align*}
    This strictly recovers the bounds stated in the Lemma.
\end{proof}

\begin{lemma}[Optimal Bound with Selection of Step Size, Reset Probability and Batch Size]\label{lemma:opt_bound_stepsize_reset_batch}
    Let \(B, E, T\in \mathbb{Z}_+\), \(p=\frac{1}{E}\in (0, 1]\), \(\eta >0\). %
    For \(C_1,\ldots,C_6 \geq 0\) with \(C_1 C_2 C_4 C_5 C_6 \neq 0\) and \(C_3=0\), define the objective functions as follows:
    \begin{align*}
        f_1(\eta, p)_{C_3=0} &= \frac{C_1}{\eta T} + C_2 \eta + 0 + C_4 \sqrt{p} + \frac{C_5 \eta}{p}\\
        f_2(\eta, p)_{C_3=0} &= \frac{C_1}{\eta T} + C_2 \eta + 0 + C_4 \sqrt{p} + \frac{C_5 \eta}{\sqrt{p}}\\
        f_3(\eta, p)_{C_3=0} &= f_2(\eta, p) + \frac{C_6 \eta^2}{p}
    \end{align*}
    Letting \(\eta = \min\left\{\left( \frac{C_1}{C_2 T} \right)^{\frac{1}{2}}, \left( \frac{2 C_1^3}{C_4^2 C_5 T^3} \right)^{\frac{1}{4}}\right\}\) 
    and \(p=\frac{1}{E}\) with \(B = E = \left\lceil \left( \frac{C_4^2 T}{8 C_1 C_5} \right)^{\frac{1}{2}} \right\rceil\), 
    when \(T \geq \frac{8 C_1 C_5}{C_4^2}\)
    \[
    f_1(\eta, p)_{C_3=0} \leq 2 \left( \frac{C_1 C_2}{T} \right)^{\frac{1}{2}} + 4 \left( \frac{C_1 C_4^2 C_5}{2 T} \right)^{\frac{1}{4}}
    \, \textbf{(Family 1)}
    \]
    Letting \(\eta = \min\left\{\left( \frac{C_1}{C_2 T} \right)^{\frac{1}{2}}, \left( \frac{C_1^2}{\sqrt{2} C_4 C_5 T^2} \right)^{\frac{1}{3}}\right\}\) 
    and \(p=\frac{1}{E}\) with \(B = E = \left\lceil \left( \frac{C_4^4 T^2}{2 C_1^2 C_5^2} \right)^{\frac{1}{3}} \right\rceil\), 
    when \(T \geq \frac{\sqrt{2} C_1 C_5}{C_4^2}\)
    \[
    f_2(\eta, p)_{C_3=0} \leq 2 \left( \frac{C_1 C_2}{T} \right)^{\frac{1}{2}} + 3 \left( \frac{\sqrt{2} C_1 C_4 C_5}{T} \right)^{\frac{1}{3}}
    \, \textbf{(Family 2)}
    \]
    Letting \(\eta = \min\left\{\left( \frac{C_1}{C_2 T} \right)^{\frac{1}{2}}, \left( \frac{C_1^2}{\sqrt{2} C_4 C_5 T^2} \right)^{\frac{1}{3}}, \left( \frac{C_1^3}{4 C_4^2 C_6 T^3} \right)^{\frac{1}{5}}\right\}\)  
    and \(p=\frac{1}{E}\) with \(B = E = \left\lceil \min\left\{ \left( \frac{C_4^4 T^2}{2 C_1^2 C_5^2} \right)^{\frac{1}{3}}, \left( \frac{C_4^6 T^4}{16 C_1^4 C_6^2} \right)^{\frac{1}{5}} \right\} \right\rceil\), 
    when \(T \geq \max\left\{\frac{\sqrt{2} C_1 C_5}{C_4^2}, 2\left(\frac{C_1^2 C_6}{C_4^3}\right)^{\frac{1}{2}}\right\}\)
    \[
    f_3(\eta, p)_{C_3=0} \leq 2 \left( \frac{C_1 C_2}{T} \right)^{\frac{1}{2}} + 3 \left( \frac{\sqrt{2} C_1 C_4 C_5}{T} \right)^{\frac{1}{3}} + 5 \left( \frac{C_1^2 C_4^2 C_6}{8 T^2} \right)^{\frac{1}{5}}
    \, \textbf{(Family 3)}
    \]
\end{lemma}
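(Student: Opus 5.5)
The plan is to reuse the decoupling argument from the proof of Lemma~\ref{lemma:opt_bound_stepsize_momentum}, with the reset probability \(p\) playing the role of \(y=1-\beta\). The term \(C_3/(yT)\) is absent because \(C_3=0\), so only the \(L_1\)-type block and one or two ``coupled'' blocks remain. The batch size \(B=E\) does not enter the objectives, so it only needs to be recorded. The one genuinely new ingredient is that \(p=1/E\) with \(E=\lceil x\rceil\) is quantized, so I cannot set \(p\) equal to its ideal continuous value. Instead I will use the sandwich
\[
\frac{1}{2x}\le \frac{1}{x+1}\le p=\frac{1}{\lceil x\rceil}\le \frac{1}{x},
\]
which holds whenever \(x\ge 1\). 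The stated lower bounds on \(T\) are exactly what make the relevant \(x\) at least \(1\).

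\textbf{Family 1.} Write \(\eta_A=(C_1/(C_2T))^{1/2}\), \(\eta_B=(2C_1^3/(C_4^2C_5T^3))^{1/4}\) and \(x=(C_4^2T/(8C_1C_5))^{1/2}\), so that \(x\ge 1\) iff \(T\ge 8C_1C_5/C_4^2\). With \(\eta=\min\{\eta_A,\eta_B\}\) I will bound the five pieces separately:
\begin{itemize}
\item \(C_1/(\eta T)\le C_1/(\eta_A T)+C_1/(\eta_B T)\);
\item \(C_2\eta\le C_2\eta_A\);
\item \(C_4\sqrt p\le C_4x^{-1/2}\);
\item \(C_5\eta/p\le 2C_5\eta_B x\).
\end{itemize}
The pair \(C_1/(\eta_AT)+C_2\eta_A\) equals \(2(C_1C_2/T)^{1/2}\). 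Set \(L=(C_1C_4^2C_5/(2T))^{1/4}\). A direct exponent computation should give
\[
\frac{C_1}{\eta_BT}=L,\qquad 2C_5\eta_Bx=L,\qquad C_4x^{-1/2}=(8C_1C_4^2C_5/T)^{1/4}=2L.
\]
These sum to \(4L\), matching the claimed constant \(4\).

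\textbf{Family 2.} Take \(\eta_C=(C_1^2/(\sqrt2 C_4C_5T^2))^{1/3}\) and \(x=(C_4^4T^2/(2C_1^2C_5^2))^{1/3}\); here \(x\ge1\) iff \(T\ge\sqrt2C_1C_5/C_4^2\). The coupled term is now bounded by \(C_5\eta/\sqrt p\le\sqrt2\,C_5\eta_C\sqrt x\). I expect \(C_1/(\eta_CT)\), \(C_4x^{-1/2}\) and \(\sqrt2C_5\eta_C\sqrt x\) each to equal \((\sqrt2C_1C_4C_5/T)^{1/3}\), giving the constant \(3\).

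\textbf{Family 3.} Now \(E=\lceil\min\{x_C,x_D\}\rceil\) with \(x_D=(C_4^6T^4/(16C_1^4C_6^2))^{1/5}\), and the extra hypothesis \(T\ge2(C_1^2C_6/C_4^3)^{1/2}\) ensures \(x_D\ge1\). The three costs are handled as follows:
\begin{itemize}
\item the \(\sqrt p\) cost is split as \(C_4\sqrt p\le C_4(x_C^{-1/2}+x_D^{-1/2})\);
\item the \(\eta/\sqrt p\) term uses \(\eta\le\eta_C\) and \(p\ge 1/(2x_C)\);
\item the \(C_6\eta^2/p\) term uses \(\eta\le\eta_D=(C_1^3/(4C_4^2C_6T^3))^{1/5}\) and \(p\ge1/(2x_D)\).
\end{itemize}
The new block \(C_1/(\eta_DT)+C_4x_D^{-1/2}+2C_6\eta_D^2x_D\) should collapse to \(5\,(C_1^2C_4^2C_6/(8T^2))^{1/5}\).

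\textbf{Main obstacle.} The only delicate point is making the factor-\(2\) loss from the ceiling land in the constants exactly as stated. Since \(E\) is a single integer, one cannot simultaneously have \(p\le 1/x_C\) and \(p\ge 1/(2x_D)\) unless the minimum is used carefully. I will resolve this by bounding \(\sqrt p\) from above via whichever of \(x_C,x_D\) attains the minimum (adding both is harmless). For \(p\) from below, I will use \(p\ge 1/(2\min\{x_C,x_D\})\ge\max\{1/(2x_C),1/(2x_D)\}\), which is valid because the minimum is at least \(1\). I will verify the remaining exponent arithmetic (the powers of \(2\) in each \(L\)) by direct substitution, as in Lemma~\ref{lemma:opt_bound_stepsize_momentum}.
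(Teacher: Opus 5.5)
Your proposal is correct and is essentially the paper's proof. It uses the same candidate step sizes $\eta_A,\eta_B,\eta_C,\eta_D$, the same continuous roots for $E$, the same sandwich $x\le\lceil x\rceil\le 2x$ (valid for $x\ge 1$, which the lower bounds on $T$ guarantee), and the same regrouping into balanced blocks; in Family~3 the paper likewise uses $\sqrt p\le E_C^{-1/2}+E_D^{-1/2}$ and $\lceil\min\{E_C,E_D\}\rceil\le 2E_C,\,2E_D$, with the new block splitting as $2+2+1$ copies of $(C_1^2C_4^2C_6/(8T^2))^{1/5}$, so all the exponent identities you anticipate check out.
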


\begin{proof}
    Before establishing the bounds, we define the following variables \(\eta_A, \eta_B, \eta_C, \eta_D\), the continuous batch size roots \(E_B, E_C, E_D\), and the AM-GM minimal values \(M_1, M_2, M_3, M_4\):
    \begin{align*}
        \eta_A &= \left( \frac{C_1}{C_2 T} \right)^{\frac{1}{2}} & & & M_1 &= \left( \frac{C_1 C_2}{T} \right)^{\frac{1}{2}} \\
        \eta_B &= \left( \frac{2 C_1^3}{C_4^2 C_5 T^3} \right)^{\frac{1}{4}} & E_B &= \left( \frac{C_4^2 T}{8 C_1 C_5} \right)^{\frac{1}{2}} & M_2 &= \left( \frac{C_1 C_4^2 C_5}{2 T} \right)^{\frac{1}{4}} \\
        \eta_C &= \left( \frac{C_1^2}{\sqrt{2} C_4 C_5 T^2} \right)^{\frac{1}{3}} & E_C &= \left( \frac{C_4^4 T^2}{2 C_1^2 C_5^2} \right)^{\frac{1}{3}} & M_3 &= \left( \frac{\sqrt{2} C_1 C_4 C_5}{T} \right)^{\frac{1}{3}} \\
        \eta_D &= \left( \frac{C_1^3}{4 C_4^2 C_6 T^3} \right)^{\frac{1}{5}} & E_D &= \left( \frac{C_4^6 T^4}{16 C_1^4 C_6^2} \right)^{\frac{1}{5}} & M_4 &= \left( \frac{C_1^2 C_4^2 C_6}{8 T^2} \right)^{\frac{1}{5}}
    \end{align*}

    For \textbf{Family 1}, substituting \(\eta = \min\{\eta_A, \eta_B\}, p = \frac{1}{E} = \frac{1}{\lceil E_B \rceil}\), 
    and noting that \(E_B \geq 1\) when \(T \geq \frac{8 C_1 C_5}{C_4^2}\) therefore \(E_B \leq \lceil E_B \rceil < E_B + 1\leq 2 E_B\),
    and using \(\eta \leq \eta_A, \frac{1}{\eta} \leq \frac{1}{\eta_A} + \frac{1}{\eta_B}, \sqrt{p} = \frac{1}{\sqrt{E}} \leq \frac{1}{\sqrt{E_B}}, \frac{\eta}{p} \leq \eta_B \cdot 2E_B\),
    we have:
    \begin{align*}
        f_1(\eta, p)_{C_3=0} &= \frac{C_1}{\eta T} + C_2 \eta + 0 + C_4 \sqrt{p} + \frac{C_5 \eta}{p} \\
        &\leq \left( \frac{C_1}{\eta_A T} + \frac{C_1}{\eta_B T} \right) + C_2 \eta_A + 0 
        + \frac{C_4}{\sqrt{E_B}} + 2C_5 \eta_B E_B \\
        &= \underbrace{\left( \frac{C_1}{\eta_A T} + C_2 \eta_A \right)}_{2M_1} 
        + \underbrace{\left( \frac{C_1}{\eta_B T} + \frac{C_4/2}{\sqrt{E_B}} + \frac{C_4/2}{\sqrt{E_B}} + 2C_5 \eta_B E_B \right)}_{4 M_2} \\
        &= 2M_1 + 4M_2.
    \end{align*}

    For \textbf{Family 2}, substituting \(\eta = \min\{\eta_A, \eta_C\}, p = \frac{1}{E} = \frac{1}{\lceil E_C \rceil}\), 
    and noting that \(E_C \geq 1\) when \(T \geq \frac{\sqrt{2} C_1 C_5}{C_4^2}\) therefore \(E_C \leq \lceil E_C \rceil < E_C + 1\leq 2 E_C\),
    and using \(\eta \leq \eta_A, \frac{1}{\eta} \leq \frac{1}{\eta_A} + \frac{1}{\eta_C}, \sqrt{p} = \frac{1}{\sqrt{E}} \leq \frac{1}{\sqrt{E_C}}, \frac{\eta}{\sqrt{p}} \leq \eta_C \cdot \sqrt{2E_C}\),
    we have:
    \begin{align*}
        f_2(\eta, p)_{C_3=0} &= \frac{C_1}{\eta T} + C_2 \eta + 0 + C_4 \sqrt{p} + \frac{C_5 \eta}{\sqrt{p}} \\
        &\leq \left( \frac{C_1}{\eta_A T} + \frac{C_1}{\eta_C T} \right) + C_2 \eta_A + 0 
        + \frac{C_4}{\sqrt{E_C}} + C_5 \eta_C \sqrt{2E_C} \\
        &= \underbrace{\left( \frac{C_1}{\eta_A T} + C_2 \eta_A \right)}_{2M_1} 
        + \underbrace{\left( \frac{C_1}{\eta_C T} + \frac{C_4}{\sqrt{E_C}} + C_5 \eta_C \sqrt{2E_C} \right)}_{3M_3} \\
        &= 2M_1 + 3M_3.
    \end{align*}

    For \textbf{Family 3}, substituting \(\eta = \min\{\eta_A, \eta_C, \eta_D\}, p = \frac{1}{E} = \frac{1}{\min\{\lceil E_C \rceil, \lceil E_D \rceil\}} = \frac{1}{\lceil \min\{E_C, E_D\} \rceil}\), 
    and noting that \(\min\{E_C, E_D\} \geq 1\) when \(T \geq \max\left\{\frac{\sqrt{2} C_1 C_5}{C_4^2}, 2\left(\frac{C_1^2 C_6}{C_4^3}\right)^{\frac{1}{2}}\right\}\) therefore \(E_C \leq \lceil E_C \rceil < E_C + 1\leq 2 E_C\), \(E_D \leq \lceil E_D \rceil < E_D + 1\leq 2 E_D\),
    and using \(\eta \leq \eta_A, \frac{1}{\eta} \leq \frac{1}{\eta_A} + \frac{1}{\eta_C} + \frac{1}{\eta_D}, \sqrt{p} = \frac{1}{\sqrt{E}} \leq \frac{1}{\sqrt{E_C}} + \frac{1}{\sqrt{E_D}}, \frac{\eta}{\sqrt{p}} \leq \eta_C \cdot \sqrt{2E_C}, \frac{\eta^2}{p} \leq \eta_D^2 \cdot 2E_D\),
    we have:
    \begin{align*}
        f_3(\eta, p)_{C_3=0} &= f_2(\eta, p)_{C_3=0} + \frac{C_6 \eta^2}{p} \\
        &\leq 2M_1 + 3M_3 + \underbrace{\left( \frac{C_1/2}{\eta_D T} + \frac{C_1/2}{\eta_D T} + \frac{C_4/2}{\sqrt{E_D}} + \frac{C_4/2}{\sqrt{E_D}} + 2C_6 \eta_D^2 E_D \right)}_{5M_4} \\
        &= 2M_1 + 3M_3 + 5M_4.
    \end{align*}
\end{proof}

\begin{lemma}[Iteration Complexity with Logarithmic Dependence]\label{lemma:complexity_log}
    For \(\varepsilon>0, \delta >0\), and any \(q>0\), choosing
    \[T \geq 
    \frac{\mathe^{q^2}}{\varepsilon} \log^q\left[\mathe \vee \frac{1}{\varepsilon \delta}\right]
    \implies \frac{\log^q(T / \delta)}{T} \leq \varepsilon \].
\end{lemma}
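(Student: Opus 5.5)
The plan is to avoid any monotonicity argument for $x\mapsto \log^q(x)/x$, which only holds for $x\ge \mathe^q$ and would force a case split on whether $T/\delta$ is that large. Instead I would compare an arbitrary admissible $T$ with the threshold value directly. Set
\[
L \coloneqq \log\Bigl[\mathe \vee \tfrac{1}{\varepsilon\delta}\Bigr]\ge 1,
\qquad
T_0\coloneqq \frac{\mathe^{q^2}L^q}{\varepsilon}.
\]
Write any admissible $T$ as $T=sT_0$ with $s\ge 1$. The goal $\log^q(T/\delta)\le \varepsilon T=\mathe^{q^2}L^q s$ then becomes, after taking $q$-th roots,
\[
\log(T/\delta)\le \mathe^{q}L\,s^{1/q}.
\]
I would prove this in two steps.

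\textbf{Step 1 (the case $s=1$).} Show $\log(T_0/\delta)\le \mathe^q L$. Expanding the logarithm gives
\[
\log(T_0/\delta)=q^2+q\log L+\log\tfrac{1}{\varepsilon\delta}.
\]
Use $\log\frac{1}{\varepsilon\delta}\le L$, and $\log L\le L-1$ (valid since $L\ge 1$). This bounds the right-hand side by $L(1+q)+q^2-q$. Since $L\ge1$, it suffices to show
\[
q^2-q\le \mathe^q-1-q,
\]
that is, $q^2\le \mathe^q-1$ for all $q>0$. This is the one genuinely non-routine inequality, and I expect it to be the main obstacle. I would truncate the exponential series, $\mathe^q-1\ge q+q^2/2+q^3/6$, which reduces the claim to $1+q/2+q^2/6\ge q$. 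The quadratic $q^2/6-q/2+1$ has negative discriminant $\tfrac14-\tfrac23<0$, so it is always positive and the claim holds.

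\textbf{Step 2 (general $s\ge1$).} Write $\log(T/\delta)=\log(T_0/\delta)+\log s$. Use the elementary bound $\log s = q\log s^{1/q}\le q\,(s^{1/q}-1)$. Since $\mathe^q>1+q>q$ and $L\ge 1$, we have $q\le \mathe^qL$, so $\log s\le \mathe^qL\,(s^{1/q}-1)$. Combining with Step 1,
\[
\log(T/\delta)\le \mathe^qL+\mathe^qL\,(s^{1/q}-1)=\mathe^qL\,s^{1/q},
\]
which is the required inequality. Raising both sides to the power $q$ and dividing by $T=sT_0$ then gives $\log^q(T/\delta)/T\le\varepsilon$.
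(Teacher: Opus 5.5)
Your proof is correct, and it takes a different route from the paper's.

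The paper splits $\log(T/\delta)=\log\frac{1}{\varepsilon\delta}+\log(T\varepsilon)$. It bounds the first summand by $\log[\mathe\vee\frac{1}{\varepsilon\delta}]\le \mathe^{-q}(T\varepsilon)^{1/q}$, which is a rearrangement of the hypothesis; the same rearrangement also forces $T\varepsilon\ge\mathe^{q^2}$. It then bounds the second summand by $(1-\mathe^{-q})(T\varepsilon)^{1/q}$ via the assertion $\sup_{x\ge\mathe^{q^2}}\log x/x^{1/q}\le 1-\mathe^{-q}$, so the two pieces sum exactly to $(T\varepsilon)^{1/q}$. That assertion is stated without proof. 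Verifying it requires calculus with a case split, since $\log x/x^{1/q}$ peaks at $x=\mathe^q$. For $q\ge1$ it reduces to $q^2\mathe^{-q}\le 1-\mathe^{-q}$, which is exactly your inequality $q^2\le\mathe^q-1$; for $q<1$ it reduces to $q/\mathe\le 1-\mathe^{-q}$.

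You instead anchor at the threshold $T_0$. You settle $s=1$ by expanding the logarithm and using $\log L\le L-1$ and $q^2\le \mathe^q-1$, then extend to $s\ge1$ via $\log s\le q(s^{1/q}-1)$. So you need only $\log u\le u-1$ and a truncated exponential series, with no analysis of $\log x/x^{1/q}$ and no case split on $q$. The paper's argument is shorter; yours is fully self-contained.

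One caveat applies to both proofs. Passing between $\log^q(T/\delta)\le\varepsilon T$ and its $q$-th root tacitly requires $\log(T/\delta)\ge0$. This holds in all the paper's applications, where $\delta<1\le T$, but not for arbitrary $\delta>0$ as the statement literally allows. For instance, with $q=2$, $\varepsilon=1$, $\delta=10^{10}$ and $T=55$, the hypothesis holds, yet $\log^2(T/\delta)/T>6$. You should therefore add $T\ge\delta$ as an explicit assumption.
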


\begin{proof}
    From the premise, we multiply both sides by \(\varepsilon\), take the \(1/q\)-th power, 
    and isolate the maximum operator to obtain 
    \[ \log\left[\mathe \vee \frac{1}{\varepsilon \delta}\right] = \max\left(1, \log \frac{1}{\varepsilon \delta}\right) \leq \mathe^{-q} (T\varepsilon)^{1/q}\]

    The above inequality guarantee implies \(T\varepsilon \geq \mathe^{q^2}\). 
    Because the maximum of the function \(f(x) = \frac{\log x}{x^{1/q}}\) is bounded by \(1 - \mathe^{-q}\)
    for any \(x \geq \mathe^{q^2}\), we have:
    \[\log(T\varepsilon) \leq (1 - \mathe^{-q})(T\varepsilon)^{1/q}\]

    By leveraging \(y \leq \max(1, y)\), we can safely absorb any values of \(\varepsilon\delta\) and evaluate the logarithmic terms in a single, unbroken inequality chain:
    \begin{align*}
    \log\left(\frac{T}{\delta}\right) &= \log\left(\frac{1}{\varepsilon\delta}\right) + \log(T\varepsilon) 
    \leq \max\left(1, \log \frac{1}{\varepsilon\delta}\right) + \log(T\varepsilon) %
    \leq 
    (T\varepsilon)^{1/q}
    \end{align*}

    Since both sides are strictly positive, we raise the entire inequality to the power of \(q\):
    \(\log^q\left(\frac{T}{\delta}\right) \leq T\varepsilon\), dividing both sides by \(T\) completes the proof:
    \(\frac{\log^q(T/\delta)}{T} \leq \varepsilon\).
\end{proof}

\newpage
\section{Application 1: Stochastic Optimization with Mirror Descent}\label{apdx:application1}

\subsection{Theorem of First-Order Stationarity for Stochastic Optimization}
\begin{theorem}[First-Order Stationarity for Stochastic Optimization with Mirror Descent]\label{thm:mirror_stationarity}
Consider solving \eqref{eq:application1}, where \(\inf_{w\in\mathcal{W}} f(w)>-\infty\), via mirror descent \(w_{t+1}=w_t-\eta_t P(w_t, U_t, \eta_t)\) using the normalized gradient estimator \(U_t = v_t/\|v_t\|_*\). We establish the oracle and iteration complexities required to achieve \(\varepsilon\)-stationarity (\(\frac{1}{T} \sum_{t=1}^T \|\nabla f(w_t)\|_\ast \|P_t\|^2 \leq \varepsilon\)) with probability at least \(1-\delta\) under various configurations of the correction term, momentum \(\beta_t\), and reset probability \(p_t\). To highlight structural similarities, we extract a shared \textbf{base rate \(\rho\)} per family, use the \(\sigma_L, \sigma_\ell, \sigma_\gamma, \sigma_\alpha\) constants defined in \eqref{eq:define_simga}, and introduce \(\Theta_p(\cdot)\) to hide \(\log\frac{T}{\delta}\) and \(\kappa\) dependencies.

\noindent\textbf{Family 1 (zeroth-order correction).} 
Under Assumptions~\ref{ass:predictable_params},\ref{ass:subgaussian},\ref{ass:lipschitz}, using constant step size \(\eta_t = \eta = \frac{1}{L}\Theta_p(\sigma_L T^{-\frac{1}{2}} \wedge (\sigma_L^3/\sigma)^{\frac{1}{2}}T^{-\frac{3}{4}})\) and base rate \(\rho = \frac{\sigma_L}{\sigma}T^{-\frac{1}{2}}\):
\begin{itemize}
    \item \textbf{Momentum / SGD-M} (\(\beta_t=\beta, p_t=0\)): With momentum parameter \(1-\beta = \Theta_p(T^{-\frac{1}{3}} \vee \rho)\) and \(B=1\), oracle/iteration complexity is \(N,\,T = \mathcal{O}\left(\frac{\sigma_L^2\sigma^2}{\varepsilon^4}\log\frac{\sigma_L^2\sigma^2}{\varepsilon^4\delta} \vee \frac{\sigma^3}{\varepsilon^3}\log^{\frac{3}{2}}\frac{\sigma^3}{\varepsilon^3\delta}\right)\).
    \item \textbf{Probabilistic Momentum} (\(\beta_t=1, p_t=p\)): With reset probability \(p = \Theta_p(\rho)\) and \(B=1/p\), oracle/iteration complexity is \(N,\,T = \mathcal{O}\left(\frac{\sigma_L^2\sigma^2}{\varepsilon^4}\log^2\frac{\sigma_L^2\sigma^2}{\varepsilon^4\delta}\right)\).
    \item \textbf{Periodic Momentum} (\(\beta_t=1, p_t= \mathbb{I}_{\{t \mod E = 0\}}\)): With reset epoch \(1/E = \Theta_p(\rho)\) and \(B=E\), oracle/iteration complexity is \(N,\,T = \mathcal{O}\left(\frac{\sigma_L^2\sigma^2}{\varepsilon^4}\log\frac{\sigma_L^2\sigma^2}{\varepsilon^4\delta}\right)\).
\end{itemize}
\noindent\textbf{Family 2 (first-order correction).} 
Under Assumptions~\ref{ass:predictable_params},\ref{ass:subgaussian},\ref{ass:lipschitz},\ref{ass:subg_lipschitz},
using constant step size \(\eta_t = \eta = \Theta_p(\frac{1}{L}\sigma_L T^{-\frac{1}{2}} \wedge \frac{1}{\ell}(\sigma_\ell^4/\sigma)^{\frac{1}{3}}T^{-\frac{2}{3}})\) and base rate \(\rho = \left(\frac{\sigma_\ell}{\sigma}\right)^{\frac{4}{3}}T^{-\frac{2}{3}}\):
\begin{itemize}
    \item \textbf{STORM} (\(\beta_t=\beta, p_t=0\)): With momentum parameter \(1-\beta = \Theta_p(T^{-\frac{2}{3}} \vee \rho)\) and \(B=1\), oracle/iteration complexity is \(N,\,T = \mathcal{O}\left(\frac{\sigma^3}{\varepsilon^3}\log^{\frac{3}{2}}\frac{\sigma^3}{\varepsilon^3\delta} \vee \frac{\sigma_\ell^2\sigma}{\varepsilon^3}\log\frac{\sigma_\ell^2\sigma}{\varepsilon^3\delta} \vee \frac{\sigma_L^2}{\varepsilon^2}\right)\).
    \item \textbf{PAGE / Loopless SVRG} (\(\beta_t=1, p_t=p\)): With reset probability \(p = \Theta_p(\rho)\) and \(B=1/p\), oracle/iteration complexity is \(N,\,T = \mathcal{O}\left(\frac{\sigma_\ell^2\sigma}{\varepsilon^3}\log^{\frac{3}{2}}\frac{\sigma_\ell^2\sigma}{\varepsilon^3\delta} \vee \frac{\sigma_L^2}{\varepsilon^2}\right)\).
    \item \textbf{SPIDER / SVRG} (\(\beta_t=1, p_t= \mathbb{I}_{\{t \mod E = 0\}}\)): With reset epoch \(1/E = \Theta_p(\rho)\) and \(B=E\), oracle/iteration complexity is \(N,\,T = \mathcal{O}\left(\frac{\sigma_\ell^2\sigma}{\varepsilon^3}\log\frac{\sigma_\ell^2\sigma}{\varepsilon^3\delta} \vee \frac{\sigma_L^2}{\varepsilon^2}\right)\).
\end{itemize}
\noindent\textbf{Family 3 (second-order correction).} 
Under  Assumptions~\ref{ass:predictable_params},\ref{ass:subgaussian},\ref{ass:lipschitz},\ref{ass:subg_hessian},\ref{ass:smoothness}, 
using constant step size \(\eta_t = \eta = \Theta_p(\frac{1}{L}\sigma_L T^{-\frac{1}{2}} \wedge \frac{1}{\gamma}(\sigma_\gamma^4/\sigma)^{\frac{1}{3}}T^{-\frac{2}{3}} \wedge \sqrt{\frac{1}{\alpha}(\sigma_\alpha^9/\sigma^4)^{\frac{1}{5}}}T^{-\frac{3}{5}})\) and base rate \(\rho = \left(\frac{\sigma_\gamma}{\sigma}\right)^{\frac{4}{3}}T^{-\frac{2}{3}} \vee \left(\frac{\sigma_\alpha}{\sigma}\right)^{\frac{6}{5}}T^{-\frac{4}{5}}\):
\begin{itemize}
    \item \textbf{Second Order Momentum} (\(\beta_t=\beta, p_t=0\)): 

    With momentum parameter \(1-\beta = \Theta_p(T^{-\frac{2}{3}} \vee \rho)\) and \(B=1\), oracle/iteration complexity is \(N,\,T = \mathcal{O}\left(\frac{\sigma^3}{\varepsilon^3}\log^{\frac{3}{2}}\frac{\sigma^3}{\varepsilon^3\delta} \vee \frac{\sigma_\gamma^2\sigma}{\varepsilon^3}\log\frac{\sigma_\gamma^2\sigma}{\varepsilon^3\delta} \vee \sqrt{\frac{\sigma_\alpha^3\sigma^2}{\varepsilon^5}}\log^{\frac{1}{2}}\left(\frac{1}{\delta}\sqrt{\frac{\sigma_\alpha^3\sigma^2}{\varepsilon^5}}\right) \vee \frac{\sigma_L^2}{\varepsilon^2}\right)\).
    \item \textbf{Second Order PAGE} (\(\beta_t=1, p_t=p\)):

    With reset probability \(p = \Theta_p(\rho)\) and \(B=1/p\), oracle/iteration complexity is \(N,\,T = \mathcal{O}\left(\frac{\sigma_\gamma^2\sigma}{\varepsilon^3}\log^{\frac{3}{2}}\frac{\sigma_\gamma^2\sigma}{\varepsilon^3\delta} \vee \sqrt{\frac{\sigma_\alpha^3\sigma^2}{\varepsilon^5}}\log\left(\frac{1}{\delta}\sqrt{\frac{\sigma_\alpha^3\sigma^2}{\varepsilon^5}}\right) \vee \frac{\sigma_L^2}{\varepsilon^2}\right)\).
    \item \textbf{Second Order SPIDER} (\(\beta_t=1, p_t= \mathbb{I}_{\{t \mod E = 0\}}\)): 

    With reset epoch \(1/E = \Theta_p(\rho)\) and \(B=E\), oracle/iteration complexity is \(N,\,T = \mathcal{O}\left(\frac{\sigma_\gamma^2\sigma}{\varepsilon^3}\log\frac{\sigma_\gamma^2\sigma}{\varepsilon^3\delta} \vee \sqrt{\frac{\sigma_\alpha^3\sigma^2}{\varepsilon^5}}\log^{\frac{1}{2}}\left(\frac{1}{\delta}\sqrt{\frac{\sigma_\alpha^3\sigma^2}{\varepsilon^5}}\right) \vee \frac{\sigma_L^2}{\varepsilon^2}\right)\).
\end{itemize}
\end{theorem}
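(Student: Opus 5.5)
The plan is to first prove a deterministic descent inequality for normalized mirror descent, valid on every sample path. Then I combine it with the uniform high-probability error envelopes of Table~\ref{tab:error_bounds}, which come from Theorem~\ref{thm:unified} and Propositions~\ref{prop:family1}--\ref{prop:family3}. Finally, I tune $(\eta,\beta,p,B,E)$ with Lemmas~\ref{lemma:opt_bound_stepsize_momentum}, \ref{lemma:opt_bound_stepsize_reset_batch} and \ref{lemma:complexity_log}.

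\textbf{Descent step.} Smoothness of $f$ (Assumption~\ref{ass:lipschitz} with $g=\nabla f$) gives
\[
f(w_{t+1})\le f(w_t)-\eta\langle\nabla f(w_t),P(w_t,U_t,\eta)\rangle+\tfrac{L}{2}\eta^2,
\]
since $\|P(w_t,U_t,\eta)\|\le\|U_t\|_*=1$. I apply Lemma~\ref{lemma:bound_for_proximal_gradient_mapping_with_normalized_direction} with $u=\nabla f(w_t)$ and $\delta=e_t$. Then I use the scaling property $P(w,u/\|u\|_*,\eta/2)$ together with the inner-product bound of Lemma~\ref{lemma:properties_proximal_gradient_mapping}, namely $\langle u,P_t\rangle\ge\|u\|_*\langle u/\|u\|_*,P_t\rangle\ge\|u\|_*\|P_t\|^2$. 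This yields
\[
f(w_{t+1})\le f(w_t)-\tfrac{\eta}{4}\|\nabla f(w_t)\|_*\|P_t\|^2+2\eta\|e_t\|_*+\tfrac{L\eta^2}{2}.
\]
The degenerate cases $\nabla f(w_t)=0$ or $v_t=0$ are handled trivially. Telescoping then gives
\[
\tfrac1T\textstyle\sum_t\|\nabla f(w_t)\|_*\|P_t\|^2\le \tfrac{4\Delta_f}{\eta T}+2L\eta+\tfrac{8}{T}\sum_t\|e_t\|_*.
\]
This part is deterministic and involves no probability.

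\textbf{Plugging the envelopes.} On the probability-$(1-\delta)$ event of Theorem~\ref{thm:unified}, each family gives $\|e_t\|_*$ bounded by the corresponding row of Table~\ref{tab:error_bounds}, with $G=1$. For the $p_t=0$ cases, the initialization term $\beta^t C\sigma/\sqrt B$ averages to at most $C\sigma/((1-\beta)T)$, which is the $C_3$ term. For the $\beta=1$ cases, taking $B=1/p$ or $B=E$ turns the batch term into $C\sigma\sqrt p$, which is the $C_4\sqrt p$ term with $C_3=0$. Matching constants, I get the following identifications. $C_1=4\Delta_f$ and $C_2=2L$. $C_4\asymp C\sigma$. $C_5\asymp L$ for Family~1, $\asymp C\ell$ for Family~2, and $\asymp C\gamma$ for Family~3. $C_6\asymp\alpha$ for Family~3. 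The $\log(4T/\delta)$ factors from epoch lengths in the PAGE-type cases are absorbed into the constants, which is why the statement writes $\Theta_p$. Lemma~\ref{lemma:opt_bound_stepsize_momentum} (Case~1) and Lemma~\ref{lemma:opt_bound_stepsize_reset_batch} (Cases~2 and~3) then give the prescribed $\eta$ and $1-\beta$, $p$, or $E$. The resulting rates are of the form $\sqrt{\Delta_f L/T}$ plus $(\cdot)^{1/4}$-type or $(\cdot)^{1/3}$-type terms, which reproduce $\rho$ and the rates in the statement.

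\textbf{Inverting to complexities.} Setting each term $\le\varepsilon$ gives requirements of the form $T\gtrsim \log^q(T/\delta)\cdot X/\varepsilon^k$. For example, SGD-M gives $X=\sigma_L^2\sigma^2$ with $k=4$ and $q=1$, and it also gives $\sigma^3$ with $k=3$ and $q=3/2$. Here $q$ comes from the power of $C(\delta/2T,\kappa)\sim\sqrt{\log(T/\delta)}$, or from $\log(4T/\delta)$ in the random-epoch cases, which adds extra powers. Lemma~\ref{lemma:complexity_log} inverts each such requirement. Finally, the oracle count $N$ is of the same order as $T$: in the $\beta=1$ cases the expected batch cost per step is $pB=1$, or $B/E=1$ in the periodic case, so total samples are $\mathcal{O}(T)$. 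In the probabilistic case this needs a high-probability count of resets, which follows from a Chernoff bound.

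\textbf{Main obstacle.} I expect the hardest part to be the bookkeeping of logarithmic powers. Each $q$ exponent has to be tracked carefully, for instance $\log^2$ for probabilistic momentum and $\log^{3/2}$ for PAGE, as do the conditions $T\ge\max\{\cdots\}$ that the tuning lemmas require. I would handle this by treating the $\log(T/\delta)$ factor as a constant while tuning, and reinserting it only in the final inversion step.
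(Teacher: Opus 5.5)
Your proposal is correct and follows the paper's proof essentially step for step. It uses the same pathwise descent inequality from $L$-smoothness, Lemma~\ref{lemma:bound_for_proximal_gradient_mapping_with_normalized_direction} and the inner-product bound. It plugs in the family-wise envelopes with $G=1$ in the same way: the initialization term averages to $C\sigma/((1-\beta)T)$ when $p_t=0$, and becomes $C\sigma\sqrt{p}$ via $B=1/p$ or $B=E$ otherwise. The tuning and inversion also go through Lemmas~\ref{lemma:opt_bound_stepsize_momentum}, \ref{lemma:opt_bound_stepsize_reset_batch} and~\ref{lemma:complexity_log}, exactly as in the paper.

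There are two minor differences. First, your Chernoff bound gives a high-probability sample count in the probabilistic-reset cases, whereas the paper only reports the expected count $N=(2-1/B)T$. Second, for SGD-M, Lemma~\ref{lemma:opt_bound_stepsize_momentum} (like the paper's own proof) actually produces $1-\beta=\Theta_p(T^{-2/3}\vee\rho)$, not the $T^{-1/3}$ printed in the statement. The $T^{-1/3}$ choice would not yield the claimed rate.
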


\begin{table*}[t]
\centering
\caption{Summary of Iteration (\(T\)) and Oracle (\(N\)) Complexities for Special Cases of the Unified Estimator to achieve \(\varepsilon\)-stationarity, i.e., \(\frac{1}{T}\sum_{t=1}^T \|\nabla f(w_t)\|_\ast \|P_t\|^2 \leq \varepsilon\). The \(\mathcal{O}(\cdot)\) notation is omitted from the complexity bounds, these \(\sigma_L, \sigma_\ell, \sigma_\gamma, \sigma_\alpha\) are defined in \eqref{eq:define_simga} and \(\sigma\) is from Assumption~\ref{ass:subgaussian}.}
\label{tab:complexity_summary}
\renewcommand{\arraystretch}{1.6}
\begin{tabular}{l || p{9.5cm}}
\hline\hline
\textbf{Algorithm} & \textbf{Complexity (\(N, T\))} \\
\hline\hline

\rowcolor{orange!10} \multicolumn{2}{l}{\textit{Family 1: Standard Recursion and Resetting (Zeroth-Order)} 
\quad \textbf{Asm:} \ref{ass:predictable_params},\ref{ass:subgaussian}, \ref{ass:lipschitz}} \\
Momentum (SGD-M) & \(\frac{\sigma_L^2\sigma^2}{\varepsilon^4}\log\frac{\sigma_L^2\sigma^2}{\varepsilon^4\delta} \vee \frac{\sigma^3}{\varepsilon^3}\log^{\frac{3}{2}}\frac{\sigma^3}{\varepsilon^3\delta}\) \\
Probabilistic Momentum & \(\frac{\sigma_L^2\sigma^2}{\varepsilon^4}\log^2\frac{\sigma_L^2\sigma^2}{\varepsilon^4\delta}\) \\
Periodic Momentum & \(\frac{\sigma_L^2\sigma^2}{\varepsilon^4}\log\frac{\sigma_L^2\sigma^2}{\varepsilon^4\delta}\) \\
\hline

\rowcolor{orange!10} \multicolumn{2}{l}{\textit{Family 2: Differential Recursion and Resetting (First-Order)} 
\quad \textbf{Asm:} \ref{ass:predictable_params},\ref{ass:subgaussian}, \ref{ass:lipschitz}, \ref{ass:subg_lipschitz}} \\
STORM & \(\frac{\sigma^3}{\varepsilon^3}\log^{\frac{3}{2}}\frac{\sigma^3}{\varepsilon^3\delta} \vee \frac{\sigma_\ell^2\sigma}{\varepsilon^3}\log\frac{\sigma_\ell^2\sigma}{\varepsilon^3\delta} \vee \frac{\sigma_L^2}{\varepsilon^2}\) \\
PAGE / Loopless SVRG & \(\frac{\sigma_\ell^2\sigma}{\varepsilon^3}\log^{\frac{3}{2}}\frac{\sigma_\ell^2\sigma}{\varepsilon^3\delta} \vee \frac{\sigma_L^2}{\varepsilon^2}\) \\
SPIDER / SVRG & \(\frac{\sigma_\ell^2\sigma}{\varepsilon^3}\log\frac{\sigma_\ell^2\sigma}{\varepsilon^3\delta} \vee \frac{\sigma_L^2}{\varepsilon^2}\) \\
\hline

\rowcolor{orange!10} \multicolumn{2}{l}{\textit{Family 3: Second-Order Recursion and Resetting (Second-Order)} 
\quad \textbf{Asm:} \ref{ass:predictable_params},\ref{ass:subgaussian}, \ref{ass:lipschitz}, \ref{ass:subg_hessian}, \ref{ass:smoothness}} \\
Second-Order Momentum & \(\frac{\sigma^3}{\varepsilon^3}\log^{\frac{3}{2}}\frac{\sigma^3}{\varepsilon^3\delta} \vee \frac{\sigma_\gamma^2\sigma}{\varepsilon^3}\log\frac{\sigma_\gamma^2\sigma}{\varepsilon^3\delta} \vee \sqrt{\frac{\sigma_\alpha^3\sigma^2}{\varepsilon^5}}\log^{\frac{1}{2}}\left(\frac{1}{\delta}\sqrt{\frac{\sigma_\alpha^3\sigma^2}{\varepsilon^5}}\right) \vee \frac{\sigma_L^2}{\varepsilon^2}\) \\
Second-Order PAGE & \(\frac{\sigma_\gamma^2\sigma}{\varepsilon^3}\log^{\frac{3}{2}}\frac{\sigma_\gamma^2\sigma}{\varepsilon^3\delta} \vee \sqrt{\frac{\sigma_\alpha^3\sigma^2}{\varepsilon^5}}\log\left(\frac{1}{\delta}\sqrt{\frac{\sigma_\alpha^3\sigma^2}{\varepsilon^5}}\right) \vee \frac{\sigma_L^2}{\varepsilon^2}\) \\
Second-Order SPIDER & \(\frac{\sigma_\gamma^2\sigma}{\varepsilon^3}\log\frac{\sigma_\gamma^2\sigma}{\varepsilon^3\delta} \vee \sqrt{\frac{\sigma_\alpha^3\sigma^2}{\varepsilon^5}}\log^{\frac{1}{2}}\left(\frac{1}{\delta}\sqrt{\frac{\sigma_\alpha^3\sigma^2}{\varepsilon^5}}\right) \vee \frac{\sigma_L^2}{\varepsilon^2}\) \\
\hline\hline

\end{tabular}
\end{table*}

\begin{table*}[!tbhp]
\centering
\caption{Configuration Details: Selection of Correction Term (\(\mathcal{T}_t\)), constant Step Size (\(\eta_t = \eta\)), and Momentum/Reset Parameters \((\beta_t, p_t)\). \(\Theta_p(\cdot)\) hides the dependencies of \(\log\frac{T}{\delta}\) and \(\kappa\), omitted for brevity and ease of comparison. To highlight the structural similarity across algorithms, we extract a shared \textbf{base rate \(\rho\)} for each family. The probabilistic reset \(p\) and periodic reset \(1/E\) exactly match \(\rho\), while the momentum parameter uses \(1-\beta = \text{base-decay} \vee \rho\).}
\label{tab:parameter_configurations}
\renewcommand{\arraystretch}{1.6}
\begin{tabular}{l || c | c | r @{\,\(=\)\,} l | c}
\hline\hline
\textbf{Algorithm} & \(\beta_t\) & \(p_t\) & \multicolumn{2}{c|}{\textbf{Parameter Selection}} & \textbf{Batch size} \(B\) \\
\hline\hline

\rowcolor{orange!10} \multicolumn{6}{l}{\textit{Family 1: Standard Recursion and Resetting (Zeroth-Order)} \quad \textbf{Asm:} \ref{ass:predictable_params},\ref{ass:subgaussian}, \ref{ass:lipschitz}} \\
\rowcolor{orange!10} \multicolumn{6}{l}{\quad \textbf{Correction Term} \(\mathcal{T}_t = \beta_t \bigl(G(w_{t-1},\xi_t)- G(w_t,\xi_t)\bigr)\)} \\
\rowcolor{orange!10} \multicolumn{6}{l}{\quad \textbf{Step Size} \(\eta = \frac{1}{L}\Theta_p\left( \sigma_L T^{-\frac{1}{2}} \wedge \left(\frac{\sigma_L^3}{\sigma}\right)^{\frac{1}{2}}T^{-\frac{3}{4}} \right)\) \quad \textbf{Base Rate \(\rho = \frac{\sigma_L}{\sigma}T^{-\frac{1}{2}}\)}} \\
Momentum (SGD-M)       & \(\beta\) & \(0\)                           & \(1-\beta\) & \(\Theta_p(T^{-\frac{1}{3}} \vee \rho)\) & \(1\) \\
Probabilistic Momentum & \(1\)     & \(p\)                           & \(p\)       & \(\Theta_p(\rho)\)                       & \(1/p\) \\
Periodic Momentum      & \(1\)     & \(\mathbb{I}\{t \pmod E = 0\}\) & \(1/E\)     & \(\Theta_p(\rho)\)                       & \(E\) \\
\hline

\rowcolor{orange!10} \multicolumn{6}{l}{\textit{Family 2: Differential Recursion and Resetting (First-Order)} \quad \textbf{Asm:} \ref{ass:predictable_params},\ref{ass:subgaussian}, \ref{ass:lipschitz}, \ref{ass:subg_lipschitz}} \\
\rowcolor{orange!10} \multicolumn{6}{l}{\quad \textbf{Correction Term} \(\mathcal{T}_t = 0\)} \\
\rowcolor{orange!10} \multicolumn{6}{l}{\quad \textbf{Step Size} \(\eta = \Theta_p\left(\frac{1}{L}\sigma_L T^{-\frac{1}{2}} \wedge \frac{1}{\ell}\left(\frac{\sigma_\ell^4}{\sigma}\right)^{\frac{1}{3}}T^{-\frac{2}{3}}\right)\) \quad \textbf{Base Rate \(\rho = \left(\frac{\sigma_\ell}{\sigma}\right)^{\frac{4}{3}}T^{-\frac{2}{3}}\)}} \\
STORM                & \(\beta\) & \(0\)                           & \(1-\beta\) & \(\Theta_p(T^{-\frac{2}{3}} \vee \rho)\) & \(1\) \\
PAGE / Loopless SVRG & \(1\)     & \(p\)                           & \(p\)       & \(\Theta_p(\rho)\)                       & \(1/p\) \\
SPIDER / SVRG        & \(1\)     & \(\mathbb{I}\{t \pmod E = 0\}\) & \(1/E\)     & \(\Theta_p(\rho)\)                       & \(E\) \\
\hline

\rowcolor{orange!10} \multicolumn{6}{l}{\textit{Family 3: Second-Order Recursion and Resetting (Second-Order)} \quad \textbf{Asm:} \ref{ass:predictable_params},\ref{ass:subgaussian}, \ref{ass:lipschitz}, \ref{ass:subg_hessian}, \ref{ass:smoothness}} \\
\rowcolor{orange!10} \multicolumn{6}{l}{\quad \textbf{Correction Term} \(\mathcal{T}_t = \beta_t \bigl(G(w_{t-1},\xi_t)-[G(w_t,\xi_t)+\nabla G(w_t,\xi_t)(w_{t-1}-w_t)]\bigr)\)} \\
\rowcolor{orange!10} \multicolumn{6}{l}{\quad \textbf{Step Size} \(\eta = \Theta_p\left(\frac{1}{L}\sigma_L T^{-\frac{1}{2}}\wedge \frac{1}{\gamma}\left(\frac{\sigma_\gamma^4}{\sigma}\right)^{\frac{1}{3}}T^{-\frac{2}{3}}\wedge \sqrt{\frac{1}{\alpha}\left(\frac{\sigma_\alpha^9}{\sigma^4}\right)^{\frac{1}{5}}}T^{-\frac{3}{5}}\right)\)} \\
\rowcolor{orange!10} \multicolumn{6}{l}{\quad \textbf{Base Rate \(\rho = \left(\frac{\sigma_\gamma}{\sigma}\right)^{\frac{4}{3}}T^{-\frac{2}{3}} \vee \left(\frac{\sigma_\alpha}{\sigma}\right)^{\frac{6}{5}}T^{-\frac{4}{5}}\)}} \\
Second-Order Momentum  & \(\beta\) & \(0\)                           & \(1-\beta\) & \(\Theta_p(T^{-\frac{2}{3}} \vee \rho)\) & \(1\) \\
Second-Order PAGE    & \(1\)     & \(p\)                           & \(p\)       & \(\Theta_p(\rho)\)                       & \(1/p\) \\
Second-Order SPIDER  & \(1\)     & \(\mathbb{I}\{t \pmod E = 0\}\) & \(1/E\)     & \(\Theta_p(\rho)\)                       & \(E\) \\
\hline\hline

\end{tabular}
\end{table*}

\newpage
\subsection{Proof for Theorem~\ref{thm:mirror_stationarity} of First-Order Stationarity for Stochastic Optimization}
\begin{proof}[Proof for Theorem~\ref{thm:mirror_stationarity}, Application 1: Stochastic Optimization with Mirror Descent]
    The update rule of the Mirror Descent framework is given by
    \(w_{t+1} = w_t - \eta_t P(w_t, U_t, \eta_t)\) using proximal gradient mapping \(P(w_t, U_t, \eta_t)\) with update vector \(U_t\) and step size \(\eta_t\).
    Moreover, in Application 1: Unconstrained Optimization, the update vector \(U_t = \frac{v_t}{\|v_t\|_\ast}\) is the normalized direction of the unified estimator \(v_t\) 
    of the objective gradient \(g(w_t) = \nabla f(w_t)\) at the \(t\)-th iteration.
    \[
    w_{t+1} = w_t - \eta_t P(w_t, U_t, \eta_t) 
    = w_t - \eta_t P\left(w_t, \frac{v_t}{\|v_t\|_\ast}, \eta_t\right) 
    \]
    By smoothness of \(f(w)\) (Assumption~\ref{ass:lipschitz} for \(g(w)=\nabla f(w)\))  and the above update rule, we have
    \[
    f(w_{t+1}) - f(w_t) \le - \eta_t \left\langle \nabla f(w_t), P\left(w_t, \frac{v_t}{\|v_t\|_\ast}, \eta_t\right) \right\rangle + \frac{L \eta_t^2}{2} 
    \left\| P\left(w_t, \frac{v_t}{\|v_t\|_\ast}, \eta_t\right) \right\|^2
    \]
    We upperbound the 1st term on the right-hand side by 
    noting the tracking error \(e_t \equiv v_t - g(w_t) = v_t - \nabla f(w_t)\) hence \(v_t = \nabla f(w_t) + e_t\),
    and invoking Lemma~\ref{lemma:bound_for_proximal_gradient_mapping_with_normalized_direction} 
    with \(u\gets \nabla f(w_t), \delta\gets e_t, \eta\gets \eta_t\).
    \[
    -\eta_t \left\langle \nabla f(w_t), P\left(w_t, \frac{v_t}{\|v_t\|_\ast}, \eta_t\right) \right\rangle 
    \leq -\frac{\eta_t}{4} \left\langle \nabla f(w_t), P\left(w_t, \frac{\nabla f(w_t)}{\|\nabla f(w_t)\|_\ast}, \frac{\eta_t}{2}\right) \right\rangle 
    + 2 \eta_t \|e_t\|_\ast
    \]
    We upperbound the 2nd term on the right-hand side by 
    using the properties of the proximal gradient mapping (Lemma~\ref{lemma:properties_proximal_gradient_mapping})
    such that \(\|P(w, u, \eta)\| \leq \|u\|_\ast\), we have:
    \[
    \frac{L \eta_t^2}{2} \left\| P\left(w_t, \frac{v_t}{\|v_t\|_\ast}, \eta_t\right) \right\|^2
    \leq \frac{L \eta_t^2}{2} \left\| \frac{v_t}{\|v_t\|_\ast} \right\|_\ast^2 = \frac{L \eta_t^2}{2} 
    \]
    Substituting the above upper bounds for 1st and 2nd terms on the right-hand side, multiplying by 4:
    \[
    \eta_t \left\langle \nabla f(w_t), P\left(w_t, \frac{\nabla f(w_t)}{\|\nabla f(w_t)\|_\ast}, \frac{\eta_t}{2}\right) \right\rangle 
    \leq 4\left[f(w_t) - f(w_{t+1})\right] + 2L \eta_t^2 + 8 \eta_t \|e_t\|_\ast
    \]

    By introducing the infimum \(\inf_{w\in\mathcal{W}} f(w)\) of the objective \(f(w)\) over \(\mathcal{W}\),
    then \(\inf_{w\in\mathcal{W}} f(w) \leq f(w_{T+1})\), and summing up the above inequality from \(t=1\) to \(T\):
    \[
    \sum_{t=1}^T \eta_t \left\langle \nabla f(w_t), P\left(w_t, \frac{\nabla f(w_t)}{\|\nabla f(w_t)\|_\ast}, \frac{\eta_t}{2}\right) \right\rangle 
    \leq 4\left[f(w_1) - \inf_{w\in\mathcal{W}}f(w)\right] + 2L \sum_{t=1}^T \eta_t^2 + 8 \sum_{t=1}^T \eta_t \|e_t\|_\ast
    \]
    Applying the inner product lower bound in properties of the proximal gradient mapping (Lemma~\ref{lemma:properties_proximal_gradient_mapping})
    such that \(\|P(w, u, \eta)\|^2 \leq \langle u, P(w, u, \eta)\rangle\), we have:
    \[
    \sum_{t=1}^T \eta_t \|\nabla f(w_t)\|_\ast \left\|P\left(w_t, \frac{\nabla f(w_t)}{\|\nabla f(w_t)\|_\ast}, \frac{\eta_t}{2}\right)\right\|^2
    \leq 4[f(w_1) - \inf_{w\in\mathcal{W}}f(w)] + 2L \sum_{t=1}^T \eta_t^2 + 8 \sum_{t=1}^T \eta_t \|e_t\|_\ast
    \]

    \noindent\textbf{Upper bound on stationarity for all special cases.}
    Since \(\eta_t=\eta\) are constants in all special cases, divide the inequality above by \(\eta T\):
    \[
    \frac{1}{T}\sum_{t=1}^T \|\nabla f(w_t)\|_\ast
    \left\|P\left(w_t,\frac{\nabla f(w_t)}{\|\nabla f(w_t)\|_\ast},\frac{\eta}{2}\right)\right\|^2
    \le
    \frac{4[f(w_1)-\inf_{w\in\mathcal{W}}f(w)]}{\eta T}+2L\eta+\frac{8}{T}\sum_{t=1}^T\|e_t\|_\ast.
    \]

    We define these notations for brevity, 
    \(P_t \coloneqq P\!\left(w_t,\frac{\nabla f(w_t)}{\|\nabla f(w_t)\|_\ast},\frac{\eta}{2}\right), \; \Delta_f\coloneqq f(w_1)-\inf_{w\in\mathcal{W}}f(w).\)
    Also, we have the trivial bound
    \(\frac{1}{T}\sum_{t=1}^T\|P_t\|^2\le 1,\)
    since \(\|P(w,u,\eta)\|\le\|u\|_\ast\) by Lemma~\ref{lemma:properties_proximal_gradient_mapping} and therefore
    \(\|P_t\|\leq\left\|\frac{\nabla f(w_t)}{\|\nabla f(w_t)\|_\ast}\right\|_\ast=1\).
    \[
    \frac{1}{T}\sum_{t=1}^T\|\nabla f(w_t)\|_\ast\|P_t\|^2
    \le \frac{4\Delta_f}{\eta T}+2L\eta+\frac{8}{T}\sum_{t=1}^T\|e_t\|_\ast.
    \]
    Substituting the upper bounds of \(\sum_{t=1}^T\|e_t\|_\ast\) for three families  
    with \(\|U_t\|_\ast =\left\|\frac{v_t}{\|v_t\|_\ast}\right\|_\ast=1\) (Assumption~\ref{ass:update_bound} is automatically satisfied with \(G=1\) and no need for extra assumption), 
    and introducing \(\sigma_L := (\Delta_f L)^{\frac{1}{2}}\) with \(L\) from Assumption~\ref{ass:lipschitz}, 
    \(\sigma_\ell := (\Delta_f \ell)^{\frac{1}{2}}\) with \(\ell\) from Assumption~\ref{ass:subg_lipschitz},
    \(\sigma_\gamma := (\Delta_f \gamma)^{\frac{1}{2}}\) with \(\gamma\) from Assumption~\ref{ass:subg_hessian},
    \(\sigma_\alpha := (\Delta_f^2 \alpha)^{\frac{1}{3}}\) with \(\alpha\) from Assumption~\ref{ass:smoothness}, 
    we obtain the following upperbounds of \(\frac{1}{T}\sum_{t=1}^T \|\nabla f(w_t)\|_\ast\!\left\|P_t\right\|^2\) for three cases in each of three families:

        \textbf{Family 1, Case 1 (Momentum (SGD-M))} \((p_t=0,\beta_t=\beta)\):
        By invoking the established upper bound of \(\|e_t\|_*\) for the estimator \(v_t\) with zeroth-order correction term \(\mathcal{T}_t\), the selection of \(p_t=0, \beta_t =\beta\) and \(\eta_t=\eta\) in the third-level section of Appendix~\ref{subsubsec:family1case1}, and noting that \(\sum_{t=1}^T\beta^t = \frac{\beta(1-\beta^{T})}{1-\beta}\leq \frac{1}{1-\beta}\) for \(\beta\in(0, 1)\), we show the following upper bound for \(\frac{1}{T}\sum_{t=1}^T\|e_t\|_*\):
        \[
        \frac{1}{T}\sum_{t=1}^T \|e_t\|_* \leq \frac{C\left(\frac{\delta}{2T},\kappa\right) \sigma/\sqrt{B}}{(1-\beta)T}
        + \frac{L\eta}{1-\beta} + C\left(\frac{\delta}{2T},\kappa\right) \sigma \sqrt{1-\beta}
        \]

        By selecting the batch size \(B = 1\) and 
        noting \(C\!\left(\frac{\delta}{2T},\kappa\right) \leq 2 \sqrt{2[\kappa \vee \log\frac{2T}{\delta}]}\), we have:
        \begin{eqnarray*}
        & &\frac{1}{T}\sum_{t=1}^T \|\nabla f(w_t)\|_*\|P_t\|^2\leq\frac{4\Delta_f}{\eta T}+2L\eta
        +8 \cdot \frac{1}{T}\sum_{t=1}^T \|e_t\|_*
        \\
        & \leq & \frac{4 \Delta_f}{\eta T} + 2L\cdot \eta + \frac{16 \sigma\sqrt{2[\kappa\vee\log\frac{2T}{\delta}]}}{(1-\beta)T}
        + 16\sigma \sqrt{2[\kappa \vee\log\frac{2T}{\delta}]}\cdot \sqrt{1-\beta} + 8L\cdot \frac{\eta}{1-\beta}
        \end{eqnarray*}
        By letting \(C_1 = 4 \Delta_f, C_2 = 2L, C_3 = 16 \sigma\sqrt{2[\kappa\vee\log\frac{2T}{\delta}]}, C_4 = 16 \sigma \sqrt{2[\kappa \vee \log\frac{2T}{\delta}]}, C_5 = 8 L\) 
        in Lemma~\ref{lemma:opt_bound_stepsize_momentum},
        selecting the step size and momentum parameter as
        \(\eta = \min\Big\{\left(\frac{C_1}{C_2 T}\right)^{\frac{1}{2}}, \allowbreak \left(\frac{C_1^3}{C_4^2 C_5 T^3}\right)^{\frac{1}{4}}\Big\} = \frac{1}{L}\min\Big\{\sqrt{2} \sigma_L T^{-\frac{1}{2}}, \allowbreak \left(\frac{\sigma_L^6}{64\sigma^2}\right)^{\frac{1}{4}} [\kappa \vee \log\frac{2T}{\delta}]^{-\frac{1}{4}} T^{-\frac{3}{4}} \Big\},\)
        \(1-\beta = \max\Big\{\left(\frac{C_3}{C_4 T}\right)^{\frac{2}{3}}, \allowbreak \left(\frac{C_1 C_5}{C_4^2 T}\right)^{\frac{1}{2}}\Big\} = \max\Big\{T^{-\frac{2}{3}}, \allowbreak \left(\frac{\sigma_L^2/\sigma^2}{16[\kappa \vee \log\frac{2T}{\delta}]}\right)^{\frac{1}{2}} T^{-\frac{1}{2}}\Big\}\)
        when \(T[\kappa \vee \log\frac{2T}{\delta}]\geq \frac{\sigma_L^2}{16 \sigma^2}\), we have:
        \begin{eqnarray*}
            & &\frac{1}{T}\sum_{t=1}^T\|\nabla f(w_t)\|_\ast \|P_t\|^2
            \leq 
            2\left(\frac{C_1 C_2}{T}\right)^{\frac{1}{2}} + 2\left(\frac{C_3 C_4^2}{T}\right)^{\frac{1}{3}}
            + 3\left(\frac{C_1 C_4^2 C_5}{T}\right)^{\frac{1}{4}}
            \\
            &\leq& 
            4\sqrt{2} \sigma_L T^{-\frac{1}{2}} +  
            32\sqrt{2}\sigma \left(\frac{[\kappa \vee \log\frac{2T}{\delta}]^{\frac{3}{2}}}{T}\right)^{\frac{1}{3}}
            + 24 \sqrt{\sigma_L \sigma}\left(\frac{4[\kappa \vee \log\frac{2T}{\delta}]}{T}\right)^{\frac{1}{4}}
        \end{eqnarray*}
        To analyze the iteration complexity, 
        the following inequalities of \(T\) are sufficient to show that \(\frac{1}{T}\sum_{t=1}^T\|\nabla f(w_t)\|_\ast \|P_t\|^2 \leq \varepsilon\) 
        for some \(\varepsilon > 0\) as 1st, 2nd and 3rd terms are upper bounded by \(\varepsilon/2, \varepsilon/4\) and \(\varepsilon/4\) respectively,
        noting \(\kappa\geq 1\) and \(T\geq \frac{\sigma_L^2/\sigma^2}{16\kappa}\) 
        implies \(T[\kappa \vee \log\frac{2T}{\delta}]\geq T\kappa \geq \frac{\sigma_L^2}{16 \sigma^2}\):
        \begin{align*}
            T \geq& \frac{\sigma_L^2/\sigma^2}{16\kappa} \vee \frac{128 \sigma_L^2}{\varepsilon^2} \vee \frac{\kappa \sigma^3}{(\varepsilon/128\sqrt{2})^3}
            \vee \frac{4\kappa\cdot \sigma_L^2\sigma^2}{(\varepsilon/96)^4},\\
            &\frac{\log^{\frac{3}{2}}\frac{2T}{\delta}}{T} \leq \left(\frac{\varepsilon}{128\sqrt{2}\sigma}\right)^3,\,
            \frac{\log\frac{2T}{\delta}}{T}\leq\frac{1}{4}\left( \frac{\varepsilon}{96\sqrt{\sigma_L \sigma}}\right)^4
        \end{align*}
        By substituting \(\delta \gets \delta/2\) and \((q,\varepsilon) \gets (\frac{3}{2},\left(\frac{\varepsilon}{128\sqrt{2}\sigma}\right)^3), (1,\frac{1}{4}\left( \frac{\varepsilon}{96\sqrt{\sigma_L \sigma}}\right)^4)\) in Lemma~\ref{lemma:complexity_log}, the following condition of \(T\)
        is sufficient to guarantee the above inequalities,
        and therefore \(\frac{1}{T}\sum_{t=1}^T\|\nabla f(w_t)\|_\ast \|P_t\|^2 \leq \varepsilon\).
        \begin{align*}
        T\geq& \frac{\sigma_L^2/\sigma^2}{16\kappa} \vee \frac{128 \sigma_L^2}{\varepsilon^2}
        \vee \frac{\sigma^3}{(\varepsilon/128\sqrt{2})^3}\left[\kappa\vee\mathe^{\frac{9}{4}}\log^{\frac{3}{2}}\left[\mathe\vee \frac{2\sigma^3}{(\varepsilon/128\sqrt{2})^3\delta} \right]\right]\\
        &\vee \frac{4\cdot \sigma_L^2\sigma^2}{(\varepsilon/96)^4}\left[
            \kappa\vee\mathe \log\left[\mathe\vee \frac{8\cdot \sigma_L^2\sigma^2}{(\varepsilon/96)^4\delta} \right]
        \right]
        \end{align*}
        In particular, for all \(\varepsilon \in \mathbb{R}_+\) is sufficiently small such 
        that \(\varepsilon \leq \varepsilon_0(\sigma_L, \sigma, \kappa, \delta)\),
        then the lower bound on \(T\) simplifies asymptotically to
        \[
            T \;\ge\; \max\Big\{\frac{4\mathe\cdot\sigma_L^2 \sigma^2}{(\varepsilon/96)^4}
            \log\frac{8\sigma_L^2 \sigma^2}{(\varepsilon/96)^4 \delta},\allowbreak \;
            \frac{\mathe^{\frac{9}{4}}\cdot\sigma^3}{(\varepsilon/128\sqrt{2})^3}\log^{\frac{3}{2}}\frac{2\sigma^3}{(\varepsilon/128\sqrt{2})^3\delta}\Big\}
        \]
        where the uniform upper bound \(\varepsilon_0(\sigma_L, \sigma, \kappa, \delta)\) of \(\varepsilon\) is obtained by noting \(\kappa\geq 1\) and solving the following inequalities for \(\varepsilon\)
        to ensure the dominated terms in the above asymptotic bound are larger than the other terms
        of the lower bound on \(T\).
        \[
        \left\{
        \begin{aligned}
         &\frac{2\sigma^3}{(\varepsilon/128\sqrt{2})^3\delta} \geq \mathe,\,
         \frac{8\sigma_L^2\sigma^2}{(\varepsilon/96)^4\delta} \geq \mathe\\
         &\mathe \log\frac{8\sigma_L^2\sigma^2}{(\varepsilon/96)^4\delta} \geq \kappa,\,
         \mathe^{\frac{9}{4}}\log^{\frac{3}{2}}\frac{2\sigma^3}{(\varepsilon/128\sqrt{2})^3\delta} 
         \geq \kappa\\
         &\frac{4\mathe \cdot \sigma_L^2 \sigma^2}{(\varepsilon/96)^4} \geq \frac{\sigma_L^2/\sigma^2}{16\kappa},\,
         \frac{4\mathe\cdot \sigma_L^2 \sigma^2}{(\varepsilon/96)^4} \geq \frac{128\sigma_L^2}{\varepsilon^2}
        \end{aligned}
        \right.
        \]
        The explicit expression of \(\varepsilon_0(\sigma_L, \sigma, \kappa, \delta) \equiv \sqrt{\sigma_L \sigma} \cdot \varepsilon_1(\kappa, \delta) \vee \sigma \cdot \varepsilon_2(\kappa, \delta) \) is given by:
        \[
        \begin{aligned}
        \varepsilon_1(\kappa, \delta) &= 96 \left( \frac{8}{\delta \, \mathe^{1 \vee (\kappa/\mathe)}} \right)^{\!1/4}\\
        \varepsilon_2(\kappa, \delta) &= 1152\sqrt{2}\mathe^{1/2} \;\wedge\; 192\sqrt{2}(\mathe\kappa)^{\!1/4} \;\wedge\; 128\sqrt{2} \left( \frac{2}{\delta \, \mathe^{1 \vee (\kappa^{2/3}/\mathe^{3/2})}} \right)^{\!1/3}        \end{aligned}
        \]
        Therefore, the iteration complexity is 
        \(T = \mathcal{O}\left(\frac{\sigma_L^2\sigma^2}{\varepsilon^4}\log\frac{\sigma_L^2\sigma^2}{\varepsilon^4\delta} \vee \frac{\sigma^3}{\varepsilon^3}\log^{\frac{3}{2}}\frac{\sigma^3}{\varepsilon^3\delta}\right)\),
        and the oracle complexity is \(N = B\cdot T = \mathcal{O}\left(\frac{\sigma_L^2\sigma^2}{\varepsilon^4}\log\frac{\sigma_L^2\sigma^2}{\varepsilon^4\delta} \vee \frac{\sigma^3}{\varepsilon^3}\log^{\frac{3}{2}}\frac{\sigma^3}{\varepsilon^3\delta}\right)\)
        with the selection of batch size \(B=1\) 
        to guarantee \(\frac{1}{T}\sum_{t=1}^T\|\nabla f(w_t)\|_\ast \|P_t\|^2 \leq \varepsilon\) with probability at least \(1-\delta\)
        for sufficiently small \(\varepsilon\).

        \textbf{Family 1, Case 2 (Probabilistic Momentum)} \((p_t=p,\beta_t=1)\): 
        By invoking the established upper bound of \(\|e_t\|_*\) for the estimator \(v_t\) with zeroth-order correction term \(\mathcal{T}_t\), the selection of \(p_t=p, \beta_t =1\) and \(\eta_t=\eta\) in the third-level section of Appendix~\ref{subsubsec:family1case2}, we show the following upper bound of \(\frac{1}{T}\sum_{t=1}^T\|e_t\|_*\):
        \[
        \frac{1}{T}\sum_{t=1}^T\|e_t\|_* \leq C\left(\frac{\delta}{4T},\kappa\right) \sigma/\sqrt{B}+\frac{L\eta}{p}\log\frac{4T}{\delta}
        \]

        By selecting the batch size \(B = \frac{1}{p}\), 
        and noting \(C\left(\frac{\delta}{4T},\kappa\right)  \leq 2 \sqrt{2[\kappa \vee\log\frac{4T}{\delta}]}\), we have:
        \begin{eqnarray*}
        & & \frac{1}{T}\sum_{t=1}^T \|\nabla f(w_t)\|_* \|P_t\|^2 \leq\frac{4\Delta_f}{\eta T}+2L\eta
        +8 \cdot \frac{1}{T}\sum_{t=1}^T\|e_t\|_*
        \\
        & \leq & \frac{4\Delta_f}{\eta T}+2L\cdot \eta
        + 0 + 16 \sigma\sqrt{2[\kappa \vee\log\frac{4T}{\delta}]}\cdot \sqrt{p}
        + 8L\log\frac{4T}{\delta} \cdot\frac{\eta}{p}
        \end{eqnarray*}
        Letting \(C_1 = 4 \Delta_f, C_2 = 2L, C_3 = 0, C_4 = 16 \sigma\sqrt{2[\kappa \vee \log\frac{4T}{\delta}]}, C_5 = 8L\log\frac{4T}{\delta}\) in Lemma~\ref{lemma:opt_bound_stepsize_reset_batch},
        selecting step size and reset probability as
        \(\eta = \min\Big\{\left(\frac{C_1}{C_2}\right)^{\frac{1}{2}}T^{-\frac{1}{2}}, \allowbreak \left(\frac{2 C_1^3}{C_4^2 C_5}\right)^{\frac{1}{4}} T^{-\frac{3}{4}} \Big\} =\frac{1}{L}\min\Big\{\sqrt{2} \sigma_L T^{-\frac{1}{2}}, \allowbreak \frac{1}{2}\left(\frac{\sigma_L^6}{2\sigma^2}\right)^{\frac{1}{4}} [\kappa\vee\log\frac{4T}{\delta}]^{-\frac{1}{4}}[\log\frac{4T}{\delta}]^{-\frac{1}{4}} T^{-\frac{3}{4}} \Big\},\)
        \(B = \frac{1}{p} = \left\lceil \left( \frac{C_4^2 T}{8 C_1 C_5} \right)^{\frac{1}{2}} \right\rceil = \left\lceil \frac{\sqrt{2}\sigma}{\sigma_L}\left(\frac{\kappa \vee \log\frac{4T}{\delta}}{\log\frac{4T}{\delta}}\right)^{\frac{1}{2}} T^{\frac{1}{2}} \right\rceil\), then 
        when \(T\left(\frac{\kappa \vee \log\frac{4T}{\delta}}{\log\frac{4T}{\delta}}\right) \geq \frac{\sigma_L^2}{2\sigma^2}\), we have:
        \begin{eqnarray*}
            & &\frac{1}{T}\sum_{t=1}^T\|\nabla f(w_t)\|_\ast \|P_t\|^2
            \leq 2\left(\frac{C_1 C_2}{T}\right)^{\frac{1}{2}} 
            + 4\left(\frac{C_1 C_4^2 C_5}{2T}\right)^{\frac{1}{4}}
            \\
            &\leq& 4\sqrt{2} \sigma_LT^{-\frac{1}{2}} 
            + 32 \sqrt{\sigma_L \sigma}\left(\frac{2[\kappa\vee\log \frac{4T}{\delta}]\log \frac{4T}{\delta}}{T}\right)^{\frac{1}{4}}
        \end{eqnarray*}
        To analyze the iteration complexity,
        the following inequalities of \(T\) are sufficient to show \(\frac{1}{T}\sum_{t=1}^T\|\nabla f(w_t)\|_\ast \|P_t\|^2 \leq \varepsilon\)
        for some \(\varepsilon > 0\) as 1st, 2nd terms are upper bounded by \(\varepsilon/2\), and noting that \(\kappa\geq 1\) and \(T\geq \frac{\sigma_L^2}{2\sigma^2}\) implies \(T\left(\frac{\kappa \vee \log\frac{4T}{\delta}}{\log\frac{4T}{\delta}}\right) \geq T\geq \frac{\sigma_L^2}{2\sigma^2}\):
        \begin{align*}
            T \geq& \frac{\sigma_L^2}{2\sigma^2} \vee \frac{128 \sigma_L^2}{\varepsilon^2},\,\frac{\log\frac{4T}{\delta}}{T} \leq
            \frac{(\varepsilon/64)^4}{2\kappa\cdot\sigma_L^2 \sigma^2},\,
            \frac{\log^2\frac{4T}{\delta}}{T} \leq
            \frac{(\varepsilon/64)^4}{2\sigma_L^2 \sigma^2}
        \end{align*}
        By substituting \(\delta \gets \delta/4\) and \((q, \varepsilon) \gets (1, \frac{(\varepsilon/64)^4}{2\kappa\cdot\sigma_L^2 \sigma^2}),(2, \frac{(\varepsilon/64)^4}{2\sigma_L^2 \sigma^2})\) 
        in Lemma~\ref{lemma:opt_bound_stepsize_reset_batch}, the following condition of \(T\) is sufficient to show \(\frac{1}{T}\sum_{t=1}^T\|\nabla f(w_t)\|_\ast \|P_t\|^2 \leq \varepsilon\):
        \[
        T \geq \frac{\sigma_L^2}{2\sigma^2} \vee \frac{128 \sigma_L^2}{\varepsilon^2}
        \vee \frac{2\sigma_L^2 \sigma^2}{(\varepsilon/64)^4} 
        \left[\mathe^4\log^2\left[\mathe\vee\frac{8\sigma_L^2 \sigma^2}{(\varepsilon/64)^4\delta}\right]
        \vee \mathe \kappa \log\left[\mathe\vee \kappa\frac{8\sigma_L^2 \sigma^2}{(\varepsilon/64)^4\delta}\right]
        \right]
        \]
        In particular, for all \(\varepsilon\in \mathbb{R}_+\) is sufficiently small such that \(\varepsilon \leq \varepsilon_0(\sigma_L, \sigma, \kappa, \delta)\),
        then the lower bound on \(T\) simplifies asymptotically to
        \[
            T \geq \frac{\mathe^4\cdot2\sigma_L^2 \sigma^2}{(\varepsilon/64)^4} \log^2\frac{8\sigma_L^2 \sigma^2}{(\varepsilon/64)^4\delta},\,
        \]
        where the uniform upper bound \(\varepsilon_0(\sigma_L, \sigma, \kappa, \delta)\) 
        is obtained by noting \(\kappa \geq 1\) and solving the following inequalities for \(\varepsilon\)
        to ensure the dominated terms in the above asymptotic bound are larger than the other terms
        of the lower bound on \(T\).

        \[
            \left\{
            \begin{aligned}
                &\frac{8\sigma_L^2\sigma^2}{(\varepsilon/96)^4\delta} \geq \mathe,\,
                \frac{8\sigma_L^2\sigma^2}{(\varepsilon/96)^4\delta} \geq \kappa\\
                & \frac{\mathe^4}{2\mathe} \log \frac{8\sigma_L^2\sigma^2}{(\varepsilon/96)^4\delta} \geq \kappa\\
                &\frac{\mathe^4\cdot 2\sigma_L^2 \sigma^2}{(\varepsilon/64)^4} \geq \frac{\sigma_L^2}{2\sigma^2},\,
                \frac{\mathe^4\cdot 2\sigma_L^2 \sigma^2}{(\varepsilon/64)^4} \geq \frac{128\sigma_L^2}{\varepsilon^2}
            \end{aligned}
            \right.
        \]

        The explicit expression of \(\varepsilon_0(\sigma_L, \sigma, \kappa, \delta) \equiv \sqrt{\sigma_L\sigma}\cdot \varepsilon_1(\kappa, \delta)\vee \sigma \cdot \varepsilon_2(\kappa, \delta)\) is given by:
        \[
        \begin{aligned}
            \varepsilon_1(\kappa, \delta) &= 96 \left( \frac{8}{\delta \left( \mathe \vee \kappa \vee \mathe^{2\kappa/\mathe^3} \right)} \right)^{\!1/4}\\            
            \varepsilon_2(\kappa, \delta) &= 64\sqrt{2}\mathe \;\wedge\; 512\mathe^2        
        \end{aligned}
        \]
        Therefore, the iteration complexity is \(T=\mathcal{O}\left(\frac{\sigma_L^2\sigma^2}{\varepsilon^4}\log^2\frac{\sigma_L^2\sigma^2}{\varepsilon^4\delta}\right)\),
        and the expected value of oracle complexity is \(N=[B\cdot p + 1\cdot(1-p)]T=T(2-1/B) = \mathcal{O}\left(\frac{\sigma_L^2\sigma^2}{\varepsilon^4}\log^2\frac{\sigma_L^2\sigma^2}{\varepsilon^4\delta}\right)\)
        with the selection of batch size \(B=\frac{1}{p}\) to guarantee \(\frac{1}{T}\sum_{t=1}^T\|\nabla f(w_t)\|_\ast \|P_t\|^2 \leq \varepsilon\) with probability at least \(1-\delta\)
        for sufficiently small \(\varepsilon\).

        \textbf{Family 1, Case 3 (Periodic Momentum)} \((p_t=\mathbb{I}_{\{t\!\!\mod E=0\}},\beta_t=1)\):
        By invoking the established upper bound of \(\|e_t\|_*\) for the estimator \(v_t\) with zeroth-order correction term \(\mathcal{T}_t\), the selection of \(p_t=\mathbb{I}_{\{t \mod E= 0\}}, \beta_t =1\) and \(\eta_t=\eta\) in the third-level section of Appendix~\ref{subsubsec:family1case3}, we show the following upper bound of \(\frac{1}{T}\sum_{t=1}^T\|e_t\|_*\):
        \[
        \frac{1}{T}\sum_{t=1}^T \|e_t\|_* \leq C\left(\frac{\delta}{2T/E},\kappa\right) \sigma/\sqrt{B}+L\eta E
        \]
        By selecting the batch size \(B = E\), noting \(C\left(\frac{\delta}{2T/E},\kappa\right)\leq C\left(\frac{\delta}{2T},\kappa\right)  \leq 2 \sqrt{2[\kappa \vee\log\frac{2T}{\delta}]}\), we have:
        \begin{eqnarray*}
        & & \frac{1}{T}\sum_{t=1}^T \|\nabla f(w_t)\|_* \|P_t\|^2\leq \frac{4\Delta_f}{\eta T}+2L\eta
        +8 \cdot \frac{1}{T}\sum_{t=1}^T\|e_t\|_*
        \\
        & \leq & 
        \frac{4 \Delta_f}{\eta T} + 2 L \cdot \eta + 0
        + 16 \sigma \sqrt{2[\kappa\vee\log\frac{2T}{\delta}]} \cdot \frac{1}{\sqrt{E}}
        + 8L \cdot \eta E
        \end{eqnarray*}
        Therefore, by letting \(C_1 = 4 \Delta_f, C_2 = 2L, C_3 =0, C_4 = 16 \sigma \sqrt{2[\kappa\vee\log\frac{2T}{\delta}]}, C_5 = 8L\) in Lemma~\ref{lemma:opt_bound_stepsize_reset_batch},
        selecting the step size \(\eta =\min\Big\{\left(\frac{C_1}{C_2}\right)^{\frac{1}{2}}T^{-\frac{1}{2}}, \allowbreak \left(\frac{2 C_1^3}{C_4^2 C_5}\right)^{\frac{1}{4}} T^{-\frac{3}{4}}\Big\} = \frac{1}{L}\min\{\sqrt{2} \sigma_L T^{-\frac{1}{2}}, \frac{1}{2}\left(\frac{\sigma_L^6}{2\sigma^2}\right)^{\frac{1}{4}} [\kappa\vee\log\frac{2T}{\delta}]^{-\frac{1}{4}} T^{-\frac{3}{4}}\}\) 
        and batch size \(B = E = \left\lceil \left( \frac{C_4^2 T}{8 C_1 C_5} \right)^{\frac{1}{2}} \right\rceil = \lceil \frac{\sqrt{2}\sigma}{\sigma_L}[\kappa\vee\log\frac{2T}{\delta}]^{\frac{1}{2}} T^{\frac{1}{2}} \rceil\),
        then when \(T [\kappa\vee\log\frac{2T}{\delta}] \geq \frac{\sigma_L^2}{2\sigma^2}\), we have:
        \begin{eqnarray*}
            & &\frac{1}{T}\sum_{t=1}^T\|\nabla f(w_t)\|_\ast \|P_t\|^2
            \leq 2\left(\frac{C_1 C_2}{T}\right)^{\frac{1}{2}} 
            + 4\left(\frac{C_1 C_4^2 C_5}{2T}\right)^{\frac{1}{4}}\\
            &\leq& 4\sqrt{2} \sigma_LT^{-\frac{1}{2}} 
            + 32 \sqrt{\sigma_L \sigma}\left(\frac{2[\kappa\vee\log \frac{2T}{\delta}]}{T}\right)^{\frac{1}{4}}
        \end{eqnarray*}
        To analyze the iteration complexity,
        the following inequalities of \(T\) are sufficient to show \(\frac{1}{T}\sum_{t=1}^T\|\nabla f(w_t)\|_\ast \|P_t\|^2 \leq \varepsilon\)
        for some \(\varepsilon > 0\) as 1st, 2nd terms are upper bounded by \(\varepsilon/2\),
        and noting that \(\kappa\geq 1\) and \(T\geq \frac{\sigma_L^2}{2\kappa \cdot \sigma^2}\) implies that \(T[\kappa \vee\log\frac{2T}{\delta}] \geq T\cdot \kappa \geq \frac{\sigma_L^2}{2\sigma^2}\):
        \begin{align*}
            T \geq& \frac{\sigma_L^2}{2\kappa\cdot\sigma^2} \vee \frac{128 \sigma_L^2}{\varepsilon^2}\vee \frac{2\kappa \cdot \sigma_L^2\sigma^2}{(\varepsilon/64)^4},\,
            \frac{\log\frac{2T}{\delta}}{T} \leq \left(\frac{\varepsilon}{64\sqrt{\sigma_L\sigma}}\right)^4
            = \frac{(\varepsilon/64)^4}{2\sigma_L^2 \sigma^2}
        \end{align*}
        By substituting \(\delta \gets \delta/2\) and \((q, \varepsilon) \gets (1, \frac{(\varepsilon/64)^4}{2\sigma_L^2 \sigma^2})\) 
        in Lemma~\ref{lemma:opt_bound_stepsize_reset_batch}, the following condition of \(T\) is sufficient to show \(\frac{1}{T}\sum_{t=1}^T\|\nabla f(w_t)\|_\ast \|P_t\|^2 \leq \varepsilon\):
        \[
        T \geq \frac{\sigma_L^2}{2\kappa\cdot\sigma^2} \vee \frac{128 \sigma_L^2}{\varepsilon^2}
        \vee \frac{2\sigma_L^2 \sigma^2}{(\varepsilon/64)^4}\left[\kappa \vee \mathe \log\left[\mathe\vee\frac{4\sigma_L^2 \sigma^2}{(\varepsilon/64)^4\delta}\right]\right]
        \]
        In particular, for all \(\varepsilon\in \mathbb{R}_+\) is sufficiently small such that \(\varepsilon \leq \varepsilon_0(\sigma_L, \sigma, \kappa, \delta)\),
        then the lower bound on \(T\) simplifies asymptotically to
        \[
            T \geq \frac{\mathe \cdot 2\sigma_L^2 \sigma^2}{(\varepsilon/64)^4} \log\frac{4\sigma_L^2 \sigma^2}{(\varepsilon/64)^4\delta},
        \]
        where the uniform upper bound \(\varepsilon_0(\sigma_L, \sigma, \kappa, \delta)\) 
        is obtained by noting \(\kappa\geq 1\) and solving the following inequalities for \(\varepsilon\)
        to ensure the dominated terms in the above asymptotic bound are larger than the other terms
        of the lower bound on \(T\).

        \[
            \left\{
            \begin{aligned}
                &\frac{4\sigma_L^2\sigma^2}{(\varepsilon/96)^4\delta} \geq \mathe,\,
                \mathe \log\frac{4\sigma_L^2 \sigma^2}{(\varepsilon/64)^4\delta} \geq \kappa\\
                &\frac{\mathe \cdot 2\sigma_L^2 \sigma^2}{(\varepsilon/64)^4} \geq \frac{\sigma_L^2}{2\kappa\cdot\sigma^2},\,
                \frac{\mathe \cdot 2\sigma_L^2 \sigma^2}{(\varepsilon/64)^4} \geq \frac{128\sigma_L^2}{\varepsilon^2}
            \end{aligned}
            \right.
        \]

        The explicit expression of \(\varepsilon_0(\sigma_L, \sigma, \kappa, \delta) \equiv \sqrt{\sigma_L\sigma}\cdot \varepsilon_1(\kappa, \delta)\vee \sigma \cdot \varepsilon_2(\kappa, \delta)\) is given by:
        \[
        \begin{aligned}
            \varepsilon_1(\kappa, \delta) &= 96 \left( \frac{4}{\mathe\delta} \right)^{\!1/4} \;\wedge\; 64 \left( \frac{4}{\delta \mathe^{\kappa/\mathe}} \right)^{\!1/4}\\            
            \varepsilon_2(\kappa, \delta) &= 64\sqrt{2}(\mathe\kappa)^{\!1/4} \;\wedge\; 512\mathe^{1/2}        
        \end{aligned}
        \]
        Therefore, the iteration complexity is \(T=\mathcal{O}\left(\frac{\sigma_L^2\sigma^2}{\varepsilon^4}\log\frac{\sigma_L^2\sigma^2}{\varepsilon^4\delta}\right)\),
        and the oracle complexity is \(N=B\cdot T/E+1\cdot (T-T/E) = T(2-1/B) = \mathcal{O}\left(\frac{\sigma_L^2\sigma^2}{\varepsilon^4}\log\frac{\sigma_L^2\sigma^2}{\varepsilon^4\delta}\right)\)
        with the selection of batch size \(B=E\) to guarantee \(\frac{1}{T}\sum_{t=1}^T\|\nabla f(w_t)\|_\ast \|P_t\|^2 \leq \varepsilon\) with probability at least \(1-\delta\)
        for sufficiently small \(\varepsilon\).

        \textbf{Family 2, Case 1 (STORM)} \((p_t=0,\beta_t=\beta)\):
        By invoking the established upper bound of \(\|e_t\|_*\) for the estimator \(v_t\) with first-order correction term \(\mathcal{T}_t\), the selection of \(p_t=0, \beta_t =\beta\) and \(\eta_t=\eta\) in the third-level section of Appendix~\ref{subsubsec:family2case1}, and noting that \(\sum_{t=1}^T\beta^t = \frac{\beta(1-\beta^{T})}{1-\beta}\leq \frac{1}{1-\beta}, \frac{(1-\beta)^2}{1-\beta^2} \leq 1-\beta, \frac{\beta^2}{1-\beta^2}\leq \frac{1}{1-\beta}\) for \(\beta\in(0, 1)\) and the elementary inequality \(\sqrt{x+y} \leq \sqrt{x} + \sqrt{y}, \forall x, y\geq 0\), we show the following upper bound for \(\frac{1}{T}\sum_{t=1}^T\|e_t\|_*\):
        \[
        \frac{1}{T}\sum_{t=1}^T \|e_t\|_* \leq \frac{C\left(\frac{\delta}{2T},\kappa\right)\sigma/\sqrt{B}}{(1-\beta)T}+C\!\left(\frac{\delta}{2T},\kappa\right)\left[\sigma \sqrt{2(1-\beta)}+\ell\eta\sqrt{\frac{2}{1-\beta}}\right]
        \]

        By selecting the batch size \(B=1\) and 
        noting \(C\!\left(\frac{\delta}{2T},\kappa\right) \leq 2 \sqrt{2[\kappa \vee \log\frac{2T}{\delta}]}\), we have:
        \begin{eqnarray*}
        & & \frac{1}{T}\sum_{t=1}^T \|\nabla f(w_t)\|_*\|P_t\|^2 \leq
        \frac{4\Delta_f}{\eta T}+2L\eta
        +8\cdot \frac{1}{T}\sum_{t=1}^T \|e_t\|_*
        \\
        & \leq & \frac{4 \Delta_f}{\eta T} 
        + 2L \eta 
        + \frac{16 \sigma \sqrt{2[\kappa\vee\log\frac{2T}{\delta}]}}{(1-\beta)T}
        + 32\sigma \sqrt{\kappa \vee \log\frac{2T}{\delta}} \sqrt{1-\beta}
        + 32\ell \sqrt{\kappa \vee \log\frac{2T}{\delta}} \frac{\eta}{\sqrt{1-\beta}}
        \end{eqnarray*}

        Letting \(C_1 = 4 \Delta_f, C_2 = 2L, C_3 = 16 \sigma \sqrt{2[\kappa\vee\log\frac{2T}{\delta}]}, C_4 = 32\sigma \sqrt{\kappa \vee \log\frac{2T}{\delta}}, C_5 = 32\ell \sqrt{\kappa \vee \log\frac{2T}{\delta}}\) in Lemma~\ref{lemma:opt_bound_stepsize_momentum},
        selecting the step size and the momentum parameter as \\
        \(\eta = \min\Big\{\left(\frac{C_1}{C_2}\right)^{\frac{1}{2}}T^{-\frac{1}{2}}, \allowbreak \left(\frac{C_1^2}{C_4 C_5}\right)^{\frac{1}{3}}T^{-\frac{2}{3}}\Big\} = \min\Big\{ \frac{\sqrt{2}}{L}\sigma_L T^{-\frac{1}{2}}, \allowbreak \frac{1}{4\ell}\left(\frac{\sigma_\ell^4}{\sigma}\right)^{\frac{1}{3}}[\kappa \vee \log\frac{2T}{\delta}]^{-\frac{1}{3}} T^{-\frac{2}{3}} \Big\}\),
        \(1-\beta= \max\Big\{\left(\frac{C_3}{C_4 T}\right)^{\frac{2}{3}}, \allowbreak \left(\frac{C_1 C_5}{C_4^2 T}\right)^{\frac{2}{3}}\Big\} = \max\Big\{ 2^{-\frac{1}{3}} T^{-\frac{2}{3}}, \allowbreak \frac{1}{4}\left(\frac{(\sigma_\ell/\sigma)^4}{[\kappa \vee \log\frac{2T}{\delta}]}\right)^{\frac{1}{3}}T^{-\frac{2}{3}} \Big\}\)
        then when \(T [\kappa \vee \log\frac{2T}{\delta}]^{\frac{1}{2}} \geq \frac{\sigma_\ell^2}{8\sigma^2}\), we have:
        \begin{eqnarray*}
            & &\frac{1}{T}\sum_{t=1}^T\|\nabla f(w_t)\|_\ast \|P_t\|^2
            \leq 2\left(\frac{C_1 C_2}{T}\right)^{\frac{1}{2}} + 2\left(\frac{C_3 C_4^2}{T}\right)^{\frac{1}{3}}
            + 3\left(\frac{C_1 C_4 C_5}{T}\right)^{\frac{1}{3}}
            \\
            &\leq&
            4 \sqrt{2} \sigma_L T^{-\frac{1}{2}}
            + 32\sqrt{2}\sigma \left(\frac{2[\kappa \vee \log\frac{2T}{\delta}]^{\frac{3}{2}}}{T}\right)^{\frac{1}{3}}
            + 48 (\sigma_\ell^2 \sigma)^{\frac{1}{3}} \left(\frac{\kappa \vee \log\frac{2T}{\delta}}{T}\right)^{\frac{1}{3}}
        \end{eqnarray*}
        To analyze the iteration complexity,
        the following inequalities of \(T\) are sufficient to show that
        \(\frac{1}{T}\sum_{t=1}^T\|\nabla f(w_t)\|_\ast \|P_t\|^2 \leq \varepsilon\)
        for some \(\varepsilon>0\) 
        as 1st, 2nd, and 3rd terms are upper bounded by \(\varepsilon/2, \varepsilon/4\)
        and \(\varepsilon/4\) respectively, 
        and noting that \(\kappa\geq 1\) and \(T\geq \frac{\sigma_\ell^2/\sigma^2}{8\sqrt{\kappa}}\) 
        implies that \(T[\kappa \vee \log\frac{2T}{\delta}]^{\frac{1}{2}}\geq T\sqrt{\kappa} \geq \frac{\sigma_\ell^2}{8\sigma^2}\).
        \begin{align*}
            T \geq& \frac{\sigma_\ell^2/\sigma^2}{8\sqrt{\kappa}}
            \vee \frac{128 \sigma_L^2}{\varepsilon^2}\vee \frac{2 \kappa^{\frac{3}{2}}\cdot \sigma^3}{(\varepsilon/128\sqrt{2})^3} \vee  \frac{\kappa\cdot\sigma_\ell^2 \sigma}{(\varepsilon/192)^3},\,
            \frac{\log^{\frac{3}{2}}\frac{2T}{\delta}}{T} \leq \frac{1}{2}\left(\frac{\varepsilon}{128\sqrt{2}\sigma}\right)^3,\,
            \frac{\log\frac{2T}{\delta}}{T}\leq\frac{(\varepsilon/192)^3}{\sigma_\ell^2 \sigma}
        \end{align*}
        By substituting \(\delta \gets \delta/2\) and \((q,\varepsilon) \gets (\frac{3}{2},\frac{1}{2}\left(\frac{\varepsilon}{128\sqrt{2}\sigma}\right)^3), (1,\frac{(\varepsilon/192)^3}{\sigma_\ell^2 \sigma})\) in Lemma~\ref{lemma:complexity_log}, the following condition of \(T\)
        is sufficient to guarantee the above inequalities,
        and therefore \(\frac{1}{T}\sum_{t=1}^T\|\nabla f(w_t)\|_\ast \|P_t\|^2 \leq \varepsilon\).
        \begin{align*}
        T\geq& \frac{\sigma_\ell^2/\sigma^2}{8\sqrt{\kappa}} \vee \frac{128 \sigma_L^2}{\varepsilon^2}\\
        &  \vee \frac{2\sigma^3}{(\varepsilon/128\sqrt{2})^3}\left[\kappa^{\frac{3}{2}} \vee\mathe^{\frac{9}{4}}\log^{\frac{3}{2}}\left[\mathe\vee \frac{4\sigma^3}{(\varepsilon/128\sqrt{2})^3\delta} \right]\right]
        \vee \frac{\sigma_\ell^2 \sigma}{(\varepsilon/192)^3}\left[
            \kappa\vee\mathe \log\left[\mathe\vee \frac{2\sigma_\ell^2 \sigma}{(\varepsilon/192)^3\delta} \right]
        \right]
        \end{align*}
        In particular, for all \(\varepsilon \in \mathbb{R}_+\) is sufficiently small such 
        that \(\varepsilon \leq \varepsilon_0(\sigma_\ell, \sigma, \kappa, \delta)\),
        then the lower bound on \(T\) simplifies asymptotically to
        \[
            T \;\ge\; \max\Big\{
            \frac{2\mathe^{\frac{9}{4}}\cdot \sigma^3}{(\varepsilon/128\sqrt{2})^3}\log^{\frac{3}{2}}\frac{4\sigma^3}{(\varepsilon/128\sqrt{2})^3\delta},\allowbreak \;
            \frac{\mathe\cdot\sigma_\ell^2 \sigma}{(\varepsilon/192)^3} \log\frac{2\sigma_\ell^2 \sigma}{(\varepsilon/192)^3 \delta},\allowbreak \;
            \frac{2 \sigma_L^2}{(\varepsilon/8)^2}
            \Big\}.
        \]        
        where the uniform upper bound \(\varepsilon_0(\sigma_\ell, \sigma, \kappa, \delta)\) of \(\varepsilon\) is obtained by noting \(\kappa\geq 1\) and solving the following inequalities for \(\varepsilon\)
        to ensure the dominated terms in the above asymptotic bound are larger than the other terms
        of the lower bound on \(T\).
        \[
        \left\{
        \begin{aligned}
         &\frac{4\sigma^3}{(\varepsilon/128\sqrt{2})^3\delta} \geq \mathe,\,
         \frac{2\sigma_\ell^2\sigma}{(\varepsilon/192)^3\delta} \geq \mathe\\
         &\mathe \log\frac{2\sigma_\ell^2\sigma}{(\varepsilon/192)^3\delta} \geq \kappa,\,
         \mathe^{\frac{9}{4}}\log^{\frac{3}{2}}\frac{4\sigma^3}{(\varepsilon/128\sqrt{2})^3\delta} 
         \geq \kappa^{\frac{3}{2}}\\
         &\frac{\mathe \cdot \sigma_\ell^2 \sigma}{(\varepsilon/192)^3} \geq \frac{\sigma_\ell^2/\sigma^2}{8\sqrt{\kappa}} & &
        \end{aligned}
        \right.
        \]
        The explicit expression of \(\varepsilon_0(\sigma_\ell, \sigma, \kappa, \delta) \equiv (\sigma_\ell^2 \sigma)^{\frac{1}{3}} \cdot \varepsilon_1(\kappa, \delta) \vee \sigma \cdot \varepsilon_2(\kappa, \delta) \) is given by:
        \[
        \begin{aligned}
            \varepsilon_1(\kappa, \delta) &= 192 \left( \frac{2}{\delta \, \mathe^{1 \vee (\kappa/\mathe)}} \right)^{\!1/3}\\        
            \varepsilon_2(\kappa, \delta) &= 384(\mathe\sqrt{\kappa})^{\!1/3} \;\wedge\; 128\sqrt{2} \left( \frac{4}{\delta \, \mathe^{1 \vee (\kappa/\mathe^{3/2})}} \right)^{\!1/3}        
        \end{aligned}
        \]
        Therefore, the iteration complexity is \(T = \mathcal{O}\left( \frac{\sigma^3}{\varepsilon^3}\log^{\frac{3}{2}}\frac{\sigma^3}{\varepsilon^3\delta}\vee \frac{\sigma_\ell^2\sigma}{\varepsilon^3}\log\frac{\sigma_\ell^2\sigma}{\varepsilon^3\delta}\vee \frac{\sigma_L^2}{\varepsilon^2} \right)\),
        and the oracle complexity is \(N = B\cdot T = \mathcal{O}\left( \frac{\sigma^3}{\varepsilon^3}\log^{\frac{3}{2}}\frac{\sigma^3}{\varepsilon^3\delta}\vee \frac{\sigma_\ell^2\sigma}{\varepsilon^3}\log\frac{\sigma_\ell^2\sigma}{\varepsilon^3\delta}\vee \frac{\sigma_L^2}{\varepsilon^2} \right)\)
        with the batch size \(B=1\) 
        to guarantee \(\frac{1}{T}\sum_{t=1}^T\|\nabla f(w_t)\|_\ast \|P_t\|^2 \leq \varepsilon\) with probability at least \(1-\delta\)
        for sufficiently small \(\varepsilon\).

        \textbf{Family 2, Case 2 (PAGE / Loopless SVRG)} \((p_t=p,\beta_t=1)\):
        By invoking the established upper bound of \(\|e_t\|_*\) for the estimator \(v_t\) with zeroth-order correction term \(\mathcal{T}_t\), the selection of \(p_t=p, \beta_t =1\) and \(\eta_t=\eta\) in the third-level section of Appendix~\ref{subsubsec:family2case2}, we show the following upper bound of \(\frac{1}{T}\sum_{t=1}^T\|e_t\|_*\):
        \[
        \frac{1}{T}\sum_{t=1}^T \|e_t\|_* \leq C\left(\frac{\delta}{4T},\kappa\right)\sigma/\sqrt{B}+C\!\left(\frac{\delta}{4T},\kappa\right)\ell\eta\sqrt{\frac{2}{p}\log\frac{4T}{\delta}}
        \]

        By selecting the batch size \(B = \frac{1}{p}\)
        and noting \(C\!\left(\frac{\delta}{4T},\kappa\right) \leq 2 \sqrt{2[\kappa \vee \log\frac{4T}{\delta}]}\), we have:
        \begin{eqnarray*}
        & & \frac{1}{T}\sum_{t=1}^T \|\nabla f(w_t)\|_* \|P_t\|^2\leq \frac{4\Delta_f}{\eta T}+2L\eta
        +8\cdot \frac{1}{T} \sum_{t=1}^T \|e_t\|_*
        \\
        & \leq & 
        \frac{4 \Delta_f}{\eta T} + 2L\cdot \eta + 0 + 16\sigma \sqrt{2[\kappa\vee\log\frac{4T}{\delta}]}\cdot \sqrt{p} + 32\ell\sqrt{\log\frac{4T}{\delta}[\kappa\vee \log\frac{4T}{\delta}]}\cdot \frac{\eta}{\sqrt{p}}
        \end{eqnarray*}

        Letting \(C_1 = 4 \Delta_f, C_2 = 2L, C_3 = 0, C_4 = 16\sigma \sqrt{2[\kappa\vee\log\frac{4T}{\delta}]}, C_5 = 32\ell\sqrt{\log\frac{4T}{\delta}[\kappa\vee \log\frac{4T}{\delta}]}\) in Lemma~\ref{lemma:opt_bound_stepsize_reset_batch},
        selecting the step size as
        \(\eta = \min\Big\{\left(\frac{C_1}{C_2T}\right)^{\frac{1}{2}}, \allowbreak \left(\frac{C_1^2}{\sqrt{2} C_4 C_5 T^2}\right)^{\frac{1}{3}}\Big\} = \min\Big\{\frac{\sqrt{2}}{L}\sigma_L T^{-\frac{1}{2}}, \allowbreak \frac{1}{4\ell}\left(\frac{\sigma_\ell^4}{\sigma}\right)^{\frac{1}{3}} (\log\frac{4T}{\delta}[\kappa \vee \log\frac{4T}{\delta}]^2)^{-\frac{1}{6}} T^{-\frac{2}{3}}\Big\}\),
        reset probability such that \(\frac{1}{p} = B = \left\lceil \left( \frac{C_4^4 T^2}{2 C_1^2 C_5^2} \right)^{\frac{1}{3}} \right\rceil = \left\lceil 2\left(\frac{\sigma}{\sigma_\ell}\right)^{\frac{4}{3}}\left(\frac{\kappa \vee \log\frac{4T}{\delta}}{\log\frac{4T}{\delta}}\right)^{\frac{1}{3}}T^{\frac{2}{3}} \right\rceil\)
        when \(T \left(\frac{\kappa \vee \log\frac{4T}{\delta}}{\log\frac{4T}{\delta}}\right)^{\frac{1}{2}} \geq \frac{\sigma_\ell^2}{2\sqrt{2}\sigma^2}\), we have:
        \begin{eqnarray*}
            & &\frac{1}{T}\sum_{t=1}^T\|\nabla f(w_t)\|_\ast \|P_t\|^2
            \leq
            2\left(\frac{C_1 C_2}{T}\right)^{\frac{1}{2}} + 3\left(\frac{\sqrt{2} C_1 C_4 C_5}{T}\right)^{\frac{1}{3}}\\
            & =& 4\sqrt{2} \sigma_L T^{-\frac{1}{2}} + 48(\sigma_\ell^2 \sigma)^{\frac{1}{3}} \left(\frac{[\log\frac{4T}{\delta}]^{\frac{1}{2}}\cdot [\kappa\vee\log\frac{4T}{\delta}]}{T}\right)^{\frac{1}{3}}
        \end{eqnarray*}

        To analyze the iteration complexity, the following inequalities of \(T\) are sufficient to show that \(\frac{1}{T}\sum_{t=1}^T\|\nabla f(w_t)\|_\ast \|P_t\|^2 \leq \varepsilon\) 
        for some \(\varepsilon > 0\) as 1st, 2nd terms all are upper bounded by \(\varepsilon/2\),
        and noting \(\kappa \geq 1\) and \(T\geq \frac{\sigma_\ell^2}{2\sqrt{2}\sigma^2}\) 
        implies that \(T \left(\frac{\kappa \vee \log\frac{4T}{\delta}}{\log\frac{4T}{\delta}}\right)^{\frac{1}{2}}\geq T \geq \frac{\sigma_\ell^2}{2\sqrt{2}\sigma^2}\):
        \[
        \begin{aligned}
            T\geq & \frac{\sigma_\ell^2}{2\sqrt{2}\sigma^2} \vee \frac{128 \sigma_L^2}{\varepsilon^2},\\
            & \frac{\log^{\frac{1}{2}}\frac{4T}{\delta}}{T} \leq 
            \frac{1}{\kappa}\cdot \frac{(\varepsilon/96)^3}{\sigma_\ell^2 \sigma},\,
              \frac{\log^{\frac{3}{2}}\frac{4T}{\delta}}{T} \leq \frac{(\varepsilon/96)^3}{\sigma_\ell^2 \sigma}
        \end{aligned}
        \]
        By substituting \(\delta \gets \delta/4\) and \((q, \varepsilon) \gets (\frac{1}{2}, \frac{1}{\kappa}\cdot \frac{(\varepsilon/96)^3}{\sigma_\ell^2 \sigma}), (\frac{3}{2}, \frac{(\varepsilon/96)^3}{\sigma_\ell^2 \sigma})\) in Lemma~\ref{lemma:complexity_log}, the following condition of \(T\)
        is sufficient to guarantee the above inequalities,
        and therefore \(\frac{1}{T}\sum_{t=1}^T\|\nabla f(w_t)\|_\ast \|P_t\|^2 \leq \varepsilon\).
        \[
        \begin{aligned}
            T\geq & \frac{\sigma_\ell^2}{2\sqrt{2}\sigma^2} \vee \frac{128 \sigma_L^2}{\varepsilon^2}
            \vee \frac{\sigma_\ell^2\sigma}{(\varepsilon/96)^3}\left[\mathe^{\frac{9}{4}}\log^{\frac{3}{2}}\left[\mathe\vee \frac{4\sigma_\ell^2\sigma}{(\varepsilon/96)^3\delta}\right]
            \vee \mathe^{\frac{1}{4}}\kappa \log^{\frac{1}{2}}\left[\mathe \vee \kappa \cdot \frac{4\sigma_\ell^2\sigma}{(\varepsilon/96)^3\delta}\right]\right]
        \end{aligned}
        \]
        In particular, for all \(\varepsilon \in \mathbb{R}_+\) is sufficiently small such \(\varepsilon \leq \varepsilon_0(\sigma_\ell, \sigma, \kappa, \delta)\),
        then the lower bound on \(T\) simplifies asymptotically to
        \[
        T \geq \max\Big\{\frac{\mathe^{\frac{9}{4}}\cdot\sigma_\ell^2\sigma}{(\varepsilon/96)^3}\log^{\frac{3}{2}}\frac{4\sigma_\ell^2\sigma}{(\varepsilon/96)^3\delta},\allowbreak \, 
        \allowbreak \frac{2\sigma_L^2}{(\varepsilon/8)^2}\Big\}
        \]
        where the uniform upper bound \(\varepsilon_0(\sigma_\ell, \sigma, \kappa, \delta)\) of \(\varepsilon\) is obtained by noting \(\kappa\geq 1\) and solving the following inequalities for \(\varepsilon\)
        to ensure the dominated terms in the above asymptotic bound are larger than the other terms
        of the lower bound on \(T\).

        \[
        \left\{
        \begin{aligned}
            & \frac{4\sigma_\ell^2\sigma}{(\varepsilon/96)^3\delta} \geq \mathe,\,\frac{4\sigma_\ell^2\sigma}{(\varepsilon/96)^3\delta} \geq \kappa\\
            &\frac{\mathe^{\frac{9}{4}}}{\sqrt{2}\mathe^{\frac{1}{4}}}\log\frac{4\sigma_\ell^2\sigma}{(\varepsilon/96)^3\delta} \geq \kappa\\
            &\frac{\mathe^{\frac{9}{4}}\cdot\sigma_\ell^2\sigma}{(\varepsilon/96)^3} \geq \frac{\sigma_\ell^2}{2\sqrt{2}\sigma^2}
        \end{aligned}
        \right.
        \]

        The explicit expression of \(\varepsilon_0(\sigma_\ell, \sigma, \kappa, \delta) \equiv (\sigma_\ell^2 \sigma)^{\frac{1}{3}} \cdot \varepsilon_1(\kappa, \delta) \vee \sigma \cdot \varepsilon_2(\kappa, \delta) \) is given by:
        \[
        \begin{aligned}
            \varepsilon_1(\kappa, \delta) &= 96 \left( \frac{4}{\delta \left( \mathe \vee \kappa \vee \mathe^{\sqrt{2}\kappa/\mathe^2} \right)} \right)^{\!1/3}\\            
            \varepsilon_2(\kappa, \delta) &= 96\sqrt{2}\mathe^{3/4}        
        \end{aligned}
        \]
        Therefore, the iteration complexity is 
        \(T = \mathcal{O}\left(\frac{\sigma_\ell^2\sigma}{\varepsilon^3}\log^{\frac{3}{2}}\frac{\sigma_\ell^2\sigma}{\varepsilon^3\delta} \vee \frac{\sigma_L^2}{\varepsilon^2}\right)\),
        and the expected value of oracle complexity is \(N = [B\cdot p + 1\cdot (1-p)]T=T(2-1/B) = \mathcal{O}\left(\frac{\sigma_\ell^2\sigma}{\varepsilon^3}\log^{\frac{3}{2}}\frac{\sigma_\ell^2\sigma}{\varepsilon^3\delta} \vee \frac{\sigma_L^2}{\varepsilon^2}\right)\)
        with the selection of batch size \(B=\frac{1}{p}\) 
        to guarantee \(\frac{1}{T}\sum_{t=1}^T\|\nabla f(w_t)\|_\ast \|P_t\|^2 \leq \varepsilon\) with probability at least \(1-\delta\)
        for sufficiently small \(\varepsilon\).

        \textbf{Family 2, Case 3 (SPIDER)} \((p_t=\mathbb{I}_{\{t\!\!\mod E=0\}},\beta_t=1)\):
        By invoking the established upper bound of \(\|e_t\|_*\) for the estimator \(v_t\) with zeroth-order correction term \(\mathcal{T}_t\), the selection of \(p_t=\mathbb{I}_{\{t\mod E = 0\}}, \beta_t =1\) and \(\eta_t=\eta\) in the third-level section of Appendix~\ref{subsubsec:family2case3}, we show the following upper bound of \(\frac{1}{T}\sum_{t=1}^T\|e_t\|_*\):
        \[
        \frac{1}{T}\sum_{t=1}^T \|e_t\|_* \leq C\left(\frac{\delta}{2T/E},\kappa\right)\sigma/\sqrt{B}+C\!\left(\frac{\delta}{2T},\kappa\right)\ell\eta\sqrt{2E}
        \]

        By selecting the batch size \(B = E\),
        noting \(C\left(\frac{\delta}{2T/E},\kappa\right)\leq C\!\left(\frac{\delta}{2T},\kappa\right) \leq 2 \sqrt{2[\kappa \vee \log\frac{2T}{\delta}]}\), we have:
        \begin{eqnarray*}
        & & \frac{1}{T} \sum_{t=1}^T \|\nabla f(w_t)\|_* \|P_t\|^2
        \leq \frac{4\Delta_f}{\eta T}+2L\eta
        +8 \cdot \frac{1}{T} \sum_{t=1}^T \|e_t\|_* 
        \\
        &\leq& \frac{4 \Delta_f}{\eta T} + 2L\cdot \eta + 0 + 16 \sigma \sqrt{2[\kappa\vee\log\frac{2T}{\delta}]} \cdot \frac{1}{\sqrt{E}} + 32\ell \sqrt{\kappa \vee \log\frac{2T}{\delta}} \cdot \eta \sqrt{E}
        \end{eqnarray*}

        Letting \(C_1 = 4 \Delta_f, C_2 = 2L, C_3 = 0, C_4 = 16 \sigma \sqrt{2[\kappa\vee\log\frac{2T}{\delta}]}, C_5 = 32\ell \sqrt{\kappa \vee \log\frac{2T}{\delta}}\) in Lemma~\ref{lemma:opt_bound_stepsize_reset_batch},
        selecting the step size as
        \(\eta = \min\Big\{\left(\frac{C_1}{C_2T}\right)^{\frac{1}{2}}, \allowbreak \left(\frac{C_1^2}{\sqrt{2} C_4 C_5 T^2}\right)^{\frac{1}{3}}\Big\} = \min\Big\{\frac{\sqrt{2}}{L}\sigma_L T^{-\frac{1}{2}}, \allowbreak \frac{1}{4\ell}\left(\frac{\sigma_\ell^4}{\sigma}\right)^{\frac{1}{3}} [\kappa \vee \log\frac{2T}{\delta}]^{-\frac{1}{3}} T^{-\frac{2}{3}}\Big\}\),
        reset epoch as \(E= B = \left\lceil \left( \frac{C_4^4 T^2}{2 C_1^2 C_5^2} \right)^{\frac{1}{3}} \right\rceil = \left\lceil 2\left(\frac{\sigma}{\sigma_\ell}\right)^{\frac{4}{3}}\left[\kappa \vee \log\frac{2T}{\delta}\right]^{\frac{1}{3}}T^{\frac{2}{3}} \right\rceil\)
        when \(T \left[\kappa \vee \log\frac{2T}{\delta}\right]^{\frac{1}{2}} \geq \frac{\sigma_\ell^2}{2\sqrt{2}\sigma^2}\), we have:
        \begin{eqnarray*}
            & &\frac{1}{T}\sum_{t=1}^T\|\nabla f(w_t)\|_\ast \|P_t\|^2
            \leq  
            2\left(\frac{C_1 C_2}{T}\right)^{\frac{1}{2}} + 3\left(\frac{\sqrt{2} C_1 C_4 C_5}{T}\right)^{\frac{1}{3}} \\
            & =& 4\sqrt{2} \sigma_L T^{-\frac{1}{2}} + 48(\sigma_\ell^2 \sigma)^{\frac{1}{3}} \left(\frac{\kappa\vee\log\frac{2T}{\delta}}{T}\right)^{\frac{1}{3}}
        \end{eqnarray*}
        To analyze the iteration complexity, the following inequalities of \(T\) are sufficient to show that \(\frac{1}{T}\sum_{t=1}^T\|\nabla f(w_t)\|_\ast \|P_t\|^2 \leq \varepsilon\) 
        for some \(\varepsilon > 0\) as 1st, 2nd terms all are upper bounded by \(\varepsilon/2\),
        and noting that \(T\geq \frac{\sigma_\ell^2}{2\sqrt{2\kappa}\sigma^2}\) 
        implies that \(T \left[\kappa \vee \log\frac{2T}{\delta}\right]^{\frac{1}{2}} \geq T\kappa^{\frac{1}{2}} \geq \frac{\sigma_\ell^2}{2\sqrt{2}\sigma^2}\):
        \[
        \begin{aligned}
            T\geq & \frac{\sigma_\ell^2}{2\sqrt{2\kappa}\sigma^2} \vee \frac{128 \sigma_L^2}{\varepsilon^2} \vee \frac{\kappa\cdot \sigma_\ell^2\sigma}{(\varepsilon/96)^3},\\
            & 
            \frac{\log\frac{2T}{\delta}}{T} \leq \frac{(\varepsilon/96)^3}{\sigma_\ell^2 \sigma}
        \end{aligned}
        \]
        By substituting \(\delta \gets \delta/2\) and \((q, \varepsilon) \gets (1, \frac{(\varepsilon/96)^3}{\sigma_\ell^2 \sigma})\) in Lemma~\ref{lemma:complexity_log}, the following condition of \(T\)
        is sufficient to guarantee the above inequalities,
        and therefore \(\frac{1}{T}\sum_{t=1}^T\|\nabla f(w_t)\|_\ast \|P_t\|^2 \leq \varepsilon\).
        \[
        \begin{aligned}
            T\geq & \frac{\sigma_\ell^2}{2\sqrt{2\kappa}\sigma^2} \vee \frac{128 \sigma_L^2}{\varepsilon^2}\vee \frac{ \sigma_\ell^2\sigma}{(\varepsilon/96)^3}\left[\kappa \vee\mathe\log\left[\mathe\vee \frac{2\sigma_\ell^2\sigma}{(\varepsilon/96)^3\delta} \right]\right]
        \end{aligned}
        \]
        In particular, for all \(\varepsilon \in \mathbb{R}_+\) is sufficiently small such \(\varepsilon \leq \varepsilon_0(\sigma_\ell, \sigma, \kappa, \delta)\),
        then the lower bound on \(T\) simplifies asymptotically to
        \[
        T \geq \max\Big\{
        \frac{\mathe\cdot \sigma_\ell^2\sigma}{(\varepsilon/96)^3}\log\frac{2\sigma_\ell^2\sigma}{(\varepsilon/96)^3\delta},\allowbreak \,
        \allowbreak \frac{2\sigma_L^2}{(\varepsilon/8)^2}\Big\}
        \]
        where the uniform upper bound \(\varepsilon_0(\sigma_\ell, \sigma, \kappa, \delta)\) of \(\varepsilon\) is obtained by solving the following inequalities for \(\varepsilon\)
        to ensure the dominated terms in the above asymptotic bound are larger than the other terms 
        of the lower bound on \(T\).
        \[
        \left\{
        \begin{aligned}
            & \frac{2\sigma_\ell^2\sigma}{(\varepsilon/96)^3\delta} \geq \mathe,\\
            &\mathe\log\frac{2\sigma_\ell^2\sigma}{(\varepsilon/96)^3\delta} \geq \kappa,\\
            &\frac{\mathe\cdot\sigma_\ell^2\sigma}{(\varepsilon/96)^3} \geq \frac{1}{2\sqrt{2\kappa}}\frac{\sigma_\ell^2}{\sigma^2}
        \end{aligned}
        \right.
        \]
        The explicit expression of \(\varepsilon_0(\sigma_\ell, \sigma, \kappa, \delta) \equiv (\sigma_\ell^2 \sigma)^{\frac{1}{3}} \cdot \varepsilon_1(\kappa, \delta) \vee \sigma \cdot \varepsilon_2(\kappa, \delta) \) is given by:
        \[
        \begin{aligned}
            \varepsilon_1(\kappa, \delta) &= 96 \left( \frac{2}{\delta \, \mathe^{1 \vee (\kappa/\mathe)}} \right)^{\!1/3}\\            
            \varepsilon_2(\kappa, \delta) &= 96\sqrt{2}(\mathe\sqrt{\kappa})^{\!1/3}        
        \end{aligned}
        \]
        Therefore, the iteration complexity is 
        \(T = \mathcal{O}\left(\frac{\sigma_\ell^2\sigma}{\varepsilon^3}\log\frac{\sigma_\ell^2\sigma}{\varepsilon^3\delta} \vee \frac{\sigma_L^2}{\varepsilon^2}\right)\),
        and the oracle complexity is \(N = B\cdot T/E+1\cdot (T-T/E) = T(2-1/B) = \mathcal{O}\left(\frac{\sigma_\ell^2\sigma}{\varepsilon^3}\log\frac{\sigma_\ell^2\sigma}{\varepsilon^3\delta} \vee \frac{\sigma_L^2}{\varepsilon^2}\right)\)
        with the selection of batch size \(B=E\) 
        to guarantee \(\frac{1}{T}\sum_{t=1}^T\|\nabla f(w_t)\|_\ast \|P_t\|^2 \leq \varepsilon\) with probability at least \(1-\delta\)
        for sufficiently small \(\varepsilon\).

        \textbf{Family 3, Case 1 (Second-Order Momentum)} \((p_t=0,\beta_t=\beta)\):
        By invoking the established upper bound of \(\|e_t\|_*\) for the estimator \(v_t\) with zeroth-order correction term \(\mathcal{T}_t\), the selection of \(p_t=0, \beta_t =\beta\) and \(\eta_t=\eta\) in the third-level section of Appendix~\ref{subsubsec:family3case1}, and noting that \(\sum_{t=1}^T\beta^t = \frac{\beta(1-\beta^{T})}{1-\beta}\leq \frac{1}{1-\beta},\beta\leq 1, \frac{(1-\beta)^2}{1-\beta^2}\leq 1-\beta,\frac{\beta^2}{1-\beta^2}\leq\frac{1}{1-\beta}\) for \(\beta\in(0, 1)\) and the elementary inequality \(\sqrt{x+y}\leq \sqrt{x} + \sqrt{y}, \forall x, y\geq 0\), we show the following upper bound for \(\frac{1}{T}\sum_{t=1}^T\|e_t\|_*\):
        \[
        \frac{1}{T}\sum_{t=1}^T \|e_t\|_* \leq \frac{C\!\left(\frac{\delta}{2T},\kappa\right)\sigma/\sqrt{B}}{(1-\beta)T}+\frac{\alpha\eta^2}{2(1-\beta)}+C\!\left(\frac{\delta}{2T},\kappa\right)\left[\sigma\sqrt{2(1-\beta)}+\gamma \eta \sqrt{\frac{2}{1-\beta}}\right]
        \]

        By selecting the batch size \(B = 1\) and 
        noting \(C\!\left(\frac{\delta}{2T},\kappa\right) \leq 2 \sqrt{2[\kappa \vee \log\frac{2T}{\delta}]}\), we have:
        \begin{eqnarray*}
        & &\frac{1}{T}\sum_{t=1}^T \|\nabla f(w_t)\|_*\|P_t\|^2\leq\frac{4\Delta_f}{\eta T}+2L\eta
        +8 \cdot \frac{1}{T}\sum_{t=1}^T \|e_t\|_*
        \\
        & \leq & \frac{4 \Delta_f}{\eta T} 
        + 2L\cdot \eta 
        + \frac{16 \sigma \sqrt{2[\kappa \vee\log\frac{2T}{\delta}]}}{(1-\beta)T}
        + 32\sigma\sqrt{\kappa \vee \log\frac{2T}{\delta}}\cdot \sqrt{1-\beta}\\
        & &+ 32\gamma\sqrt{\kappa \vee \log\frac{2T}{\delta}}\cdot \frac{\eta}{\sqrt{1-\beta}}
        + 4\alpha \cdot \frac{\eta^2}{1-\beta}
        \end{eqnarray*}

        Subsequently, by letting \(C_1 = 4 \Delta_f, C_2 = 2L, C_3 = 16 \sigma \sqrt{2[\kappa\vee\log\frac{2T}{\delta}]}, C_4 = 32\sigma\sqrt{\kappa \vee \log\frac{2T}{\delta}}, C_5 = 32\gamma\sqrt{\kappa \vee \log\frac{2T}{\delta}}, C_6 = 4\alpha\) in Lemma~\ref{lemma:opt_bound_stepsize_momentum},
        and selecting the step size \(\eta\) such that
        \(\eta = \min\Big\{\left(\frac{C_1}{C_2}\right)^{\frac{1}{2}}T^{-\frac{1}{2}}, \allowbreak \left(\frac{C_1^2}{C_4 C_5}\right)^{\frac{1}{3}}T^{-\frac{2}{3}}, \allowbreak \left(\frac{C_1^3}{C_4^2 C_6}\right)^{\frac{1}{5}}T^{-\frac{3}{5}}\Big\} = \min\Big\{ \frac{\sqrt{2}}{L}\sigma_L T^{-\frac{1}{2}}, \allowbreak \frac{1}{4\gamma}\left(\frac{ \sigma_\gamma^4}{\sigma}\right)^{\frac{1}{3}}[\kappa \vee \log\frac{2T}{\delta}]^{-\frac{1}{3}} T^{-\frac{2}{3}}, \allowbreak \frac{1}{2}\sqrt{\frac{1}{\alpha}\left(\frac{\sigma_\alpha^9}{4\sigma^4}\right)^{\frac{1}{5}}}[\kappa \vee \log\frac{2T}{\delta}]^{-\frac{1}{5}}T^{-\frac{3}{5}} \Big\}\),
        the momentum parameter \(\beta\) as
        \(1-\beta = \max\Big\{\left(\frac{C_3}{C_4}\right)^{\frac{2}{3}}T^{-\frac{2}{3}}, \allowbreak \left(\frac{C_1 C_5}{C_4^2}\right)^{\frac{2}{3}}T^{-\frac{2}{3}}, \allowbreak \left(\frac{C_1^2 C_6}{C_4^3}\right)^{\frac{2}{5}}T^{-\frac{4}{5}}\Big\} =\max\Big\{ 2^{-\frac{1}{3}}T^{-\frac{2}{3}}, \allowbreak \frac{1}{4}\left(\frac{(\sigma_\gamma/\sigma)^4}{[\kappa \vee \log\frac{2T}{\delta}]}\right)^{\frac{1}{3}}T^{-\frac{2}{3}}, \allowbreak \frac{1}{8}\left(\frac{\sigma_\alpha}{\sqrt{2}\sigma}\right)^{\frac{6}{5}}[\kappa \vee \log\frac{2T}{\delta}]^{-\frac{3}{5}}T^{-\frac{4}{5}} \Big\}\),
        then when \(T[\kappa \vee \log\frac{2T}{\delta}]^{\frac{1}{2}} \geq \frac{\sigma_\gamma^2}{8 \sigma^2}\)
        and \(T[\kappa \vee \log\frac{2T}{\delta}]^{\frac{3}{4}} \geq 2^{-\frac{9}{2}}\left(\frac{\sigma_\alpha}{\sigma}\right)^{\frac{3}{2}}\), we have:
        \begin{eqnarray*}
            & &\frac{1}{T}\sum_{t=1}^T\|\nabla f(w_t)\|_\ast \|P_t\|^2
            \leq 2\left(\frac{C_1 C_2}{T}\right)^{\frac{1}{2}} + 2\left(\frac{C_3 C_4^2}{T}\right)^{\frac{1}{3}}
            + 3\left(\frac{C_1 C_4 C_5}{T}\right)^{\frac{1}{3}}
            + 3 \left(\frac{C_1^2 C_4^2 C_6}{T^2}\right)^{\frac{1}{5}}
            \\
            &\leq&
            4 \sqrt{2} \sigma_L T^{-\frac{1}{2}}
            + 32\sqrt{2}\sigma \left(\frac{2[\kappa \vee \log\frac{2T}{\delta}]^{\frac{3}{2}}}{T}\right)^{\frac{1}{3}}
            + 48 (\sigma_\gamma^2 \sigma)^{\frac{1}{3}}\left(\frac{\kappa \vee \log\frac{2T}{\delta}}{T}\right)^{\frac{1}{3}}\\
            & &
            + 24(\sigma_\alpha^3 \sigma^2)^{\frac{1}{5}}\left(\frac{2[\kappa \vee \log\frac{2T}{\delta}]}{T^2}\right)^{\frac{1}{5}}
        \end{eqnarray*}
        To analyze the iteration complexity, the following inequalities of \(T\) are sufficient to show that \(\frac{1}{T}\sum_{t=1}^T\|\nabla f(w_t)\|_\ast \|P_t\|^2 \leq \varepsilon\) 
        for some \(\varepsilon > 0\) as 1st, 2nd, 3rd and 4th terms all are upper bounded by \(\varepsilon/4\),
        and noting that \(\kappa\geq 1\) and \(T \geq \frac{\sigma_\gamma^2/\sigma^2}{8\sqrt{\kappa}}\vee 2^{-\frac{9}{2}}\frac{(\sigma_\alpha/\sigma)^\frac{3}{2}}{\kappa^{\frac{3}{4}}}\)
        implies that \(T[\kappa \vee \log\frac{2T}{\delta}]^{\frac{1}{2}} \geq T\kappa^{\frac{1}{2}} \geq \frac{\sigma_\gamma^2}{8 \sigma^2}\)
        and \(T[\kappa \vee \log\frac{2T}{\delta}]^{\frac{3}{4}} \geq T \kappa^{\frac{3}{4}} \geq 2^{-\frac{9}{2}}\left(\frac{\sigma_\alpha}{\sigma}\right)^{\frac{3}{2}}\):
        \begin{align*}
            T \geq & \frac{\sigma_\gamma^2/\sigma^2}{8\sqrt{\kappa}}\vee
            2^{-\frac{9}{2}}\frac{(\sigma_\alpha/\sigma)^\frac{3}{2}}{\kappa^{\frac{3}{4}}}\vee 
            \frac{512 \sigma_L^2}{\varepsilon^2}
            \vee \frac{2 \kappa^{\frac{3}{2}}\cdot \sigma^3}{(\varepsilon/128\sqrt{2})^3}
            \vee \frac{\kappa \cdot\sigma_\gamma^2 \sigma}{(\varepsilon/192)^3} \vee \sqrt{2\kappa \frac{\sigma_\alpha^3 \sigma^2}{(\varepsilon/96)^5}},\\
            & 
            \frac{\log^{\frac{3}{2}}\frac{2T}{\delta}}{T} \leq \frac{1}{2}\left(\frac{\varepsilon}{128\sqrt{2}\sigma}\right)^3,\,
            \frac{\log\frac{2T}{\delta}}{T} \leq \frac{(\varepsilon/192)^3}{\sigma_\gamma^2 \sigma},\,
            \frac{\log^{\frac{1}{2}}\frac{2T}{\delta}}{T} \leq \sqrt{\frac{(\varepsilon/96)^5}{2\sigma_\alpha^3 \sigma^2}},
        \end{align*}
        By substituting \(\delta \gets \delta/2\) and \((q,\varepsilon) \gets (\frac{3}{2},\frac{1}{2}\left(\frac{\varepsilon}{128\sqrt{2}\sigma}\right)^3), (1,\frac{(\varepsilon/192)^3}{\sigma_\gamma^2 \sigma}), (\frac{1}{2},\sqrt{\frac{(\varepsilon/96)^5}{2\sigma_\alpha^3 \sigma^2}})\) in Lemma~\ref{lemma:complexity_log}, the following condition of \(T\)
        is sufficient to guarantee the above inequalities,
        and therefore \(\frac{1}{T}\sum_{t=1}^T\|\nabla f(w_t)\|_\ast \|P_t\|^2 \leq \varepsilon\).
        \begin{align*}
        T\geq& \frac{\sigma_\gamma^2/\sigma^2}{8\sqrt{\kappa}}\vee
        2^{-\frac{9}{2}}\frac{(\sigma_\alpha/\sigma)^\frac{3}{2}}{\kappa^{\frac{3}{4}}}\vee \frac{512 \sigma_L^2}{\varepsilon^2}\\
        & 
        \vee \frac{2\sigma^3}{(\varepsilon/128\sqrt{2})^3}\left[\kappa^{\frac{3}{2}}\vee\mathe^{\frac{9}{4}}\log^{\frac{3}{2}}\left[\mathe\vee \frac{4\sigma^3}{(\varepsilon/128\sqrt{2})^3\delta} \right]\right]\\
        &\vee \frac{\sigma_\gamma^2\sigma}{(\varepsilon/192)^3}\left[
            \kappa\vee\mathe \log\left[\mathe\vee \frac{2\sigma_\gamma^2\sigma}{(\varepsilon/192)^3\delta} \right]
        \right]\\
        &\vee \sqrt{\frac{2\sigma_\alpha^3\sigma^2}{(\varepsilon/96)^5}}\left[
            \sqrt{\kappa}\vee\mathe^{\frac{1}{4}} \log^{\frac{1}{2}}\left[\mathe\vee \frac{2}{\delta}\sqrt{\frac{2\sigma_\alpha^3\sigma^2}{(\varepsilon/96)^5}} \right]
        \right]
        \end{align*}
        In particular, for all \(\varepsilon \in \mathbb{R}_+\) is sufficiently small such 
        that \(\varepsilon \leq \varepsilon_0(\sigma_\alpha, \sigma_\gamma, \sigma, \kappa, \delta)\),
        then the lower bound on \(T\) simplifies asymptotically to
        \[
        \begin{aligned}
            T \;\ge\;  \max\Bigg\{ &  
            \frac{2\mathe^{\frac{9}{4}}\cdot \sigma^3}{(\varepsilon/128\sqrt{2})^3}\log^{\frac{3}{2}}\frac{4\sigma^3}{(\varepsilon/128\sqrt{2})^3\delta},\,
            \frac{\mathe \cdot \sigma_\gamma^2\sigma}{(\varepsilon/192)^3}\log\frac{2\sigma_\gamma^2\sigma}{(\varepsilon/192)^3\delta},\,\\
            & \mathe^{\frac{1}{4}}\sqrt{\frac{2\sigma_\alpha^3\sigma^2}{(\varepsilon/96)^5}}\log^{\frac{1}{2}}\left(\frac{2}{\delta}\sqrt{\frac{2\sigma_\alpha^3\sigma^2}{(\varepsilon/96)^5}}\right),\,
            \frac{2\sigma_L^2}{(\varepsilon/16)^2}
        \Bigg\}.
        \end{aligned}
        \]
        where the uniform upper bound \(\varepsilon_0(\sigma_\alpha, \sigma_\gamma, \sigma, \kappa, \delta)\) 
        is obtained by noting \(\kappa\geq 1\) and solving the following inequalities for \(\varepsilon\)
        to ensure the dominated terms in the above asymptotic bound are larger than the other terms
        of the lower bound on \(T\).

        \[
            \left\{
            \begin{aligned}
                & \frac{4\sigma^3}{(\varepsilon/128\sqrt{2})^3\delta} \geq \mathe,\,
                \frac{2\sigma_\gamma^2\sigma}{(\varepsilon/192)^3\delta} \geq \mathe,\,
                \frac{2}{\delta}\sqrt{\frac{2\sigma_\alpha^3\sigma^2}{(\varepsilon/96)^5}} \geq \mathe,\\
                & \mathe \log\frac{2\sigma_\gamma^2\sigma}{(\varepsilon/192)^3\delta} \geq \kappa,\,
                \mathe^{\frac{1}{4}} \log^{\frac{1}{2}}\left(\frac{2}{\delta}\sqrt{\frac{2\sigma_\alpha^3\sigma^2}{(\varepsilon/96)^5}}\right) \geq \sqrt{\kappa},\\
                & \mathe^{\frac{9}{4}}\log^{\frac{3}{2}}\frac{4\sigma^3}{(\varepsilon/128\sqrt{2})^3\delta} \geq \kappa^{\frac{3}{2}},\\
                & \frac{\mathe \cdot \sigma_\gamma^2\sigma}{(\varepsilon/192)^3} \geq \frac{\sigma_\gamma^2/\sigma^2}{8\sqrt{\kappa}},\,
                \mathe^{\frac{1}{4}} \sqrt{\frac{2\sigma_\alpha^3\sigma^2}{(\varepsilon/96)^5}} \geq 2^{-\frac{9}{2}}\frac{(\sigma_\alpha/\sigma)^\frac{3}{2}}{\kappa^{\frac{3}{4}}},\\
            \end{aligned}
            \right.
        \]

        The explicit expression of 
        \[\varepsilon_0(\sigma_\alpha, \sigma_\gamma, \sigma, \kappa, \delta)
        \equiv (\sigma_\alpha^3 \sigma^2)^{\frac{1}{5}} \cdot \varepsilon_1(\kappa, \delta) 
        \vee (\sigma_\gamma^2 \sigma)^{\frac{1}{3}} \cdot \varepsilon_2(\kappa, \delta)
        \vee \sigma \cdot \varepsilon_3(\kappa, \delta)\] 
        is given by:
        \[
        \begin{aligned}
            \varepsilon_1(\kappa, \delta) &= 96 \left( \frac{8}{\delta^2 \, \mathe^{2 \vee (2\kappa/\sqrt{\mathe})}} \right)^{\!1/5}\\            
            \varepsilon_2(\kappa, \delta) &= 192 \left( \frac{2}{\delta \, \mathe^{1 \vee (\kappa/\mathe)}} \right)^{\!1/3}\\            
            \varepsilon_3(\kappa, \delta) &= 384(\mathe\kappa^{1/2})^{\!1/3} \;\wedge\; 384(\mathe^{1/2}\kappa^{3/2})^{\!1/5} \;\wedge\; 128\sqrt{2} \left( \frac{4}{\delta \, \mathe^{1 \vee (\kappa/\mathe^{3/2})}} \right)^{\!1/3}        
        \end{aligned}
        \]
        Therefore, the iteration complexity for \(T\) and oracle complexity for \(N=B\cdot T\) are given by:
        \[
        N=B\cdot T = T=\mathcal{O}\left(
            \frac{\sigma^3}{\varepsilon^3}\log^{\frac{3}{2}}\frac{\sigma^3}{\varepsilon^3\delta}
            \vee \frac{\sigma_\gamma^2 \sigma}{\varepsilon^3}\log\frac{\sigma_\gamma^2 \sigma}{\varepsilon^3\delta}
            \vee \sqrt{\frac{\sigma_\alpha^3\sigma^2}{\varepsilon^5}}\log^{\frac{1}{2}}\left(\frac{1}{\delta}\sqrt{\frac{\sigma_\alpha^3\sigma^2}{\varepsilon^5}}\right)
            \vee \frac{\sigma_L^2}{\varepsilon^2}
        \right)
        \]
        with the selection of batch size \(B=1\) to guarantee \(\frac{1}{T}\sum_{t=1}^T\|\nabla f(w_t)\|_\ast \|P_t\|^2 \leq \varepsilon\) with probability at least \(1-\delta\)
        for sufficiently small \(\varepsilon\).

        \textbf{Family 3, Case 2 (Second-Order PAGE)} \((p_t=p,\beta_t=1)\):
        By invoking the established upper bound of \(\|e_t\|_*\) for the estimator \(v_t\) with zeroth-order correction term \(\mathcal{T}_t\), the selection of \(p_t=p, \beta_t =1\) and \(\eta_t=\eta\) in the third-level section of Appendix~\ref{subsubsec:family3case2}, we show the following upper bound of \(\frac{1}{T}\sum_{t=1}^T\|e_t\|_*\):
        \[
        \frac{1}{T}\sum_{t=1}^T \|e_t\|_* \leq C\left(\frac{\delta}{4T},\kappa\right)\sigma/\sqrt{B}+\frac{\alpha\eta^2}{2p}\log\frac{4T}{\delta}+C\!\left(\frac{\delta}{4T},\kappa\right)\gamma\eta\sqrt{\frac{2}{p}\log\frac{4T}{\delta}}
        \]

        By selecting the batch size \(B = \frac{1}{p}\),
        and noting \(C\!\left(\frac{\delta}{4T},\kappa\right) \leq 2 \sqrt{2[\kappa \vee \log\frac{4T}{\delta}]}\), we have:
        \begin{eqnarray*}
        & &\frac{1}{T}\sum_{t=1}^T\|\nabla f(w_t)\|_* \|P_t\|^2\leq\frac{4\Delta_f}{\eta T}+2L\eta
        +8 \cdot \frac{1}{T}\sum_{t=1}^T\|e_t\|_*
        \\
        & \leq &
        \frac{4 \Delta_f}{\eta T} + 2L\cdot \eta + 0 + 16 \sigma \sqrt{2[\kappa\vee\log\frac{4T}{\delta}]} \cdot \sqrt{p} + 32\gamma \sqrt{\log\frac{4T}{\delta}[\kappa\vee \log\frac{4T}{\delta}]} \cdot \frac{\eta}{\sqrt{p}}\\
        & & + 4\alpha \log\frac{4T}{\delta} \cdot \frac{\eta^2}{p}
        \end{eqnarray*}

        Therefore, by letting \(C_1 = 4 \Delta_f, C_2 = 2L, C_3 = 0, C_4 = 16 \sigma \sqrt{2[\kappa\vee\log\frac{4T}{\delta}]}, C_5 = 32\gamma \sqrt{\log\frac{4T}{\delta}[\kappa\vee \log\frac{4T}{\delta}]}, C_6 = 4\alpha \log\frac{4T}{\delta}\) in Lemma~\ref{lemma:opt_bound_stepsize_reset_batch},
        selecting the step size \(\eta\) and the reset probability \(p\) such that
        \(\eta = \min\Big\{\left(\frac{C_1}{C_2T}\right)^{\frac{1}{2}}, \allowbreak \left(\frac{C_1^2}{\sqrt{2} C_4 C_5 T^2}\right)^{\frac{1}{3}}, \allowbreak \left(\frac{C_1^3}{4C_4^2 C_6 T^3}\right)^{\frac{1}{5}}\Big\} = \min\Big\{\frac{\sqrt{2}}{L} \sigma_L T^{-\frac{1}{2}}, \allowbreak \frac{1}{4\gamma}\left(\frac{\sigma_\gamma^4}{\sigma}\right)^{\frac{1}{3}}(\log\frac{4T}{\delta}[\kappa\vee \log\frac{4T}{\delta}]^2)^{-\frac{1}{6}} T^{-\frac{2}{3}}, \allowbreak \frac{1}{2}\sqrt{\frac{1}{2\alpha}\left(\frac{2\sigma_\alpha^9}{\sigma^4}\right)^{\frac{1}{5}}} [\log\frac{4T}{\delta}]^{-\frac{1}{5}}[\kappa\vee\log\frac{4T}{\delta}]^{-\frac{1}{5}} T^{-\frac{3}{5}} \Big\}\),
        the reset probability \(p\) and batch size \(B\) as
        \(\frac{1}{p} = B = \left\lceil \min\Big\{ \left( \frac{C_4^4 T^2}{2 C_1^2 C_5^2} \right)^{\frac{1}{3}}, \allowbreak \left( \frac{C_4^6 T^4}{16 C_1^4 C_6^2} \right)^{\frac{1}{5}} \Big\} \right\rceil = \left\lceil \min\Big\{2\left(\frac{\sigma}{\sigma_\gamma}\right)^{\frac{4}{3}}\left(\frac{\kappa\vee \log\frac{4T}{\delta}}{\log\frac{4T}{\delta}}\right)^{\frac{1}{3}}T^{\frac{2}{3}}, \allowbreak 4\left( \frac{2\sigma^6}{\sigma_\alpha^6}\right)^{\frac{1}{5}} [\kappa\vee\log\frac{4T}{\delta}]^{\frac{1}{5}}\left(\frac{\kappa\vee\log\frac{4T}{\delta}}{\log\frac{4T}{\delta}}\right)^{\frac{2}{5}} T^{\frac{4}{5}}\Big\} \right\rceil\)\\
        when \(T\left(\frac{\kappa\vee\log\frac{4T}{\delta}}{\log\frac{4T}{\delta}}\right)^{\frac{1}{2}} \geq \frac{\sigma_\gamma^2}{2\sqrt{2}\sigma^2}\)
        and \(T [\kappa\vee\log\frac{4T}{\delta}]^{\frac{1}{4}} \left(\frac{\kappa\vee\log\frac{4T}{\delta}}{\log\frac{4T}{\delta}}\right)^{\frac{1}{2}}\geq 2^{-\frac{11}{4}} \left(\frac{\sigma_\alpha}{\sigma}\right)^{\frac{3}{2}}\), we have:
        \begin{eqnarray*}
        & &\frac{1}{T}\sum_{t=1}^T\|\nabla f(w_t)\|_\ast \|P_t\|^2
        \leq 2\left(\frac{C_1 C_2}{T}\right)^{\frac{1}{2}} + 3\left(\frac{\sqrt{2} C_1 C_4 C_5}{T}\right)^{\frac{1}{3}}
        + 5 \left(\frac{C_1^2 C_4^2 C_6}{8T^2}\right)^{\frac{1}{5}}\\
        & = & 4\sqrt{2} \sigma_L T^{-\frac{1}{2}} 
        + 48(\sigma_\gamma^2 \sigma)^{\frac{1}{3}} \left(\frac{[\log\frac{4T}{\delta}]^{\frac{1}{2}} [\kappa\vee\log\frac{4T}{\delta}]}{T}\right)^{\frac{1}{3}}
        + 20(\sigma_\alpha^3 \sigma^2)^{\frac{1}{5}} \left(\frac{2[\log\frac{4T}{\delta}]^{\frac{1}{2}} [\kappa\vee\log\frac{4T}{\delta}]^{\frac{1}{2}}}{T}\right)^{\frac{2}{5}}
        \end{eqnarray*}
        To analyze the iteration complexity, the following inequalities of \(T\) are sufficient to show that \(\frac{1}{T}\sum_{t=1}^T\|\nabla f(w_t)\|_\ast \|P_t\|^2 \leq \varepsilon\) 
        for some \(\varepsilon > 0\) as 1st, 2nd and 3rd terms are upper bounded by \(\varepsilon/2, \varepsilon/4\) and \(\varepsilon/4\) respectively,
        noting that \(T\geq \frac{\sigma_\gamma^2}{2\sqrt{2}\sigma^2} \vee \frac{2^{-\frac{11}{4}}}{\kappa^{\frac{1}{4}}}\left(\frac{\sigma_\alpha}{\sigma}\right)^{\frac{3}{2}}\) 
        implies \(T \left(\frac{\kappa \vee \log\frac{4T}{\delta}}{\log\frac{4T}{\delta}}\right)^{\frac{1}{2}} \geq T \geq \frac{\sigma_\gamma^2}{2\sqrt{2}\sigma^2}\)
        and \(T [\kappa\vee\log\frac{4T}{\delta}]^{\frac{1}{4}} \left(\frac{\kappa\vee\log\frac{4T}{\delta}}{\log\frac{4T}{\delta}}\right)^{\frac{1}{2}} \geq T\cdot \kappa^{\frac{1}{4}} \geq 2^{-\frac{11}{4}} \left(\frac{\sigma_\alpha}{\sigma}\right)^{\frac{3}{2}}\):
        \[
        \begin{aligned}
            T\geq & \frac{\sigma_\gamma^2}{2\sqrt{2}\sigma^2} \vee \frac{2^{-\frac{11}{4}}}{\kappa^{\frac{1}{4}}}\left(\frac{\sigma_\alpha}{\sigma}\right)^{\frac{3}{2}} \vee \frac{128 \sigma_L^2}{\varepsilon^2},\\
            & \frac{\log^{\frac{1}{2}}\frac{4T}{\delta}}{T} \leq \frac{(\varepsilon/192)^3}{\kappa \cdot\sigma_\gamma^2 \sigma},\,
            \frac{\log^{\frac{3}{2}}\frac{4T}{\delta}}{T} \leq \frac{(\varepsilon/192)^3}{\sigma_\gamma^2 \sigma},\,
            \frac{\log^{\frac{1}{2}}\frac{4T}{\delta}}{T} \leq \frac{1}{2} \sqrt{\frac{(\varepsilon/80)^5}{\kappa \sigma_\alpha^3\sigma^2}},\,
            \frac{\log\frac{4T}{\delta}}{T} \leq \frac{1}{2}\sqrt{\frac{(\varepsilon/80)^5}{\sigma_\alpha^3\sigma^2}}
        \end{aligned}
        \]
        By substituting \(\delta \gets \delta/4\) and \((q, \varepsilon) \gets (\frac{1}{2}, \frac{(\varepsilon/192)^3}{\kappa\cdot\sigma_\gamma^2 \sigma}), (\frac{3}{2}, \frac{(\varepsilon/192)^3}{\sigma_\gamma^2 \sigma}), (\frac{1}{2},\frac{1}{2}\sqrt{\frac{(\varepsilon/80)^5}{\kappa\cdot\sigma_\alpha^3\sigma^2}}), (1,\frac{1}{2}\sqrt{\frac{(\varepsilon/80)^5}{\sigma_\alpha^3\sigma^2}})\) in Lemma~\ref{lemma:complexity_log}, the following condition of \(T\)
        is sufficient to guarantee the above inequalities,
        and therefore \(\frac{1}{T}\sum_{t=1}^T\|\nabla f(w_t)\|_\ast \|P_t\|^2 \leq \varepsilon\).
        \[
        \begin{aligned}
            T\geq & \frac{\sigma_\gamma^2}{2\sqrt{2}\sigma^2} \vee \frac{2^{-\frac{11}{4}}}{\kappa^{\frac{1}{4}}}\left(\frac{\sigma_\alpha}{\sigma}\right)^{\frac{3}{2}} \vee \frac{128 \sigma_L^2}{\varepsilon^2}\\
            & \vee \frac{\sigma_\gamma^2\sigma}{(\varepsilon/192)^3}\left[\mathe^{\frac{9}{4}}\log^{\frac{3}{2}}\left[\mathe\vee\frac{4\sigma_\gamma^2\sigma}{(\varepsilon/192)^3\delta}\right]
            \vee \mathe^{\frac{1}{4}} \kappa\log^{\frac{1}{2}}\left[\mathe\vee\kappa \cdot\frac{4\sigma_\gamma^2\sigma}{(\varepsilon/192)^3\delta}\right] \right]\\
            &\vee 2\sqrt{\frac{\sigma_\alpha^3\sigma^2}{(\varepsilon/80)^5}}\left[\mathe\log\left[\mathe\vee\left(\frac{8}{\delta}\sqrt{\frac{\sigma_\alpha^3\sigma^2}{(\varepsilon/80)^5}}\right)\right]
            \vee \mathe^{\frac{1}{4}}\sqrt{\kappa}\log^{\frac{1}{2}}\left[\mathe\vee\sqrt{\kappa}\left(\frac{8}{\delta}\sqrt{\frac{2\sigma_\alpha^3\sigma^2}{(\varepsilon/80)^5}}\right)\right]\right]
        \end{aligned}
        \]
        In particular, for all \(\varepsilon \in \mathbb{R}_+\) is sufficiently small such 
        that \(\varepsilon \leq \varepsilon_0(\sigma_\alpha, \sigma_\gamma, \sigma, \kappa, \delta)\),
        then the lower bound on \(T\) simplifies asymptotically to
        \[
        T \geq \max\Big\{
        \frac{\mathe^{\frac{9}{4}}\sigma_\gamma^2\sigma}{(\varepsilon/192)^3}\log^{\frac{3}{2}}\frac{4\sigma_\gamma^2\sigma}{(\varepsilon/192)^3\delta},\allowbreak \,
        \allowbreak 2\mathe\sqrt{\frac{\sigma_\alpha^3\sigma^2}{(\varepsilon/80)^5}}\log\left(\frac{8}{\delta}\sqrt{\frac{\sigma_\alpha^3\sigma^2}{(\varepsilon/80)^5}}\right),\allowbreak \,
        \allowbreak \frac{2\sigma_L^2}{(\varepsilon/8)^2}\Big\}
        \]
        where the uniform upper bound \(\varepsilon_0(\sigma_\alpha, \sigma_\gamma, \sigma, \kappa, \delta)\) of \(\varepsilon\) is obtained by noting \(\kappa\geq 1\) and solving the following inequalities for \(\varepsilon\)
        to ensure the dominated terms in the above asymptotic bound are larger than the other terms
        of the lower bound on \(T\).

        \[
        \left\{
        \begin{aligned}
            &\frac{4\sigma_\gamma^2\sigma}{(\varepsilon/192)^3\delta} \geq \max(\mathe,\kappa),\,
            \frac{8}{\delta}\sqrt{\frac{\sigma_\alpha^3\sigma^2}{(\varepsilon/80)^5}} \geq \max(\mathe,\sqrt{\kappa})\\
            &\frac{\mathe^{\frac{9}{4}}}{\mathe^{\frac{1}{4}} \sqrt{2}} \log\frac{4\sigma_\gamma^2\sigma}{(\varepsilon/192)^3\delta} \geq 
            \kappa,\,
            \frac{\mathe}{\mathe^{\frac{1}{4}}\sqrt{2}} \log^{\frac{1}{2}}\left(\frac{8}{\delta}\sqrt{\frac{\sigma_\alpha^3\sigma^2}{(\varepsilon/80)^5}}\right) \geq \sqrt{\kappa}\\
            &\frac{\mathe^{\frac{9}{4}}\cdot \sigma_\gamma^2\sigma}{(\varepsilon/192)^3} \geq \frac{\sigma_\gamma^2}{2\sqrt{2}\sigma^2},\,
            2\mathe \sqrt{\frac{\sigma_\alpha^3\sigma^2}{(\varepsilon/80)^5}} \geq \frac{2^{-\frac{11}{4}}}{\kappa^{\frac{1}{4}}}\left(\frac{\sigma_\alpha}{\sigma}\right)^{\frac{3}{2}}
        \end{aligned}
        \right.
        \]

        The explicit expression of 
        \[\varepsilon_0(\sigma_\alpha, \sigma_\gamma, \sigma, \kappa, \delta) \equiv (\sigma_\alpha^3\sigma^2)^{\frac{1}{5}} \cdot \varepsilon_1(\kappa, \delta) \vee (\sigma_\gamma^2\sigma)^{\frac{1}{3}} \cdot \varepsilon_2(\kappa, \delta) \vee \sigma \cdot \varepsilon_3(\kappa, \delta)\] 
        is given by:
        \[
        \begin{aligned}
            \varepsilon_1(\kappa, \delta) &= 80 \left( \frac{64}{\delta^2 \left( \mathe^2 \vee \kappa \vee \mathe^{4\kappa/\mathe^{3/2}} \right)} \right)^{\!1/5}\\            
            \varepsilon_2(\kappa, \delta) &= 192 \left( \frac{4}{\delta \left( \mathe \vee \kappa \vee \mathe^{\sqrt{2}\kappa/\mathe^2} \right)} \right)^{\!1/3}\\            
            \varepsilon_3(\kappa, \delta) &= 192\sqrt{2}\mathe^{3/4} \;\wedge\; 160\sqrt{2}\mathe^{2/5}\kappa^{1/10}        
        \end{aligned}
        \]
        Therefore, the iteration complexity for \(T\) and the expected value of oracle complexity for 
        \(N = [B\cdot p + 1\cdot (1-p)]T=(2-1/B)T\) are given by:
        \[
        N,\,T = \mathcal{O}\left(\frac{\sigma_\gamma^2\sigma}{\varepsilon^3}\log^{\frac{3}{2}}\frac{\sigma_\gamma^2\sigma}{\varepsilon^3\delta}\vee 
        \sqrt{\frac{\sigma_\alpha^3\sigma^2}{\varepsilon^5}}\log\left(\frac{1}{\delta}\sqrt{\frac{\sigma_\alpha^3\sigma^2}{\varepsilon^5}}\right)\vee 
        \frac{\sigma_L^2}{\varepsilon^2}\right)
        \]
        with the selection of batch size \(B=\frac{1}{p}\) to guarantee \(\frac{1}{T}\sum_{t=1}^T\|\nabla f(w_t)\|_\ast \|P_t\|^2 \leq \varepsilon\) with probability at least \(1-\delta\)
        for sufficiently small \(\varepsilon\).

        \textbf{Family 3, Case 3 (Second-Order SPIDER)} \((p_t=\mathbb{I}_{\{t\!\!\mod E=0\}},\beta_t=1)\):
        By invoking the established upper bound of \(\|e_t\|_*\) for the estimator \(v_t\) with zeroth-order correction term \(\mathcal{T}_t\), the selection of \(p_t=\mathbb{I}_{\{t\mod E = 0\}}, \beta_t =1\) and \(\eta_t=\eta\) in the third-level section of Appendix~\ref{subsubsec:family3case3}, we show the following upper bound of \(\frac{1}{T}\sum_{t=1}^T\|e_t\|_*\):
        \[
        \frac{1}{T}\sum_{t=1}^T\|e_t\|_* \leq C\!\left(\frac{\delta}{2T/E},\kappa\right)\sigma/\sqrt{B}+\frac{\alpha\eta^2E}{2}+C\!\left(\frac{\delta}{2T},\kappa\right)\gamma\eta\sqrt{2E}
        \]

        By selecting the batch size \(B = E\),
        noting \(C\left(\frac{\delta}{2T/E},\kappa\right)\leq C\!\left(\frac{\delta}{2T},\kappa\right) \leq 2 \sqrt{2[\kappa \vee \log\frac{2T}{\delta}]}\), we have:
        \begin{eqnarray*}
        & &\frac{1}{T}\sum_{t=1}^T\|\nabla f(w_t)\|_*\|P_t\|^2\leq\frac{4\Delta_f}{\eta T}+2L\eta
        +8 \cdot \frac{1}{T}\sum_{t=1}^T \|e_t\|_* 
        \\
        & \leq & 
        \frac{4 \Delta_f}{\eta T} + 2L\cdot \eta + 0 + 16 \sigma \sqrt{2[\kappa\vee\log\frac{2T}{\delta}]} \cdot \frac{1}{\sqrt{E}} 
        + 32\gamma \sqrt{\kappa\vee \log\frac{2T}{\delta}} \cdot \eta \sqrt{E}
        + 4\alpha \cdot \eta^2 E
        \end{eqnarray*}

        Set \(C_1 = 4 \Delta_f, C_2 = 2L, C_3 = 0, C_4 = 16 \sigma \sqrt{2[\kappa\vee\log\frac{2T}{\delta}]}, C_5 = 32\gamma \sqrt{\kappa\vee \log\frac{2T}{\delta}}, C_6 = 4\alpha\) in Lemma~\ref{lemma:opt_bound_stepsize_reset_batch},
        selecting \(\eta = \min\Big\{\left(\frac{C_1}{C_2T}\right)^{\frac{1}{2}}, \allowbreak \left(\frac{C_1^2}{\sqrt{2} C_4 C_5 T^2}\right)^{\frac{1}{3}}, \allowbreak \left(\frac{C_1^3}{4C_4^2 C_6 T^3}\right)^{\frac{1}{5}}\Big\}\)
        = \(\min\Big\{\frac{\sqrt{2}}{L} \sigma_L T^{-\frac{1}{2}}, \allowbreak \frac{1}{4\gamma}\left(\frac{\sigma_\gamma^4}{\sigma}\right)^{\frac{1}{3}}[\kappa\vee \log\frac{2T}{\delta}]^{-\frac{1}{3}} T^{-\frac{2}{3}}, \allowbreak \frac{1}{2}\sqrt{\frac{1}{2\alpha}\left(\frac{2\sigma_\alpha^9}{\sigma^4}\right)^{\frac{1}{5}}} [\kappa\vee\log\frac{2T}{\delta}]^{-\frac{1}{5}} T^{-\frac{3}{5}}\Big\}\)
        and the reset epoch \(E\) and batch size \(B\) as \(E = B = \left\lceil \min\Big\{\left(\frac{C_4^4 T^2}{2 C_1^2 C_5^2}\right)^{\frac{1}{3}}, \allowbreak \left(\frac{C_4^6 T^4}{16 C_1^4 C_6^2}\right)^{\frac{1}{5}}\Big\} \right\rceil\)
        = \(\left\lceil \min\Big\{2\left(\frac{\sigma}{\sigma_\gamma}\right)^{\frac{4}{3}}\left[\kappa\vee \log\frac{2T}{\delta}\right]^{\frac{1}{3}}T^{\frac{2}{3}}, \allowbreak 4\left( \frac{2\sigma^6}{\sigma_\alpha^6}\right)^{\frac{1}{5}} [\kappa\vee\log\frac{2T}{\delta}]^{\frac{3}{5}} T^{\frac{4}{5}}\Big\} \right\rceil\)
        when \(T\left[\kappa \vee \log\frac{2T}{\delta}\right]^{\frac{1}{2}} \geq \frac{\sigma_\gamma^2}{2\sqrt{2}\sigma^2}\)
        and \(T [\kappa\vee\log\frac{2T}{\delta}]^{\frac{3}{4}} \geq 2^{-\frac{11}{4}} \left(\frac{\sigma_\alpha}{\sigma}\right)^{\frac{3}{2}}\), we have:
        \begin{eqnarray*}
        & &\frac{1}{T}\sum_{t=1}^T\|\nabla f(w_t)\|_\ast \|P_t\|^2
        \leq 2\left(\frac{C_1 C_2}{T}\right)^{\frac{1}{2}} + 3\left(\frac{\sqrt{2} C_1 C_4 C_5}{T}\right)^{\frac{1}{3}}
        + 5 \left(\frac{C_1^2 C_4^2 C_6}{8T^2}\right)^{\frac{1}{5}}\\
        & = & 4\sqrt{2} \sigma_L T^{-\frac{1}{2}} 
        + 48(\sigma_\gamma^2 \sigma)^{\frac{1}{3}} \left(\frac{\kappa\vee\log\frac{2T}{\delta}}{T}\right)^{\frac{1}{3}}
        + 20(\sigma_\alpha^3 \sigma^2)^{\frac{1}{5}} \left(\frac{2[\kappa\vee\log\frac{2T}{\delta}]^{\frac{1}{2}}}{T}\right)^{\frac{2}{5}}
        \end{eqnarray*}
        To analyze the iteration complexity, the following inequalities of \(T\) are sufficient to show that \(\frac{1}{T}\sum_{t=1}^T\|\nabla f(w_t)\|_\ast \|P_t\|^2 \leq \varepsilon\) 
        for some \(\varepsilon > 0\) as 1st, 2nd and 3rd terms are upper bounded by \(\varepsilon/2, \varepsilon/4\) and \(\varepsilon/4\) respectively,
        noting that \(T\geq \frac{\sigma_\gamma^2}{2\sqrt{2\kappa}\sigma^2} \vee \frac{2^{-\frac{11}{4}}}{\kappa^{\frac{3}{4}}}\left(\frac{\sigma_\alpha}{\sigma}\right)^{\frac{3}{2}}\) 
        implies \(T\left[\kappa \vee \log\frac{2T}{\delta}\right]^{\frac{1}{2}} \geq T\kappa^{\frac{1}{2}} \geq \frac{\sigma_\gamma^2}{2\sqrt{2}\sigma^2}\)
        and \(T [\kappa\vee\log\frac{2T}{\delta}]^{\frac{3}{4}} \geq T \kappa^{\frac{3}{4}} \geq 2^{-\frac{11}{4}} \left(\frac{\sigma_\alpha}{\sigma}\right)^{\frac{3}{2}}\):
        \[
        \begin{aligned}
            T\geq & \frac{\sigma_\gamma^2}{2\sqrt{2\kappa}\sigma^2} \vee \frac{2^{-\frac{11}{4}}}{\kappa^{\frac{3}{4}}}\left(\frac{\sigma_\alpha}{\sigma}\right)^{\frac{3}{2}} \vee \frac{128 \sigma_L^2}{\varepsilon^2} \vee \frac{\kappa \cdot \sigma_\gamma^2\sigma}{(\varepsilon/192)^3} \vee 2\sqrt{\kappa}\sqrt{\frac{\sigma_\alpha^3 \sigma^2}{(\varepsilon/80)^5}},\\
            & 
            \frac{\log\frac{2T}{\delta}}{T} \leq \frac{(\varepsilon/192)^3}{\sigma_\gamma^2 \sigma},\,
            \frac{\log^{\frac{1}{2}}\frac{2T}{\delta}}{T} \leq \frac{1}{2}\sqrt{\frac{(\varepsilon/80)^5}{\sigma_\alpha^3\sigma^2}}
        \end{aligned}
        \]
        By substituting \(\delta \gets \delta/2\) and \((q, \varepsilon) \gets (1, \frac{(\varepsilon/192)^3}{\sigma_\gamma^2 \sigma}), (\frac{1}{2},\frac{1}{2}\sqrt{\frac{(\varepsilon/80)^5}{\sigma_\alpha^3\sigma^2}})\) in Lemma~\ref{lemma:complexity_log}, the following condition of \(T\)
        is sufficient to guarantee the above inequalities,
        and therefore \(\frac{1}{T}\sum_{t=1}^T\|\nabla f(w_t)\|_\ast \|P_t\|^2 \leq \varepsilon\).
        \[
        \begin{aligned}
            T\geq & \frac{\sigma_\gamma^2}{2\sqrt{2\kappa}\sigma^2} \vee \frac{2^{-\frac{11}{4}}}{\kappa^{\frac{3}{4}}}\left(\frac{\sigma_\alpha}{\sigma}\right)^{\frac{3}{2}} \vee \frac{128 \sigma_L^2}{\varepsilon^2}
            \vee \frac{\sigma_\gamma^2\sigma}{(\varepsilon/192)^3}\left[\kappa \vee \mathe\log\left[\mathe\vee \frac{2\sigma_\gamma^2\sigma}{(\varepsilon/192)^3\delta} \right]\right]\\
            &\vee 2\sqrt{\frac{\sigma_\alpha^3\sigma^2}{(\varepsilon/80)^5}}\left[\sqrt{\kappa} \vee \mathe^{\frac{1}{4}}\log^{\frac{1}{2}}\left[\mathe\vee\left(\frac{4}{\delta}\sqrt{\frac{\sigma_\alpha^3\sigma^2}{(\varepsilon/80)^5}}\right)\right]\right]
        \end{aligned}
        \]
        In particular, for all \(\varepsilon \in \mathbb{R}_+\) is sufficiently small such 
        that \(\varepsilon \leq \varepsilon_0(\sigma_\alpha, \sigma_\gamma, \sigma, \kappa, \delta)\),
        then the lower bound on \(T\) simplifies asymptotically to
        \[
        T \geq \max\Big\{
        \frac{\mathe\cdot\sigma_\gamma^2\sigma}{(\varepsilon/192)^3}\log\frac{2\sigma_\gamma^2\sigma}{(\varepsilon/192)^3\delta},\allowbreak \,
        \allowbreak 2\mathe^{\frac{1}{4}}\sqrt{\frac{\sigma_\alpha^3\sigma^2}{(\varepsilon/80)^5}}\log^{\frac{1}{2}}\left(\frac{4}{\delta}\sqrt{\frac{\sigma_\alpha^3\sigma^2}{(\varepsilon/80)^5}}\right),\allowbreak \,
        \allowbreak \frac{2\sigma_L^2}{(\varepsilon/8)^2}\Big\}
        \]
        where the uniform upper bound \(\varepsilon_0(\sigma_\alpha, \sigma_\gamma, \sigma, \kappa, \delta)\) of \(\varepsilon\) is obtained by noting \(\kappa\geq 1\) and solving the following inequalities for \(\varepsilon\)
        to ensure the dominated terms in the above asymptotic bound are larger than the other terms
        of the lower bound on \(T\).

        \[
        \left\{
        \begin{aligned}
            &\frac{2\sigma_\gamma^2\sigma}{(\varepsilon/192)^3\delta} \geq \mathe,\,
            \frac{4}{\delta}\sqrt{\frac{\sigma_\alpha^3\sigma^2}{(\varepsilon/80)^5}} \geq \mathe,\\
            &\mathe \log\frac{2\sigma_\gamma^2\sigma}{(\varepsilon/192)^3\delta} \geq 
            \kappa,
            \mathe^{\frac{1}{4}}\log^{\frac{1}{2}}\left(\frac{4}{\delta}\sqrt{\frac{\sigma_\alpha^3\sigma^2}{(\varepsilon/80)^5}}\right)\geq \sqrt{\kappa},\\
            &\frac{\mathe\cdot\sigma_\gamma^2\sigma}{(\varepsilon/192)^3} \geq \frac{1}{2\sqrt{2\kappa}}\frac{\sigma_\gamma^2}{\sigma^2},\,
            2\mathe^{\frac{1}{4}} \sqrt{\frac{\sigma_\alpha^3\sigma^2}{(\varepsilon/80)^5}} \geq \frac{2^{-\frac{11}{4}}}{\kappa^{\frac{3}{4}}}\left(\frac{\sigma_\alpha}{\sigma}\right)^{\frac{3}{2}}
        \end{aligned}
        \right.
        \]

        The explicit expression of 
        \[\varepsilon_0(\sigma_\alpha, \sigma_\gamma, \sigma, \kappa, \delta) \equiv (\sigma_\alpha^3\sigma^2)^{\frac{1}{5}} \cdot \varepsilon_1(\kappa, \delta) \vee (\sigma_\gamma^2\sigma)^{\frac{1}{3}} \cdot \varepsilon_2(\kappa, \delta) \vee \sigma \cdot \varepsilon_3(\kappa, \delta) \] 
        is given by:
        \[
        \begin{aligned}
            \varepsilon_1(\kappa, \delta) &= 80 \left( \frac{16}{\delta^2 \mathe^{2 \vee (2\kappa/\sqrt{\mathe})}} \right)^{\!1/5}\\            
            \varepsilon_2(\kappa, \delta) &= 192 \left( \frac{2}{\delta \mathe^{1 \vee (\kappa/\mathe)}} \right)^{\!1/3}\\            
            \varepsilon_3(\kappa, \delta) &= 192\sqrt{2}(\mathe\kappa^{1/2})^{\!1/3} \;\wedge\; 160\sqrt{2}(\mathe^{1/2}\kappa^{3/2})^{\!1/5}        
        \end{aligned}
        \]
        Therefore, the iteration complexity for \(T\) and the oracle complexity for 
        \(N = B\cdot T/E+1\cdot (T-T/E) = T(2-1/B)\) are given by:
        \[
        N,\,T = \mathcal{O}\left(\frac{\sigma_\gamma^2\sigma}{\varepsilon^3}\log\frac{\sigma_\gamma^2\sigma}{\varepsilon^3\delta}\vee 
        \sqrt{\frac{\sigma_\alpha^3\sigma^2}{\varepsilon^5}}\log^{\frac{1}{2}}\left(\frac{1}{\delta}\sqrt{\frac{\sigma_\alpha^3\sigma^2}{\varepsilon^5}}\right)\vee 
        \frac{\sigma_L^2}{\varepsilon^2}\right)
        \]
        with the selection of batch size \(B=E\) to guarantee \(\frac{1}{T}\sum_{t=1}^T\|\nabla f(w_t)\|_\ast \|P_t\|^2 \leq \varepsilon\) with probability at least \(1-\delta\)
        for sufficiently small \(\varepsilon\).
\end{proof}

\newpage
\section{Application 2: Expectation-Constrained Stochastic Optimization}
\label{apdx:constrained_sgm}
\paragraph{Optimization Problem}
We aim to solve the stochastic constrained optimization problem:
\[\min_{w \in \mathcal{W}} f(w) \coloneqq \mathbb{E}[F(w, \xi)] \quad \text{s.t.} \quad h(w) \coloneqq \mathbb{E}[H(w, \xi)] \le 0,\]
where \(F(w, \xi):\mathcal{W}\to \mathbb{R}\) and \(H(w, \xi):\mathcal{W} \to \mathbb{R}\) are convex, \(\mathcal{W}\) is a closed, convex subset of \(\mathcal{X}\).

\paragraph{The Unified Estimator and SGM Update Rule}
\begin{itemize}[leftmargin=*]
    \item \textbf{Update Direction:} \(U_t \coloneqq \mathbb{I}_{\{v_t \leq \varepsilon\}}  F^{'}(w_t, \zeta_t) + (1 - \mathbb{I}_{\{v_t \leq \varepsilon\}}) H^{'}(w_t, \zeta_t)\).

    \item \textbf{Mirror Descent Step:}
    \[w_{t+1} = \underset{w \in \mathcal{W}}{\arg\min} \left\{ \langle \eta_t U_t, w \rangle + D_\Phi(w, w_t) \right\} = w_t - \eta_t P(w_t, U_t, \eta_t)\]
    where \(\eta_t = \eta\) is the constant step size.
\end{itemize}

The active-phase feasible and non-feasible index sets
\[
    \mathcal A \coloneqq \{t\in[T]: v_t\le \varepsilon\}, \qquad
    \mathcal B \coloneqq [T]\setminus \mathcal A.
\]
\begin{theorem}[Constrained SGM via Unified Estimation, Theorem~\ref{thm:constrained_main}]
\label{thm:constrained_generic}
Suppose Assumption~\ref{ass:constrained_setup} hold. Suppose there exists a deterministic quantity
\(\mathcal{E}\ge 0\) such that
\begin{equation}
    \label{ass:tail_envelope}
    \mathbb P\!\left(
    |e_t|\le \mathcal{E}
    \quad \text{for all } t\in[T]
    \right)\ge 1-\frac{\delta}{2}.
\end{equation}
Define
\(\mathsf{OptError}(\eta,T,\delta) \coloneqq \frac{R^2}{2\eta T} + \frac{\eta G^2}{2} + \frac{2DG}{\sqrt{T}}\sqrt{2\log\frac{4}{\delta}}.\)
Set the switching threshold to
\(\varepsilon \coloneqq \mathsf{OptError}(\eta,T,\delta)+\mathcal{E}.\)
Then, with probability at least \(1-\delta\), the selected set \(\mathcal A\) is non-empty,
and the active-phase selected average
\(\bar w \coloneqq \frac{1}{|\mathcal A|}\sum_{t\in\mathcal A} w_t\)
satisfies
\[
f(\bar w)-f(w^\star)\le \varepsilon, \qquad
h(\bar w)\le \varepsilon+\mathcal{E},
\]
where \(w^\star\) is any optimal feasible solution.
\end{theorem}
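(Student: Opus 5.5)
We follow the classical switching-gradient analysis of \cite{polyak1967general,lan2020algorithms}, adapted to mirror descent. We work on the intersection of two events. The first is the envelope event \(\{|e_t|\le\mathcal E\ \forall t\in[T]\}\) from \eqref{ass:tail_envelope}, which has probability at least \(1-\delta/2\). The second is an Azuma--Hoeffding event for the subgradient noise, also of probability at least \(1-\delta/2\). A union bound then gives total probability \(1-\delta\).

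\textbf{Step 1: one-step inequality.} Apply the first-order optimality condition of Remark~\ref{remark:first_order_optimality_condition} with \(w'=w^\star\). Combine it with \(D_\Phi(w_{t+1},w_t)\ge\frac12\|w_{t+1}-w_t\|^2\) and Young's inequality \(\eta\langle U_t,w_t-w_{t+1}\rangle\le \frac{\eta^2G^2}{2}+\frac12\|w_t-w_{t+1}\|^2\). This gives
\[
\eta\langle U_t,w_t-w^\star\rangle\le D_\Phi(w^\star,w_t)-D_\Phi(w^\star,w_{t+1})+\tfrac{\eta^2G^2}{2}.
\]
Write \(U_t=u_t+\nu_t\), where \(u_t\) is the conditional mean (a subgradient of \(f\) or of \(h\) at \(w_t\)). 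The mean is well defined because the indicator \(\mathbb I_{\{v_t\le\varepsilon\}}\) is independent of \(\zeta_t\) given \(\mathcal F_{t-1},\xi_t,\xi_t^B,b_t\), as Assumption~\ref{ass:constrained_setup} states. The terms \(\nu_t\) are martingale differences with respect to the enlarged filtration, and \(|\langle\nu_t,w_t-w^\star\rangle|\le 2GD\). Summing over \(t\), telescoping with \(D_\Phi\le R^2/2\), dividing by \(\eta T\), and applying Azuma--Hoeffding at level \(\delta/4\) (the factor \(4\) leaves slack for the two-sided bound) yields
\[
\tfrac1T\sum_t\langle u_t,w_t-w^\star\rangle\le \mathsf{OptError}(\eta,T,\delta).
\]
By convexity, \(\langle u_t,w_t-w^\star\rangle\ge f(w_t)-f(w^\star)\) for \(t\in\mathcal A\), and \(\ge h(w_t)-h(w^\star)\ge h(w_t)\) for \(t\in\mathcal B\), since \(h(w^\star)\le0\).

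\textbf{Step 2: \(\mathcal A\neq\emptyset\).} This is the main step. For \(t\in\mathcal B\) we have \(v_t>\varepsilon\), so on the envelope event \(h(w_t)\ge v_t-\mathcal E>\varepsilon-\mathcal E=\mathsf{OptError}\). Suppose \(\mathcal A=\emptyset\). Then \(\frac1T\sum_t h(w_t)>\mathsf{OptError}\), which contradicts Step 1.

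\textbf{Step 3: optimality.} Step 1 gives
\[
\textstyle\sum_{\mathcal A}(f(w_t)-f(w^\star))\le T\,\mathsf{OptError}-\sum_{\mathcal B}h(w_t)\le T\,\mathsf{OptError}-|\mathcal B|\,\mathsf{OptError}=|\mathcal A|\,\mathsf{OptError}.
\]
Dividing by \(|\mathcal A|\) and using Jensen's inequality for convex \(f\) gives \(f(\bar w)-f(w^\star)\le\mathsf{OptError}\le\varepsilon\).

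\textbf{Step 4: feasibility.} For \(t\in\mathcal A\) we have \(h(w_t)\le v_t+\mathcal E\le\varepsilon+\mathcal E\). Convexity of \(h\) then gives \(h(\bar w)\le\varepsilon+\mathcal E\).

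The delicate point is Step 1. One must verify that \(\nu_t\) is a genuine martingale difference even though the switching indicator depends on \(\xi_t\), and that a single concentration bound covers a sum whose summands alternate between \(F'\) and \(H'\). If this fails, the fallback is to define the filtration \(\mathcal G_t=\sigma(\mathcal F_{t-1},\xi_t,\xi_t^B,b_t)\) and apply Azuma with respect to \(\mathcal G_t\) directly.
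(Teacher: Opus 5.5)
Your proposal is correct and follows essentially the same route as the paper. Both use the three-point mirror-descent inequality with \(\|U_t\|_\ast\le G\), Azuma--Hoeffding on the subgradient noise with respect to a filtration containing the switching decision (your ``fallback'' \(\mathcal G_t\) is exactly the paper's choice), a contradiction argument for \(\mathcal A\neq\emptyset\), and Jensen for optimality and feasibility. The one difference slightly favors you: you use \(h(w_t)>\varepsilon-\mathcal E=\mathsf{OptError}\) directly on \(\mathcal B\), instead of the paper's bound \(\frac1T\sum_{t\in\mathcal B}|e_t|\le\mathcal E\), so your Step~3 gives the marginally sharper \(f(\bar w)-f(w^\star)\le\mathsf{OptError}\).
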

In all family and parameter regimes considered below, the envelope \(\mathcal{E}\) in \eqref{ass:tail_envelope} is verified by specializing Theorem~\ref{thm:unified} and bounding the resulting estimator error uniformly over \(t\in[T]\).

\begin{proof}
We work on the intersection of the following two high-probability events:
\begin{itemize}
    \item the tail-envelope event from \eqref{ass:tail_envelope},
    \item the martingale concentration event established below.
\end{itemize}
By a union bound, this intersection of the two high-probability events has probability at least \(1-\delta\).

\noindent\textbf{Step 1: Mirror-descent inequality on the active phase.}
By the first-order optimality condition (see Remark~\ref{remark:first_order_optimality_condition} of First-Order Optimality Condition) for the mirror-descent update 
such that \(\langle \eta_t U_t + \nabla\Phi(w_{t+1})- \nabla\Phi(w_t),\, w^\star-w_{t+1}\rangle \ge 0\), 
and using the three-point identity for Bregman divergences, we obtain
\[
\eta_t \langle U_t, w_{t+1} - w^\star\rangle \leq [D_\Phi(w^\star,w_t)-D_\Phi(w^\star,w_{t+1})] - D_\Phi(w_{t+1},w_t).
\]
By the \(1\)-strong convexity of \(\Phi\), we obtain \(D_\Phi(w_{t+1},w_t) \geq \frac{1}{2}\|w_t-w_{t+1}\|^2 = \frac{\eta_t^2}{2}\|P(w_t, U_t, \eta_t)\|^2\),
also noting that \(\|U_t\|_\ast^2 + \|P(w_t, U_t, \eta_t)\|^2 \geq 2\|U_t\|_\ast \|P(w_t, U_t, \eta_t)\| \geq 2 \langle U_t, P(w_t, U_t, \eta_t)\rangle\), hence,
\[
\begin{aligned}
    \langle U_t,w_t-w^\star\rangle &= \langle U_t,w_{t+1}-w^\star\rangle + \langle U_t,w_t-w_{t+1}\rangle \\
    &\leq \frac{1}{\eta_t}[D_\Phi(w^\star,w_t)-D_\Phi(w^\star,w_{t+1})] - \frac{\eta_t}{2}\|P(w_t, U_t, \eta_t)\|^2 + \eta_t \langle U_t, P(w_t, U_t, \eta_t)\rangle\\
    &\leq \frac{1}{\eta_t}[D_\Phi(w^\star,w_t)-D_\Phi(w^\star,w_{t+1})] + \frac{\eta_t}{2}\|U_t\|_\ast^2 
\end{aligned}
\]
Summing over \(t\in[T]\) and using \(\eta_t = \eta\) yields
\[
\sum_{t\in[T]}\langle U_t,w_t-w^\star\rangle \le
\frac{D_\Phi(w^\star,w_{1})-D_\Phi(w^\star,w_{T+1})}{\eta} +
\frac{\eta}{2}\sum_{t\in[T]}\|U_t\|_\ast^2.
\]
Since \(\sup_{u,v\in\mathcal W}D_\Phi(u,v)\le R^2/2\) and \(\|U_t\|_\ast\le G\) almost surely, and \(D_\Phi(w^\star,w_{T+1}) \geq 0\),
\begin{equation}
\label{eq:active_md_rhs_clean}
\sum_{t\in[T]}\langle U_t,w_t-w^\star\rangle
\le
\frac{R^2}{2\eta}
+
\frac{\eta T G^2}{2}.
\end{equation}

\noindent\textbf{Step 2: Convexity decomposition and martingale concentration.}

Let \(\mathcal G_t := \sigma(\calF_{t-1}, \xi_t^B\mid_{b_t=1}, \xi_t\mid_{b_t=0} ,b_t)\) denote the sigma-field generated by the history \(\calF_{t-1}\) and the estimator randomness up to the switching decision (which also depends on \(\xi_t^B\mid_{b_t=1}, \xi_t\mid_{b_t=0} ,b_t\)) at iteration \(t\). By construction, the indicator \(I_t=\mathbb{I}_{\{v_t\le \varepsilon\}}\) is \(\mathcal G_t\)-measurable,
\(w_t - w^\star\) is \(\mathcal G_t\)-measurable, while \(\zeta_t\) is independent of \(\calG_t\).
Define
\[
g_t^f \coloneqq \mathbb E[ F^{'}(w_t,\zeta_t)\mid \mathcal G_t]\in \partial f(w_t), \qquad
g_t^h \coloneqq \mathbb E[ H^{'}(w_t,\zeta_t)\mid \mathcal G_t]\in \partial h(w_t).
\]
Also define
\[
M_t \coloneqq I_t\langle F^{'}(w_t,\zeta_t)-g_t^f,\; w_t-w^\star\rangle +
(1-I_t)\langle H^{'}(w_t,\zeta_t)-g_t^h,\; w_t-w^\star\rangle.
\]
By the conditional unbiasedness assumption,
\(
\mathbb E[M_t\mid \mathcal G_t]=0,
\)
so \(\{M_t\}_{t\in[T]}\) is a martingale difference sequence.
Moreover,
\[
|M_t| \le 2G\|w_t-w^\star\| \le 2DG.
\]
Hence, Azuma--Hoeffding inequality implies that with probability at least \(1-\delta/2\),
\[
-\sum_{t\in[T]} M_t \le
2DG\sqrt{2T\log\frac{4}{\delta}}.
\]
By convexity of \(f\) and \(h\), and \(w^\star\) is feasible, \(h(w^\star)\le 0\), hence
\[
\langle g_t^f,w_t-w^\star\rangle \ge f(w_t)-f(w^\star), \qquad
\langle g_t^h,w_t-w^\star\rangle \ge h(w_t)-h(w^\star) \ge h(w_t).
\]
Therefore,
\[
\sum_{t\in[T]}\langle U_t,w_t-w^\star\rangle \ge
\sum_{t\in\mathcal A}\bigl(f(w_t)-f(w^\star)\bigr) +
\sum_{t\in\mathcal B} h(w_t) + \sum_{t\in[T]} M_t.
\]
Combining this with \eqref{eq:active_md_rhs_clean}, dividing by \(T\), and using the Azuma bound gives
\begin{equation}
\label{eq:pre_substitution_clean}
\frac{1}{T}\sum_{t\in\mathcal A}\bigl(f(w_t)-f(w^\star)\bigr)
+
\frac{1}{T}\sum_{t\in\mathcal B} h(w_t)
\le
\frac{R^2}{2\eta T} + \frac{\eta G^2}{2} + \frac{2DG}{\sqrt{T}}\sqrt{2\log\frac{4}{\delta}}
=: \mathsf{OptError}(\eta,T,\delta).
\end{equation}

Now use \(h(w_t)=v_t-e_t\ge v_t-|e_t|\) on \(\mathcal B\):
\[
\frac{1}{T}\sum_{t\in\mathcal A}\bigl(f(w_t)-f(w^\star)\bigr) +
\frac{1}{T}\sum_{t\in\mathcal B} v_t \le
\mathsf{OptError}(\eta,T,\delta) +
\frac{1}{T}\sum_{t\in\mathcal B}|e_t|.
\]
On the tail-envelope event, \(|e_t|\le \mathcal{E}\) for all \(t\in[T]\), hence by letting \(\varepsilon = \mathsf{OptError}(\eta,T,\delta)+\mathcal{E}\),
\[
\frac{1}{T}\sum_{t\in\mathcal A}\bigl(f(w_t)-f(w^\star)\bigr) +
\frac{1}{T}\sum_{t\in\mathcal B} v_t \le
\mathsf{OptError}(\eta,T,\delta)+\mathcal{E} = \varepsilon.
\]
Thus
\begin{equation}
\label{eq:key_active_bound_clean}
\frac{1}{T}\sum_{t\in\mathcal A}\bigl(f(w_t)-f(w^\star)\bigr)
+
\frac{1}{T}\sum_{t\in\mathcal B} v_t
\le
\varepsilon.
\end{equation}

\noindent\textbf{Step 3: Non-emptiness of the selected set.}
Assume for contradiction that \(\mathcal A=\varnothing\). Then \(\mathcal B=[T]\), and by definition of \(\mathcal B\) we have \(v_t>\varepsilon\) for every \(t\in\mathcal B\). Therefore
\(\frac{1}{T}\sum_{t\in\mathcal B}v_t > \varepsilon,\)
which contradicts \eqref{eq:key_active_bound_clean}. Hence \(\mathcal A\neq\varnothing\).

\noindent\textbf{Step 4: Optimality of the selected average.}
If
\(\sum_{t\in\mathcal A}\bigl(f(w_t)-f(w^\star)\bigr)\le 0,\)
then convexity of \(f\) implies
\(f(\bar w)-f(w^\star)\le 0\le \varepsilon.\)
Otherwise, the sum over \(\mathcal A\) is positive. Since \(v_t>\varepsilon\) for all \(t\in\mathcal B\), \eqref{eq:key_active_bound_clean} implies
\[
\frac{1}{T}\sum_{t\in\mathcal A}\bigl(f(w_t)-f(w^\star)\bigr) <
\varepsilon-\frac{|\mathcal B|}{T}\varepsilon =
\frac{|\mathcal A|}{T}\varepsilon.
\]
Dividing by \(|\mathcal A|/T\) and using Jensen's inequality yields
\(f(\bar w)-f(w^\star) \le \frac{1}{|\mathcal A|}\sum_{t\in\mathcal A}\bigl(f(w_t)-f(w^\star)\bigr) < \varepsilon.\)
Hence
\(f(\bar w)-f(w^\star)\le \varepsilon.\)

\noindent\textbf{Step 5: Feasibility of the selected average.}
By the convexity of \(h\),
\[
h(\bar w) \le \frac{1}{|\mathcal A|}\sum_{t\in\mathcal A} h(w_t) =
\frac{1}{|\mathcal A|}\sum_{t\in\mathcal A}\bigl(v_t-e_t\bigr) \le
\frac{1}{|\mathcal A|}\sum_{t\in\mathcal A}\bigl(v_t+|e_t|\bigr).
\]
Since \(v_t\le \varepsilon\) on \(\mathcal A\) and \(|e_t|\le \mathcal{E}\) on the entire active phase,
\[
h(\bar w) \le \frac{1}{|\mathcal A|}\sum_{t\in\mathcal A}\bigl(\varepsilon+\mathcal{E}\bigr) =
\varepsilon+\mathcal{E}.
\]
This completes the proof.
\end{proof}

\begin{remark}[Constrained SGM on a Reusable Analysis Window]
We can extend the above analysis to a selected set such that for \(t_0 < T\) and \(T_a := T- t_0\)
\[
\mathcal{A} := \{ t\in [T]: t> t_0, v_t\leq \varepsilon\}
\]
By letting the threshold be
\[
\varepsilon:=\frac{R^2}{2 \eta T_a}+\frac{\eta G^2}{2}+\frac{2 D G}{\sqrt{T_a}} \sqrt{2 \log \frac{4}{\delta}}+\mathcal{E},
\]
and \(\Pr(\sup_{t\in[t_0+1:T]} |e_t|> \calE)\leq \delta/2\),
then, with probability at least \(1-\delta\), the selected set \(\mathcal{A}\) is non-empty, and the uniform average \(\bar{w}=\frac{1}{|\mathcal{A}|} \sum_{t \in \mathcal{A}} w_t\) satisfies both \(f(\bar{w})-f\left(w^*\right) \leq \varepsilon\) and \(h(\bar{w}) \leq \varepsilon+\mathcal{E}\).
\end{remark}

\begin{table}[!t]
\centering
\small                          %
\setlength{\tabcolsep}{3.5pt}  %
\renewcommand{\arraystretch}{1.35}
\caption{%
Cross-reference of results for Application~2
(Section~\ref{sec:app2} and Appendix~\ref{apdx:constrained_sgm}).
For each parameter regime (case), the table lists the tuned-bound theorem
and complexity corollary, and the dominant \(\varepsilon\)-dependence of the
oracle complexity for each family.
All entries use \(B{=}1\) for Case~1, \(B{=}1/p\) for Case~2,
and \(B{=}E\) for Case~3, yielding \(N{<}2T\).
}
\label{tab:app2_cross_reference}
\begin{tabular}{l || c | c || c | c | c}
\hline\hline
& \multicolumn{2}{c||}{\textbf{Reference}}
& \multicolumn{3}{c}{\textbf{Dominant \(\varepsilon\)-rate}} \\
\textbf{Case}
& \textbf{Bound}
& \textbf{Complexity}
& \textbf{Fam.\,1}
& \textbf{Fam.\,2}
& \textbf{Fam.\,3} \\
\hline\hline

\rowcolor{orange!10}
\multicolumn{6}{l}{\textit{Case~1:}\;
\(\beta_t{=}\beta{<}1,\; p_t{=}0,\; B{=}1\)} \\
Mom.\ / STORM / SO\,Mom.
& Thm.\,\ref{thm:constrained_case1_all_families_tuned}
& Cor.\,\ref{cor:constrained_case1_all_families_complexity}
& \(\varepsilon^{-4}\)
& \(\varepsilon^{-3}\)
& \(\varepsilon^{-3}\;(\varepsilon^{-\frac{5}{2}})\) \\
\hline

\rowcolor{orange!10}
\multicolumn{6}{l}{\textit{Case~2:}\;
\(\beta_t{=}1,\; p_t{=}p{>}0,\; B{=}1/p\)} \\
Prob.\,Mom.\ / PAGE / SO\,PAGE
& Thm.\,\ref{thm:constrained_case2_all_families_tuned}
& Cor.\,\ref{cor:constrained_case2_all_families_complexity}
& \(\varepsilon^{-4}\)
& \(\varepsilon^{-3}\)
& \(\varepsilon^{-3}\;(\varepsilon^{-\frac{5}{2}})\) \\
\hline

\rowcolor{orange!10}
\multicolumn{6}{l}{\textit{Case~3:}\;
\(\beta_t{=}1,\; p_t{=}\mathbb{I}_{\{t\bmod E=0\}},\; B{=}E\)} \\
Per.\,Mom.\ / SPIDER / SO\,SPIDER
& Thm.\,\ref{thm:constrained_case3_all_families_tuned}
& Cor.\,\ref{cor:constrained_case3_all_families_complexity}
& \(\varepsilon^{-4}\)
& \(\varepsilon^{-3}\)
& \(\varepsilon^{-3}\;(\varepsilon^{-\frac{5}{2}})\) \\
\hline\hline
\end{tabular}

\vspace{2pt}
{\footnotesize
The notation \(\varepsilon^{-3}\;(\varepsilon^{-5/2})\) for Family~3 indicates
a dominant \(\varepsilon^{-3}\) term (from the \(\gamma\)-dependent variance)
with a subdominant \(\varepsilon^{-5/2}\) contribution (from the
\(\alpha\)-dependent second-order bias).
The base SGM reduction used by all entries is
Theorem~\ref{thm:constrained_main}\,/\,Theorem~\ref{thm:constrained_generic}.
}
\end{table}

\subsection{Case 1 for Families~1--3 in the constrained setting}
\label{subsec:constrained_case1_all_families}
We instantiate Theorem~\ref{thm:constrained_generic} for the three Case~1 estimators, and note that \(\kappa =1\) for \((\mathbb{R}, |\cdot|)\).
Throughout this subsection, \(1+\sqrt{3} < 2 \sqrt{2},\, \log\frac{4T}{\delta} \geq \log 4 > 1 = \kappa\), then
\[
C\!\left(\frac{\delta}{4T},\kappa\right)
=\sqrt{\kappa}+\sqrt{3\log\frac{4T}{\delta}}\le (1+\sqrt{3})\sqrt{\kappa \vee \log\frac{4T}{\delta}} \leq 2\sqrt{2[\kappa \vee \log\frac{4T}{\delta}]} = 2\sqrt{2 \log\frac{4T}{\delta}}.
\]
\begin{theorem}[Constrained Case~1 Bounds for Families~1--3]
\label{thm:constrained_case1_all_families_tuned}
\allowdisplaybreaks
For each family below, set
\(\varepsilon \coloneqq \mathsf{OptError}(\eta,T_a,\delta)+\mathcal{E}\)
with  \(T_a = T- t_0, t_0 = \lfloor T/2\rfloor\), select the batch size \(B=1\), and let the selected set be
\(\mathcal A = \{t\in[T]:t> t_0, v_t\le \varepsilon\},\, \Gamma_T := 2 \log\frac{4T}{\delta}\).
\begin{enumerate}[leftmargin=*]
\item \textbf{Family~1.}
Choose step size
\(\eta = \min\Big\{\sqrt{2}\frac{R}{G} T^{-\frac{1}{2}}, \allowbreak \left(\frac{R^6}{32\sigma^2 L G\cdot \Gamma_T}\right)^{\frac{1}{4}} T^{-\frac{3}{4}} \Big\},\)
and momentum parameter
\(1-\beta = \max\Big\{T^{-\frac{2}{3}}, \allowbreak \left(\frac{R^2 L G}{8\sigma^2 \cdot \Gamma_T}\right)^{\frac{1}{2}} T^{-\frac{1}{2}}\Big\}.\)
Assume moreover that
\(T\Gamma_T \geq \frac{R^2LG}{8\sigma^2}.\)
The selected set
\(\mathcal A\)
is nonempty with probability at least \(1-\delta\), and the active-phase selected average
\(\bar w = \frac{1}{|\mathcal A|}\sum_{t\in\mathcal A} w_t\)
satisfies
\(f(\bar w)-f(w^\star)\le Q_T^{(1,1)},\) \(h(\bar w)\le Q_T^{(1,1)},\)
where
\[
Q_T^{(1,1)} = \sqrt{2} RG T^{-\frac{1}{2}} +  
            8\sigma \left(\frac{\Gamma_T^{\frac{3}{2}}}{T}\right)^{\frac{1}{3}}
            + 3 \left( 32 R^2 L G \sigma^2\right)^{\frac{1}{4}} \left(\frac{\Gamma_T}{T}\right)^{\frac{1}{4}} + 4DG \left( \frac{\log\frac{4}{\delta}}{T}\right)^{\frac{1}{2}}.
\]
\item \textbf{Family~2.}
Choose
\(\eta = \min\Big\{ \frac{\sqrt{2} R}{G} T^{-\frac{1}{2}}, \allowbreak \left(\frac{R^4}{32\ell G\sigma \Gamma_T}\right)^{\frac{1}{3}}T^{-\frac{2}{3}} \Big\},\)
\(1-\beta = \max\Big\{ 2^{-\frac{1}{3}} T^{-\frac{2}{3}}, \allowbreak \left(\frac{R^4 \ell^2 G^2/\sigma^4}{32 \Gamma_T}\right)^{\frac{1}{3}}T^{-\frac{2}{3}} \Big\}.\)
Assume moreover that
\(T\Gamma_T^{1/2} > \frac{R^2\ell G}{4\sqrt{2}\,\sigma^2}.\)
The selected set
\(\mathcal A\)
is nonempty with probability at least \(1-\delta\), and
\(f(\bar w)-f(w^\star)\le Q_T^{(2,1)},\) \(h(\bar w)\le Q_T^{(2,1)},\)
where
\[
Q_T^{(2,1)} = \sqrt{2} RG T^{-\frac{1}{2}} 
            + 8\sigma \left(\frac{2\Gamma_T^{\frac{3}{2}}}{T}\right)^{\frac{1}{3}}
            + 3 (32 R^2 \ell G\sigma)^{\frac{1}{3}} \left(\frac{ \Gamma_T}{T}\right)^{\frac{1}{3}}
            + 4DG \left( \frac{\log\frac{4}{\delta}}{T}\right)^{\frac{1}{2}}.
\]
\item \textbf{Family~3.}
Choose
\(\eta = \min\Big\{ \frac{\sqrt{2} R}{G} T^{-\frac{1}{2}}, \allowbreak \left(\frac{R^4}{32\gamma G\sigma \Gamma_T}\right)^{\frac{1}{3}}T^{-\frac{2}{3}}, \allowbreak \left(\frac{R^6}{32 \alpha G^2 \sigma^2\Gamma_T}\right)^{\frac{1}{5}}T^{-\frac{3}{5}} \Big\},\)
\(1-\beta = \max\Big\{ 2^{-\frac{1}{3}} T^{-\frac{2}{3}}, \allowbreak \left(\frac{R^4 \gamma^2 G^2/\sigma^4}{32 \Gamma_T}\right)^{\frac{1}{3}}T^{-\frac{2}{3}}, \allowbreak \frac{1}{8}\left(\frac{R^8 \alpha^2 G^4/\sigma^6}{\Gamma_T^3}\right)^{\frac{1}{5}}T^{-\frac{4}{5}} \Big\}.\)
Assume moreover that 
\(T\Gamma_T^{\frac{1}{2}} \geq \frac{R^2 \gamma G}{4\sqrt{2} \sigma^2}\)
        and \(T\Gamma_T^{\frac{3}{4}} \geq 2^{-\frac{15}{4}}\left(\frac{R^2 \sqrt{\alpha} G}{\sigma}\right)^{\frac{3}{2}}\).
The selected set
\(\mathcal A\)
is nonempty with probability at least \(1-\delta\), and
\(f(\bar w)-f(w^\star)\le Q_T^{(3,1)},\) \(h(\bar w)\le Q_T^{(3,1)},\)
where
\[
\begin{aligned} 
    Q_T^{(3,1)} &= \sqrt{2} RG T^{-\frac{1}{2}} 
            + 8\sigma \left(\frac{2\Gamma_T^{\frac{3}{2}}}{T}\right)^{\frac{1}{3}}
            + 3 (32 R^2 \gamma G\sigma)^{\frac{1}{3}} \left(\frac{ \Gamma_T}{T}\right)^{\frac{1}{3}}
            + 6( R^4\alpha G^2\sigma^2)^{\frac{1}{5}}\left(\frac{\Gamma_T}{T^2}\right)^{1/5}\\
            & + 4DG \left( \frac{\log\frac{4}{\delta}}{T}\right)^{\frac{1}{2}}. 
\end{aligned}
\]
\end{enumerate}
\end{theorem}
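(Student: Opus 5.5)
The plan is to combine the windowed version of Theorem~\ref{thm:constrained_generic} (the Remark on a reusable analysis window, with \(t_0=\lfloor T/2\rfloor\), \(T_a=T-t_0\ge T/2\)) with the Case~1 estimator envelopes from Appendix~\ref{subsubsec:family1case1}, \ref{subsubsec:family2case1} and \ref{subsubsec:family3case1}, specialized to \(\calG=\mathbb{R}\), \(\kappa=1\), \(B=1\), and Assumption~\ref{ass:update_bound} holding with the subgradient bound \(G\). The tail envelope is obtained by applying Theorem~\ref{thm:unified} at confidence \(\delta/2\), i.e.\ with constant \(C(\delta/(4T),1)\le 2\sqrt{2\log\frac{4T}{\delta}}=\sqrt{2\Gamma_T}\). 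Then \(\Pr(\sup_{t>t_0}|e_t|>\calE)\le\delta/2\), with \(\calE\) taken as the supremum over \(t>t_0\) of the Case~1 bounds. The reason for discarding the first half is that the initialization term \(\beta^t C\sigma\) becomes \(\beta^{T/2}\sqrt{2\Gamma_T}\sigma\le \sqrt{2\Gamma_T}\sigma\cdot\frac{2}{(1-\beta)T}\), using \(\beta^{T/2}\le e^{-(1-\beta)T/2}\le \frac{2}{(1-\beta)T}\). This is a term of the form \(C_3/((1-\beta)T)\), matching Lemma~\ref{lemma:opt_bound_stepsize_momentum}.

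Next, I bound \(\varepsilon+\calE\ge\varepsilon\). Since \(h(\bar w)\le\varepsilon+\calE\) and \(f(\bar w)-f(w^\star)\le\varepsilon\), it suffices to bound \(\mathsf{OptError}(\eta,T_a,\delta)+2\calE\) by the stated \(Q_T\). Using \(T_a\ge T/2\), \(\mathsf{OptError}\le \frac{R^2}{\eta T}+\frac{\eta G^2}{2}+4DG\sqrt{\log(4/\delta)/T}\); the last piece is exactly the final term of each \(Q_T\). For the remaining pieces, I apply the simplifications \(\frac{(1-\beta)^2}{1-\beta^2}\le 1-\beta\), \(\frac{\beta^2}{1-\beta^2}\le\frac{1}{1-\beta}\) and \(\sqrt{x+y}\le\sqrt x+\sqrt y\). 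This puts \(\frac{R^2}{\eta T}+\frac{G^2\eta}{2}+2\calE\) into the forms \(f_1,f_2,f_3\) of Lemma~\ref{lemma:opt_bound_stepsize_momentum}, with \(C_1=R^2\), \(C_2=G^2/2\), \(C_3\asymp\sigma\sqrt{\Gamma_T}\), and \(C_4\asymp\sigma\sqrt{\Gamma_T}\). For Family~1, \(C_5=2LG\). For Families~2 and~3, \(C_5\asymp \ell G\sqrt{\Gamma_T}\) (resp.\ \(\gamma G\sqrt{\Gamma_T}\)), and for Family~3, \(C_6=\alpha G^2\) from the bias \(\frac{\alpha\beta\eta^2G^2}{2(1-\beta)}\).

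Substituting these constants into the lemma's choices of \(\eta\) and \(1-\beta\) should reproduce the stated parameter selections. The lemma's side condition \(T\ge\max\{C_3/C_4,\,C_1C_5/C_4^2,\dots\}\) then becomes the assumed lower bounds on \(T\Gamma_T^{\cdot}\); since \(C_3/C_4\) is an absolute constant \(\le 1\), that part is automatic. The resulting terms \(2\sqrt{C_1C_2/T}\), \(2(C_3C_4^2/T)^{1/3}\), \(3(\cdot)^{1/4}\) or \(3(\cdot)^{1/3}\), and \(3(\cdot)^{1/5}\) give the summands of \(Q_T^{(i,1)}\). The probability accounting is \(\delta/2\) for the envelope plus \(\delta/2\) for Azuma, matching the Remark.

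I expect the main obstacle to be constant bookkeeping rather than any structural step. The work lies in matching the exact numerical factors (\(8\sigma\), \(32\), the \(2^{-1/3}\) and \(\frac18\) prefactors) so that the lemma's parameter formulas coincide with the stated \(\eta,\beta\) and the final \(Q_T\) constants are dominated. To do this, I would first fix the precise \(C_i\) so that the \(\eta\) formulas match exactly, then verify that any slack, such as the factor 2 from \(T_a\ge T/2\) and from the \(2\calE\), is absorbed by the inequality direction. If a constant does not match, I would loosen it upward, since only upper bounds are claimed.
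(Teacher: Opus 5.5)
Your route is the paper's: the windowed Theorem~\ref{thm:constrained_generic} with \(t_0=\lfloor T/2\rfloor\), Case~1 envelopes at level \(\delta/2\), a bound on \(\mathsf{OptError}(\eta,T/2,\delta)+2\mathcal{E}\), then Lemma~\ref{lemma:opt_bound_stepsize_momentum}. However, the initialization step as you wrote it is too lossy to reach the stated constants. Loosening constants upward cannot repair this, because the statement fixes both \(Q_T^{(i,1)}\) and the parameters \(\eta,\beta\).

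From \(e^{-x}\le 1/x\) with \(x=(1-\beta)T/2\) you get \(\beta^t\le 2/((1-\beta)T)\). This forces \(C_3=2C_4\) in Family~1 and \(C_3=\sqrt2\,C_4\) in Families~2--3.
\begin{itemize}
\item The lemma's first branch of \(1-\beta\) then becomes \(2^{2/3}T^{-2/3}\) (resp.\ \(2^{1/3}T^{-2/3}\)), not the stated \(T^{-2/3}\) (resp.\ \(2^{-1/3}T^{-2/3}\)).
\item With the stated \(\beta\), the group \(C_3/((1-\beta)T)+C_4\sqrt{1-\beta}\) comes out \(3/2\) times the stated \(\sigma\)-term. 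In Family~1 this is \(12\sigma(\Gamma_T^{3/2}/T)^{1/3}\) instead of \(8\sigma(\Gamma_T^{3/2}/T)^{1/3}\).
\item Even the best \(\beta\) only brings that coefficient down to \(3\cdot 2^{5/3}\approx 9.5\).
\end{itemize}
The fix is the sharper inequality \(xe^{-x}\le 1/\mathe<1/2\), which gives \(\beta^t\le e^{-(1-\beta)T/2}\le 1/((1-\beta)T)\) for all \(t>t_0\). This is what the paper uses.

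There is also an arithmetic slip: \(2\sqrt{2\log(4T/\delta)}=2\sqrt{\Gamma_T}\), not \(\sqrt{2\Gamma_T}\). The inequality \(C(\delta/(4T),1)=1+\sqrt{3\log(4T/\delta)}\le\sqrt{2\Gamma_T}\) is false unless \(\log(4T/\delta)\gtrsim 14\).

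With \(C\le 2\sqrt{\Gamma_T}\) and the sharper exponential bound, \(2\mathcal{E}\) yields:
\begin{itemize}
\item \(C_1=R^2\) and \(C_2=G^2/2\);
\item \(C_3=4\sigma\sqrt{\Gamma_T}\);
\item \(C_4=4\sigma\sqrt{\Gamma_T}\) in Family~1, or \(4\sigma\sqrt{2\Gamma_T}\) in Families~2--3;
\item \(C_5=2LG\), \(4\ell G\sqrt{2\Gamma_T}\), \(4\gamma G\sqrt{2\Gamma_T}\) for Families~1, 2, 3 respectively;
\item \(C_6=\alpha G^2\).
\end{itemize}
These reproduce the stated \(\eta\), \(1-\beta\) and \(Q_T^{(i,1)}\) exactly. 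They also give \(C_3/C_4\in\{1,2^{-1/2}\}\), which is what makes your remark that the \(C_3/C_4\) side condition is automatic actually true.
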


\begin{proof}
For the analysis of case 1 in all families, we let \(t_0 = \lfloor \frac{T}{2} \rfloor\), then \(T_a = T - t_0 \geq T/2\), by noting that \(x \exp(-x) \leq 1/\mathe < 1/2, \forall x>0\), then we have the following bound for \(t\in[t_0+1:T]\)
\[
\beta^t = (1-[1-\beta])^t 
\leq (1-[1-\beta])^{T/2} \leq \exp(-\frac{1-\beta}{2} T)
\leq \frac{1/2}{\frac{1-\beta}{2} T}
= \frac{1}{(1-\beta)T}
\]
The selected set is defined by \(\calA := \{t\in[T]: t > t_0, v_t \leq \varepsilon\}\).
Applying the established Theorem~\ref{thm:constrained_main} and using its remark.
\[
\max\{f(\bar{w})-f(w^\ast), h(\bar{w})\} \leq
\varepsilon + \calE = (\mathsf{OptError}(\eta,T_a,\delta)+\calE) + \calE
\leq \mathsf{OptError}(\eta,T/2,\delta)+ 2\calE
\]

\textbf{Family 1}
By substituting the high-probability upperbound \(\calE\) of \(|e_t|\) from the instantiations of Theorem~\ref{thm:unified} (replacing \(\delta\) with \(\delta/2\) and selecting \(B=1\) in the bound of Table~\ref{tab:error_bounds}) into the above inequality, 
using \(\beta^t \leq \frac{1}{(1-\beta) T}\) for \(t\in[t_0+1:T]\) and \(C(\delta/(4T), \kappa)\leq 2 \sqrt{2\log(4T/\delta)} =: 2\sqrt{ \Gamma_T}\), we have 
\[
\begin{aligned} &\mathsf{OptError}(\eta,T/2,\delta)+ 2\calE
\leq \left(\frac{R^2}{\eta T} + \frac{\eta G^2}{2} + 4DG \sqrt{\frac{\log\frac{4}{\delta}}{T}} \right)\\
&+ 2\left( \frac{C\left(\frac{\delta}{4T},\kappa\right)\sigma}{(1-\beta)T} +
\frac{LG\eta}{1-\beta} +
C\left(\frac{\delta}{4T},\kappa\right)\sigma\sqrt{1-\beta}\right)\\
&\leq 4 DG \sqrt{\frac{\log\frac{4}{\delta}}{T}}
+ \left[\frac{R^2}{\eta T} + \frac{G^2}{2}\cdot \eta + 4 \sqrt{\Gamma_T} \sigma \cdot \frac{1}{(1-\beta) T} + 4 \sqrt{\Gamma_T} \sigma \cdot \sqrt{1-\beta} + 2LG \cdot \frac{\eta}{1-\beta}\right] \end{aligned}
\]
By letting \(C_1 = R^2, C_2 =\frac{G^2}{2}, C_3 = 4 \sqrt{\Gamma_T} \sigma, C_4 = 4 \sqrt{\Gamma_T} \sigma, C_5 = 2LG\) in Lemma~\ref{lemma:opt_bound_stepsize_momentum}, selecting the step size and momentum parameter as
\(\eta = \min\Big\{\left(\frac{C_1}{C_2 T}\right)^{\frac{1}{2}}, \allowbreak \left(\frac{C_1^3}{C_4^2 C_5 T^3}\right)^{\frac{1}{4}}\Big\} = \min\Big\{\sqrt{2}\frac{R}{G} T^{-\frac{1}{2}}, \allowbreak \left(\frac{R^6}{32\sigma^2 L G\cdot \Gamma_T}\right)^{\frac{1}{4}} T^{-\frac{3}{4}} \Big\},\)
\(1-\beta = \max\Big\{\left(\frac{C_3}{C_4 T}\right)^{\frac{2}{3}}, \allowbreak \left(\frac{C_1 C_5}{C_4^2 T}\right)^{\frac{1}{2}}\Big\} = \max\Big\{T^{-\frac{2}{3}}, \allowbreak \left(\frac{R^2 L G}{8\sigma^2 \cdot \Gamma_T}\right)^{\frac{1}{2}} T^{-\frac{1}{2}}\Big\}\)
when \(T\Gamma_T\geq \frac{R^2 LG}{8 \sigma^2}\), we have:
\begin{eqnarray*}
    & &\max\{f(\bar{w})-f(w^\ast), h(\bar{w})\}\\
    & 
    \leq &
    2\left(\frac{C_1 C_2}{T}\right)^{\frac{1}{2}} + 2\left(\frac{C_3 C_4^2}{T}\right)^{\frac{1}{3}}
    + 3\left(\frac{C_1 C_4^2 C_5}{T}\right)^{\frac{1}{4}}
    + 4DG \left( \frac{\log\frac{4}{\delta}}{T}\right)^{\frac{1}{2}}
    \\
    &\leq& 
    \sqrt{2} RG T^{-\frac{1}{2}} +  
    8\sigma \left(\frac{\Gamma_T^{\frac{3}{2}}}{T}\right)^{\frac{1}{3}}
    + 3 \left( 32 R^2 L G \sigma^2\right)^{\frac{1}{4}} \left(\frac{\Gamma_T}{T}\right)^{\frac{1}{4}} + 4DG \left( \frac{\log\frac{4}{\delta}}{T}\right)^{\frac{1}{2}}
\end{eqnarray*}

\textbf{Family 2}
By substituting the high probability upperbound \(\calE\) of \(|e_t|\) from the instantiations of Theorem~\ref{thm:unified} (replacing \(\delta\) with \(\delta/2\) and selecting \(B=1\) in the bound of Table~\ref{tab:error_bounds}) into the above inequality, 
and using \(\beta^t \leq \frac{1}{(1-\beta) T}\) for \(t\in[t_0+1:T]\) and \(C(\delta/(4T), \kappa)\leq 2 \sqrt{2\log(4T/\delta)} =: 2\sqrt{ \Gamma_T}\), noting that \( \frac{(1-\beta)^2}{1-\beta^2} \leq 1-\beta, \frac{\beta^2}{1-\beta^2}\leq \frac{1}{1-\beta}\) for \(\beta\in(0, 1)\) and the elementary inequality \(\sqrt{x+y} \leq \sqrt{x} + \sqrt{y}, \forall x, y\geq 0\), we show the following upper bound for \(\calE\):
\[
\calE \leq \frac{C\left(\frac{\delta}{4T},\kappa\right)\sigma}{(1-\beta)T}+C\!\left(\frac{\delta}{4T},\kappa\right)\left[\sigma \sqrt{2(1-\beta)}+\ell G \eta\sqrt{\frac{2}{1-\beta}}\right],
\] then we have 
\[
\begin{aligned} &\mathsf{OptError}(\eta,T/2,\delta)+ 2\calE
\leq \left(\frac{R^2}{\eta T} + \frac{\eta G^2}{2} + 4DG \sqrt{\frac{\log\frac{4}{\delta}}{T}} \right)\\
&+ 2\left( \frac{C\left(\frac{\delta}{4T},\kappa\right)\sigma}{(1-\beta)T}+C\!\left(\frac{\delta}{4T},\kappa\right)\left[\sigma \sqrt{2(1-\beta)}+\ell G \eta\sqrt{\frac{2}{1-\beta}}\right]\right)\\
&\leq 4 DG \sqrt{\frac{\log\frac{4}{\delta}}{T}}
+ \left[\frac{R^2}{\eta T} + \frac{G^2}{2}\cdot \eta + 4 \sqrt{\Gamma_T} \sigma \cdot \frac{1}{(1-\beta) T} + 4 \sqrt{2\Gamma_T} \sigma \cdot \sqrt{1-\beta} + 4\ell G \sqrt{2\Gamma_T} \cdot \frac{\eta}{\sqrt{1-\beta}}\right] \end{aligned}
\]
By letting \(C_1 = R^2, C_2 =\frac{G^2}{2}, C_3 = 4 \sqrt{\Gamma_T} \sigma, C_4 = 4 \sqrt{2\Gamma_T} \sigma, C_5 = 4\ell G\sqrt{2\Gamma_T}\) in Lemma~\ref{lemma:opt_bound_stepsize_momentum}, selecting the step size and momentum parameter as
\(\eta = \min\Big\{\left(\frac{C_1}{C_2}\right)^{\frac{1}{2}}T^{-\frac{1}{2}}, \allowbreak \left(\frac{C_1^2}{C_4 C_5}\right)^{\frac{1}{3}}T^{-\frac{2}{3}}\Big\} = \min\Big\{ \frac{\sqrt{2} R}{G} T^{-\frac{1}{2}}, \allowbreak \left(\frac{R^4}{32\ell G\sigma \Gamma_T}\right)^{\frac{1}{3}}T^{-\frac{2}{3}} \Big\}\),
\(1-\beta= \max\Big\{\left(\frac{C_3}{C_4 T}\right)^{\frac{2}{3}}, \allowbreak \left(\frac{C_1 C_5}{C_4^2 T}\right)^{\frac{2}{3}}\Big\} = \max\Big\{ 2^{-\frac{1}{3}} T^{-\frac{2}{3}}, \allowbreak \left(\frac{R^4 \ell^2 G^2/\sigma^4}{32 \Gamma_T}\right)^{\frac{1}{3}}T^{-\frac{2}{3}} \Big\}\)
then when \(T \Gamma_T^{\frac{1}{2}} \geq \frac{R^2 \ell G}{4\sqrt{2}\sigma^2}\), we have:
\begin{eqnarray*}
    & &\max\{f(\bar{w})-f(w^\ast), h(\bar{w})\}\\
    &\leq &2\left(\frac{C_1 C_2}{T}\right)^{\frac{1}{2}} + 2\left(\frac{C_3 C_4^2}{T}\right)^{\frac{1}{3}}
    + 3\left(\frac{C_1 C_4 C_5}{T}\right)^{\frac{1}{3}}
    + 4DG \left( \frac{\log\frac{4}{\delta}}{T}\right)^{\frac{1}{2}}
    \\
    &\leq&
    \sqrt{2} RG T^{-\frac{1}{2}} 
    + 8\sigma \left(\frac{2\Gamma_T^{\frac{3}{2}}}{T}\right)^{\frac{1}{3}}
    + 3 (32 R^2 \ell G\sigma)^{\frac{1}{3}} \left(\frac{ \Gamma_T}{T}\right)^{\frac{1}{3}}
    + 4DG \left( \frac{\log\frac{4}{\delta}}{T}\right)^{\frac{1}{2}}
\end{eqnarray*}

\textbf{Family 3}
By substituting the high probability upperbound \(\calE\) of \(|e_t|\) from the instantiations of Theorem~\ref{thm:unified} (replacing \(\delta\) with \(\delta/2\) and selecting \(B=1\) in the bound of Table~\ref{tab:error_bounds}) into the above inequality, 
and using \(\beta^t \leq \frac{1}{(1-\beta) T}\) for \(t\in[t_0+1:T]\) and \(C(\delta/(4T), \kappa)\leq 2 \sqrt{2\log(4T/\delta)} =: 2\sqrt{ \Gamma_T}\), noting that \( \frac{(1-\beta)^2}{1-\beta^2} \leq 1-\beta, \frac{\beta^2}{1-\beta^2}\leq \frac{1}{1-\beta}\) for \(\beta\in(0, 1)\) and the elementary inequality \(\sqrt{x+y} \leq \sqrt{x} + \sqrt{y}, \forall x, y\geq 0\), we show the following upper bound for \(\calE\):
\[
\calE \leq  \frac{C\left(\frac{\delta}{4T},\kappa\right)\sigma}{(1-\beta) T} +
\frac{\alpha\beta\eta^2G^2}{2(1-\beta)} +
C\left(\frac{\delta}{4T},\kappa\right)\left[\sigma \sqrt{2(1-\beta)}+\gamma G \eta\sqrt{\frac{2}{1-\beta}}\right],
\] then we have 
\[
\begin{aligned} &\mathsf{OptError}(\eta,T/2,\delta)+ 2\calE
\leq \left(\frac{R^2}{\eta T} + \frac{\eta G^2}{2} + 4DG \sqrt{\frac{\log\frac{4}{\delta}}{T}} \right)\\
&+ 2\left( \frac{C\left(\frac{\delta}{4T},\kappa\right)\sigma}{(1-\beta) T} +
\frac{\alpha\beta\eta^2G^2}{2(1-\beta)} +
C\left(\frac{\delta}{4T},\kappa\right)\left[\sigma \sqrt{2(1-\beta)}+\gamma G \eta\sqrt{\frac{2}{1-\beta}}\right]\right)\\
&\leq \left[\frac{R^2}{\eta T} + \frac{G^2}{2}\cdot \eta + 4 \sqrt{\Gamma_T} \sigma \cdot \frac{1}{(1-\beta) T} + 4 \sqrt{2\Gamma_T} \sigma \cdot \sqrt{1-\beta} + 4\gamma G \sqrt{2\Gamma_T} \cdot \frac{\eta}{\sqrt{1-\beta}}
+ \alpha G^2 \cdot \frac{\eta^2}{1-\beta}\right] \\
&  + 4 DG \sqrt{\frac{\log\frac{4}{\delta}}{T}} \end{aligned}
\]
By letting \(C_1 = R^2, C_2 =\frac{G^2}{2}, C_3 = 4 \sqrt{\Gamma_T} \sigma, C_4 = 4 \sqrt{2\Gamma_T} \sigma, C_5 = 4\gamma G\sqrt{2\Gamma_T}, C_6 = \alpha G^2\) in Lemma~\ref{lemma:opt_bound_stepsize_momentum}, selecting the step size and momentum parameter such that \(\eta = \min\Big\{\left(\frac{C_1}{C_2}\right)^{\frac{1}{2}}T^{-\frac{1}{2}}, \allowbreak \left(\frac{C_1^2}{C_4 C_5}\right)^{\frac{1}{3}}T^{-\frac{2}{3}}, \allowbreak \left(\frac{C_1^3}{C_4^2 C_6}\right)^{\frac{1}{5}}T^{-\frac{3}{5}}\Big\} = \min\Big\{ \frac{\sqrt{2} R}{G} T^{-\frac{1}{2}}, \allowbreak \left(\frac{R^4}{32\gamma G\sigma \Gamma_T}\right)^{\frac{1}{3}}T^{-\frac{2}{3}}, \allowbreak \left(\frac{R^6}{32 \alpha G^2 \sigma^2\Gamma_T}\right)^{\frac{1}{5}}T^{-\frac{3}{5}} \Big\}\),
the momentum parameter \(\beta\) as
\(1-\beta = \max\Big\{\left(\frac{C_3}{C_4}\right)^{\frac{2}{3}}T^{-\frac{2}{3}}, \allowbreak \left(\frac{C_1 C_5}{C_4^2}\right)^{\frac{2}{3}}T^{-\frac{2}{3}}, \allowbreak \left(\frac{C_1^2 C_6}{C_4^3}\right)^{\frac{2}{5}}T^{-\frac{4}{5}}\Big\} =\max\Big\{ 2^{-\frac{1}{3}} T^{-\frac{2}{3}}, \allowbreak \left(\frac{R^4 \gamma^2 G^2/\sigma^4}{32 \Gamma_T}\right)^{\frac{1}{3}}T^{-\frac{2}{3}}, \allowbreak \frac{1}{8}\left(\frac{R^8 \alpha^2 G^4/\sigma^6}{\Gamma_T^3}\right)^{\frac{1}{5}}T^{-\frac{4}{5}} \Big\}\),
then when \(T\Gamma_T^{\frac{1}{2}} \geq \frac{R^2 \gamma G}{4\sqrt{2} \sigma^2}\)
and \(T\Gamma_T^{\frac{3}{4}} \geq 2^{-\frac{15}{4}}\left(\frac{R^2 \sqrt{\alpha} G}{\sigma}\right)^{\frac{3}{2}}\), we have:
\begin{eqnarray*}
    & &\max\{f(\bar{w})-f(w^\ast), h(\bar{w})\}\\
    &\leq &2\left(\frac{C_1 C_2}{T}\right)^{\frac{1}{2}} + 2\left(\frac{C_3 C_4^2}{T}\right)^{\frac{1}{3}}
    + 3\left(\frac{C_1 C_4 C_5}{T}\right)^{\frac{1}{3}}
    + 3 \left(\frac{C_1^2 C_4^2 C_6}{T^2}\right)^{\frac{1}{5}}
    + 4DG \left( \frac{\log\frac{4}{\delta}}{T}\right)^{\frac{1}{2}}
    \\
    &\leq&
    \sqrt{2} RG T^{-\frac{1}{2}} 
    + 8\sigma \left(\frac{2\Gamma_T^{\frac{3}{2}}}{T}\right)^{\frac{1}{3}}
    + 3 (32 R^2 \gamma G\sigma)^{\frac{1}{3}} \left(\frac{ \Gamma_T}{T}\right)^{\frac{1}{3}}
    + 6( R^4\alpha G^2\sigma^2)^{\frac{1}{5}}\left(\frac{\Gamma_T}{T^2}\right)^{1/5}\\
    & &+ 4DG \left( \frac{\log\frac{4}{\delta}}{T}\right)^{\frac{1}{2}}
\end{eqnarray*}
\end{proof}

For later use in the complexity bounds, define
\(\mathcal L_q(u,\delta;c) \coloneqq \frac{\mathe^{q^2}}{u}\log^q\!\left(\mathe \vee \frac{c}{u\delta}\right),\)
\(q>0,\ u>0,\ c>0\).
By Lemma~\ref{lemma:complexity_log},
\[
T\ge \mathcal L_q(u,\delta;c)
\quad\Longrightarrow\quad
\frac{\log^q(cT/\delta)}{T}\le u.
\]
\begin{corollary}[Constrained Case~1 Oracle Complexity for Families~1--3]
\label{cor:constrained_case1_all_families_complexity}
Under the assumptions of Theorem~\ref{thm:constrained_case1_all_families_tuned}, let \(\varepsilon>0\), the iteration / oracle complexities to guarantee \(f(\bar{w}) -f(w^\star) \leq \varepsilon, h(\bar{w}) \leq \varepsilon\) for Families 1-3 are listed as follows, since \(B=1\) is selected as the batch size, the oracle complexity has the same order of the iteration complexity.

\begin{enumerate}[leftmargin=*]
\item \textbf{Family~1.}
\[
N,\,T = \mathcal{O}\Bigg(
\frac{R^2LG\sigma^2}{\varepsilon^4}
\log\frac{R^2LG\sigma^2}{\varepsilon^4\delta}
\vee \frac{\sigma^3}{\varepsilon^3}
\log^{3/2}\frac{\sigma^3}{\varepsilon^3\delta}
\vee \frac{R^2G^2+D^2G^2\log(1/\delta)}{\varepsilon^2}
\Bigg).
\]

\item \textbf{Family~2.}
\[
N,\,T = \mathcal{O}\Bigg(
\frac{\sigma^3}{\varepsilon^3}
\log^{3/2}\frac{\sigma^3}{\varepsilon^3\delta}
\vee \frac{R^2\ell G\sigma}{\varepsilon^3}
\log\frac{R^2\ell G\sigma}{\varepsilon^3\delta}
\vee \frac{R^2G^2+D^2G^2\log(1/\delta)}{\varepsilon^2}
\Bigg).
\]

\item \textbf{Family~3.}
\[
\begin{aligned} 
N,\,T = \mathcal{O}\Bigg(
\frac{\sigma^3}{\varepsilon^3}
\log^{3/2}\frac{\sigma^3}{\varepsilon^3\delta}
\vee \frac{R^2\gamma G\sigma}{\varepsilon^3}
\log\frac{R^2\gamma G\sigma}{\varepsilon^3\delta}  
&\vee \sqrt{\frac{ R^4\alpha G^2\sigma^2}{\varepsilon^5}}
\log^{1/2}\left( \frac{1}{\delta}\sqrt{\frac{ R^4\alpha G^2\sigma^2}{\varepsilon^5}}
\right) \\
&\vee \frac{R^2G^2+D^2G^2\log(1/\delta)}{\varepsilon^2} 
\Bigg). 
\end{aligned}
\]
\end{enumerate}
\end{corollary}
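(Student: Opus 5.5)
The plan is to invert the explicit bounds $Q_T^{(k,1)}$ of Theorem~\ref{thm:constrained_case1_all_families_tuned}, mirroring the inversion carried out in the proof of Theorem~\ref{thm:mirror_stationarity}. For each family, $\max\{f(\bar w)-f(w^\star),h(\bar w)\}\le Q_T^{(k,1)}$ holds with probability at least $1-\delta$. Here $Q_T^{(k,1)}$ is a sum of a fixed number $m$ of terms ($m=4$ for Families~1--2 and $m=5$ for Family~3). It therefore suffices to choose $T$ so that each term is at most $\varepsilon/m$. We must also verify the side conditions of the theorem: $T\Gamma_T\ge R^2LG/(8\sigma^2)$ for Family~1, and the analogous conditions for Families~2 and~3. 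These hold once $T$ exceeds a constant independent of $\varepsilon$, because $\Gamma_T=2\log(4T/\delta)\ge 2\log 4>1$. They are subsumed for small $\varepsilon$, exactly as the $\varepsilon_0$ thresholds were handled earlier.

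Each term is handled separately. The term $\sqrt2RGT^{-1/2}\le\varepsilon/m$ gives $T\gtrsim R^2G^2/\varepsilon^2$. The term $4DG(\log(4/\delta)/T)^{1/2}$ gives $T\gtrsim D^2G^2\log(1/\delta)/\varepsilon^2$; this involves no $T$ inside the log, so it is immediate. The remaining terms all have the form $c\,(\Gamma_T^{q}/T)^{r}$. Requiring such a term to be at most $\varepsilon/m$ is equivalent to $\log^{q}(4T/\delta)/T\le u$ for an explicit $u$. I would then invoke Lemma~\ref{lemma:complexity_log} in the form of $\mathcal L_q(u,\delta;c)$ with $c=4$.

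Concretely, for Families~2 and~3 the term $8\sigma(2\Gamma_T^{3/2}/T)^{1/3}$ gives $q=3/2$ and $u\asymp\varepsilon^3/\sigma^3$, hence $T\gtrsim(\sigma^3/\varepsilon^3)\log^{3/2}(\sigma^3/(\varepsilon^3\delta))$; Family~1 has the same form without the factor $2$. The term $(32R^2\ell G\sigma)^{1/3}(\Gamma_T/T)^{1/3}$ gives $q=1$ and $u\asymp\varepsilon^3/(R^2\ell G\sigma)$, and likewise with $\gamma$ in place of $\ell$ for Family~3. For Family~1, the term $(32R^2LG\sigma^2)^{1/4}(\Gamma_T/T)^{1/4}$ gives $q=1$ and $u\asymp\varepsilon^4/(R^2LG\sigma^2)$. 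The Family~3 bias term $(R^4\alpha G^2\sigma^2)^{1/5}(\Gamma_T/T^2)^{1/5}\le\varepsilon/m$ is equivalent to $\log^{1/2}(4T/\delta)/T\le u$ with $u\asymp\sqrt{\varepsilon^5/(R^4\alpha G^2\sigma^2)}$, i.e.\ $q=1/2$. This yields the $\sqrt{R^4\alpha G^2\sigma^2/\varepsilon^5}\,\log^{1/2}(\cdot)$ entry.

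Taking the maximum of all these sufficient lower bounds on $T$ gives the stated iteration complexity. Constants such as $\mathe^{q^2}$ and the factors $c/u$ inside the logs are absorbed into $\mathcal O(\cdot)$, and the $\mathe\vee$ in $\mathcal L_q$ is dropped for small $\varepsilon$. Since $B=1$ and no resets occur, each iteration uses one oracle sample from $\xi_t$ plus one from $\zeta_t$, so $N=\Theta(T)$.

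The main obstacle is bookkeeping rather than any conceptual difficulty. First, each term must be rewritten exactly into the form $\log^q(4T/\delta)/T\le u$; the $T^{-2/5}$ bias term in Family~3 is the delicate one. Second, one must check that the side conditions and the implicit requirements (e.g.\ $1-\beta\le1$, which needs $T$ large) hold for all sufficiently small $\varepsilon$, so that they do not alter the stated order.
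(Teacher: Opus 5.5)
Your proposal is correct and follows essentially the same route as the paper: require each term of $Q_T^{(k,1)}$ to be at most a fraction of $\varepsilon$, and invert the logarithmic terms via Lemma~\ref{lemma:complexity_log}, i.e.\ $\mathcal L_q(u,\delta;4)$ with $q=3/2,\,1,\,1/2$ exactly as you identify. The $\varepsilon$-independent side conditions are then absorbed using $\Gamma_T>1$. The only difference is cosmetic: the paper merges the $RG$ and $DG$ terms into a single $T^{-1/2}$ term and splits $\varepsilon$ as $\varepsilon/2+\varepsilon/4+\varepsilon/4$ (four quarters for Family~3), whereas you use $m$ equal parts.
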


\begin{proof}
\textbf{Family 1}. To guarantee
\[
\begin{aligned} &\max\{f(\bar{w}) -f(w^\star), h(\bar{w})\} \\
\leq& Q_T^{(1, 1)} \equiv \left(\sqrt{2} RG+4 DG \sqrt{\log\frac{4}{\delta}}\right)
T^{-1/2} + 8\sigma \left(\frac{\Gamma_T^{\frac{3}{2}}}{T}\right)^{\frac{1}{3}}
            + 3 \left( 32 R^2 L G \sigma^2\right)^{\frac{1}{4}} \left(\frac{\Gamma_T}{T}\right)^{\frac{1}{4}}\\
\leq& \varepsilon/2 + \varepsilon/4 + \varepsilon/ 4  = \varepsilon \end{aligned}
\]
with sufficient large \(T\), where \(\Gamma_T : = 2\log\frac{4T}{\delta}\) satisfies \(T \Gamma_T \geq T \geq \frac{R^2LG}{8\sigma^2}\).
By noting \(T \geq \calL_q(u, \delta; c)= \frac{\mathe^{q^2}}{u}\log^q(\mathe\vee \frac{c}{u \delta})\) is sufficient to show \(\frac{\log^q \frac{cT}{\delta}}{T}\leq u\) (see Lemma~\ref{lemma:complexity_log}), we can choose \(T\) as follow to ensure \(\max\{f(\bar{w}) -f(w^\star), h(\bar{w})\}\leq \varepsilon\).
\[
\begin{aligned} T \geq &\frac{R^2LG}{8\sigma^2}
\vee (\varepsilon/2)^{-2} \left(\sqrt{2} RG+4 DG \sqrt{\log\frac{4}{\delta}}\right)^2\\
&\vee\frac{(32\sqrt{2}\sigma)^3}{\varepsilon^3} 
\vee \mathe^{9/4} \log^{3/2}\left[\mathe \vee \frac{(32\sqrt{2} \sigma)^3}{\varepsilon^3}\cdot \frac{4}{\delta}\right]\\ 
&\vee \frac{64 R^2 LG \sigma^2}{(\varepsilon/12)^4} 
\mathe \log\left[\mathe \vee\frac{64 R^2 LG \sigma^2}{(\varepsilon/12)^4}\cdot \frac{4}{\delta}\right]
\end{aligned}
\]

\textbf{Family 2}. To guarantee
\[
\begin{aligned} &\max\{f(\bar{w}) -f(w^\star), h(\bar{w})\} \\
\leq& Q_T^{(2, 1)} \equiv \left(\sqrt{2} RG+4 DG \sqrt{\log\frac{4}{\delta}}\right)
T^{-1/2} + 8\sigma \left(\frac{2\Gamma_T^{\frac{3}{2}}}{T}\right)^{\frac{1}{3}}
            + 3 \left( 32 R^2 \ell G \sigma\right)^{\frac{1}{3}} \left(\frac{\Gamma_T}{T}\right)^{\frac{1}{3}}\\
\leq& \varepsilon/2 + \varepsilon/4 + \varepsilon/ 4  = \varepsilon \end{aligned}
\]
with sufficient large \(T\), where \(\Gamma_T : = 2\log\frac{4T}{\delta}\) satisfies \(T \Gamma_T\geq T \geq \frac{R^2\ell G}{4 \sqrt{2}\sigma^2}\).
By noting \(T \geq \calL_q(u, \delta; c)= \frac{\mathe^{q^2}}{u}\log^q(\mathe\vee \frac{c}{u \delta})\) is sufficient to show \(\frac{\log^q \frac{cT}{\delta}}{T}\leq u\) (see Lemma~\ref{lemma:complexity_log}), we can choose \(T\) as follow to ensure \(\max\{f(\bar{w}) -f(w^\star), h(\bar{w})\}\leq \varepsilon\).
\[
\begin{aligned} T \geq &\frac{R^2\ell G}{4\sqrt{2} \sigma^2}
\vee (\varepsilon/2)^{-2} \left(\sqrt{2} RG+4 DG \sqrt{\log\frac{4}{\delta}}\right)^2\\
& \vee\frac{2(32\sqrt{2}\sigma)^3}{\varepsilon^3} 
\vee \mathe^{9/4} \log^{3/2}\left[\mathe \vee \frac{2(32\sqrt{2} \sigma)^3}{\varepsilon^3}\cdot \frac{4}{\delta}\right]\\ 
&\vee \frac{64 R^2 \ell G \sigma}{(\varepsilon/12)^3} 
\mathe \log\left[\mathe \vee\frac{64 R^2 \ell G \sigma}{(\varepsilon/12)^3}\cdot \frac{4}{\delta}\right]
\end{aligned}
\]

\textbf{Family 3}. To guarantee
\[
\begin{aligned} &\max\{f(\bar{w}) -f(w^\star), h(\bar{w})\} \\
\leq& Q_T^{(3, 1)} \equiv \left(\sqrt{2} RG+4 DG \sqrt{\log\frac{4}{\delta}}\right)
T^{-1/2} + 8\sigma \left(\frac{2\Gamma_T^{\frac{3}{2}}}{T}\right)^{\frac{1}{3}}
            + 3 \left( 32 R^2 \gamma G \sigma\right)^{\frac{1}{3}} \left(\frac{\Gamma_T}{T}\right)^{\frac{1}{3}}\\
& + 6( R^4\alpha G^2\sigma^2)^{\frac{1}{5}}\left(\frac{\Gamma_T}{T^2}\right)^{1/5}\\
\leq& \varepsilon/4 + \varepsilon/4 +\varepsilon/4 + \varepsilon/ 4  = \varepsilon \end{aligned}
\]
with sufficient large \(T\), where \(\Gamma_T : = 2\log\frac{4T}{\delta}\) satisfies \(T \Gamma_T\geq T \geq \frac{R^2\gamma G}{4 \sqrt{2}\sigma^2}\) and \(T \Gamma_T^{3/4} \geq 2^{-15/4} \left(\frac{R^2 \sqrt{\alpha} G}{\sigma}\right)^{3/2}\).
By noting \(T \geq \calL_q(u, \delta; c)= \frac{\mathe^{q^2}}{u}\log^q(\mathe\vee \frac{c}{u \delta})\) is sufficient to show \(\frac{\log^q \frac{cT}{\delta}}{T}\leq u\) (see Lemma~\ref{lemma:complexity_log}), we can choose \(T\) as follow to ensure \(\max\{f(\bar{w}) -f(w^\star), h(\bar{w})\}\leq \varepsilon\).
\[
\begin{aligned} T \geq &\frac{R^2\gamma G}{4\sqrt{2} \sigma^2}
\vee 2^{-15/4} \left(\frac{R^2 \sqrt{\alpha} G}{\sigma}\right)^{3/2}
\vee (\varepsilon/4)^{-2} \left(\sqrt{2} RG+4 DG \sqrt{\log\frac{4}{\delta}}\right)^2\\
& \vee\frac{2(32\sqrt{2}\sigma)^3}{\varepsilon^3}\mathe^{9/4} \log^{3/2}\left[\mathe \vee \frac{2(32\sqrt{2} \sigma)^3}{\varepsilon^3}\cdot \frac{4}{\delta}\right]
\vee \frac{64 R^2 \gamma G \sigma}{(\varepsilon/12)^3}\mathe \log\left[\mathe \vee\frac{64 R^2 \gamma G \sigma}{(\varepsilon/12)^3}\cdot \frac{4}{\delta}\right]
\\ & \vee \sqrt{\frac{2R^4 \alpha G^2 \sigma^2}{(\varepsilon/24)^5}} \mathe^{1/4} \log^{1/2}\left[\mathe \vee \sqrt{\frac{2R^4 \alpha G^2 \sigma^2}{(\varepsilon/24)^5}} \cdot \frac{4}{\delta}\right]
\end{aligned}
\]
\end{proof}

\newpage
\subsection{Case 2 for Families~1--3 in the constrained setting}
\label{subsec:constrained_case2_all_families}

We now specialize Theorem~\ref{thm:constrained_generic} to the probabilistic-reset setting with
\(p=\frac{1}{B}\in(0,1]\). Since Application~2 estimates the scalar constraint value, we have
\((\calG,\|\cdot\|_{\calG})=(\mathbb R,|\cdot|)\), and hence \(\kappa=1\). Let
\(\Lambda_T \coloneqq \log\frac{8T}{\delta},\) \(C_T\coloneqq C\!\left(\frac{\delta}{8T},1\right),\) and \(\Gamma_T \coloneqq 2\Lambda_T.\)
Then, for \(T\ge 1\), we have \(\Lambda_T>1\), and
\(C_T=1+\sqrt{3\Lambda_T}\le 2\sqrt{\Gamma_T}\).

\begin{theorem}[Constrained Case~2 Bounds for Families~1--3]
\label{thm:constrained_case2_all_families_tuned}
Assume the hypotheses of Theorem~\ref{thm:constrained_generic}. For each family below, let
\(\mathcal A=\{t\in[T]:v_t\le \varepsilon\}\), 
\(\varepsilon=\mathsf{OptError}(\eta,T,\delta)+\mathcal E\), and
\(\bar w=\frac{1}{|\mathcal A|}\sum_{t\in\mathcal A}w_t\).

\begin{enumerate}[leftmargin=*]
\item \textbf{Family~1.}
Assume that, with probability at least \(1-\delta/2\),
\[
|e_t| \le C_T\frac{\sigma}{\sqrt B} +
\frac{GL\eta}{p}\Lambda_T, \qquad \forall t\in[T].
\]
Choose
\(\eta = \min\Big\{ \frac{R}{G\sqrt{T}}, \allowbreak \left(\frac{R^6}{256GL\sigma^2\Lambda_T^2T^3}\right)^{1/4} \Big\},\)
\(B = \left\lceil \left(\frac{4\sigma^2T}{R^2GL}\right)^{1/2} \right\rceil,\)
\(p=\frac{1}{B}.\)
Assume moreover that
\(T\ge \frac{R^2GL}{4\sigma^2}.\)
Then, with probability at least \(1-\delta\), 
\(f(\bar w)-f(w^\star)\le Q_T^{(1,2)},\) \(h(\bar w)\le Q_T^{(1,2)},\)
where
\[
Q_T^{(1,2)} = \frac{RG+2\sqrt{2}\,DG\sqrt{\log(4/\delta)}}{\sqrt{T}} +
4\left(\frac{16R^2GL\sigma^2\Lambda_T^2}{T}\right)^{1/4}.
\]

\item \textbf{Family~2.}
Assume that, with probability at least \(1-\delta/2\),
\[
|e_t| \le C_T\frac{\sigma}{\sqrt B} +
C_T\ell\eta G\sqrt{\frac{2\Lambda_T}{p}}, \qquad \forall t\in[T].
\]
Choose
\(\eta = \min\Big\{ \frac{R}{G\sqrt{T}}, \allowbreak \left(\frac{R^4}{256\sigma\ell G\Lambda_T^{3/2}T^2}\right)^{1/3} \Big\},\)
\(B = \left\lceil \left( \frac{32\sigma^4T^2}{R^4\ell^2G^2} \right)^{1/3} \right\rceil,\) \(p=\frac{1}{B}.\)
Assume moreover that
\(T\ge \frac{R^2\ell G}{4\sqrt{2}\,\sigma^2}.\)
Then, with probability at least \(1-\delta\),
\(f(\bar w)-f(w^\star)\le Q_T^{(2,2)},\) \(h(\bar w)\le Q_T^{(2,2)},\)
where
\[
Q_T^{(2,2)} = \frac{RG+2\sqrt{2}\,DG\sqrt{\log(4/\delta)}}{\sqrt{T}} +
3\left(\frac{32R^2\sigma\ell G\Lambda_T^{3/2}}{T}\right)^{1/3}.
\]

\item \textbf{Family~3.}
Assume that, with probability at least \(1-\delta/2\),
\[
|e_t| \le C_T\frac{\sigma}{\sqrt B} +
\frac{\alpha\eta^2G^2}{2p}\Lambda_T +
C_T\gamma\eta G\sqrt{\frac{2\Lambda_T}{p}}, \qquad \forall t\in[T].
\]
Choose
\(\eta = \min\Big\{ \frac{R}{G\sqrt{T}}, \allowbreak \left(\frac{R^4}{256\sigma\gamma G\Lambda_T^{3/2}T^2}\right)^{1/3}, \allowbreak \left(\frac{R^6}{1024\alpha G^2\sigma^2\Lambda_T^2T^3}\right)^{1/5} \Big\},\)
\(B = \left\lceil \min\Big\{ \left( \frac{32\sigma^4T^2}{R^4\gamma^2G^2} \right)^{1/3}, \allowbreak \left( \frac{32768\sigma^6\Lambda_TT^4}{\alpha^2R^8G^4} \right)^{1/5} \Big\} \right\rceil,\)
\(p=\frac{1}{B}.\)
Assume moreover that
\(T\ge \frac{R^2\gamma G}{4\sqrt{2}\sigma^2},\) \(2^{3/2}T^2\Lambda_T^{1/2}\ge \frac{\alpha R^4G^2}{64\sigma^3}.\)
Then, with probability at least \(1-\delta\),
\(f(\bar w)-f(w^\star)\le Q_T^{(3,2)},\) \(h(\bar w)\le Q_T^{(3,2)},\)
where
\[
Q_T^{(3,2)} = \frac{RG+2\sqrt{2}\,DG\sqrt{\log(4/\delta)}}{\sqrt{T}} +
3\left(\frac{32R^2\sigma\gamma G\Lambda_T^{3/2}}{T}\right)^{\tfrac{1}{3}} +
5\left(\frac{\alpha R^4\sigma^2G^2\Lambda_T^2}{T^2}\right)^{\tfrac{1}{5}}.
\]
\end{enumerate}
\end{theorem}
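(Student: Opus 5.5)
The proof reduces each of the three items to Theorem~\ref{thm:constrained_generic} (equivalently Theorem~\ref{thm:constrained_main}) followed by a parameter-tuning step based on Lemma~\ref{lemma:opt_bound_stepsize_reset_batch}.

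\textbf{Reduction.} For each family the assumed envelope holds with probability at least \(1-\delta/2\), so it supplies a deterministic \(\mathcal E\), namely the right-hand side of the displayed bound on \(|e_t|\). These envelopes are the Case~2 rows of Table~\ref{tab:error_bounds} with \(\delta\) replaced by \(\delta/2\), which explains \(\Lambda_T=\log\frac{8T}{\delta}\) and \(C_T=C(\frac{\delta}{8T},1)\). Theorem~\ref{thm:constrained_generic} then gives, with probability at least \(1-\delta\), that \(\mathcal A\neq\varnothing\) and
\[
\max\{f(\bar w)-f(w^\star),h(\bar w)\}\le \mathsf{OptError}(\eta,T,\delta)+2\mathcal E .
\]
Two bounds let us drop the constants: \(C_T\le 2\sqrt{\Gamma_T}=2\sqrt{2\Lambda_T}\), and \(\frac{2DG}{\sqrt T}\sqrt{2\log\frac4\delta}=\frac{2\sqrt2 DG\sqrt{\log(4/\delta)}}{\sqrt T}\). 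With these, the right-hand side has exactly the form \(f_i(\eta,p)_{C_3=0}\) of Lemma~\ref{lemma:opt_bound_stepsize_reset_batch}, plus the Azuma term.

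\textbf{Constant matching.} Setting \(B=1/p\), the reset term becomes \(2C_T\sigma\sqrt p\le 4\sigma\sqrt{2\Lambda_T}\sqrt p\). The \(\eta\)-dependent terms are:
\begin{itemize}
\item \(\frac{R^2}{2\eta T}+\frac{G^2}{2}\eta\), giving \(C_1=R^2/2\) and \(C_2=G^2/2\);
\item Family~1 drift \(2GL\Lambda_T\,\eta/p\), giving \(C_5=2GL\Lambda_T\);
\item Family~2 difference noise \(2C_T\ell G\sqrt{2\Lambda_T}\,\eta/\sqrt p\), giving \(C_5\approx 8\ell G\Lambda_T\);
\item Family~3 Hessian noise, which gives \(C_5\) as in Family~2 with \(\gamma\) in place of \(\ell\), plus the Taylor term \(\alpha G^2\Lambda_T\,\eta^2/p\), giving \(C_6\).
\end{itemize}
Substituting these into the prescribed \(\eta\) and \(E=B=\lceil\cdot\rceil\) should reproduce the stated choices: \(R/(G\sqrt T)\) is \((C_1/(C_2T))^{1/2}\), and the remaining branches come from \(\eta_B,\eta_C,\eta_D\) and \(E_B,E_C,E_D\). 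The lemma's output \(2M_1+4M_2\) (respectively \(2M_1+3M_3\), and \(2M_1+3M_3+5M_4\)) then yields \(RG/\sqrt T\) together with the stated \(T^{-1/4}\), \(T^{-1/3}\) and \(T^{-2/5}\) terms. The side conditions \(T\ge\cdots\) are exactly the requirement that \(E_B\), \(E_C\) and \(\min\{E_C,E_D\}\) be at least \(1\), so that \(\lceil E\rceil\le 2E\).

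\textbf{Main obstacle.} The constant bookkeeping is where the work lies. The numerical factors in the statement (\(256\), \(1024\), \(32768\), and the \(\Lambda_T\) powers inside \(B\)) have to match the lemma's formulas after \(C_T\) is bounded by \(2\sqrt{2\Lambda_T}\). Wherever the stated \(B\) differs from \(E_C\) or \(E_D\) by a constant or \(\Lambda_T\)-factor, I would not reuse the lemma's equality case. Instead I would rerun its argument directly, using \(1/\eta\le\sum 1/\eta_\bullet\), \(\sqrt p\le\sum E_\bullet^{-1/2}\), and \(\eta/\sqrt p\le\eta_C\sqrt{2E_C}\), \(\eta^2/p\le 2\eta_D^2E_D\) for the stated parameters, and then apply AM--GM to each block. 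That makes each resulting coefficient verifiable and bounded by the stated \(4(\cdot)^{1/4}\), \(3(\cdot)^{1/3}\) and \(5(\cdot)^{1/5}\) terms.
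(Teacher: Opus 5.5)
Your proposal is correct and matches the paper's proof: reduce via Theorem~\ref{thm:constrained_generic} to \(\mathsf{OptError}(\eta,T,\delta)+2\mathcal E\), bound \(C_T\le 2\sqrt{2\Lambda_T}\), and apply Lemma~\ref{lemma:opt_bound_stepsize_reset_batch} with \(C_1=R^2/2\), \(C_2=G^2/2\), \(C_4=4\sigma\sqrt{2\Lambda_T}\), \(C_5\in\{2GL\Lambda_T,\,8\ell G\Lambda_T,\,8\gamma G\Lambda_T\}\) and \(C_6=\alpha G^2\Lambda_T\). With these constants the lemma's \(\eta\), \(E=B\), side conditions and final bounds reproduce the stated ones exactly, and your \(C_5\approx 8\ell G\Lambda_T\) is in fact an equality, so the fallback re-derivation in your last paragraph is not needed.
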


\newpage
\begin{proof}
Since each estimator bound holds uniformly for all \(t\in[T]\) (substituting the high-probability upperbound \(\calE\) of \(|e_t|\) from the instantiations of Theorem~\ref{thm:unified}, replacing \(\delta\) with \(\delta/2\) and selecting \(B=1/p\) in the bound of Table~\ref{tab:error_bounds}), \eqref{ass:tail_envelope} in Theorem~\ref{thm:constrained_generic} holds. %
Hence, with
\(\varepsilon = \mathsf{OptError}(\eta,T,\delta)+\mathcal E,\)
Theorem~\ref{thm:constrained_generic} yields
\[
f(\bar w)-f(w^\star)\le Q, \qquad
h(\bar w)\le Q, \qquad
Q \coloneqq \mathsf{OptError}(\eta,T,\delta)+2\mathcal E.
\]
In every family,
\[
\frac{2DG}{\sqrt{T}}\sqrt{2\log\frac{4}{\delta}} =
\frac{2\sqrt{2}\,DG\sqrt{\log(4/\delta)}}{\sqrt{T}}.
\]
Also,
\[
2C_T\frac{\sigma}{\sqrt B} \le
4\sigma\sqrt{\frac{\Gamma_T}{B}} =
4\sigma\sqrt{p\Gamma_T}
=4\sigma\sqrt{2p\Lambda_T}.
\]

For \textbf{Family~1},
\[
Q \le \frac{R^2}{2\eta T} +
\frac{\eta G^2}{2} + \frac{2\sqrt{2}\,DG\sqrt{\log(4/\delta)}}{\sqrt{T}} +
4\sigma\sqrt{2p\Lambda_T} +
\frac{2GL\Lambda_T}{p}\eta.
\]
Set
\(C_1=\frac{R^2}{2}, C_2=\frac{G^2}{2}, C_4=4\sigma\sqrt{2\Lambda_T}, C_5=2GL\Lambda_T.\)
Then
\[
Q \le \frac{C_1}{\eta T}+C_2\eta+C_4\sqrt{p}+\frac{C_5\eta}{p} +
\frac{2\sqrt{2}\,DG\sqrt{\log(4/\delta)}}{\sqrt{T}}.
\]
Lemma~\ref{lemma:opt_bound_stepsize_reset_batch} with \(C_3=0\) gives the stated choice of \((\eta,B,p)\) and yields
\[
\frac{C_1}{\eta T}+C_2\eta+C_4\sqrt{p}+\frac{C_5\eta}{p} \le
\frac{RG}{\sqrt{T}} +
4\left(\frac{16R^2GL\sigma^2\Lambda_T^2}{T}\right)^{1/4}.
\]

For \textbf{Family~2}, using \(C_T\le 2\sqrt{\Gamma_T}=2\sqrt{2\Lambda_T}\), we get
\[
Q \le \frac{R^2}{2\eta T} +
\frac{\eta G^2}{2} + \frac{2\sqrt{2}\,DG\sqrt{\log(4/\delta)}}{\sqrt{T}} +
4\sigma\sqrt{2p\Lambda_T} +
8\ell G\Lambda_T\frac{\eta}{\sqrt{p}}.
\]
Set
\(C_1=\frac{R^2}{2}, C_2=\frac{G^2}{2}, C_4=4\sigma\sqrt{2\Lambda_T}, C_5=8\ell G\Lambda_T.\)
Then
\[
Q \le \frac{C_1}{\eta T}+C_2\eta+C_4\sqrt{p}+\frac{C_5\eta}{\sqrt{p}} +
\frac{2\sqrt{2}\,DG\sqrt{\log(4/\delta)}}{\sqrt{T}}.
\]
Lemma~\ref{lemma:opt_bound_stepsize_reset_batch} yields
\[
\frac{C_1}{\eta T}+C_2\eta+C_4\sqrt{p}+\frac{C_5\eta}{\sqrt{p}} \le
\frac{RG}{\sqrt{T}} +
3\left(\frac{32R^2\sigma\ell G\Lambda_T^{3/2}}{T}\right)^{1/3}.
\]

For \textbf{Family~3}, the same estimate gives
\[
Q \le \frac{R^2}{2\eta T} +
\frac{\eta G^2}{2} + \frac{2\sqrt{2}\,DG\sqrt{\log(4/\delta)}}{\sqrt{T}} +
4\sigma\sqrt{2p\Lambda_T} +
8\gamma G\Lambda_T\frac{\eta}{\sqrt{p}} +
\alpha G^2\Lambda_T\frac{\eta^2}{p}.
\]
Set
\[
C_1=\frac{R^2}{2}, \quad
C_2=\frac{G^2}{2}, \quad
C_4=4\sigma\sqrt{2\Lambda_T}, \quad
C_5=8\gamma G\Lambda_T, \quad
C_6=\alpha G^2\Lambda_T.
\]
Then
\[
Q \le \frac{C_1}{\eta T}+C_2\eta+C_4\sqrt{p}+\frac{C_5\eta}{\sqrt{p}}+\frac{C_6\eta^2}{p} +
\frac{2\sqrt{2}\,DG\sqrt{\log(4/\delta)}}{\sqrt{T}}.
\]
Lemma~\ref{lemma:opt_bound_stepsize_reset_batch} yields
\[
\frac{C_1}{\eta T}+C_2\eta+C_4\sqrt{p}+\frac{C_5\eta}{\sqrt{p}}+\frac{C_6\eta^2}{p} \le
\frac{RG}{\sqrt{T}} +
3\left(\frac{32R^2\sigma\gamma G\Lambda_T^{3/2}}{T}\right)^{1/3} +
5\left(\frac{\alpha R^4\sigma^2G^2\Lambda_T^2}{T^2}\right)^{1/5}.
\]
Combining these gives the three stated bounds.
\end{proof}

\begin{corollary}[Constrained Case~2 Iteration Complexity for Families~1--3]
\label{cor:constrained_case2_all_families_complexity}
Under the assumptions of Theorem~\ref{thm:constrained_case2_all_families_tuned}, let \(\varepsilon>0\).

\begin{enumerate}[leftmargin=*]
\item \textbf{Family~1.}
Define
\(A_1 \coloneqq RG+2\sqrt{2}\,DG\sqrt{\log(4/\delta)},\) \(A_2 \coloneqq 4(16R^2GL\sigma^2)^{1/4},\) \(u_2 \coloneqq \left(\frac{\varepsilon}{2A_2}\right)^4.\)
If
\(T \ge \frac{4A_1^2}{\varepsilon^2} \vee \mathcal L_2\!\left(u_2,\delta;8\right) \vee \left\lceil\frac{R^2GL}{4\sigma^2}\right\rceil,\)
then
\(f(\bar w)-f(w^\star)\le \varepsilon,\) \(h(\bar w)\le \varepsilon.\)
Moreover,
\[
T = \mathcal{O}\left(
\frac{R^2GL\sigma^2}{\varepsilon^4}
\log^2\frac{R^2GL\sigma^2}{\varepsilon^4\delta}
\vee \frac{R^2G^2+D^2G^2\log(1/\delta)}{\varepsilon^2} \right).
\]

\item \textbf{Family~2.}
Define
\(A_1 \coloneqq RG+2\sqrt{2}\,DG\sqrt{\log(4/\delta)},\) \(A_2 \coloneqq 3(32R^2\sigma\ell G)^{1/3},\) \(u_2 \coloneqq \left(\frac{\varepsilon}{2A_2}\right)^3.\)
If
\(T \ge \frac{4A_1^2}{\varepsilon^2} \vee \mathcal L_{\frac32}\!\left(u_2,\delta;8\right) \vee \left\lceil\frac{R^2\ell G}{4\sqrt{2}\sigma^2}\right\rceil,\)
then
\(f(\bar w)-f(w^\star)\le \varepsilon,\) \(h(\bar w)\le \varepsilon.\)
Moreover,
\[
T = \mathcal{O}\left(
\frac{R^2\sigma\ell G}{\varepsilon^3}
\log^{3/2}\frac{R^2\sigma\ell G}{\varepsilon^3\delta}
\vee \frac{R^2G^2+D^2G^2\log(1/\delta)}{\varepsilon^2} \right).
\]

\item \textbf{Family~3.}
Define
\(A_1 \coloneqq RG+2\sqrt{2}\,DG\sqrt{\log(4/\delta)},\) \(A_2 \coloneqq 3(32R^2\sigma\gamma G)^{1/3}, A_3 \coloneqq 5\left(\alpha R^4\sigma^2G^2\right)^{1/5},\) \(u_2 \coloneqq \left(\frac{\varepsilon}{3A_2}\right)^3,\) \(u_3 \coloneqq \left(\frac{\varepsilon}{3A_3}\right)^5.\)
If
\(T \ge \frac{9A_1^2}{\varepsilon^2} \vee \mathcal L_{\frac32}\!\left(u_2,\delta;8\right) \vee \mathcal L_1\!\left(\sqrt{u_3},\delta;8\right) \vee \left\lceil\frac{R^2\gamma G}{4\sqrt{2}\sigma^2}\right\rceil \vee \left\lceil\left(\frac{\alpha R^4G^2}{64\sigma^3}\right)^{1/2}\right\rceil,\)
then
\(f(\bar w)-f(w^\star)\le \varepsilon,\) \(h(\bar w)\le \varepsilon.\)
Moreover,
\[
\begin{aligned} T = \mathcal{O}\Bigg( \frac{R^2\sigma\gamma G}{\varepsilon^3}
\log^{3/2}\frac{R^2\sigma\gamma G}{\varepsilon^3\delta}
&\vee \sqrt{\frac{\alpha R^4\sigma^2G^2}{\varepsilon^5}}
\log\!\left( \frac{1}{\delta}\sqrt{\frac{\alpha R^4\sigma^2G^2}{\varepsilon^5}} \right) \\
&\vee \frac{R^2G^2+D^2G^2\log(1/\delta)}{\varepsilon^2}\Bigg). \end{aligned}
\]
\end{enumerate}

In all three families, the expected oracle complexity is
\(N=[B\cdot p+1\cdot(1-p)]T=(2-1/B)T<2T,\)
hence is of the same order as \(T\).
\end{corollary}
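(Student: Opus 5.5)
The plan is to reduce each family to the scalar inequality already proved in Theorem~\ref{thm:constrained_case2_all_families_tuned}, which gives \(\max\{f(\bar w)-f(w^\star),h(\bar w)\}\le Q_T^{(i,2)}\). It then suffices to choose \(T\) so that every additive term of \(Q_T^{(i,2)}\) falls below a fixed fraction of \(\varepsilon\). For Families~1 and~2 I split \(\varepsilon\) into two halves; for Family~3 I split it into three thirds. This splitting explains the constants \(4A_1^2/\varepsilon^2\), \(9A_1^2/\varepsilon^2\), and \(u_k=(\varepsilon/(kA_k))^{\cdot}\) in the statement.

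The \(T^{-1/2}\) term is handled directly. From \(A_1/\sqrt T\le \varepsilon/2\) (respectively \(\varepsilon/3\)) we get \(T\ge 4A_1^2/\varepsilon^2\) (respectively \(9A_1^2/\varepsilon^2\)). The side conditions of the tuned theorem, namely \(T\ge R^2GL/(4\sigma^2)\), \(T\ge R^2\ell G/(4\sqrt2\sigma^2)\), and, for Family~3, the condition \(2^{3/2}T^2\Lambda_T^{1/2}\ge \alpha R^4G^2/(64\sigma^3)\), are imposed verbatim. For the last one I use \(\Lambda_T>1\), so it suffices to require \(T\ge(\alpha R^4G^2/(64\sigma^3))^{1/2}\). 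That explains the ceiling terms.

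For the logarithmic terms I rewrite each one in the form \(\log^q(8T/\delta)/T\le u\) and invoke Lemma~\ref{lemma:complexity_log}, through the shorthand \(\mathcal L_q(u,\delta;c)\) with \(c=8\).
\begin{itemize}
\item \textbf{Family~1.} The condition \(A_2(\Lambda_T^2/T)^{1/4}\le\varepsilon/2\) is equivalent to \(\Lambda_T^2/T\le u_2\). This has \(q=2\).
\item \textbf{Family~2.} The condition \(A_2(\Lambda_T^{3/2}/T)^{1/3}\le\varepsilon/2\) gives \(q=3/2\).
\item \textbf{Family~3.} The \(\alpha\)-term reads \(A_3(\Lambda_T^2/T^2)^{1/5}\le\varepsilon/3\), i.e.\ \(\Lambda_T^2/T^2\le u_3\). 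Taking square roots gives \(\Lambda_T/T\le\sqrt{u_3}\), so \(q=1\) with \(u=\sqrt{u_3}\).
\end{itemize}
The constant \(c=8\) is harmless. Lemma~\ref{lemma:complexity_log} is stated with \(\log(T/\delta)\), so I apply it with \(\delta\leftarrow\delta/8\); I will double-check that the definition of \(\mathcal L_q\) absorbs the factor \(c\) consistently inside \(\log(\mathe\vee c/(u\delta))\). Taking the maximum of all the sufficient conditions yields the displayed threshold.

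The \(\mathcal O(\cdot)\) forms follow by substituting the \(u_k\). For Family~1, \(1/u_2\asymp R^2GL\sigma^2/\varepsilon^4\) with \(\log^2\). For Family~2, \(1/u_2\asymp R^2\sigma\ell G/\varepsilon^3\) with \(\log^{3/2}\). For Family~3, \(1/\sqrt{u_3}\asymp\sqrt{\alpha R^4\sigma^2G^2/\varepsilon^5}\) with \(\log^1\). In all cases \(A_1^2\lesssim R^2G^2+D^2G^2\log(1/\delta)\).

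The oracle count is exact. Each step costs \(B\) samples with probability \(p=1/B\) and one sample otherwise, so the expected number of samples per step is \(Bp+(1-p)=2-1/B<2\).

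The only delicate point I anticipate is the bookkeeping of the log arguments. This means checking that \(\log(8T/\delta)\) matches the form required by Lemma~\ref{lemma:complexity_log} after rescaling \(\delta\). For Family~3 it also means checking that the square-root reduction preserves the direction of the inequality. Both are routine monotonicity checks.
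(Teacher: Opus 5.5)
Your proposal is correct and takes essentially the same route as the paper's proof. You bound each term of \(Q_T^{(i,2)}\) by \(\varepsilon/r\) with \(r=2\) or \(3\), and convert the logarithmic terms through Lemma~\ref{lemma:complexity_log} in the form \(\mathcal L_q(u,\delta;8)\), using the reduction \(\Lambda_T^2/T^2\le u_3 \iff \Lambda_T/T\le\sqrt{u_3}\) for Family~3. You carry over the side conditions of Theorem~\ref{thm:constrained_case2_all_families_tuned} and count the expected oracle cost as \((2-1/B)T\). Your justification of the \(\alpha\)-side condition via \(\Lambda_T>1\) is more explicit than the paper's. The only slip is notational: the divisor in \(u_k\) is the split count \(r\), not the index \(k\) (e.g.\ \(u_2=(\varepsilon/(3A_2))^3\) in Family~3).
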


\begin{proof}
As in the previous corollary, it is enough to make every term of the corresponding \(Q\)-bound at most \(\varepsilon/r\), where \(r=2\) for Families~1 and~2 and \(r=3\) for Family~3. Since
\(\Lambda_T=\log(8T/\delta)\), we use
\[
\frac{\Lambda_T^2}{T}\le u
\Leftarrow T\ge \mathcal L_2\!\left(u,\delta;8\right),
\]
\[
\frac{\Lambda_T^{3/2}}{T}\le u
\Leftarrow T\ge \mathcal L_{\frac32}\!\left(u,\delta;8\right),
\]
and
\[
\frac{\Lambda_T^2}{T^2}\le u
\Leftarrow T\ge \mathcal L_1\!\left(\sqrt{u},\delta;8\right).
\]
Applying these implications to the three \(Q\)-bounds, together with the side conditions in the theorem, gives the stated lower bounds on \(T\). The displayed asymptotic orders follow by collecting the largest \(\varepsilon\)-dependences. Finally,
\(N=(2-1/B)T<2T.\)
\end{proof}

\newpage
\subsection{Case 3 for Families~1--3 in the constrained setting}
\label{subsec:constrained_case3_all_families}

In Case~3, the reset period is deterministic. We write
\(E\in \mathbb Z_+,\) \(p_t = \mathbb I_{\{t\bmod E=0\}},\) \(B=E.\)
Since Application~2 estimates the scalar constraint value, we have
\((\calG,\|\cdot\|_{\calG})=(\mathbb R,|\cdot|)\), and hence \(\kappa=1\). Let
\(\Lambda_T \coloneqq \log\frac{8T}{\delta},\) \(C_T\coloneqq C\!\left(\frac{\delta}{8T},1\right),\) and \(\Gamma_T \coloneqq 2\Lambda_T.\)
Then, for \(T\ge 1\), we have \(\Lambda_T>1\), and
\(C_T=1+\sqrt{3\Lambda_T}\le 2\sqrt{\Gamma_T}\).

\begin{theorem}[Constrained Case~3 Tuned Bounds for Families~1--3]
\label{thm:constrained_case3_all_families_tuned}
Assume the hypotheses of Theorem~\ref{thm:constrained_generic}. For each family below, let
\(\varepsilon \coloneqq \mathsf{OptError}(\eta,T,\delta)+\mathcal E\),
\(\mathcal A=\{t\in[T]:v_t\le \varepsilon\}\), and
\(\bar w = \frac{1}{|\mathcal A|}\sum_{t\in\mathcal A}w_t\).

\begin{enumerate}[leftmargin=*]
\item \textbf{Family~1.}
Assume that, with probability at least \(1-\delta/2\),
\[
|e_t| \le C_T\frac{\sigma}{\sqrt E} + GL\eta E, \qquad \forall t\in[T].
\]
Choose
\(\eta = \min\Big\{ \frac{R}{G\sqrt{T}}, \allowbreak \left(\frac{R^6}{256GL\sigma^2\Lambda_TT^3}\right)^{1/4} \Big\},\) \(E=B = \left\lceil \left(\frac{4\sigma^2\Lambda_TT}{R^2GL}\right)^{1/2} \right\rceil.\)
Assume moreover that
\(T\Lambda_T\ge \frac{R^2GL}{4\sigma^2}.\)
Then, with probability at least \(1-\delta\),
\(f(\bar w)-f(w^\star)\le Q_T^{(1,3)},\) \(h(\bar w)\le Q_T^{(1,3)},\)
where
\[
Q_T^{(1,3)} = \frac{RG+2\sqrt{2}\,DG\sqrt{\log(4/\delta)}}{\sqrt{T}} +
4\left(\frac{16R^2GL\sigma^2\Lambda_T}{T}\right)^{1/4}.
\]

\item \textbf{Family~2.}
Assume that, with probability at least \(1-\delta/2\),
\[
|e_t| \le C_T\frac{\sigma}{\sqrt E} +
C_T\ell\eta G\sqrt{2E}, \qquad \forall t\in[T].
\]
Choose
\(\eta = \min\Big\{ \frac{R}{G\sqrt{T}}, \allowbreak \left(\frac{R^4}{256\sigma\ell G\Lambda_TT^2}\right)^{1/3} \Big\},\)
\(E=B = \left\lceil \left( \frac{32\sigma^4\Lambda_TT^2}{R^4\ell^2G^2} \right)^{1/3} \right\rceil.\)
Assume moreover that
\(T\sqrt{2\Lambda_T}\ge \frac{R^2\ell G}{4\sigma^2}.\)
Then, with probability at least \(1-\delta\),
\(f(\bar w)-f(w^\star)\le Q_T^{(2,3)},\) \(h(\bar w)\le Q_T^{(2,3)},\)
where
\[
Q_T^{(2,3)} = \frac{RG+2\sqrt{2}\,DG\sqrt{\log(4/\delta)}}{\sqrt{T}} +
3\left(\frac{32R^2\sigma\ell G\Lambda_T}{T}\right)^{1/3}.
\]

\item \textbf{Family~3.}
Assume that, with probability at least \(1-\delta/2\),
\[
|e_t| \le C_T\frac{\sigma}{\sqrt E} +
\frac{\alpha\eta^2G^2E}{2} +
C_T\gamma\eta G\sqrt{2E}, \qquad \forall t\in[T].
\]
Choose
\(\eta = \min\Big\{ \frac{R}{G\sqrt{T}}, \allowbreak \left(\frac{R^4}{256\sigma\gamma G\Lambda_TT^2}\right)^{1/3}, \allowbreak \left(\frac{R^6}{1024\alpha G^2\sigma^2\Lambda_TT^3}\right)^{1/5} \Big\},\)
\(E=B = \left\lceil \min\Big\{ \left( \frac{32\sigma^4\Lambda_TT^2}{R^4\gamma^2G^2} \right)^{1/3}, \allowbreak \left( \frac{32768\sigma^6\Lambda_T^3T^4}{\alpha^2R^8G^4} \right)^{1/5} \Big\} \right\rceil.\)
Assume moreover that
\(T\sqrt{2\Lambda_T}\ge \frac{R^2\gamma G}{4\sigma^2},\) \(2^{3/2}T^2\Lambda_T^{3/2}\ge \frac{\alpha R^4G^2}{64\sigma^3}.\)
Then, with probability at least \(1-\delta\),
\(f(\bar w)-f(w^\star)\le Q_T^{(3,3)},\) \(h(\bar w)\le Q_T^{(3,3)},\)
where
\[
Q_T^{(3,3)} = \frac{RG+2\sqrt{2}\,DG\sqrt{\log(4/\delta)}}{\sqrt{T}} +
3\left(\frac{32R^2\sigma\gamma G\Lambda_T}{T}\right)^{1/3} +
5\left(\frac{\alpha R^4\sigma^2G^2\Lambda_T}{T^2}\right)^{1/5}.
\]
\end{enumerate}
\end{theorem}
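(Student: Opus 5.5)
The plan mirrors the proof of Theorem~\ref{thm:constrained_case2_all_families_tuned}, replacing the geometric epoch-length control by the deterministic period \(E\). First I would invoke Theorem~\ref{thm:constrained_generic}. Each assumed envelope on \(|e_t|\) holds uniformly over \(t\in[T]\) with probability at least \(1-\delta/2\); these envelopes are the Case~3 rows of Table~\ref{tab:error_bounds} with \(\delta\gets\delta/2\), \(\kappa=1\), and \(C(\delta/(2T/E),1)\le C(\delta/(2T),1)\le C_T\). So \eqref{ass:tail_envelope} holds with \(\mathcal E\) equal to the right-hand side. Setting \(\varepsilon=\mathsf{OptError}(\eta,T,\delta)+\mathcal E\), Theorem~\ref{thm:constrained_generic} gives, with probability at least \(1-\delta\), that \(\mathcal A\neq\varnothing\) and
\[
\max\{f(\bar w)-f(w^\star),\,h(\bar w)\}\le Q:=\mathsf{OptError}(\eta,T,\delta)+2\mathcal E .
\]
The Azuma term \(\frac{2DG}{\sqrt T}\sqrt{2\log(4/\delta)}\) is carried along unchanged.

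Next I would bound \(2\mathcal E\) family by family, using \(C_T\le 2\sqrt{\Gamma_T}=2\sqrt{2\Lambda_T}\). The batch term becomes \(2C_T\sigma/\sqrt E\le 4\sigma\sqrt{2\Lambda_T}\,E^{-1/2}\). The difference-noise term becomes \(2C_T\ell\eta G\sqrt{2E}\le 8\ell G\sqrt{\Lambda_T}\,\eta\sqrt E\), and similarly with \(\gamma\) for Family~3. The drift terms are \(2GL\eta E\) and \(\alpha G^2\eta^2E\). With \(p=1/E\), \(Q\) then takes the form \(\frac{C_1}{\eta T}+C_2\eta+C_4\sqrt p+\frac{C_5\eta}{p}\) for Family~1, with \(C_5\eta/\sqrt p\) for Family~2, and with an extra \(C_6\eta^2/p\) for Family~3, plus the Azuma term. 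The constants are \(C_1=R^2/2\), \(C_2=G^2/2\), \(C_3=0\), \(C_4=4\sigma\sqrt{2\Lambda_T}\), and \(C_5\in\{2GL,\;8\ell G\sqrt{\Lambda_T},\;8\gamma G\sqrt{\Lambda_T}\}\), \(C_6=\alpha G^2\).

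Then I would apply Lemma~\ref{lemma:opt_bound_stepsize_reset_batch}. This is where Case~3 differs from Case~2. In Case~2 the factor \(\Lambda_T\) sat on the bias or variance term \(C_5\) because of \(\mathfrak E_{max}\). Here \(\Lambda_T\) enters only through \(C_4\) and, for Families~2--3, as \(\sqrt{\Lambda_T}\) in \(C_5\). Hence the products become \(C_1C_4^2C_5\propto\Lambda_T\) and \(C_1C_4C_5\propto\Lambda_T\), rather than \(\Lambda_T^2\) and \(\Lambda_T^{3/2}\). With these values, the lemma's prescribed \(\eta\) and \(E=B=\lceil\cdot\rceil\) should reproduce the stated choices. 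Its side condition \(E_B\ge 1\) (resp.\ \(E_C,E_D\ge1\)) should reproduce the assumed lower bounds on \(T\). The lemma's outputs \(2M_1+4M_2\), \(2M_1+3M_3\), and \(2M_1+3M_3+5M_4\) should then simplify to the displayed \(Q_T^{(i,3)}\). For example, \(2M_1=2\sqrt{C_1C_2/T}=RG/\sqrt T\), and \(4(C_1C_4^2C_5/(2T))^{1/4}=4(16R^2GL\sigma^2\Lambda_T/T)^{1/4}\).

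The main obstacle is the bookkeeping in this last step. I would need to check that the constants \(256\), \(1024\), \(32768\) and the powers of \(\Lambda_T\) in \(\eta\) and \(E\) match the lemma's formulas exactly. If a stated choice turns out slightly smaller or larger, I would fall back on the monotonicity argument inside the lemma's proof: use \(\eta\le\eta_A\), \(1/\eta\le\sum 1/\eta_\bullet\), and \(E\le 2E_\bullet\) directly, accepting at worst a constant-factor change. A second point to verify is that the ceiling in \(E\) does not break the union bound. This is fine, since the reset count \(T/E\le T\) is already covered by \(C_T\).
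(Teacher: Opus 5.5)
Your proposal is correct and is essentially the paper's own proof. You apply Theorem~\ref{thm:constrained_generic} to get \(Q=\mathsf{OptError}(\eta,T,\delta)+2\mathcal E\), bound \(2\mathcal E\) via \(C_T\le 2\sqrt{2\Lambda_T}\), and feed \(C_1=R^2/2\), \(C_2=G^2/2\), \(C_4=4\sigma\sqrt{2\Lambda_T}\), \(C_5\in\{2GL,\,8\ell G\sqrt{\Lambda_T},\,8\gamma G\sqrt{\Lambda_T}\}\), \(C_6=\alpha G^2\) with \(p=1/E\) into Lemma~\ref{lemma:opt_bound_stepsize_reset_batch}, exactly as the paper does. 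With these constants the lemma's formulas reproduce the stated \(\eta\), \(E=B\), side conditions on \(T\), and \(Q_T^{(i,3)}\) exactly, e.g.\ \(2C_1^3/(C_4^2C_5)=R^6/(256GL\sigma^2\Lambda_T)\) and \(C_4^6/(16C_1^4C_6^2)=32768\sigma^6\Lambda_T^3/(\alpha^2R^8G^4)\), so the fallback you describe is not needed.
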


\newpage
\begin{proof}
As in Case~2, the uniform estimator bound (substituting the high-probability upperbound \(\calE\) of \(|e_t|\) from the instantiations of Theorem~\ref{thm:unified}, replacing \(\delta\) with \(\delta/2\) and selecting \(B=E\) in the bound of Table~\ref{tab:error_bounds}) implies that Theorem~\ref{thm:constrained_generic} applies %
and
\(Q \coloneqq \mathsf{OptError}(\eta,T,\delta)+2\mathcal E.\)
Since \(C_T\le 2\sqrt{\Gamma_T}\),
\(2C_T\sigma/\sqrt{E} \le 4\sigma\sqrt{\Gamma_T}/\sqrt{E}=4\sigma\sqrt{2\Lambda_T}/\sqrt{E}.\)

For \textbf{Family~1},
\[
Q \le \frac{R^2}{2\eta T} +
\frac{\eta G^2}{2} + \frac{2\sqrt{2}\,DG\sqrt{\log(4/\delta)}}{\sqrt{T}} +
4\sigma\frac{\sqrt{2\Lambda_T}}{\sqrt{E}} + 2GL\eta E.
\]
Set
\(C_1=\frac{R^2}{2},\) \(C_2=\frac{G^2}{2},\) \(C_4=4\sigma\sqrt{2\Lambda_T},\) \(C_5=2GL.\)
Introducing \(p=1/E\), this becomes
\[
Q \le \frac{C_1}{\eta T}+C_2\eta+C_4\sqrt{p}+\frac{C_5\eta}{p} +
\frac{2\sqrt{2}\,DG\sqrt{\log(4/\delta)}}{\sqrt{T}}.
\]
Lemma~\ref{lemma:opt_bound_stepsize_reset_batch} yields
\(\frac{C_1}{\eta T}+C_2\eta+C_4\sqrt{p}+\frac{C_5\eta}{p} \le \frac{RG}{\sqrt{T}} + 4\left(\frac{16R^2GL\sigma^2\Lambda_T}{T}\right)^{1/4}.\)

For \textbf{Family~2}, using \(C_T\le 2\sqrt{\Gamma_T}=2\sqrt{2\Lambda_T}\), we get
\[
Q \le \frac{R^2}{2\eta T} +
\frac{\eta G^2}{2} + \frac{2\sqrt{2}\,DG\sqrt{\log(4/\delta)}}{\sqrt{T}} +
4\sigma\frac{\sqrt{2\Lambda_T}}{\sqrt{E}} +
8\ell G\sqrt{\Lambda_T}\,\eta\sqrt{E}.
\]
Set
\(C_1=\frac{R^2}{2},\) \(C_2=\frac{G^2}{2},\) \(C_4=4\sigma\sqrt{2\Lambda_T},\) \(C_5=8\ell G\sqrt{\Lambda_T}.\)
After setting \(p=1/E\),
\[
Q \le \frac{C_1}{\eta T}+C_2\eta+C_4\sqrt{p}+\frac{C_5\eta}{\sqrt{p}} +
\frac{2\sqrt{2}\,DG\sqrt{\log(4/\delta)}}{\sqrt{T}}.
\]
Lemma~\ref{lemma:opt_bound_stepsize_reset_batch} yields
\(\frac{C_1}{\eta T}+C_2\eta+C_4\sqrt{p}+\frac{C_5\eta}{\sqrt{p}} \le \frac{RG}{\sqrt{T}} + 3\left(\frac{32R^2\sigma\ell G\Lambda_T}{T}\right)^{1/3}.\)

For \textbf{Family~3}, the same estimate gives
\[
Q \le \frac{R^2}{2\eta T} +
\frac{\eta G^2}{2} + \frac{2\sqrt{2}\,DG\sqrt{\log(4/\delta)}}{\sqrt{T}} +
4\sigma\frac{\sqrt{2\Lambda_T}}{\sqrt{E}} +
8\gamma G\sqrt{\Lambda_T}\,\eta\sqrt{E} + \alpha G^2\eta^2E.
\]
Set
\(C_1=\frac{R^2}{2},\) \(C_2=\frac{G^2}{2},\) \(C_4=4\sigma\sqrt{2\Lambda_T},\) \(C_5=8\gamma G\sqrt{\Lambda_T},\) \(C_6=\alpha G^2.\)
After setting \(p=1/E\),
\[
Q \le \frac{C_1}{\eta T}+C_2\eta+C_4\sqrt{p}+\frac{C_5\eta}{\sqrt{p}}+\frac{C_6\eta^2}{p} +
\frac{2\sqrt{2}\,DG\sqrt{\log(4/\delta)}}{\sqrt{T}}.
\]
Lemma~\ref{lemma:opt_bound_stepsize_reset_batch} yields
\[
\frac{C_1}{\eta T}+C_2\eta+C_4\sqrt{p}+\frac{C_5\eta}{\sqrt{p}}+\frac{C_6\eta^2}{p} \le
\frac{RG}{\sqrt{T}} +
3\left(\frac{32R^2\sigma\gamma G\Lambda_T}{T}\right)^{1/3} +
5\left(\frac{\alpha R^4\sigma^2G^2\Lambda_T}{T^2}\right)^{1/5}.
\]
Combining these gives the bounds.
\end{proof}

\newpage
\begin{corollary}[Constrained Case~3 Iteration Complexity for Families~1--3]
\label{cor:constrained_case3_all_families_complexity}
Under the assumptions of Theorem~\ref{thm:constrained_case3_all_families_tuned}, let \(\varepsilon>0\).

\begin{enumerate}[leftmargin=*]
\item \textbf{Family~1.}
Define
\(A_1 \coloneqq RG+2\sqrt{2}\,DG\sqrt{\log(4/\delta)},\) \(A_2 \coloneqq 4(16R^2GL\sigma^2)^{1/4},\) \(u_2 \coloneqq \left(\frac{\varepsilon}{2A_2}\right)^4.\)
If
\(T \ge \frac{4A_1^2}{\varepsilon^2} \vee \mathcal L_1\left(u_2,\delta;8\right) \vee \left\lceil\frac{R^2GL}{4\sigma^2}\right\rceil,\)
then
\(f(\bar w)-f(w^\star)\le \varepsilon,\) \(h(\bar w)\le \varepsilon.\)
Moreover,
\[
T = \mathcal{O}\left(
\frac{R^2GL\sigma^2}{\varepsilon^4}
\log\frac{R^2GL\sigma^2}{\varepsilon^4\delta} \vee
\frac{R^2G^2+D^2G^2\log(1/\delta)}{\varepsilon^2} \right).
\]

\item \textbf{Family~2.}
Define
\(A_1 \coloneqq RG+2\sqrt{2}\,DG\sqrt{\log(4/\delta)},\) \(A_2 \coloneqq 3(32R^2\sigma\ell G)^{1/3},\) \(u_2 \coloneqq \left(\frac{\varepsilon}{2A_2}\right)^3.\)
If
\(T \ge \frac{4A_1^2}{\varepsilon^2} \vee \mathcal L_1\left(u_2,\delta;8\right) \vee \left\lceil\frac{R^2\ell G}{4\sqrt{2}\sigma^2}\right\rceil,\)
then
\(f(\bar w)-f(w^\star)\le \varepsilon,\) \(h(\bar w)\le \varepsilon.\)
Moreover,
\[
T = \mathcal{O}\left(
\frac{R^2\sigma\ell G}{\varepsilon^3}
\log\frac{R^2\sigma\ell G}{\varepsilon^3\delta} \vee
\frac{R^2G^2+D^2G^2\log(1/\delta)}{\varepsilon^2} \right).
\]

\item \textbf{Family~3.}
Define
\(A_1 \coloneqq RG+2\sqrt{2}\,DG\sqrt{\log(4/\delta)},\) \(A_2 \coloneqq 3(32R^2\sigma\gamma G)^{1/3},\)
\(A_3 \coloneqq 5\left(\alpha R^4\sigma^2G^2\right)^{1/5},\) \(u_2 \coloneqq \left(\frac{\varepsilon}{3A_2}\right)^3,\) \(u_3 \coloneqq \left(\frac{\varepsilon}{3A_3}\right)^5.\)
If
\(T \ge \frac{9A_1^2}{\varepsilon^2} \vee \mathcal L_1\left(u_2,\delta;8\right) \vee \mathcal L_{1/2}\left(\sqrt{u_3},\delta;8\right) \vee \left\lceil\frac{R^2\gamma G}{4\sqrt{2}\sigma^2}\right\rceil \vee \left\lceil 2^{-3/4}\left(\frac{\alpha R^4G^2}{64\sigma^3}\right)^{1/2}\right\rceil,\)
then
\(f(\bar w)-f(w^\star)\le \varepsilon,\) \(h(\bar w)\le \varepsilon.\)
Moreover,
\[
\begin{aligned} T = \mathcal{O}\Bigg(
\frac{R^2\sigma\gamma G}{\varepsilon^3}
\log\frac{R^2\sigma\gamma G}{\varepsilon^3\delta}
&\vee \sqrt{\frac{\alpha R^4\sigma^2G^2}{\varepsilon^5}}
\log^{1/2}\left( \frac{1}{\delta} \sqrt{\frac{\alpha R^4\sigma^2G^2}{\varepsilon^5}}
\right)\\
&\vee \frac{R^2G^2+D^2G^2\log(1/\delta)}{\varepsilon^2}
\Bigg). \end{aligned}
\]
\end{enumerate}
In all three families, the deterministic oracle complexity satisfies
\(N =B\Bigl\lfloor \frac{T}{E}\Bigr\rfloor + \Bigl(T-\Bigl\lfloor \frac{T}{E}\Bigr\rfloor\Bigr) = T+(E-1)\Bigl\lfloor \frac{T}{E}\Bigr\rfloor < 2T.\)
\end{corollary}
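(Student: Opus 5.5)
The plan follows the proof of Corollary~\ref{cor:constrained_case2_all_families_complexity} verbatim, with the Case~3 bounds $Q_T^{(i,3)}$ of Theorem~\ref{thm:constrained_case3_all_families_tuned} in place of $Q_T^{(i,2)}$. The structural difference is that every $Q_T^{(i,3)}$ carries a single power of $\Lambda_T=\log(8T/\delta)$ inside each root, rather than $\Lambda_T^2$ or $\Lambda_T^{3/2}$. This is why the logarithmic exponents drop.

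For each family I split $\varepsilon$ equally among the terms of $Q_T^{(i,3)}$: halves for Families~1 and~2, thirds for Family~3. The first term always requires
\[
A_1/\sqrt{T}\le \varepsilon/r, \qquad\text{i.e.}\qquad T\ge r^2A_1^2/\varepsilon^2 .
\]
The remaining terms reduce to log-ratio conditions, which I discharge with Lemma~\ref{lemma:complexity_log} in the form $T\ge\mathcal L_q(u,\delta;8)\Rightarrow \log^q(8T/\delta)/T\le u$.
\begin{itemize}[leftmargin=*]
\item \textbf{Family~1.} I need $A_2(\Lambda_T/T)^{1/4}\le\varepsilon/2$, i.e.\ $\Lambda_T/T\le u_2$, hence $\mathcal L_1(u_2,\delta;8)$.
\item \textbf{Family~2.} I need $A_2(\Lambda_T/T)^{1/3}\le \varepsilon/2$, i.e.\ $\Lambda_T/T\le u_2$, hence again $\mathcal L_1$.
\item \textbf{Family~3.} The $\gamma$-term gives $\Lambda_T/T\le u_2$. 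The $\alpha$-term requires $A_3(\Lambda_T/T^2)^{1/5}\le\varepsilon/3$, i.e.\ $\Lambda_T/T^2\le u_3$. I take square roots to get $\Lambda_T^{1/2}/T\le\sqrt{u_3}$, which is exactly $\mathcal L_{1/2}(\sqrt{u_3},\delta;8)$.
\end{itemize}
In each case, the stated max over these thresholds makes every term of $Q_T^{(i,3)}$ at most $\varepsilon/r$. Theorem~\ref{thm:constrained_case3_all_families_tuned} then gives $f(\bar w)-f(w^\star)\le\varepsilon$ and $h(\bar w)\le\varepsilon$ with probability at least $1-\delta$.

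The step I expect to need the most care is checking the side conditions of Theorem~\ref{thm:constrained_case3_all_families_tuned}, namely $T\Lambda_T\ge R^2GL/(4\sigma^2)$, $T\sqrt{2\Lambda_T}\ge R^2\ell G/(4\sigma^2)$, and the corresponding pair for Family~3. Since $\Lambda_T>1$, I use $T\Lambda_T\ge T$, $T\sqrt{2\Lambda_T}\ge\sqrt2\,T$, and $2^{3/2}T^2\Lambda_T^{3/2}\ge 2^{3/2}T^2$. With these, the ceiling terms in the hypotheses suffice. In particular, $T\ge 2^{-3/4}(\alpha R^4G^2/(64\sigma^3))^{1/2}$ gives $2^{3/2}T^2\ge \alpha R^4G^2/(64\sigma^3)$.

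For the asymptotic orders, I expand
\[
\mathcal L_q(u,\delta;8)=\frac{e^{q^2}}{u}\log^q\!\Big(e\vee\frac{8}{u\delta}\Big)
\]
with $1/u_2\asymp R^2GL\sigma^2/\varepsilon^4$ (Family~1) or $R^2\sigma\ell G/\varepsilon^3$ (Families~2 and~3, with $\gamma$ in place of $\ell$), and $1/\sqrt{u_3}\asymp\sqrt{\alpha R^4\sigma^2G^2/\varepsilon^5}$. This gives log-exponents $1$, $1$, and $1/2$ respectively. The $A_1^2/\varepsilon^2$ term contributes $(R^2G^2+D^2G^2\log(1/\delta))/\varepsilon^2$, using $(a+b)^2\le2a^2+2b^2$. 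The $\varepsilon$-free ceilings are absorbed into the displayed maximum.

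Finally, for the oracle count: a reset costs $B=E$ samples and occurs $\lfloor T/E\rfloor$ times, and every other step costs one sample. Hence
\[
N=E\lfloor T/E\rfloor+(T-\lfloor T/E\rfloor)=T+(E-1)\lfloor T/E\rfloor\le T+(E-1)T/E<2T .
\]
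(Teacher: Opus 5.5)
Your proposal is correct and follows the paper's own proof step for step. You split $\varepsilon$ into $r$ equal shares across the terms of $Q_T^{(i,3)}$ and discharge the $\Lambda_T/T\le u_2$ and $\Lambda_T/T^2\le u_3$ conditions through Lemma~\ref{lemma:complexity_log}, in the forms $\mathcal L_1(u_2,\delta;8)$ and $\mathcal L_{1/2}(\sqrt{u_3},\delta;8)$. You then use $\Lambda_T>1$ to reduce the theorem's side conditions to the ceiling terms and bound the oracle count with $\lfloor T/E\rfloor\le T/E$; your explicit check of these side conditions (e.g.\ $2^{3/2}T^2\ge \alpha R^4G^2/(64\sigma^3)$) fills in a step the paper only asserts.
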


\begin{proof}
As before, it is enough to make each term in the relevant bound \(Q_T^{(i,3)}\) at most \(\varepsilon/r\), with \(r=2\) for Families~1 and~2 and \(r=3\) for Family~3. Since
\(\Lambda_T=\log(8T/\delta),\)
we use
\[
\frac{\Lambda_T}{T}\le u
\Leftarrow T\ge\mathcal L_1\left(u,\delta;8\right),
\]
and
\[
\frac{\Lambda_T}{T^2}\le u
\Leftarrow T\ge\mathcal L_{1/2}\left(\sqrt{u},\delta;8\right).
\]
Applying these implications to \(Q_T^{(1,3)},Q_T^{(2,3)},Q_T^{(3,3)}\), together with the side conditions in the theorem, gives the stated lower bounds on \(T\). The asymptotic statements follow by collecting the dominant \(\varepsilon\)-terms. The oracle bound is the displayed identity together with \(\lfloor T/E\rfloor\le T/E\).
\end{proof}

\newpage

\newpage
\section{Related Work}\label{apdx:related}
\paragraph{Variance-reduced stochastic estimation.}The mathematical foundations of stochastic first-order methods have been extensively developed since the introduction of stochastic approximation, traditionally relying on uniformly bounded stochastic subgradients or globally bounded variance assumptions to establish convergence \cite{robbins1951stochastic,blum1954approximation,gladyshev1965stochastic,polyak1987introduction,nemirovski2009robust,moulines2011non,rakhlin2012making,shamir2013stochastic,bottou2018optimization,jain2018parallelizing,ruder2016overview,haji2021comparison}. In smooth nonconvex optimization, vanilla SGD attains an \(\mathcal{O}(\varepsilon^{-4})\) oracle complexity for finding an \(\varepsilon\)-stationary point~\cite{ghadimi2013stochastic}. Variance reduction improves this benchmark through two broad routes: finite-sum control-variate methods such as SAG, SAGA, and SVRG~\cite{roux2012stochastic,johnson2013accelerating,defazio2014saga,botev2017variance}, which use periodic reference-gradient computation, and recursive estimators such as SARAH, SPIDER, STORM, and PAGE~\cite{nguyen2017sarah,fang2018spider,cutkosky2019momentum,tran2019hybrid,liu2020optimal,li2021page,chen2021communication,das2022faster,hashemi2024unified}, which update an online estimator via stochastic differences. Under mean-squared Lipschitz gradients, recursive estimators achieve the optimal \(\mathcal{O}(\varepsilon^{-3})\) complexity, matching lower-bound theory~\cite{arjevani2022lower,drori2019complexity}. Second-order variants further improve the dependence on problem constants using Hessian-vector products while preserving the \(\mathcal{O}(\varepsilon^{-3})\) rate~\cite{arjevani2020second}, and higher-order methods under Lipschitz Hessians can reach rates of \(\mathcal{O}(\varepsilon^{-3.5})\) or better through cubic regularization or perturbed acceleration~\cite{xu2018first,allen2018neon2,tripuraneni2018stochastic,fang2019sharp}. More recent works have also pursued broader viewpoints, including filtering-based perspectives~\cite{yang2020stochastic}, manifold extensions such as SERENA~\cite{liu2025serena}, and unified adaptive theories that relax unbiasedness~\cite{shestakov2025unified}. On the practical side, MARS shows that recursive variance reduction can substantially accelerate large-model training when combined with preconditioned updates~\cite{yuan2025mars}. Our framework differs from these lines by making the \emph{correction term} explicit, thereby isolating the mechanism that aligns stale information from \(w_{t-1}\) with the current iterate \(w_t\) and yielding a unified analysis that also suggests new second-order variants.

\paragraph{High-probability guarantees.}
Most guarantees in stochastic optimization are stated only in expectation, which can mask rare but consequential failures. High-probability results are therefore especially important in nonconvex regimes where noise can substantially degrade the signal-to-noise ratio. For non-variance-reduced methods, such guarantees are available for convex SGD~\cite{harvey2019tight}, normalized SGD with momentum achieving \(\mathcal{O}(\varepsilon^{-4})\) complexity~\cite{cutkosky2020momentum}, and heavy-tailed stochastic optimization in arbitrary smooth norms~\cite{cutkosky2021high}. For recursive variance-reduced methods, however, high-probability analyses remain scarce and largely estimator-specific: existing results for PAGE~\cite{li2021page} and STORM-type methods~\cite{xu2023momentum} do not directly transfer across estimator families. A unified high-probability treatment for multiple recursive estimators has been missing. Related efforts under relaxed noise assumptions include bounds under bounded \(\alpha\)-th central moments~\cite{sadiev2023high} and various heavy-tailed analyses~\cite{gorbunov2024high,gorbunov2023high,nguyen2023improved,gurbuzbalaban2021heavy}. These works address the noise-model axis, whereas our focus is on the estimator-design axis. Our main technical tool is a novel vector-valued Freedman inequality for \(\kappa\)-smooth Banach spaces, building on~\cite{pinelis1994optimum,juditsky2008large}; crucially, it accommodates the random predictable variance proxies and stopping-time intervals induced by algorithmic reset events, enabling a single, unified high-probability guarantee across all estimator families.

\paragraph{Relaxed variance assumptions.}
To move beyond globally bounded variance, recent work has introduced structural relaxations such as expected smoothness and ABC-type conditions~\cite{gorbunov2020unified,khaled2022better,khaled2023unified,ilandarideva2023accelerated,grimmer2019convergence}, which control the variance by a constant plus terms depending on the gradient norm and suboptimality. The BG-0 condition is weaker still and captures realistic optimization trajectories without uniform noise bounds~\cite{alacaoglu2025towards,wang2016stochastic,cui2021analysis,asi2019stochastic,jacobsen2023unconstrained,telgarsky2022stochastic,domke2023provable,neu2024dealing}. Recent lower bounds show that finding an \(\varepsilon\)-stationary point under BG-0 requires at least \(\Omega(\varepsilon^{-6})\) and \(\Omega(\varepsilon^{-4})\) queries in smooth and mean-square smooth settings, respectively~\cite{fazla2026lower}. Our sub-Gaussian oracle model is stronger than these relaxations, and extending the present unified high-probability framework to the ABC or BG-0 regime remains an important open problem.

\paragraph{Mirror descent and non-Euclidean optimization.}
Mirror descent~\cite{NemirovskiiYudin1983,beck2003mirror} generalizes projected gradient descent to non-Euclidean geometries through Bregman divergences and is particularly natural on domains such as the probability simplex. High-probability convergence for stochastic mirror descent without variance reduction has been established in convex settings under sub-Gaussian noise~\cite{liu2023high} and extended to heavy-tailed settings via clipping~\cite{nguyen2023improved}. In nonconvex problems, \cite{cutkosky2021high} obtain high-probability guarantees for normalized SGD with momentum in smooth norms. However, these analyses do not incorporate recursive variance-reduced estimators within the mirror-descent framework. By formulating our estimator-level analysis directly in a \(\kappa\)-smooth normed space, we decouple the concentration argument from the specific update rule and obtain guarantees that apply when the estimator lives in the dual space, and iterates are updated by mirror descent. The parameter \(\kappa\) captures the geometric regularity, including \(\kappa=1\) in Hilbert spaces, \(\kappa=q-1\) in \(\ell_q\) for \(q\ge 2\), and \(\kappa=\mathcal{O}(\log d)\) for smooth approximations of \(\ell_\infty\) (see Appendix~\ref{apdx:prelim} for detailed discussion).

\paragraph{Constrained stochastic optimization.}
Stochastic optimization with expectation constraints~\eqref{eq:application2} requires estimating both objective and feasibility quantities from noisy observations. Standard approaches include primal-dual, saddle-point, and augmented-Lagrangian methods~\cite{nemirovski2004prox,chambolle2011first,bertsekas2014constrained,hamedani2021primal,zhang2022solving,hounie2023resilient,boob2024optimal}, which can be sensitive to dual tuning and bounded-domain projections. A primal-only alternative is the switching gradient method (SGM), originating in~\cite{polyak1967general}, which alternates between objective decrease and constraint reduction. Its mirror-descent variants address non-smoothness and non-Euclidean geometry~\cite{titov2018mirror,stonyakin2019mirror,alkousa2020modification,stonyakin2019adaptive,stonyakin2019some,stonyakin2020mirror,titov2020analogues,bayandina2018mirror}. In the stochastic setting, \cite{lan2020algorithms} establish an \(\mathcal{O}(\varepsilon^{-4})\) oracle complexity for convex expectation-constrained problems using batch averaging for constraint estimation, and related weakly convex analyses retain the same order~\cite{huang2023oracle,liu2025single,ma2020quadratically,jia2025first,davis2019stochastic,drusvyatskiy2019efficiency,mai2020convergence}. For nonconvex problems, \cite{li2024stochastic} combine momentum-based variance reduction with an inexact augmented-Lagrangian framework and obtain an \(\mathcal{O}(\varepsilon^{-5})\) complexity in expectation; however, the method is a double-loop scheme, using PStorm as an inner subroutine. Our contribution is to show that replacing batch averaging in the SGM-style setting of~\cite{lan2020algorithms} with recursive variance-reduced estimators from our unified framework yields improved high-probability oracle complexity bounds of order \(\mathcal{O}(\varepsilon^{-3})\) in the settings studied here.

\paragraph{Complementary variance reduction and gradient estimation strategies.}
Beyond recursive estimators, variance can also be reduced through negative dependence. Antithetic variates are a classical example~\cite{HammersleyMorton1956,KahnMarshall1953,RubinsteinKroese2017,HammersleyMauldon1956,Whitt1976}. More recent work strengthens these guarantees through strongly isotonic assumptions and optimal-transport connections~\cite{hashemi2025strong,hashemi2025strongtheory} %
(cf. \cite{AmbrosioBrueSemola2021,PanaretosZemel2019,BoucheronLugosiMassart2013}). The RAMPAGE framework~\cite{hashemi2026rampage} extends this viewpoint to variational inequalities through geometric path integration that removes first-order bias. Separately, exploiting structural properties for estimation, such as low rank, data smoothness, and interpolation, leads to improved rates in canonical settings~\cite{lee2026oracle,jadbabaie2024gradient,cosson2023low,jadbabaie2023adaptive,cosson2023gradient}. These directions are complementary to our estimator design and suggest promising ways to enrich the correction term \(\mathcal{T}_t\) in future work.

\paragraph{Decentralized and federated optimization.}
Distributed and federated optimization must balance local computation, communication, and statistical heterogeneity~\cite{mcmahan2017communication,rieke2020future,peng2024depth,wang2024adaptive,banabilah2022federated,wen2023survey,chatterjee2023federated,jadbabaie2023federated}. Gradient-tracking approaches such as Global Update Tracking (GUT) mitigate local drift without additional communication~\cite{aketi2023global}, while switching-based and partially participating schemes address failures caused by heterogeneity~\cite{islamov2025safe,upadhyay2026fedsgm,islamov2025safeef,chellapandi2024fednmut,guo2023disa,huang2023distributed,ran2023distributed,huang2025distributed,wang2023task,luo2026first}.  Incorporating switching-based mechanics with partial client
participation mitigates systemic failures induced by statistical heterogeneity, appropriately shifting
the algorithmic focus from maximum likelihood estimation~\cite{luo2024unveiling,luo2025structural,luo2026characterizing} to worst-case distributionally robust optimization. 
Extending this robust methodology to fully adversarial environments, online learning in MDPs has likewise benefited from optimistic estimators with sublinear regret~\cite{moon2024optimistic}. Since client drift compounds the temporal iterate movement that drives estimation error in our framework, extending the correction-term viewpoint to communication-constrained and decentralized settings may yield sharper communication-complexity guarantees.

\noindent\textbf{Future directions.}
Several directions remain open. Extending the unified high-probability theory to relaxed noise models such as BG-0~\cite{alacaoglu2025towards,fazla2026lower}, and combining it with adaptive or hyperparameter-free step-size rules~\cite{shestakov2025unified,duchi2011adaptive,kingma2014adam,defazio2023learning,loshchilov2017decoupled,mishchenko2023prodigy,carmon2022making}, are especially natural next steps. Another promising direction is to transport the estimator-level perspective developed here to decentralized and federated settings, where client drift and communication delays create additional sources of tracking error.

\end{document}